\newcommand{\poly}{\mathrm{poly}}
\numberwithin{equation}{section}
\newcommand{\fro}{\mathrm{F}}
\newcommand{\op}{\mathrm{op}}
 \theoremstyle{plain}
      \newtheorem{asm}{Assumption}
\Crefname{asm}{Assumption}{Assumptions}
\theoremstyle{plain}
\newtheorem{thm}{Theorem}
\newtheorem{prop}[thm]{Proposition}
\theoremstyle{definition}
\theoremstyle{plain}
\newtheorem{theorem}{Theorem}
\newtheorem{lemma}{Lemma}[section]
\newtheorem{corollary}{Corollary}
\newtheorem{proposition}[thm]{Proposition}
\theoremstyle{definition}
\newtheorem{remark}{Remark}[section]
\renewcommand{\Pr}{\mathbb{P}}
\newcommand{\Exp}{\mathbb{E}}
\newcommand{\rmd}{\mathrm{d}}
\newcommand{\tr}{\mathrm{tr}}
\newcommand{\R}{\mathbb{R}}
\DeclareMathOperator*{\argmin}{arg\,min}
\DeclareMathOperator*{\logdet}{log\,det}
\newcommand{\simplex}{\triangle}
\newcommand{\cOtil}{\widetilde{\cO}}
\def\ddefloop#1{\ifx\ddefloop#1\else\ddef{#1}\expandafter\ddefloop\fi}
\def\ddef#1{\expandafter\def\csname bb#1\endcsname{\ensuremath{\mathbb{#1}}}}
\def\ddefloop#1{\ifx\ddefloop#1\else\ddef{#1}\expandafter\ddefloop\fi}
\def\ddef#1{\expandafter\def\csname fr#1\endcsname{\ensuremath{\mathfrak{#1}}}}
\def\ddefloop#1{\ifx\ddefloop#1\else\ddef{#1}\expandafter\ddefloop\fi}
\def\ddef#1{\expandafter\def\csname scr#1\endcsname{\ensuremath{\mathscr{#1}}}}
\def\ddefloop#1{\ifx\ddefloop#1\else\ddef{#1}\expandafter\ddefloop\fi}
\def\ddef#1{\expandafter\def\csname b#1\endcsname{\ensuremath{\mathbf{#1}}}}
\def\ddef#1{\expandafter\def\csname c#1\endcsname{\ensuremath{\mathcal{#1}}}}
\def\ddef#1{\expandafter\def\csname h#1\endcsname{\ensuremath{\widehat{#1}}}}
\def\ddef#1{\expandafter\def\csname t#1\endcsname{\ensuremath{\widetilde{#1}}}}
\def\ddefloop#1{\ifx\ddefloop#1\else\ddef{#1}\expandafter\ddefloop\fi}
\def\ddef#1{\expandafter\def\csname mat#1\endcsname{\ensuremath{\mathbf{#1}}}}
\newcommand{\LC}{\textsc{LC}$^3$\xspace}
\newcommand{\mineigalg}{\textsc{MinEig}\xspace}
\newcommand{\tsum}{{\textstyle \sum}}
\newcommand{\fwregret}{\textsc{FWRegret}\xspace}
\newcommand{\tople}{\textsc{Tople}\xspace}
\newcommand{\expdesign}{\textsc{DynamicOED}\xspace}
\newcommand{\learnpolicies}{\textsc{LearnExp}\text{$\Pi$}\xspace}
\newcommand{\traj}{\bm{\uptau}}
\newcommand{\pihat}{\widehat{\pi}}
\newcommand{\bu}{\bm{u}}
\newcommand{\bphi}{\bm{\phi}}
\newcommand{\inner}[2]{\langle #1, #2 \rangle}
\newcommand{\piexp}{\pi_{\mathrm{exp}}}
\newcommand{\bLambda}{\bm{\Lambda}}
\newcommand{\btheta}{\bm{\theta}}
\newcommand{\pist}{\pi_\star}
\newcommand{\Ast}{A_\star}
\newcommand{\cost}{\mathrm{cost}}
\newcommand{\Ahat}{\widehat{A}}
\newcommand{\bGamma}{\bm{\Gamma}}
\newcommand{\bOmega}{\bm{\Omega}}
\newcommand{\bthetast}{\btheta_\star}
\newcommand{\bthetahat}{\widehat{\btheta}}
\newcommand{\Kst}{\pi_\star}
\newcommand{\CR}{C_{\cR}}
\newcommand{\frakD}{\mathfrak{D}}
\newcommand{\regalg}{\bbA_{\cR}}
\newcommand{\bx}{\bm{x}}
\newcommand{\bw}{\bm{w}}
\newcommand{\dimx}{d_{\bx}}
\newcommand{\dimu}{d_{\bu}}
\newcommand{\dimw}{d_{\bw}}
\newcommand{\by}{\bm{y}}
\newcommand{\dimphi}{d_{\bphi}}
\newcommand{\Piexp}{\Pi_{\mathrm{exp}}}
\newcommand{\bGamtil}{\widetilde{\bGamma}}
\newcommand{\bGambar}{\bar{\bGamma}}
\newcommand{\dimzeta}{d_{\btheta}}
\newcommand{\K}{\pi}
\newcommand{\Kzeta}{\pi^{\btheta}}
\newcommand{\fw}{f_{\bw}}
\newcommand{\bSigma}{\bm{\Sigma}}
\newcommand{\lammin}{\lambda_{\min}}
\newcommand{\lamminst}{\lammin^\star}
\newcommand{\bpsi}{\bm{\psi}}
\newcommand{\f}[2]{f_{#1,#2}}
\newcommand{\rest}{r_{\mathrm{est}}}
\newcommand{\Lcost}{L_{\cost}}
\newcommand{\Lphi}{L_{\bphi}}
\newcommand{\Lzeta}{L_{\btheta}}
\newcommand{\LKst}{L_{\Kst}}
\newcommand{\Abar}{\bar{A}}
\newcommand{\cmax}{c_{\max}}
\newcommand{\ball}{\textsc{Ball}}
\newcommand{\dimpsi}{d_{\bpsi}}
\newcommand{\bOmegahat}{\widehat{\bOmega}}
\newcommand{\bSigtil}{\widetilde{\bSigma}}
\newcommand{\Bphi}{B_{\bphi}}
\newcommand{\pR}{p_{\cR}}
\newcommand{\Khat}{\widehat{\pi}}
\newcommand{\Pimineig}{\Pi_{\mineigalg}}
\newcommand{\bLamtil}{\widetilde{\bLambda}}
\newcommand{\dimtheta}{d_{\bm{\theta}}}
\newcommand{\lammax}{\lambda_{\max}}
\newcommand{\thetast}{\bm{\theta}^\star}
\newcommand{\Kfw}{T_{\mathrm{fw}}}
\newcommand{\bLamfw}{\bGamma_{\mathrm{fw}}}
\newcommand{\Pifw}{\Pi_{\mathrm{fw}}}
\newcommand{\Piout}{\Pi_{\mathrm{out}}}
\newcommand{\Kout}{T_{\mathrm{out}}}
\newcommand{\cEexp}{\cE_{\mathrm{exp}}}
\newcommand{\ist}{i^\star}
\newcommand{\cEop}{\cE_{\op}}
\newcommand{\lamun}{\underline{\lambda}}
\newcommand{\ellf}{\ell_{T}}
\newcommand{\Ahatm}{\Ahat^{-}}
\newcommand{\Ntil}{\widetilde{N}}
\newcommand{\Tbar}{\bar{T}}
\newcommand{\Clot}{C_{\mathrm{lot}}}
\newcommand{\Clb}{C_{\mathrm{lb}}}
\newcommand{\Picon}{\Pi^\star}
\renewcommand{\vec}{\mathrm{vec}}
\newcommand{\bLamchk}{\check{\bLambda}}
\newcommand{\Cpoly}{C_{\poly}}
\newcommand{\Ktheta}{\K^{\btheta}}
\newcommand{\bthetatilst}{\widetilde{\btheta}_\star}
\newcommand{\omegaexp}{\omega_{\mathrm{exp}}}
\newcommand{\bv}{\bm{v}}
\newcommand{\bOmpsi}{\bOmega_{\bpsi}}
\newcommand{\bOmhatpsi}{\bOmegahat_{\bpsi}}
\newcommand{\rcost}{r_{\mathrm{cost}}}
\newcommand{\rtheta}{r_{\btheta}}
\newcommand{\rmu}{r_{\mu}}
\newcommand{\cOst}{\cO^{\star}}
\newcommand{\dimpsib}{d}
\newcommand{\bz}{\bm{z}}
\newcommand{\Tout}{T_{\mathrm{out}}}
\newcommand{\BA}{B_{A}}
\newcommand{\Toed}{T^{\mathrm{oed}}}
\newcommand{\sft}{\mathtt{t}}
\newcommand{\sfT}{\mathtt{T}}
\newcommand{\bugoal}{\bu_{\mathrm{goal}}}
\newcommand{\Atil}{\widetilde{A}}
\newcommand{\Nfw}{N_{\mathrm{fw}}}
\newcommand{\Kfwb}{K_{\mathrm{fw}}}
\newcommand{\Tfw}{T_{\mathrm{fw}}}
\newcommand{\bmu}{\bm{\mu}}
\newcommand{\bmust}{\bmu^\star}
\newcommand{\bmuhat}{\widehat{\bmu}}
\newcommand{\dimmu}{d_{\bmu}}
\newcommand{\bGamout}{\bGamma_{\mathrm{out}}}
\newcommand{\sigw}{\sigma_{\bw}}
\newcommand{\bthetatil}{\widetilde{\btheta}}
\newcommand{\Delst}{\Delta^\star}
\newlength\tindent
\newcommand{\loose}{\looseness=-1}
\newcommand{\awcomment}[1]{}
\newcommand{\gscomment}[1]{}
\newcommand{\kevin}[1]{}
\newcommand{\arxivedit}[1]{}
\title{Optimal Exploration for Model-Based RL in Nonlinear Systems}
\author{Andrew Wagenmaker\footnote{University of Washington, Seattle. Email: \texttt{ajwagen@cs.washington.edu}} \and Guanya Shi\footnote{University of Washington, Seattle. Email: \texttt{guanyas@cs.washington.edu}} \and Kevin Jamieson\footnote{University of Washington, Seattle. Email: \texttt{jamieson@cs.washington.edu}} }
\date{}
\begin{document}

\maketitle

\begin{abstract}

Learning to control unknown nonlinear dynamical systems is a fundamental problem in reinforcement learning and control theory. A commonly applied approach is to first explore the environment (exploration), learn an accurate model of it (system identification), and then compute an optimal controller with the minimum cost on this estimated system (policy optimization). While existing work has shown that it is possible to learn a uniformly good model of the system~\citep{mania2020active}, in practice, if we aim to learn a good controller with a low cost on the actual system, certain system parameters may be significantly more critical than others, and we therefore ought to focus our exploration on learning such parameters.

In this work, we consider the setting of nonlinear dynamical systems and seek to formally quantify, in such settings, (a) which parameters are most relevant to learning a good controller, and (b) how we can best explore so as to minimize uncertainty in such parameters. Inspired by recent work in linear systems~\citep{wagenmaker2021task}, we show that minimizing the controller loss in nonlinear systems translates to estimating the system parameters in a particular, task-dependent metric. Motivated by this, we develop an algorithm able to efficiently explore the system to reduce uncertainty in this metric, and prove a lower bound showing that our approach learns a controller at a near-instance-optimal rate. Our algorithm relies on a general reduction from policy optimization to optimal experiment design in arbitrary systems, and may be of independent interest. We conclude with experiments demonstrating the effectiveness of our method in realistic nonlinear robotic systems\footnote{Code: \url{https://github.com/ajwagen/nonlinear_sysid_for_control}}.

\end{abstract}


\section{Introduction}

Controlling nonlinear dynamical systems is a core problem in robotics, cyber-physical systems, and beyond, and a significant body of work in both the control theory and reinforcement learning communities has sought to address this challenge~\citep{slotine1991applied,aastrom2013adaptive,sutton2018reinforcement}. In many real-world scenarios~\citep{shi2019neural,ljung1998system,nguyen2011model,brunke2022safe}, the dynamics of the system of interest is unknown, or only a coarse model of them is available, which significantly increases the challenge of control---not only must we control such systems, we must \emph{learn} to control them. While a variety of methods exist to address this challenge, a commonly applied approach is to first perform \emph{system identification}, learning an accurate model of the system's dynamics, and then use this model to obtain a controller. Despite its promising potential, there are still several fundamental questions that must be answered to make this approach practically effective.

\iftoggle{arxiv}{\paragraph{\textit{Which parameters are most relevant to learning a good controller?}} }{\textbf{\textit{Which parameters are most relevant to learning a good controller?}}}
Beyond some special cases, little work has been done characterizing how the estimation error from system identification translates to end-to-end suboptimality in the resulting controller of our nonlinear systems. 
In particular, certain parameters of the system or regions of the state space may be irrelevant to learning a good controller, and coarse estimates of these parameters would suffice, while other parameters may be critical to learning a good controller, and we must therefore estimate these parameters very accurately in order to effectively control the system. 
In the context of this work, where nonlinearities are considered, the heterogeneity of the parameters is further accentuated. For instance, around a point of equilibrium, some system parameters might be completely inactive, having no impact on the dynamics
(see the example in \Cref{sec:motivating_example} for an illustration of this).

\iftoggle{arxiv}{\paragraph{\textit{How can we best explore so as to minimize uncertainty in relevant parameters?}}}{\textbf{\textit{How can we best explore so as to minimize uncertainty in relevant parameters?}}}
Even if we are able to determine which parameters are most important for obtaining a good controller on the true system, it is not obvious how to use this information. How can we direct our system identification phase in order to focus on learning these parameters as quickly as possible, without spending time estimating the parameters of the system less critical for control? This is fundamentally a question of \emph{exploration}. While it is known in linear systems that random excitation will efficiently explore \citep{simchowitz2018learning}, exploration in nonlinear systems is significantly more challenging since, in order to excite all parameters of interest, non-trivial planning may be required to ensure all relevant states are reached (as is the case in the example considered in \Cref{sec:motivating_example}). 

\iftoggle{arxiv}{\vspace{1em} \noindent We address both these questions in a particular class of nonlinear systems parameterized as:}
{We address both these questions in a particular class of nonlinear systems parameterized as:}
\begin{align}\label{eq:system}
\bx_{h+1} = \Ast \bphi(\bx_h,\bu_h) + \bw_h.
\end{align}
Here $\bx_h \in \R^{\dimx}$ denotes the state of the system, $\bu_h \in \cU \subseteq \R^{\dimu}$ the input, $\bw_h \sim \cN(0, \sigw^2 \cdot I)$ random noise, $\bphi(\cdot,\cdot) \in \R^{\dimphi}$ a (possibly nonlinear, known) feature map, and $\Ast \in \R^{\dimx \times \dimphi}$ the (unknown) system parameter.
Systems of this form are able to model a variety of real-world settings~\citep{shi2021meta,o2022neural,boffi2021regret,song2021pc,richards2021adaptive}\footnote{
In real-world settings, $\bphi$ is typically (1) from physics (i.e., the system structure is known but some parameters such as drag coefficient are unknown~\citep{slotine1991applied}), (2) learned using representation learning or meta-learning~\citep{o2022neural,richards2021adaptive}, and/or (3) from random features (e.g., any sufficiently regular, smooth nonlinear system $\bm{f}(\bx,\bu)$ can be modeled by~\eqref{eq:system} using $N$ random features up to a $1/\sqrt{N}$ error~\citep{rahimi2008uniform}).
}, 
and have been the subject of recent attention in the reinforcement learning community \citep{mania2020active,kakade2020information,song2021pc}, yet the aforementioned questions have remained unanswered.
Towards addressing this, in this work we make the following contributions:

\begin{enumerate}[leftmargin=*]
\item For systems of the form \eqref{eq:system}, given some cost of interest which we wish to find a controller to minimize, we (a) formally characterize how estimation error translates into suboptimality in the learned controller, under the \emph{certainty equivalent} control rule 
and (b) provide a lower bound on the loss of \emph{any} (sufficiently regular) control rule learned from $T$ rounds of interaction with \eqref{eq:system}.

\item Motivated by this characterization, we present an algorithm which achieves the \emph{instance-optimal} rate, with controller loss matching our lower bound. To the best of our knowledge, this is the first statistically optimal algorithm in the setting of nonlinear dynamical systems. Our algorithm relies on a generic reduction from policy optimization to optimal exploration in \emph{arbitrary} dynamical systems (not necessarily of the form \eqref{eq:system}), which may be of independent interest. 

\item \iftoggle{arxiv}{We present numerical experiments on several realistic nonlinear systems which illustrate that our approach---efficiently exploring to reduce uncertainty in parameters most relevant to learning a controller---yields significant gains in practice.}{We present numerical experiments on several realistic nonlinear systems which illustrate that our approach---efficiently exploring to reduce uncertainty in parameters most relevant to learning a controller---yields significant gains in practice.}
\end{enumerate}
\iftoggle{arxiv}{
Our work builds on the recent work of \cite{wagenmaker2021task}, which addresses a similar set of challenges in the linear dynamical systems setting---we extend this work to the nonlinear  setting. To further motivate our approach, we consider the following example.
}{
To further motivate our approach, we consider the following example.
}

\iftoggle{arxiv}{
\begin{wrapfigure}{r}{0.5\textwidth}
\vspace{-3em}
  \begin{center}
    \includegraphics[width=1.0\linewidth]{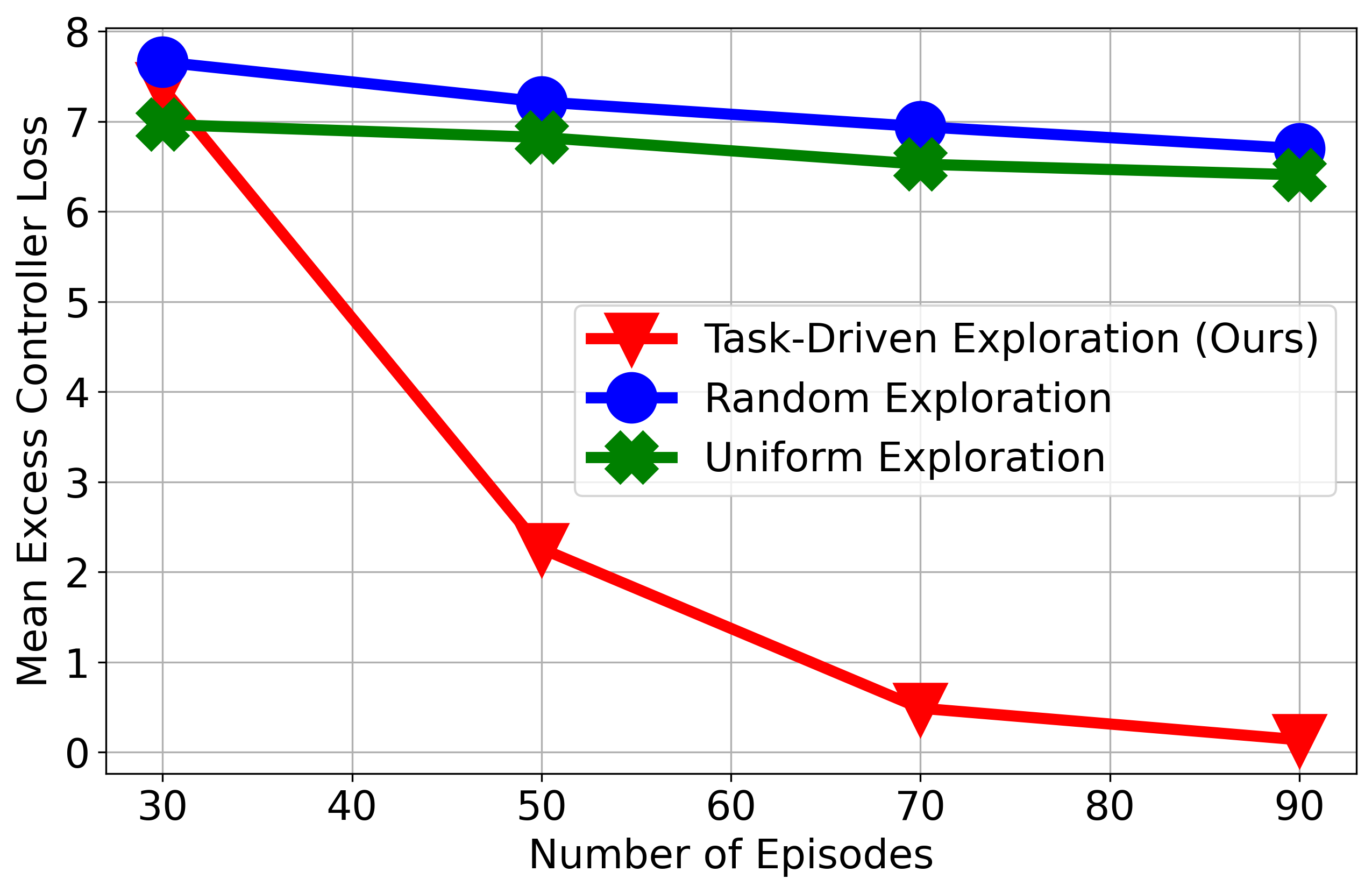}
    \vspace{-2em}
    \caption{Performance on Motivating Example}
\label{fig:motivating}
  \end{center}
\end{wrapfigure}
}{
\begin{wrapfigure}{r}{0.4\textwidth}
\vspace{-4em}
  \begin{center}
    \includegraphics[width=1.0\linewidth]{im/controller_loss_special.png}
    \vspace{-2em}
    \caption{Performance on Motivating Example}
\label{fig:motivating}
  \end{center}
\end{wrapfigure}
}

\subsection{Motivating Example}\label{sec:motivating_example}

To motivate the need for effective exploration, we consider a simple 1-D system with nonlinear dynamics given by:
\iftoggle{arxiv}{\begin{align*}
\bx_{h+1} = a_1 \bx_h + a_2 \bu_h + \sum_{i=1}^{10} a_{i+2} \bphi_i(\bx_h) + \bw_h
\end{align*}}
{\begin{align*}
\bx_{h+1} = a_1 \bx_h + a_2 \bu_h + \tsum_{i=1}^{10} a_{i+2} \bphi_i(\bx_h) + \bw_h
\end{align*}}
where $\bphi_i(\bx) = \max \{ 1 - 100 ( \bx - c_i)^2, 0 \}$
for some $c_i$. We choose $a_1 = 0.8, a_2 = 1$, and  $a_3 = \ldots = a_{12} = -3$.
We assume $a_{1:12}$ are unknown, $(\bphi_i)_{i=1}^{10}$ is known, and set
\begin{align*}
\cost(\bx,\bu) = (\bx - c_1)^2 + 100^{-1} \cdot \bu^2.
\end{align*}
With this choice of cost, the optimal controller will attempt to direct the state $\bx$ to the equilibrium point $c_1$ and maintain this position. Note that, with our choice of $\bphi_i$, $\bphi_i(\bx) \neq 0$ only when $\bx$ is very close to $c_i$. This renders the parameters $a_{4:12}$ irrelevant to learning the optimal controller, since $\bphi_2,\ldots,\bphi_{10}$ will be inactive if we are playing optimally, but learning $a_3$ is critical to performing optimally, as its value significantly changes the dynamics at the goal state.

We illustrate the result of running on this system in \Cref{fig:motivating}, comparing our proposed approach (\texttt{Task-Driven Exploration}, \Cref{alg:main}) to the approach which chooses $\bu_h \sim \cN(0,\sigma_{\bu}^2)$ (\texttt{Random Exploration}), and the approach proposed in \cite{mania2020active} (\texttt{Uniform Exploration}) which seeks to explore so as to estimate $a_{1:12}$ uniformly well. As can be seen, neither of these latter two approaches are able to learn a good controller, while our approach easily finds a near-optimal controller.
The failure modes of each of these approaches is somewhat different. Here \texttt{Random Exploration} fails since the chance of reaching the point $\bx_h \sim c_1$ is extremely small if the input is random noise---reaching $c_1$ requires playing a particular sequence of actions which are very unlikely to be played if $\bu_h$ is chosen randomly. 
The \texttt{Uniform Exploration} approach does, in contrast, plan and, given enough time, is guaranteed to estimate all parameters accurately. However, as it aims to estimate all parameters uniformly well, it will attempt to estimate $a_{4:12}$ accurately despite their irrelevance to control, which will slow down the rate at which it is able to estimate $a_3$.
Only our approach, which both plans and takes into account the cost while exploring, is able to reach $a_3$ enough times to efficiently estimate it, and learn a good controller.

This example illustrates that it is critical both to explore efficiently, and also to let the objective---learning a good controller---guide this exploration.
We emphasize that the behavior in this example is only exhibited in nonlinear systems---though taking into account the task while exploring in linear systems is known to yield provable improvements \citep{wagenmaker2021task}, even playing random noise allows every direction to be learned in such systems. In nonlinear systems, however, this is not the case---one may fail to learn completely unless careful planning is performed.

\section{Related Work}

\iftoggle{arxiv}{
\paragraph{Online learning and control.}
Recently, there has been increased interest in studying online learning and control from a learning-theoretic perspective, largely for settings with linear systems such as online LQR or LQG with unknown dynamics~\citep{abbasi2011improved,simchowitz2018learning,simchowitz2019learning,mania2019certainty,cohen2019learning,dean2020sample,yu2020power,wagenmaker2020active,simchowitz2020naive,simchowitz2020improper}. 
In the nonlinear setting, \citep{foster2020learning,oymak2019stochastic,sattar2022non} provide formal guarantees on system identification in several different classes of nonlinear systems, yet they only consider noiseless systems, or systems that are significantly easier to excite than \eqref{eq:system} (rendering the problem of exploration significantly easier).
\cite{kakade2020information} study systems of the form \eqref{eq:system}, but consider only the regret minimization problem. While their bounds would yield a polynomial complexity via an online-to-batch conversion, our characterization is significantly tighter.
The most relevant work, \cite{mania2020active}, proposes an active learning approach to identify unknown parameters in \eqref{eq:system}, with the goal of minimizing the Euclidean distance in the parameter space. However, as we show, learning a uniformly good model could be significantly worse than learning a model with the goal task in mind.
Also very related to our work is \cite{wagenmaker2021task}, which seeks to answer a similar set of questions as what we consider: performing system identification in order to learn a good controller. This work is restricted to the setting of linear dynamics, however, and does not address the additional complexity of exploration in nonlinear systems.

\paragraph{System identification, dual control, and iterative learning control.}
There is a large body of classical work in system identification~\citep{ljung1998system}, and our work can be seen as an instance of \emph{active} system identification. While a variety of approaches have been proposed which study similar problems \citep{mehra1974optimal,gerencser2005adaptive,katselis2012application,manchester2010input,rojas2007robust,goodwin1977dynamic,lindqvist2001identification,gerencser2007adaptive}, then tend to only consider linear systems, or lack rigorous theoretical guarantees.
Recently deep learning approaches have also been applied in system identification~\citep{shi2019neural,nguyen2011model,brunke2022safe,williams2017information,shi2021neural}. In these works, the system identification phase is separate from the downstream controller design. Instead, in the control community, estimating parameters while simultaneously or iteratively optimizing for performance has been formulated as a dual control problem~\citep{feldbaum1960dual,mesbah2018stochastic} or an iterative learning control problem~\citep{bristow2006survey}. However, both settings focus on stability, robustness, or asymptotic convergence whereas our work quantifies the end-to-end suboptimality gap with a statistically optimal algorithm.

\paragraph{Model-based reinforcement learning.}
This paper falls into the broad category of model-based reinforcement learning (MBRL), where an agent explores the environment to learn a model and then computes an optimal policy using the learned model. 
On the empirical side, deep MBRL has made exciting progress in many domains~\citep{kaiser2019model,yu2020mopo,chua2018deep}. Several task-aware methods have been designed to improve MBRL's performance, such as uncertainty-aware policy optimization~\citep{yu2020mopo,chua2018deep} and active exploration to reduce model uncertainty~\citep{nakka2020chance}, yet these works lack formal guarantees.
On the theoretical side, a variety of different model-based approaches exist \citep{osband2014model,sun2019model,agarwal2020model,zhou2021nearly,zanette2019tighter,azar2017minimax,song2021pc}; however, the majority of these consider restricted settings such as tabular or linear MDPs. Of particular interest is the work of \cite{song2021pc} which presents a result in systems of the form \eqref{eq:system}. While they show that polynomial sample complexity is possible, our results yield a significantly tighter characterization.

\paragraph{Adaptive nonlinear control.} Adaptive nonlinear control also seeks to control an unknown nonlinear system with parametric uncertainties~\citep{slotine1991applied,aastrom2013adaptive}. In particular, the key idea of model-reference adaptive control (MRAC) bears affinity to this paper, in that the adaptation law in MRAC adapts unknown parameters in a task-aware manner, by relating the tracking error with the estimated parameter in a closed loop. In fact, the parameter estimation error in MRAC converges \emph{only when necessary}, i.e., when the task is ``rich'' enough (the formal condition is called persistent excitation~\citep{aastrom2013adaptive,slotine1991applied}). There are two main differences between MRAC and our work. First, adaptive control does not explicitly optimize a cost function. The objective of adaptive control is often tracking error convergence and Lyapunov stability, whereas our framework allows general cost functions. Moreover, adaptive control theory typically focuses on asymptotic convergence, but we give non-asymptotic optimality guarantees. Second, adaptive control has by and large been limited to specific system classes (e.g., fully-actuated systems~\citep{aastrom2013adaptive,richards2021adaptive}) and policy classes (e.g., policy to directly cancel out the matched uncertainty~\citep{o2022neural,boffi2021regret}), whereas our framework allows more general systems and policy classes.

}{

\paragraph{Learning for control.}
Recently, there has been increased interest in studying control problems from a learning-theoretic perspective, largely for linear system settings such as online LQR or LQG with unknown dynamics~\cite{abbasi2011improved,simchowitz2018learning,simchowitz2019learning,mania2019certainty,cohen2019learning,dean2020sample,yu2020power,wagenmaker2020active,simchowitz2020naive,simchowitz2020improper}. 
In the nonlinear setting, \cite{foster2020learning,oymak2019stochastic,sattar2022non} provide formal guarantees on system identification in several different classes of nonlinear systems, yet they only consider noiseless systems, or systems that are significantly easier to excite than \eqref{eq:system}.
\cite{kakade2020information} study systems of the form \eqref{eq:system}, but consider only the regret minimization problem. While their bounds would yield a polynomial complexity via an online-to-batch conversion, our characterization is significantly tighter.
The most relevant work~\cite{mania2020active} proposes an active learning approach to identify unknown parameters in \eqref{eq:system}, with the goal of minimizing the Euclidean distance in the parameter space. However, as shown in \Cref{sec:motivating_example}, this approach could be significantly worse than learning a model with the goal task in mind.
Also very related to our work is \cite{wagenmaker2021task}, which seeks to answer a similar set of questions: performing system identification in order to learn a good controller. This work is restricted to the setting of linear dynamics, however, and does not address the additional complexities of nonlinear systems.\loose

\vspace{-1em}
\paragraph{System identification, dual control, and iterative learning control.}
There is a large body of classical work in system identification~\cite{ljung1998system}, and our work can be seen as an instance of \emph{active} system identification. While a variety of approaches have been proposed for active system identification \cite{mehra1974optimal,gerencser2005adaptive,katselis2012application,manchester2010input,rojas2007robust,goodwin1977dynamic,lindqvist2001identification,gerencser2007adaptive}, these tend to only consider linear systems, or lack rigorous theoretical guarantees.
Recently deep learning approaches have also been applied in system identification~\cite{shi2019neural,nguyen2011model,brunke2022safe,williams2017information,shi2021neural}. In these works, the system identification phase is separate from the downstream controller design. Instead, in the control community, estimating parameters while simultaneously optimizing for performance has been formulated as a dual or iterative learning control problem~\cite{feldbaum1960dual,mesbah2018stochastic,bristow2006survey}, yet these settings focus on stability or asymptotic convergence whereas our work quantifies the end-to-end suboptimality gap. 

\vspace{-1em}
\paragraph{Model-based reinforcement learning.}
This paper falls into the broad category of model-based reinforcement learning (MBRL), where an agent explores the environment to learn a model and then computes an optimal policy using the learned model. 
On the empirical side, deep MBRL has made exciting progress in many domains~\cite{kaiser2019model,yu2020mopo,chua2018deep}, and several task-aware methods have been designed to improve MBRL's performance~\cite{yu2020mopo,chua2018deep,nakka2020chance}, yet these works lack formal guarantees.
On the theoretical side, a variety of different model-based approaches exist \cite{osband2014model,sun2019model,agarwal2020model,zhou2021nearly,zanette2019tighter,azar2017minimax,song2021pc}; however, the majority of these consider restricted settings such as tabular or linear MDPs. Of particular interest is the work of \cite{song2021pc} which presents a result in systems of the form \eqref{eq:system}. While they show that polynomial sample complexity is possible, our results yield a significantly tighter characterization.

\vspace{-1em}
\paragraph{Adaptive nonlinear control.} Adaptive nonlinear control also seeks to control an unknown nonlinear system with parametric uncertainties~\cite{slotine1991applied,aastrom2013adaptive}. In particular, the key idea of model-reference adaptive control (MRAC) bears affinity to this paper, in that the adaptation law in MRAC adapts unknown parameters in a task-aware manner. There are two main differences between MRAC and our work. First, adaptive control does not explicitly optimize a cost function---the objective is typically tracking error convergence and Lyapunov stability, whereas our framework allows general cost functions---and the focus is typically on asymptotic convergence, while we give non-asymptotic optimality guarantees. Second, adaptive control has by and large been limited to specific system classes (e.g., fully-actuated systems~\cite{aastrom2013adaptive,richards2021adaptive}) and policy classes (e.g., policy to directly cancel out the matched uncertainty~\cite{o2022neural,boffi2021regret}), whereas our framework allows more general systems and policy classes.
}


\section{Preliminaries}\label{sec:prelim}

\paragraph{Notation.}
$\| \cdot \|_\op$ denotes the operator norm (matrix 2-norm), $\| \cdot \|_\fro$ the Frobenius norm, and $\| \cdot \|_{M}$ the Mahalanobis norm, defined as $\| \bx \|_M := \sqrt{\bx^\top M \bx}$ for $M \succeq 0$. $\vec(A)$ denotes the vectorization of matrix $A$.
$\cB_{p}(A;r) := \{ A' : \| A - A' \|_{p} \le r \}$. $[H] = \{1,2,\ldots,H\}$. \iftoggle{arxiv}{$\simplex_\cX$ denotes the set of distributions over set $\cX$. We let $\cS^{d-1}$ refer to the unit ball in $d$ dimensions and $\bbS_+^d$ (resp. $\bbS_{++}^d$) the set of positive semi-definite matrices (resp. positive definite matrices) in $\R^{d \times d}$.}{} We let $\Exp_{A}[\cdot]$ denote the expectation over trajectories induced on system with parameter $A$, and $\Exp_{A,\K}[\cdot]$ the expectation induced when policy $\K$ is played. 
\iftoggle{arxiv}{Throughout, $\cO(\cdot)$ denotes standard big-O notation, $\cOtil(\cdot)$ hides additional logarithmic factors, and we use $\lesssim$ informally to highlight key parameters in an inequality.}{$\poly(\cdot)$ denotes some term that is polynomial in its arguments, with exponents absolute constants. We use $\lesssim$ informally to highlight key parameters in an inequality.}

\paragraph{Setting.}
In this work, we are interested in systems of the form \eqref{eq:system}.
We consider the episodic setting, where episodes are of length $H$, and assume that each episodes starts from a given state $\bx_1$. 
We also assume $\| \Ast \|_\op \le \BA$ for some known $\BA > 0$. 
We note that the setting considered here encompasses many real-world systems of interest in robotics and control (e.g., \citep{o2022neural,song2021pc,shi2021meta,richards2021adaptive} and \Cref{sec:experiments}).\loose

The goal of the learner is to find a policy (controller) $\K = (\K_h)_{h=1}^H$ which achieves minimal cost on \eqref{eq:system}, for the cost defined by some (known) function $(\cost_h(\cdot,\cdot))_{h=1}^H$, with $\cost_h : \R^{\dimx} \times \cU \rightarrow \R_+$. For a given policy $\K$, we define the expected cost on system $A$ as
\iftoggle{arxiv}{\begin{align*}
\cJ(\K;A) := \Exp_{A,\K} \left [ \sum_{h=1}^H \cost_h(\bx_h,\bu_h) \right ].
\end{align*}}
{\begin{align*}
\cJ(\K;A) := \Exp_{A,\K} \left [ \tsum_{h=1}^H \cost_h(\bx_h,\bu_h) \right ].
\end{align*}}
We consider the following interaction protocol:
\iftoggle{arxiv}{\begin{enumerate}
\item Learner interacts with system \eqref{eq:system} for $T$ episodes, at every episode playing an exploration policy $\piexp \in \Piexp$.
\item After $T$ episodes, the learner proposes a policy $\Khat_T \in \Picon$.
\item The learner suffers cost $\cJ(\Khat_T;\Ast)$. 
\end{enumerate}}{
\begin{enumerate}
\item Learner interacts with \eqref{eq:system} for $T$ episodes, at each episode playing a policy $\piexp \in \Piexp$.\loose
\item After $T$ episodes, the learner proposes a policy $\Khat_T \in \Picon$.
\item The learner suffers cost $\cJ(\Khat_T;\Ast)$. 
\end{enumerate}}
The goal of the learner is therefore first to explore and, after $T$ episodes of exploration, to propose its best guess at the optimal controller for \eqref{eq:system}, $\Khat_T$.
Here we take $\Piexp$ to be a (known) set of admissible exploration policies (for example, policies with bounded input power), and $\Picon$ a (known) set of admissible control policies. We assume that policies in $\Picon$ are deterministic, but allow for randomized policies in $\Piexp$. Policies may be either open- or closed-loop. \iftoggle{arxiv}{Note that we do not assume $\Picon = \Piexp$---in general $\Piexp$ need not be equal to $\Picon$.}

\iftoggle{arxiv}{
\paragraph{System Notation.}
\iftoggle{arxiv}{Before proceeding, we introduce several additional pieces of notation. First, we}{We} let $\cT$ denote the space of all possible state-input trajectories, $\cT \subseteq (\R^{\dimx} \times \cU)^H \times \R^{\dimx}$, and, for any $\traj \in \cT$, let $\traj_{1:h}$ denote the first $h$ states and inputs in $\traj$. \iftoggle{arxiv}{Second, for}{For} any policy $\pi$, we denote
\begin{align*}
\bLambda_{A,\pi} := \Exp_{A,\pi} \left [ \sum_{h=1}^H \bphi(\bx_h,\bu_h) \bphi(\bx_h,\bu_h)^\top \right ]
\end{align*}
the expected covariance induced by playing $\pi$ on system $A$. In particular, we set $\bLambda_\pi := \bLambda_{\Ast,\pi}$. 
We also denote $\bLamchk := I_{\dimx} \otimes \bLambda$ the Kronecker product of $I_{\dimx}$ and $\bLambda$.
\iftoggle{arxiv}{Finally, we let $\bOmega$ denote the set of all possible covariance matrices induced by playing mixtures of policies in $\Piexp$:
\begin{align*}
\bOmega := \left \{ \Exp_{\pi \sim \omega}[\bLambda_\pi] \ : \ \omega \in \simplex_{\Piexp} \right \},
\end{align*}
where $\simplex_{\Piexp}$ denotes the set of distributions over $\Piexp$.}{
Finally, we let $\bOmega$ denote the set of all possible covariance matrices induced by playing mixtures of policies in $\Piexp$, $\bOmega := \{ \Exp_{\pi \sim \omega}[\bLambda_\pi] \ : \ \omega \in \simplex_{\Piexp} \}$, for $\simplex_{\Piexp}$ the set of distributions over $\Piexp$. }
}{
\paragraph{System Notation.}
\iftoggle{arxiv}{Before proceeding, we introduce several additional pieces of notation. First, we}{We} let $\cT$ denote the space of all possible state trajectories, $\cT \subseteq \R^{\dimx \times (H+1)}$, and, for any $\traj \in \cT$, let $\traj_{1:h}$ denote the first $h$ states and inputs in $\traj$. \iftoggle{arxiv}{Second, for}{For} any policy $\pi$, we denote
\begin{align*}
\bLambda_{A,\pi} := \Exp_{A,\pi} \left [ \tsum_{h=1}^H \bphi(\bx_h,\bu_h) \bphi(\bx_h,\bu_h)^\top \right ]
\end{align*}
the expected covariance induced by playing $\pi$ on system $A$, $\bLambda_\pi := \bLambda_{\Ast,\pi}$, and $\bLamchk := I_{\dimx} \otimes \bLambda$ the Kronecker product of $I_{\dimx}$ and $\bLambda$.
\iftoggle{arxiv}{Finally, we let $\bOmega$ denote the set of all possible covariance matrices induced by playing mixtures of policies in $\Piexp$:
\begin{align*}
\bOmega := \{ \Exp_{\pi \sim \omega}[\bLambda_\pi] \ : \ \omega \in \simplex_{\Piexp} \},
\end{align*}
where $\simplex_{\Piexp}$ denotes the set of distributions over $\Piexp$.}{
Finally, we let $\bOmega$ denote the convex hull of covariance matrices induced by $\Piexp$, $\bOmega := \{ \Exp_{\pi \sim \omega}[\bLambda_\pi] \ : \ \omega \in \simplex_{\Piexp} \}$, for $\simplex_{\Piexp}$ the set of distributions over $\Piexp$. \loose }}

\subsection{Regularity Assumptions}

In order to make learning in \eqref{eq:system} tractable, we need several regularity assumptions. 
\iftoggle{arxiv}{We first introduce assumptions on the boundedness of the feature map $\bphi$, the boundedness of the cost, and the achievable minimum eigenvalue.}

\begin{asm}[Bounded Features]\label{asm:bounded_features}
For all $\bx \in \R^{\dimx}$ and $\bu \in \cU$, we have $\| \bphi(\bx,\bu) \|_2 \le \Bphi$.
\end{asm}

\begin{asm}[Bounded Cost]\label{asm:bounded_cost}
There exists some $\rcost(\Ast) > 0$ such that, for all $A \in \cB_{\fro}(\Ast;\rcost(\Ast))$ and all $\K \in \Picon$, we have $\Exp_{A,\K}[ ( \sum_{h=1}^H \cost_h(\bx_h,\bu_h))^2] \le \Lcost$.
\end{asm}

\begin{asm}[Uniform Feature Excitation]\label{asm:full_rank_cov}
There exists $\omega \in \simplex_{\Piexp}$ such that $\lammin(\Exp_{\piexp \sim \omega}[\bLambda_{\piexp}]) \ge \lamminst$ for some $\lamminst > 0$.
\end{asm}

We remark that these assumptions have appeared before in work on systems of the form \eqref{eq:system} \citep{mania2020active,kakade2020information}. 
In order to precisely quantify the optimal rates of learning, we require that our system satisfy certain smoothness assumptions. First, we require that $\bphi(\cdot,\cdot)$ is differentiable in its second argument.

\begin{asm}[Smooth Nonlinearity]\label{asm:smooth_phi}
For all $\bx \in \R^{\dimx}$ and $\bu \in \cU$, $\bphi(\bx,\bu)$ is four-times differentiable in $\bu$. Furthermore, $\| \nabla^{(i)}_{\bu} \bphi(\bx,\bu) \|_\op \le \Lphi$, $\forall i \in \{ 1,2,3,4 \}$, $\bx \in \R^{\dimx}$, and $\bu \in \cU$.   
\end{asm}

\iftoggle{arxiv}{
We also require that the class of admissible control policies, $\Picon$, has the following parametric form:
\begin{align*}
\Picon = \{ \Kzeta \ : \ \btheta \in \R^{\dimtheta} \}
\end{align*}
and that the parameterization is smooth in the following sense.
}{
We also require that the class of admissible control policies, $\Picon$, has a parametric form, $\Picon = \{ \Kzeta \ : \ \btheta \in \R^{\dimtheta} \}$,
and that the parameterization is smooth in the following sense.}
\begin{asm}[Smooth Controller Class]\label{asm:smooth_controller}
$\Kzeta_h(\traj_{1:h})$ is four-times differentiable in $\btheta$ for all $\traj \in \cT$ and $h \in [H]$. Furthermore, $\| \nabla_{\btheta}^{(i)} \Kzeta_h(\traj_{1:h}) \|_\op \le \Lzeta$ for $\forall i \in \{1,2,3, 4\}$, $\btheta \in \R^{\dimtheta}$, and $\traj \in \cT$.  
\end{asm}
\Cref{asm:smooth_controller} is satisfied for commonly considered classes of controllers, such as linear controllers, but is also satisfied by more complex classes such as neural network controllers. While the learner may propose any $\Khat_T \in \Picon$, we are particularly interested in the \emph{certainty equivalence} decision rule (i.e., the learner decides $\Khat_T$ as if the estimated system is the actual one), defined as: 
\begin{align}\label{eq:Kst}
\textstyle \Kst(A) := \pi^{\bthetast(A)} \quad \text{for} \quad \bthetast(A) := \argmin_{\btheta \in \R^{\dimtheta}} \cJ(\K^{\btheta};A).
\end{align}
To ensure that $\Kst(A)$ is well-defined and sufficiently regular, we make the following assumption.
\begin{asm}[Unique Optimal Controller]\label{asm:unique_min}
We assume that the global minimum of $\cJ(\pi^{\btheta};\Ast)$, $\bthetast(\Ast)$, is unique, and that $\nabla_{\btheta}^2 \cJ(\pi^{\btheta};\Ast)|_{\btheta = \bthetast(\Ast)} \succ 0$.
\end{asm}
In general, the policy optimization problem in \eqref{eq:Kst} may not be computationally tractable. As we show in \Cref{sec:smooth_system}, the globally optimal decision rule of \eqref{eq:Kst} can be replaced with a locally optimal decision rule (i.e. $\Kst(A)$ a local minimum of $\cJ(\K;A)$). 
Furthermore, \Cref{asm:unique_min} can be replaced by assuming the differentiability of $\bthetast(A)$ with respect to $A$ for $A$ near $\Ast$. For ease of exposition, in the main text we assume that \Cref{asm:unique_min} holds and that $\Kst(A)$ is defined as in \eqref{eq:Kst}.
With these definitions and under \Cref{asm:bounded_features,asm:bounded_cost,asm:smooth_phi,asm:smooth_controller},
we can show that $\cJ(\pi^{\btheta};\Ast)$ is differentiable in $\btheta$ and, combined with \Cref{asm:unique_min}, that $\bthetast(A)$ is differentiable in $A$, for $A \in \cB_{\fro}(\Ast;\rtheta(\Ast))$ and some $\rtheta(\Ast) > 0$. 
We let $\LKst$ denote an upper bound on the norm of the derivatives of $\bthetast(A)$. \iftoggle{arxiv}{We always take $\Bphi,\Lcost,\Lphi,\Lzeta,\LKst \ge 1$. Additional discussion on the setting of $\pist(A)$ and the scaling of $\rtheta(\Ast)$ and $\LKst$ is given in \Cref{sec:smooth_system}.}{}

\section{Optimal Exploration in Nonlinear Systems}\label{sec:main_results}
In this work, we are interested in characterizing the instance-optimal rates of learning a controller $\K \in \Picon$ which minimizes the loss $\cJ(\K;\Ast)$. 
The following result, a generalization of Proposition 8.2 of \cite{wagenmaker2021task} to nonlinear systems, is the starting point of our analysis\iftoggle{arxiv}{, and precisely quantifies how estimation error translates to controller loss.}{.}

\begin{proposition}[Informal]\label{prop:quadratic_loss_approx}
Under \Cref{asm:bounded_features,asm:bounded_cost,asm:smooth_phi,asm:smooth_controller,asm:unique_min} and on the system \eqref{eq:system}, we have
\begin{align*}
\cJ(\Kst(\Ahat); \Ast) - \cJ(\Kst(\Ast); \Ast) = \| \vec(\Ast-\Ahat) \|_{\cH(\Ast)}^2 + \cOst(\| \Ast - \Ahat \|_\fro^3) 
\end{align*}
for
\begin{align*}
\cH(\Ast) := \nabla_A^2 \cJ(\Kst(A);\Ast)|_{A = \Ast}
\end{align*}
and where $\cOst(\cdot)$ hides factors polynomial in the regularity parameters of \Cref{asm:bounded_features,asm:full_rank_cov,asm:bounded_cost,asm:smooth_phi,asm:smooth_controller,asm:unique_min}.
\end{proposition}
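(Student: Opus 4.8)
The plan is to establish the expansion by a second-order Taylor expansion of the map $\Ahat \mapsto \cJ(\Kst(\Ahat);\Ast)$ around $\Ahat = \Ast$, carefully tracking that the zeroth- and first-order terms vanish. First I would argue that this map is four-times differentiable (or at least $C^3$) in a Frobenius-ball neighborhood $\cB_{\fro}(\Ast;\rtheta(\Ast))$ of $\Ast$: by the discussion preceding the statement, under \Cref{asm:bounded_features,asm:bounded_cost,asm:smooth_phi,asm:smooth_controller}, $\btheta \mapsto \cJ(\pi^{\btheta};\Ast)$ is smooth, and combined with \Cref{asm:unique_min} (positive-definite Hessian at the unique minimizer), the implicit function theorem gives that $A \mapsto \bthetast(A)$ is smooth near $\Ast$ with derivative bounds captured by $\LKst$. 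Composing, $A \mapsto \cJ(\K^{\bthetast(A)};\Ast) = \cJ(\Kst(A);\Ast)$ is smooth, so Taylor's theorem with Lagrange (or integral) remainder applies and produces a third-order error term whose constant is polynomial in $\Bphi,\Lcost,\Lphi,\Lzeta,\LKst$ — i.e. an $\cOst(\|\Ast-\Ahat\|_\fro^3)$ term, since the third derivative is bounded uniformly on the ball.

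The key structural step is showing the gradient vanishes: $\nabla_A \cJ(\Kst(A);\Ast)\big|_{A=\Ast} = 0$. Writing $g(A) := \cJ(\pi^{\bthetast(A)};\Ast)$ and applying the chain rule, $\nabla_A g(A) = (\partial_\btheta \cJ(\pi^{\btheta};\Ast))^\top\big|_{\btheta=\bthetast(A)} \cdot \partial_A \bthetast(A)$. At $A = \Ast$ we have $\bthetast(\Ast) = \argmin_\btheta \cJ(\pi^\btheta;\Ast)$, so the inner factor $\partial_\btheta \cJ(\pi^\btheta;\Ast)\big|_{\btheta=\bthetast(\Ast)}$ is exactly the first-order optimality condition and equals zero; hence $\nabla_A g(\Ast) = 0$ regardless of $\partial_A \bthetast$. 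Also $g(\Ast) = \cJ(\Kst(\Ast);\Ast)$, so the constant term in the expansion cancels against the subtracted $\cJ(\Kst(\Ast);\Ast)$. This leaves $g(\Ahat) - g(\Ast) = \tfrac12 \vec(\Ast-\Ahat)^\top \nabla_A^2 g(\Ast) \vec(\Ast-\Ahat) + \cOst(\|\Ast-\Ahat\|_\fro^3)$, and I would simply define $\cH(\Ast) := \nabla_A^2 g(\Ast) = \nabla_A^2 \cJ(\Kst(A);\Ast)\big|_{A=\Ast}$, absorbing the $\tfrac12$ into the definition (or noting the statement uses $\|\cdot\|_{\cH(\Ast)}^2$ with this convention) to match the claimed form $\|\vec(\Ast-\Ahat)\|_{\cH(\Ast)}^2$.

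The main obstacle I anticipate is not the algebra of the chain rule but the \emph{quantitative} control needed to legitimately call the remainder $\cOst(\cdot)$ with a polynomial constant: one must bound the third derivatives of $A \mapsto \cJ(\Kst(A);\Ast)$ uniformly on a neighborhood whose radius $\rtheta(\Ast)$ is itself bounded below in terms of the regularity parameters. This requires (i) differentiating through the expectation $\cJ(\pi^\btheta;A) = \Exp_{A,\pi^\btheta}[\sum_h \cost_h(\bx_h,\bu_h)]$ — controlling how trajectory distributions depend smoothly on $\btheta$ and $A$ via the rollout recursion $\bx_{h+1} = A\bphi(\bx_h,\bu_h) + \bw_h$ with $\bu_h = \pi^\btheta_h(\traj_{1:h})$, using \Cref{asm:smooth_phi,asm:smooth_controller} and the Gaussian noise to get dominated-convergence-type justification plus moment bounds from \Cref{asm:bounded_cost}; and (ii) quantifying the implicit-function-theorem derivative bounds on $\bthetast(A)$ in terms of $\lambda_{\min}(\nabla^2_\btheta \cJ)$ at the optimum and the mixed partials, which is where \Cref{asm:unique_min} enters beyond mere existence. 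I would organize this as: a lemma on smoothness and derivative bounds of $(\btheta, A) \mapsto \cJ(\pi^\btheta;A)$ (deferred to an appendix, built on the rollout map), then a lemma on smoothness of $\bthetast(A)$ via IFT, then the short Taylor/chain-rule argument above to conclude. The informality ("Informal") of the proposition lets me state the cubic remainder's dependence qualitatively, with the precise polynomial deferred to the formal version.
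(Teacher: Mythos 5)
Your proposal is correct and follows essentially the same route as the paper's proof of the formal version (\Cref{lem:quadratic_loss_approx}): Taylor-expand $A \mapsto \cJ(\Kst(A);\Ast)$ along the segment from $\Ast$ to $\Ahat$, kill the first-order terms via the stationarity condition $\nabla_{\btheta}\cJ(\btheta;\Ast)|_{\btheta=\bthetast(\Ast)}=0$, identify the surviving quadratic term with $\cH(\Ast)$, and control the cubic remainder through polynomial bounds on the derivatives of $\cJ(\btheta;A)$ (obtained by differentiating through the trajectory distribution) together with implicit-function-theorem bounds on $\nabla_A^{(i)}\bthetast(A)$. Your explicit handling of the factor of $\tfrac{1}{2}$ in the Hessian convention is a point the paper glosses over, but otherwise the decomposition, key lemmas, and anticipated technical obstacles match the paper's argument.
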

The quantity $\cH(\Ast) := \nabla_A^2 \cJ(\Kst(A);\Ast)|_{A = \Ast}$, referred to as the \emph{model-task Hessian} in \cite{wagenmaker2021task}, corresponds to the \emph{curvature} of the loss of the certainty-equivalence controller $\Kst(A)$ around $A \leftarrow \Ast$. It precisely quantifies how estimation error in each coordinate of $\Ast$ translates into suboptimality of the controller---providing an answer to our question of which parameters are most relevant to learning a good controller---and reduces the problem of minimizing the controller loss to estimating $\Ast$ in a particular norm. The following result gives a bound on this estimation error, $\| \vec(\Ast-\Ahat) \|_{\cH(\Ast)}^2$.

\begin{proposition}[Informal]\label{prop:est_error_informal}
\iftoggle{arxiv}{
Consider interacting with \eqref{eq:system} for $T$ episodes, and let 
$$\bLambda_T = \sum_{t=1}^T \sum_{h=1}^H \bphi(\bx_h^t,\bu_h^t) \bphi(\bx_h^t,\bu_h^t)^\top$$ 
denote the observed covariates and
\begin{align*}
\Ahat = \argmin_{A} \sum_{t=1}^T \sum_{h=1}^H \| \bx_{h+1}^t - A \bphi(\bx_h^t,\bu_h^t) \|_2^2
\end{align*}}{
Consider interacting with \eqref{eq:system} for $T$ episodes, and let $\bLambda_T = \sum_{t=1}^T \sum_{h=1}^H \bphi(\bx_h^t,\bu_h^t) \bphi(\bx_h^t,\bu_h^t)^\top$ denote the observed covariates and
\begin{align*}
\textstyle \Ahat = \argmin_{A} \tsum_{t=1}^T \tsum_{h=1}^H \| \bx_{h+1}^t - A \bphi(\bx_h^t,\bu_h^t) \|_2^2
\end{align*}}
the least-squares estimate of $\Ast$. Recalling that $\bLamchk_T = I_{\dimx} \otimes \bLambda_T$,  we have, with high probability:
\begin{align*}
\| \vec(\Ast-\Ahat) \|_{\cH(\Ast)}^2 \lesssim \sigw^2 \cdot \tr( \cH(\Ast) \bLamchk_T^{-1}).
\end{align*}
\end{proposition}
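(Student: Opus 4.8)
The plan is to reduce the claimed bound to a standard self-normalized concentration inequality for the least-squares estimator of $\Ast$, exploiting the Kronecker structure relating the matrix estimation error to a vectorized linear-regression error. Write $\bA := \vec(\Ast - \Ahat) \in \R^{\dimx \dimphi}$. Stacking the regression equations $\bx_{h+1}^t = \Ast \bphi(\bx_h^t,\bu_h^t) + \bw_h^t$ across all $t \in [T]$, $h \in [H]$ and vectorizing, the least-squares solution satisfies $\bA = \bLamchk_T^{-1} \bg_T$, where $\bg_T := \sum_{t,h} (\bphi(\bx_h^t,\bu_h^t) \otimes I_{\dimx})\, \bw_h^t$ is a martingale-difference sum with respect to the filtration generated by the trajectory up through step $h$ of episode $t$. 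The conditional covariance of the increment at $(t,h)$ is exactly $\sigw^2 \cdot \bphi(\bx_h^t,\bu_h^t)\bphi(\bx_h^t,\bu_h^t)^\top \otimes I_{\dimx}$, so the predictable quadratic variation of $\bg_T$ is $\sigw^2 \bLamchk_T$.

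Given this, the first step is to observe that
\[
\| \bA \|_{\cH(\Ast)}^2 = \bg_T^\top \bLamchk_T^{-1} \cH(\Ast) \bLamchk_T^{-1} \bg_T .
\]
Taking an eigendecomposition $\cH(\Ast) = \sum_j \lambda_j v_j v_j^\top$ with $\lambda_j \ge 0$ (note $\cH(\Ast) \succeq 0$ since it is the Hessian of $\cJ(\Kst(\cdot);\Ast)$ at its argument's minimizer $\Ast$, by Proposition~\ref{prop:quadratic_loss_approx}), we get $\| \bA \|_{\cH(\Ast)}^2 = \sum_j \lambda_j \langle v_j, \bLamchk_T^{-1} \bg_T\rangle^2$. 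For each fixed direction $v_j$, $\langle v_j, \bLamchk_T^{-1} \bg_T\rangle$ is a ratio of a scalar martingale to (a function of) its quadratic variation, and the self-normalized martingale tail bound (e.g. the method of mixtures / Abbasi-Yadkori-style bound, as used in \cite{mania2020active,kakade2020information}) gives, with probability at least $1-\delta$, $\langle v_j, \bLamchk_T^{-1} \bg_T\rangle^2 \lesssim \sigw^2 \cdot v_j^\top \bLamchk_T^{-1} v_j \cdot \log(\dots/\delta)$ — provided $\bLamchk_T$ is well-conditioned, which is where \Cref{asm:full_rank_cov} and \Cref{asm:bounded_features} enter (to lower bound $\lammin(\bLamchk_T) \gtrsim \lamminst T$ on a high-probability event and to control the logarithmic covering term). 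Summing over $j$ with weights $\lambda_j$ yields $\| \bA \|_{\cH(\Ast)}^2 \lesssim \sigw^2 \sum_j \lambda_j v_j^\top \bLamchk_T^{-1} v_j \cdot \log(\cdot/\delta) = \sigw^2 \cdot \tr(\cH(\Ast) \bLamchk_T^{-1}) \cdot \log(\cdot/\delta)$, which is the advertised bound (the logarithmic factor is suppressed by the informal $\lesssim$).

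A cleaner variant, which I would actually carry out to avoid a union bound over the $\dimx\dimphi$ eigendirections, is to bound $\| \bA \|_{\cH(\Ast)}^2 \le \opnorm{\bLamchk_T^{-1/2}\cH(\Ast)\bLamchk_T^{-1/2}} \cdot \| \bLamchk_T^{-1/2} \bg_T \|_2^2$ and apply a single dimension-free self-normalized bound $\| \bLamchk_T^{-1/2}\bg_T\|_2^2 \lesssim \sigw^2 (\dimx\dimphi + \log(1/\delta) + \log\det(\cdot))$; but this loses the sharp $\tr(\cH \bLamchk_T^{-1})$ dependence in favor of $\opnorm{\cdot}\cdot\tr(\bLamchk_T^{-1})$, so for the stated result the per-direction argument (or an equivalent trace-form self-normalized inequality, controlling $\Exp[\exp(\lambda \bg_T^\top M \bg_T)]$ for $M = \bLamchk_T^{-1}\cH\bLamchk_T^{-1}$ via a chi-squared-type tail) is the right route.

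The main obstacle is that $\bLamchk_T$ is \emph{data-dependent and random}, so one cannot simply treat $\bLamchk_T^{-1}\bg_T$ as a Gaussian with the naive covariance — the estimator error and the normalizer are correlated. This is exactly what the self-normalized / method-of-mixtures machinery is designed to handle, so the step that needs care is verifying its hypotheses in our setting: that the noise $\bw_h^t$ is conditionally $\sigw$-sub-Gaussian given the $\sigma$-field up to $(t,h)$ (immediate, since $\bw_h^t \sim \cN(0,\sigw^2 I)$ independent of the past), and that the required high-probability lower bound on $\lammin(\bLamchk_T)$ holds — which follows from \Cref{asm:full_rank_cov} together with a matrix-Freedman/Bernstein argument using \Cref{asm:bounded_features}, and is presumably established in an auxiliary lemma. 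Everything else (the Kronecker bookkeeping, the eigendecomposition of $\cH(\Ast)$, and assembling the trace) is routine.
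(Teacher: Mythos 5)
Your reduction to vectorized least squares --- $\vec(\Ahat-\Ast) = \bLamchk_T^{-1}\bm{g}_T$ with $\bm{g}_T$ a martingale whose predictable quadratic variation is $\sigw^2\bLamchk_T$ --- matches the paper's setup (the Kronecker bookkeeping is exactly \Cref{sec:knr_mdm}). The gap is in the step you lean on hardest: the per-eigendirection inequality $\langle v_j, \bLamchk_T^{-1}\bm{g}_T\rangle^2 \lesssim \sigw^2\, v_j^\top\bLamchk_T^{-1}v_j\cdot\log(1/\delta)$ is not a standard self-normalized bound. The effective direction here is $\bLamchk_T^{-1}v_j$, which is random and correlated with $\bm{g}_T$, so the scalar martingale argument that would give this for a \emph{fixed} direction does not apply; and the method of mixtures controls only the full quadratic form $\bm{g}_T^\top\bLamchk_T^{-1}\bm{g}_T$, which --- as you note yourself in your ``cleaner variant'' paragraph --- degrades $\tr(\cH(\Ast)\bLamchk_T^{-1})$ to an $\opnorm{\cdot}\cdot\tr(\bLamchk_T^{-1})$-type quantity and loses the stated sharpness. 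A high-probability lower bound on $\lammin(\bLamchk_T)$, the only decoupling device you propose, is not enough to close this.

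What the paper actually does (\Cref{prop:M_reg_bound}, i.e.\ Theorem 7.2 of \cite{wagenmaker2021task}, applied in \Cref{lem:knr_est_bound}) is condition on the stronger event that the empirical covariance concentrates in operator norm around a \emph{fixed} matrix $\bGamma$, namely $\| \bSigma_{\sfT} - \bGamma \|_\op \le \beta\lammin(\bGamma)$. On that event one can replace the random direction $\bLamchk_T^{-1}v_j$ by the deterministic $\bGamma^{-1}v_j$, run the fixed-direction martingale argument you have in mind, and absorb the swap error into the multiplicative $(1+\zeta)$ factor with $\zeta = O(\beta^2\lammax(\bGamma)\tr(\bGamma^{-1}))$; the resulting bound is $\tr(\cH(\Ast)\bGamma^{-1})$, which the informal statement writes as $\tr(\cH(\Ast)\bLamchk_T^{-1})$. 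Crucially, this concentration event is not free: the algorithm engineers it by learning a small set of exploration policies and rerunning each of them $N$ times, so that $\bLambda_T$ concentrates around $N\sum_{\pi}\bLambda_\pi$ (\Cref{lem:fw_exploration_good_event}). To prove the proposition you need either this reruns-of-a-fixed-mixture structure or some substitute argument decoupling $\bLamchk_T^{-1}$ from $\bm{g}_T$; your sketch currently supplies neither.
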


\subsection{Algorithm and Upper Bound}
\iftoggle{arxiv}{\Cref{prop:est_error_informal} motivates our algorithmic approach: explore to collect covariates $\bLambda_T$ minimizing $\tr(\cH(\Ast) \bLamchk_T^{-1})$. There are two primary challenges to achieving this: we do not know $\cH(\Ast)$, as it depends on the (unknown) parameter $\Ast$ and, even if we did know $\cH(\Ast)$, it is not clear how to explore so as to collect data minimizing $\tr(\cH(\Ast) \bLamchk_T^{-1})$. We address both of these challenges with our main algorithm, \Cref{alg:main}. 
}{
\Cref{prop:est_error_informal} motivates our algorithmic approach: explore to collect covariates $\bLambda_T$ minimizing $\tr(\cH(\Ast) \bLamchk_T^{-1})$. There are two primary challenges to achieving this: we do not know $\cH(\Ast)$, as it depends on the (unknown) parameter $\Ast$ and, even if we did know $\cH(\Ast)$, it is not clear how to explore so as to collect data minimizing $\tr(\cH(\Ast) \bLamchk_T^{-1})$. We address both of these challenges in \Cref{alg:main}. \loose}

\begin{algorithm}[h]
\begin{algorithmic}[1]
\State \textbf{inputs:} episodes $T$, $(\cost_h)_{h=1}^H$, confidence $\delta$, control policies $\Picon$, exploration policies $\Piexp$
\State $\Ahat^1 \leftarrow $ 0, $\ellf \leftarrow \lceil \log_2 T/8 \rceil$, $T_\ell \leftarrow 2^\ell$
\For{$\ell = 1,2,3, \ldots, \ellf$}
	\State Compute estimate of model-task Hessian: $\cH_\ell \leftarrow \cH(\Ahat^\ell)$
	\State Run \expdesign on $\Phi_\ell(\bLambda) \leftarrow \tr(\cH_\ell \cdot \bLambda^{-1})$ to learn exploration policies $\Pi_\ell \subseteq \Piexp$
	\State Rerun each policy in $\Pi_\ell$ $N_\ell = \lceil T_\ell / |\Pi_\ell| \rceil$ times, denote collected data $\frakD_\ell$
	\State Estimate $\Ast$: $\Ahat^{\ell+1} = \argmin_A \sum_{h=1}^H \sum_{(\bx_{h+1},\bu_h,\bx_h) \in \frakD_\ell} \| \bx_{h+1} - A \bphi(\bx_{h},\bu_h) \|_2^2 $
\EndFor
\State \textbf{return} $\Khat_T \leftarrow \Kst(\Ahat^{\ellf+1}) \in \Picon$
\end{algorithmic}
\caption{Optimal Exploration in Nonlinear Systems (informal)}
\label{alg:main}
\end{algorithm}

\Cref{alg:main} proceeds in epochs of exponentially increasing length. At each epoch it first approximates $\cH(\Ast)$
by computing the model-task Hessian of the estimated system, $\Ahat^\ell$. Using this approximatiom of $\cH(\Ast)$, it seeks to explore to minimize $\tr(\cH(\Ahat^\ell) \bLamchk_T^{-1})$. This exploration routine is encapsulated in the \expdesign (dynamic optimal experiment design) function, an adaptive experiment-design routine inspired by recent work in reinforcement learning \citep{wagenmaker2022instance} and described in more detail in \Cref{sec:exp_design}. \expdesign returns a set of exploration policies, $\Pi_\ell$, which we run to collect data $\frakD_\ell$. As we will show, the collected covariates, $\bLambda_\ell := \sum_{h=1}^H \sum_{(\bu_h,\bx_h) \in \frakD_\ell} \bphi(\bx_h,\bu_h) \bphi(\bx_h,\bu_h)^\top$, satisfy \loose
\begin{align*}
\textstyle \tr(\cH(\Ahat^\ell) \bLamchk_\ell^{-1}) \lesssim T_\ell^{-1} \cdot \min_{\bLambda \in \bOmega} \tr(\cH(\Ahat^\ell) \bLamchk^{-1}),
\end{align*}
which implies that \expdesign collects data minimizing $\tr(\cH(\Ahat^\ell) \bLamchk_\ell^{-1})$ at a near-optimal rate. Given the data $\frakD_\ell$, we form the least-squares estimate of $\Ast$, $\Ahat^{\ell+1}$, and the process repeats. After running for $T$ episodes, the certainty-equivalence controller on the last estimate obtained, $\Khat_T = \Kst(\Ahat^{\ellf+1})$, is returned. 
The following result bounds the suboptimality of $\Khat_T$ as compared to $\pist(\Ast)$. 

\begin{theorem}\label{thm:main}
Under \Cref{asm:bounded_features,asm:full_rank_cov,asm:bounded_cost,asm:smooth_phi,asm:smooth_controller,asm:unique_min}, if $T \ge \Cpoly \cdot \max \{1,\rcost(\Ast)^{-2}, \rtheta(\Ast)^{-2} \}$, then with probability at least $1-\delta$, \Cref{alg:main} explores with policies in $\Piexp$ at every episode, runs for at most $T$ episodes, and returns $\Khat_T \in \Picon$ satisfying:
\begin{align*}
\cJ(\Khat_T;\Ast) - \cJ(\Kst(\Ast);\Ast) \le  \frac{\sigw^2}{T} \cdot \min_{\bLambda \in \bOmega} \tr \left ( \cH(\Ast) \bLamchk^{-1} \right )  \cdot  C \log \frac{6 \dimx \dimphi}{\delta} + \frac{\Cpoly}{T^{3/2}}
\end{align*}
where we recall $\bOmega$ is the set of possible expected covariates on \eqref{eq:system}, $C$ is a universal constant, and 
\begin{align*}
\Cpoly = \poly  ( \dimphi, \dimx, H, \BA, \Bphi, \Lphi, \Lzeta, \Lcost, \LKst, \sigw, \sigw^{-1}, \tfrac{1}{\lamminst}, \log \tfrac{T}{\delta}  ).
\end{align*}
\end{theorem}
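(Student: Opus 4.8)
The plan is to combine the three ingredients already developed: the quadratic expansion of the controller loss (Proposition~\ref{prop:quadratic_loss_approx}), the estimation-error bound in the model-task Hessian norm (Proposition~\ref{prop:est_error_informal}), and the near-optimality guarantee of \expdesign. First I would set up a ``good event'' on which all the high-probability statements hold simultaneously across the $\ellf = O(\log T)$ epochs: a least-squares concentration event controlling $\|\vec(\Ast - \Ahat^{\ell+1})\|$ in terms of $\bLamchk_\ell^{-1}$, the \expdesign guarantee that the collected covariates satisfy $\tr(\cH(\Ahat^\ell)\bLamchk_\ell^{-1}) \lesssim T_\ell^{-1}\min_{\bLambda \in \bOmega}\tr(\cH(\Ahat^\ell)\bLamchk^{-1})$, and a covariance-concentration event relating the empirical covariates $\bLambda_\ell$ to their expectation $\bLambda_{\Ast,\pi}$ (needed because \expdesign plans against expected covariances but we collect realized ones). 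A union bound over epochs absorbs the $\log(T/\delta)$ factors into $\Cpoly$. The lower bound $T \ge \Cpoly\max\{1,\rcost(\Ast)^{-2},\rtheta(\Ast)^{-2}\}$ ensures that even the first epoch is long enough that $\Ahat^2$ lands in the balls $\cB_\fro(\Ast;\rcost(\Ast))$ and $\cB_\fro(\Ast;\rtheta(\Ast))$ where Propositions~\ref{prop:quadratic_loss_approx} and~\ref{prop:est_error_informal} apply, and hence by induction all subsequent $\Ahat^\ell$ stay in these balls with improving accuracy.

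Next I would run the induction over epochs. At epoch $\ell$, on the good event, Proposition~\ref{prop:est_error_informal} applied to the data $\frakD_\ell$ gives $\|\vec(\Ast - \Ahat^{\ell+1})\|_{\cH(\Ahat^\ell)}^2 \lesssim \sigw^2 \tr(\cH(\Ahat^\ell)\bLamchk_\ell^{-1})\log(\dimx\dimphi/\delta)$, and the \expdesign guarantee bounds the right side by $(\sigw^2/T_\ell)\min_{\bLambda \in \bOmega}\tr(\cH(\Ahat^\ell)\bLamchk^{-1})\log(\dimx\dimphi/\delta)$, up to lower-order terms from the covariance concentration. Since $T_\ell$ grows geometrically, the estimation error in the $\cH(\Ahat^\ell)$-norm shrinks like $1/T_\ell$, which in particular gives $\|\Ast - \Ahat^{\ell+1}\|_\fro \lesssim \sqrt{\tr(\cH\bLamchk^{-1})/(\lambda_{\min}(\cH)\, T_\ell)}$ — but here one must be careful, since $\cH(\Ast)$ may be rank-deficient (that is the whole point of the paper: some directions are irrelevant). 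I would instead bound $\|\Ast-\Ahat^{\ell+1}\|_\fro$ directly via the uniform excitation Assumption~\ref{asm:full_rank_cov}, which guarantees $\lammin(\bLambda_\ell) \gtrsim \lamminst T_\ell$ after projecting onto the well-excited policy, giving $\|\Ast - \Ahat^{\ell+1}\|_\fro^2 \lesssim \sigw^2 \dimx\dimphi/(\lamminst T_\ell)$. This controls the cubic remainder terms and justifies replacing $\cH(\Ahat^\ell)$ by $\cH(\Ast)$ at the cost of an $O(\|\Ast - \Ahat^\ell\|_\fro)$ multiplicative perturbation (using that $\cH(\cdot)$ is Lipschitz near $\Ast$, which follows from the smoothness assumptions and the differentiability of $\bthetast(\cdot)$).

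For the final epoch, I would chain the bounds: apply Proposition~\ref{prop:quadratic_loss_approx} with $\Ahat \leftarrow \Ahat^{\ellf+1}$ to write $\cJ(\Khat_T;\Ast) - \cJ(\Kst(\Ast);\Ast) = \|\vec(\Ast - \Ahat^{\ellf+1})\|_{\cH(\Ast)}^2 + \cOst(\|\Ast - \Ahat^{\ellf+1}\|_\fro^3)$; replace $\cH(\Ast)$ by $\cH(\Ahat^{\ellf})$ up to a $(1 + O(\|\Ast - \Ahat^{\ellf}\|_\fro))$ factor; apply Proposition~\ref{prop:est_error_informal} and the \expdesign guarantee to get $\le (\sigw^2/T_{\ellf})\min_{\bLambda \in \bOmega}\tr(\cH(\Ahat^{\ellf})\bLamchk^{-1})\cdot C\log(6\dimx\dimphi/\delta)$; and finally swap $\cH(\Ahat^{\ellf})$ back for $\cH(\Ast)$ inside the min, again using Lipschitzness of $\cH$, with the perturbation contributing a term of order $\|\Ast - \Ahat^{\ellf}\|_\fro \cdot \tr(\cH(\Ast)\bLamchk^{-1})/T_{\ellf} \lesssim T_{\ellf}^{-3/2}$. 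Since $T_{\ellf} = \Theta(T)$ by the choice $\ellf = \lceil \log_2 T/8\rceil$, and all accumulated cubic-remainder and perturbation terms are $O(T^{-3/2})$ with the constant being a polynomial in the regularity parameters, this yields exactly the claimed bound with the stated $\Cpoly$. The bookkeeping that each policy lies in $\Piexp$ and that the total episode count is at most $T$ follows from $\sum_{\ell=1}^{\ellf} T_\ell \le 2^{\ellf+1} \le T/4 \cdot$ (a constant) and the fact that \expdesign only ever plays policies in $\Piexp$.

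The main obstacle I expect is the \emph{self-bounding / adaptivity} issue: \expdesign is run on $\Phi_\ell(\bLambda) = \tr(\cH_\ell \bLambda^{-1})$ with $\cH_\ell = \cH(\Ahat^\ell)$ a random, data-dependent objective, so its regret guarantee must hold uniformly over the (random) objective, and the covariates $\frakD_\ell$ fed into the next Hessian estimate are themselves adaptively collected. Handling this cleanly requires that the \expdesign guarantee be stated for arbitrary fixed PSD objective matrices with a uniform (over objectives) regret bound, and then invoked conditionally on the $\sigma$-algebra up to epoch $\ell$; the geometric epoching is what makes this work, since it decouples the epochs enough that the error from using $\cH(\Ahat^\ell)$ instead of $\cH(\Ast)$ is a lower-order multiplicative perturbation rather than a compounding one. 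A secondary technical point is ensuring the empirical-versus-expected covariance gap $\|\bLambda_\ell - \Exp[\bLambda_\ell]\|_\op$ is $o(T_\ell)$ with high probability despite the states being generated by a nonlinear, non-i.i.d. process — this needs a martingale concentration argument exploiting the sub-Gaussian noise $\bw_h$ and the bounded-feature Assumption~\ref{asm:bounded_features}, and is presumably where a chunk of $\Cpoly$'s polynomial dependence originates.
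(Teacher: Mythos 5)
Your proposal is correct and follows essentially the same route as the paper's proof: a per-epoch good event with a union bound, the quadratic loss expansion, the estimation bound in the $\cH(\Ahat^{\ell_T})$-norm via the \expdesign guarantee, a separate Frobenius-norm bound from uniform excitation (precisely because $\cH(\Ast)$ may be degenerate), and Lipschitz swaps between $\cH(\Ahat^{\ell_T})$ and $\cH(\Ast)$ with all remainders collected into the $T^{-3/2}$ term. The only cosmetic difference is that the paper handles the Hessian swap additively (via $\vec(\Ahat-\Ast)^\top(\cH(\Ast)-\cH(\Ahat^{\ell_T}))\vec(\Ahat-\Ast)$ and a mixed-covariate bound on $\tr(\bLamchk^{-1})$ through $\lamminst$) rather than as a multiplicative $(1+O(\cdot))$ factor, which matters only for phrasing, not for the final bound.
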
 
\iftoggle{arxiv}{\Cref{thm:main} shows that \Cref{alg:main} is able to explore so as to optimally minimize the exploration loss $\tr(\cH(\Ast) \bLamchk_T^{-1})$, up to a lower-order term scaling as $T^{-3/2}$ and polynomially in system parameters. While \Cref{prop:quadratic_loss_approx,prop:est_error_informal} together show that collecting data which minimizes $\tr(\cH(\Ast) \bLamchk_T^{-1})$ is in some sense fundamental to minimizing the cost of the certainty equivalent controller, it is not clear that this is necessary. In the following section, we show that this is indeed the case.}{
\Cref{thm:main} shows that \Cref{alg:main} is able to explore so as to optimally minimize the exploration loss $\tr(\cH(\Ast) \bLamchk_T^{-1})$, up to a lower-order term scaling as $T^{-3/2}$ and polynomially in system parameters. While \Cref{prop:quadratic_loss_approx,prop:est_error_informal} show that collecting data which minimizes $\tr(\cH(\Ast) \bLamchk_T^{-1})$ is in some sense fundamental, it is not clear it is necessary. We next show that it is indeed necessary.\loose}

\iftoggle{arxiv}{
\begin{remark}[Comparison to \tople Algorithm of \cite{wagenmaker2021task}]
\Cref{alg:main} bears many similarities to the \tople algorithm of \cite{wagenmaker2021task}, which performs an analogous task-driven exploration routine, but in the setting of linear dynamical systems. As noted in \Cref{sec:motivating_example}, the key challenge present in the nonlinear case compared to the linear is that, while in the linear case random noise will excite every direction, in the nonlinear case, the learner must actually traverse the system in order to reach the states that will excite the nonlinear modes. Though the overall structure of \Cref{alg:main} is similar to \tople, this added challenge requires a much more powerful exploration routine, encapsulated in the \expdesign function and described in more detail in \Cref{sec:exp_design}.
\end{remark}

\begin{remark}[Computational Efficiency of \Cref{alg:main}]
The primary computational burden of \Cref{alg:main} is in the computation of $\cH(\Ahat^\ell)$---which involves differentiating $\pist(\Ahat^\ell)$---the computation of $\pist(\Ahat^{\ell_T+1})$, and the \expdesign subroutine. 
In general, if we define $\pist(A)$ as  in \eqref{eq:Kst}, it may not be efficiently computable, as it involves solving a possibly non-convex optimization problem. 
However, as we show in \Cref{sec:smooth_system}, we can instead set $\pist(A)$ to correspond to a local minimum rather than a global minimum of the loss, which will render it efficiently computable (though note that \Cref{thm:main} will still in this case only bound the suboptimality of $\pihat_T$ as compared to $\pist(\Ast)$).
We discuss the computational efficiency of \expdesign in more detail in \Cref{sec:exp_design}, but note that in general it may not be computationally efficient as it relies on calls to the \LC algorithm of \cite{kakade2020information}, which requires access to a computational oracle. 
Despite these computational challenges, in \Cref{sec:experiments} we demonstrate that in practice, by making several reasonable approximations, \Cref{alg:main} can be implemented efficiently, and that this efficient implementation performs very well on realistic systems. \loose
\end{remark}
}{}

\subsection{Lower Bounds on Learning Controllers}\label{sec:lb}

Our goal is to show that, up to constants and lower-order terms, the bound given in \Cref{thm:main} is not improvable, regardless of which controller estimate we use. To obtain such lower bounds, we need several additional assumptions. In particular, we require that the loss $\cJ(\Ktheta;A)$ grows quadratically in the distance $\btheta$ is from $\bthetast(A)$, and strengthen \Cref{asm:full_rank_cov} to ensure \eqref{eq:system} is sufficiently easy to excite. Formal statements of these conditions are given in \Cref{sec:lb_proofs}. Our lower bound is as follows.

\begin{theorem}[Informal]\label{thm:lb_informal}
Under \Cref{asm:bounded_features,asm:full_rank_cov,asm:bounded_cost,asm:smooth_phi,asm:smooth_controller,asm:unique_min} and the additional regularity assumptions mentioned above, as long as $T \ge \Clb$, for any $\omegaexp \in \simplex_{\Piexp}$, we have
\begin{align*}
\min_{\Khat} \max_{A \in \cB_T} \Exp_{\frakD_T \sim A,\omegaexp} [\cJ(\Khat(\frakD_T);A) - \cJ(\Kst(A);A)] \ge \frac{\sigw^2}{3T} \cdot \min_{\bLambda \in \bOmega} \tr(\cH(\Ast) \bLamchk^{-1}) - \frac{\Clb}{T^{5/4}} 
\end{align*}
for $\cB_T := \cB_{\fro}(\Ast; \cO(T^{-5/6}))$, $\Exp_{\frakD_T \sim A,\omegaexp}[\cdot] = \Exp_{\piexp \sim \omegaexp}[\Exp_{\frakD_T \sim A, \piexp}[\cdot]]$ the expectation over trajectories generated by running policies $\pi \sim \omegaexp$ on system $A$ for $T$ episodes, $\pihat$ any mapping from observations to policies in $\Picon$, and $\Clb$ some value scaling polynomially in problem parameters.
\end{theorem}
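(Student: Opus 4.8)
\textbf{Proof proposal for \Cref{thm:lb_informal}.}

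The plan is to reduce the controller-learning problem to a local parametric estimation problem and apply a van-Trees / Bayesian Cram\'er--Rao (or multi-hypothesis Assouad-type) argument restricted to the small ball $\cB_T$. First, I would invoke \Cref{prop:quadratic_loss_approx} to replace the controller loss by its quadratic surrogate: for any $\Ahat$ within an $\cO(T^{-5/6})$-ball of $\Ast$, we have $\cJ(\Kst(\Ahat);\Ast) - \cJ(\Kst(\Ast);\Ast) = \| \vec(\Ast - \Ahat) \|_{\cH(\Ast)}^2 + \cOst(T^{-5/2})$, and crucially, for $A$ ranging over $\cB_T$, the local curvature $\cH(A)$ agrees with $\cH(\Ast)$ up to an $\cO(T^{-5/6})$ perturbation, so the problem becomes: lower bound the Bayes risk of estimating $\vec(\Ast - A)$ in the fixed $\cH(\Ast)$-weighted norm, when the data is $T$ episodes generated by the fixed exploration distribution $\omegaexp$. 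Any $\Khat$ mapping to $\Picon$ induces, via the (differentiable) inverse of $A \mapsto \bthetast(A)$ near $\Ast$, an estimator of $A$; since $\Kst$ is the certainty-equivalence rule and $\btheta \mapsto \cJ$ is locally quadratic by the additional assumption, the controller suboptimality is lower bounded (up to higher-order terms) by $\|\vec(\Ast - \Ahat)\|_{\cH(\Ast)}^2$ for the induced $\Ahat$ — so it suffices to prove the estimation lower bound.

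Second, I would compute the Fisher information for estimating $A$ from the observed trajectories. Conditioned on the covariates, the least-squares/MLE structure of \eqref{eq:system} with Gaussian noise $\bw_h \sim \cN(0,\sigw^2 I)$ gives, for the vectorized parameter $\vec(A)$, a Fisher information matrix that is exactly $\sigw^{-2} \, \bLamchk_{A,\omegaexp}$ where $\bLamchk_{A,\omegaexp} = I_{\dimx} \otimes \Exp_{\pi \sim \omegaexp}[\bLambda_{A,\pi}]$ (in expectation over trajectories; the strengthened excitation assumption ensures this is well-conditioned uniformly over $\cB_T$, and that the covariates concentrate around their mean). Here I must be slightly careful: on a nonlinear system the covariates themselves depend on $A$, but over the tiny ball $\cB_T$ this dependence perturbs the Fisher information by only $\cO(T^{-5/6})$, which is absorbed into the $\Clb/T^{5/4}$ slack. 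Applying the van Trees inequality with a smooth prior supported on $\cB_T$ (e.g. a scaled bump), the Bayes risk of estimating any smooth functional — in particular the quadratic form $\|\vec(\Ast - A)\|^2_{\cH(\Ast)}$ — is bounded below by $\tfrac{1}{T}\tr(\cH(\Ast) (\sigw^{-2}\bLamchk_{\Ast,\omegaexp})^{-1})$ minus the prior-information term, which is $\cO(T^{-2} \cdot \text{ball-radius}^{-2}) = \cO(T^{-2} T^{5/3}) = \cO(T^{-1/3})$ — too large. This tells me the radius must be tuned: with radius $\cO(T^{-5/6})$ the prior term is actually $\cO(T^{5/3}/T^2)$... so in fact one should instead use an Assouad/Fano-type finite hypothesis construction, or more simply observe that the van Trees term is lower order relative to the main $\sigw^2/T \cdot \min_{\bLambda \in \bOmega}\tr(\cH(\Ast)\bLamchk^{-1})$ term precisely when we absorb it into $\Clb/T^{5/4}$; I would verify the exponent bookkeeping carefully here, choosing the prior scale to balance the bias from truncation (which must stay within $\cB_T$) against the prior Fisher information.

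Third, the passage from $\bLamchk_{\Ast,\omegaexp}^{-1}$ to $\min_{\bLambda \in \bOmega}\tr(\cH(\Ast)\bLamchk^{-1})$: since $\omegaexp$ is an \emph{arbitrary} mixture in $\simplex_{\Piexp}$ and $\bLambda_{\Ast,\omegaexp} = \Exp_{\pi \sim \omegaexp}[\bLambda_\pi] \in \bOmega$, we trivially have $\tr(\cH(\Ast)\bLamchk_{\Ast,\omegaexp}^{-1}) \ge \min_{\bLambda \in \bOmega}\tr(\cH(\Ast)\bLamchk^{-1})$, so the lower bound for a fixed $\omegaexp$ immediately gives the stated bound (the $\max_A$ over the adversarial ball is what converts the Bayes bound into the minimax statement, via the standard Bayes-risk-$\le$-minimax-risk inequality; the constant $1/3$ rather than $1$ comes from the higher-order-term losses and the van Trees / prior slack). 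The main obstacle I anticipate is the \textbf{uniformity over the adversarial ball} $\cB_T$: I need $\cH(A)$, $\bLamchk_{A,\omegaexp}$, the quadratic approximation in \Cref{prop:quadratic_loss_approx}, and the local quadratic growth of $\btheta \mapsto \cJ$ to all hold \emph{simultaneously and uniformly} for every $A \in \cB_{\fro}(\Ast;\cO(T^{-5/6}))$, with all the approximation errors summing to something of order $T^{-5/4}$ — this is exactly why the radius is taken as $T^{-5/6}$ (so that cubic errors $\sim (T^{-5/6})^3 = T^{-5/2}$ and the various first-order perturbations are controlled), and getting these constants to line up, together with the strengthened excitation assumption guaranteeing the empirical covariates concentrate and stay invertible on this ball, is the delicate part of the argument. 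The smoothness assumptions (\Cref{asm:smooth_phi,asm:smooth_controller}, four-times differentiability) are what make all of these Taylor expansions legitimate.
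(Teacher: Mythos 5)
Your strategy is the same one the paper uses: reduce the controller suboptimality to estimation of $\Ast$ in the $\cH(\Ast)$-norm via the quadratic decomposition of \Cref{prop:quadratic_loss_approx}, prove an information-theoretic lower bound for that estimation problem against the fixed exploration distribution $\omegaexp$, and finish with the trivial inclusion $\Exp_{\pi\sim\omegaexp}[\bLambda_\pi]\in\bOmega$. The difference is that the paper does not re-derive the estimation lower bound: its proof of \Cref{thm:lb} is a black-box reduction to Theorem~6.1 of \cite{wagenmaker2021task} (via \Cref{lem:lb_intermediate}), and the entire content of the argument is verifying that theorem's hypotheses --- quadratic growth of $\cJ(\btheta;A)$ around $\bthetast(A)$ (\Cref{asm:lb_strong_convex}), a minimum-eigenvalue bound for \emph{every} exploration policy (\Cref{asm:lb_min_eig}), smoothness of the loss, and, crucially, Lipschitz continuity of $A\mapsto\bLambda_{A,\pi}$ (\Cref{lem:lb_smooth_cov}), which is exactly the ``covariates depend on $A$'' issue you flagged and which the paper handles with a score-function bound rather than by absorbing it informally.

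Two soft spots in your write-up. First, the reduction step as you state it --- ``any $\Khat$ induces an estimator of $A$ via the inverse of $A\mapsto\bthetast(A)$'' --- does not work: that map goes from $\R^{\dimx\times\dimphi}$ to $\R^{\dimtheta}$ and is generically not invertible. The correct route (and the one the cited theorem takes) is to lower bound the risk of estimating the \emph{functional} $\bthetast(A)$ in the $\nabla_{\btheta}^2\cJ$-weighted norm, using \Cref{asm:lb_strong_convex} to convert controller suboptimality into $\|\btheta-\bthetast(A)\|_2^2$, and then recover $\tr(\cH(\Ast)\bLamchk^{-1})$ through the chain-rule identity $\cH(\Ast)=\nabla_A\bthetast^\top\,\nabla_{\btheta}^2\cJ\,\nabla_A\bthetast$ (valid because $\nabla_{\btheta}\cJ=0$ at the optimum). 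Second, your van Trees bookkeeping fails because you took the informal radius $T^{-5/6}$ literally; the formal statement (\Cref{thm:lb}) defines $\cB_T$ by $\|A-\Ast\|_\fro^2\lesssim T^{-5/6}$, i.e., radius of order $T^{-5/12}\gg T^{-1/2}$, so the prior Fisher information $r^{-2}\sim T^{5/6}$ is $o(T)$ and the correction it induces is a vanishing fraction of the $1/T$ main term, absorbed by the $1/3$ constant and the $\Clb/T^{5/4}$ slack. With those two repairs your argument closes along essentially the paper's lines.
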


Note that this lower bound holds for \emph{any} $\Ast$ and mapping $\bphi$, as long as our assumptions are met. Up to constants and lower-order terms, the scaling of \Cref{thm:lb_informal} matches that of \Cref{thm:main}---both scale with $\min_{\bLambda \in \bOmega} \tr(\cH(\Ast) \bLamchk^{-1})$---which implies that \Cref{alg:main} is indeed optimal (under certain additional regularity conditions). 
To the best of our knowledge, this is the first result characterizing the optimal statistical rates for learning in nonlinear dynamical systems.
We emphasize that \Cref{thm:lb_informal} holds for \emph{any} decision rule $\Khat$---it does not require that we use the certainty equivalence decision rule. As \Cref{alg:main} does rely on certainty equivalence, this result also implies that the certainty equivalence decision rule is optimal for (certain classes of) nonlinear dynamical systems.

\iftoggle{arxiv}{
The proof of \Cref{thm:lb_informal} builds on the work \cite{wagenmaker2021task}, which shows a similar result for linear dynamical systems. It critically relies on our quadratic decomposition of the controller loss in \Cref{prop:quadratic_loss_approx}, which reduces the problem of obtaining a lower bound on controller loss to a lower bound on estimating $\Ast$ in the $\cH(\Ast)$ norm. Given this, the result can be obtained by applying lower bounds on regression in general norms. 
}{

}

\section{Optimal Experiment Design in Arbitrary Dynamical Systems}\label{sec:exp_design}

We turn now to the \expdesign routine, which is the key algorithmic tool we use to prove \Cref{thm:main}. \expdesign is a general reduction from policy optimization to optimal experiment design in arbitrary dynamical systems, and is an extension of a recently proposed approach for experiment design in linear MDPs \citep{wagenmaker2022instance}. This section may be of independent interest.

To illustrate the generality of this reduction, in this section we consider the following system: 
\begin{align}\label{eq:system_general}
\bx_{h+1} = f_h(\bx_h,\bu_h,\bw_h), \quad h = 1,2, \ldots , H,
\end{align}
where $\bx_h \in \cX \subseteq \R^{\dimx}$ denotes the state, $\bu_h \in \cU \subseteq \R^{\dimu}$ the input, and $\bw_h \in \R^{\dimw}$ the noise. We take the dynamics $(f_h)_{h=1}^H$ to be unknown and arbitrary.
We assume there is some known featurization of our system that is of interest, $\bphi(\bx,\bu) \rightarrow \R^{\dimphi}$, and an experiment design object on this featurization, $\Phi : \R^{\dimphi \times \dimphi} \rightarrow \R$. Our goal is to collect some set of trajectories $\{ \traj_t \}_{t=1}^T$ which minimizes $\Phi$:
\begin{align*}
\Phi \big ( \tfrac{1}{TH} \cdot  \tsum_{t=1}^T \tsum_{h=1}^H \bphi(\bx_h^t,\bu_h^t) \bphi(\bx_h^t,\bu_h^t)^\top \big ). 
\end{align*}
As an example, if $\Phi(\bLambda) = \logdet (\bLambda)$, this reduces to $D$-optimal design, and if $\Phi(\bLambda) = \tr(\cH \cdot \bLambda^{-1})$, the setting considered in \Cref{sec:main_results}, this reduces to weighted $A$-optimal design.
As before, we assume we have access to some set of exploration policies $\Piexp$, and define $\bLambda_\pi$ and $\bOmega$ as in \Cref{sec:prelim}, but with respect to this new feature map $\bphi$ and system \eqref{eq:system_general}.
We also define $\bOmegahat$ to be the space of all possible covariance matrices: 
\begin{align*}
\bOmegahat := \big \{ \tsum_{h = 1}^H \bphi(\bx_h,\bu_h) \bphi(\bx_h,\bu_h)^\top \ : \ \bx_h \in \cX, \bu_h \in \cU, \forall h \in [H]  \big \}.
\end{align*}

\noindent To facilitate efficient experiment design in this setting, we will make the following assumption on $\Phi$.
\begin{asm}[Regularity of $\Phi$]\label{asm:regular_phi}
$\Phi$ is regular in the following sense:
\begin{enumerate}[leftmargin=*]
\item $\Phi$ is convex, differentiable, and $\beta$-smooth in the norm $\| \cdot \|$ (with dual-norm $\| \cdot \|_*$):
\begin{align*}
\| \nabla_{\bLambda} \Phi(\bLambda) - \nabla_{\bLambda'} \Phi(\bLambda') \|_* \le \beta \cdot \| \bLambda - \bLambda' \|, \quad \forall \bLambda,\bLambda' \in \bOmegahat.
\end{align*} 
\item There exists some $M < \infty$ satisfying $\sup_{\bLambda \in \bOmegahat} \sup_{\bx \in \cX, \bu \in \cU} | \bphi(\bx,\bu)^\top \nabla_{\bLambda} \Phi(\bLambda) \bphi(\bx,\bu) | \le M.$
\end{enumerate}
\end{asm}

\noindent The key algorithmic assumption we make is access to a regret minimization oracle on \eqref{eq:system_general}.

\begin{asm}[Regret Minimization Oracle]\label{asm:regret_min}
Let $\cost_h(\bx,\bu) = \bphi(\bx,\bu)^\top Q_h\bphi(\bx,\bu)$ for some $Q_h \in \R^{\dimphi \times \dimphi}$ such that $|\sum_h \cost_h(\bx_h,\bu_h)| \le 1$ for all $\bx_h \in \cX,\bu_h \in \cU$. We assume we have access to some learner $\regalg$ which is able to achieve low regret on costs $\{ \cost_h(\cdot,\cdot) \}_{h=1}^H$ with respect to policy class $\Piexp$. That is, with probability at least $1-\delta$:
\iftoggle{arxiv}{\begin{align*}
\sum_{t=1}^T \Exp_{f,\pi_t} \left [\sum_{h=1}^H \cost_h(\bx_h^t,\bu_h^t) \right ] - T \cdot \min_{\pi \in \Piexp} \Exp_{f,\pi} \left [ \sum_{h=1}^H \cost_h(\bx_h,\bu_h) \right ] \le \CR \cdot \log^{\pR} \tfrac{T}{\delta} \cdot T^\alpha
\end{align*}}{
\begin{align*}
\textstyle \tsum_{t=1}^T \Exp_{f,\pi_t} [\tsum_{h=1}^H \cost_h(\bx_h^t,\bu_h^t) ] - T \cdot \min_{\pi \in \Piexp} \Exp_{f,\pi} [ \tsum_{h=1}^H \cost_h(\bx_h,\bu_h)  ] \le \CR \cdot \log^{\pR} \tfrac{T}{\delta} \cdot T^\alpha
\end{align*}}
for some $\CR > 0$, $\pR > 0$, and $\alpha \in (0,1)$, and where $\pi_t$ is the policy $\regalg$ plays at episode $t$. 
\end{asm}

Note that the regret minimization algorithm satisfying \Cref{asm:regret_min} may be arbitrary. 
For example, for linear systems, we could apply provably efficient algorithms for the Linear Quadratic Regulator \citep{simchowitz2020naive,mania2019certainty}; for nonlinear systems of the form \eqref{eq:system} we could apply the \LC algorithm of \citep{kakade2020information}; for more general settings of reinforcement learning with function approximation, algorithms such as \textsc{BiLin-UCB} \citep{du2021bilinear} or \textsc{E2D} \citep{foster2021statistical} could be applied. In practice, though they may not formally satisfy the guarantee of \Cref{asm:regret_min}, deep RL approaches could be used. We have the following result.

\begin{theorem}\label{lem:fwregret}
Fix $T > 0$ and denote $R := \sup_{\bLambda,\bLambda' \in \bOmegahat} \| \bLambda - \bLambda' \|$. Under \Cref{asm:regular_phi}, and assuming we have access to a learner $\regalg$ satisfying \Cref{asm:regret_min} with $\alpha = 1/2$, \expdesign runs for $T$ episodes on \eqref{eq:system_general}, and with probability at least $1-\delta$ collects data $\{ (\bx_h^t,\bu_h^t) \}_{h \in [H], t \in [T]}$ satisfying
\iftoggle{arxiv}{\begin{align*}
\Phi \bigg ( \frac{1}{T} \cdot \sum_{t = 1}^T \sum_{h=1}^H \bphi(\bx_h^t,\bu_h^t) \bphi(\bx_h^t,\bu_h^t)^\top \bigg ) -  \min_{\bLambda \in \bOmega} \Phi( \bLambda) \le \frac{\beta R^2 \log T +  HM(\CR \log^{\pR} \frac{2T}{\delta} + 3\log^{1/2} \frac{4T}{\delta})}{T^{1/3}}
\end{align*}
where $R = \sup_{\bLambda,\bLambda' \in \bOmegahat} \| \bLambda - \bLambda' \|$.}{
\begin{align*}
\Phi \bigg ( \frac{1}{T} \cdot \sum_{t = 1}^T \sum_{h=1}^H \bphi_h^t (\bphi_h^t)^\top \bigg ) -  \min_{\bLambda \in \bOmega} \Phi( \bLambda) \le \frac{\beta R^2 \log T +  HM(\CR \log^{\pR} \frac{2T}{\delta} + 3\log^{1/2} \frac{4T}{\delta})}{T^{1/3}}
\end{align*}
where $\bphi_h^t := \bphi(\bx_h^t,\bu_h^t)$, and we recall $\bOmega$ is the set of possible expected covariates on \eqref{eq:system_general}.}
\end{theorem}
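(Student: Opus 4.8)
The plan is to recognize \expdesign as an \emph{inexact Frank--Wolfe} method applied to the objective $\Phi$ over the convex set $\bOmega$, where the linear-minimization oracle that Frank--Wolfe requires is implemented by the regret minimizer $\regalg$. Concretely, the algorithm proceeds in $K := \lceil T^{1/3}\rceil$ epochs of (roughly) equal length $m := \lfloor T/K\rfloor \asymp T^{2/3}$. After a short warm-up epoch that runs the exciting policy of \Cref{asm:full_rank_cov} (so that all subsequent iterates lie in the region where $\Phi$ is finite and $\beta$-smooth), epoch $k$ begins with the running empirical covariance $\bar\bLambda_{k-1}$, freezes the cost $\cost_h(\bx,\bu) := \tfrac{1}{HM}\,\bphi(\bx,\bu)^\top \nabla_{\bLambda}\Phi(\bar\bLambda_{k-1})\,\bphi(\bx,\bu)$ — which by \Cref{asm:regular_phi}(2) satisfies $|\sum_h\cost_h|\le 1$, as required by \Cref{asm:regret_min}, and which has the quadratic form $\bphi^\top Q_h\bphi$ with $Q_h = \nabla_{\bLambda}\Phi(\bar\bLambda_{k-1})$ — and runs $\regalg$ on this cost for $m$ episodes, producing the per-epoch empirical covariance $\hat\bLambda_k := \tfrac1m\sum_{t\in\text{epoch }k}\sum_h \bphi(\bx_h^t,\bu_h^t)\bphi(\bx_h^t,\bu_h^t)^\top$. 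The running average is updated by $\bar\bLambda_k = (1-\tfrac1k)\bar\bLambda_{k-1} + \tfrac1k\hat\bLambda_k$, which with equal epoch lengths telescopes to $\bar\bLambda_K = \tfrac1K\sum_{k\le K}\hat\bLambda_k = \tfrac1T\sum_{t,h}\bphi_h^t(\bphi_h^t)^\top$, exactly the normalized covariance in the theorem statement.

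The first analytical step is a \textbf{per-epoch subproblem guarantee}: with high probability $\langle\nabla_{\bLambda}\Phi(\bar\bLambda_{k-1}),\,\hat\bLambda_k\rangle \le \min_{\bLambda\in\bOmega}\langle\nabla_{\bLambda}\Phi(\bar\bLambda_{k-1}),\,\bLambda\rangle + \epsilon_k$ with $\epsilon_k \le \tfrac{HM}{\sqrt m}\bigl(\CR\log^{\pR}\tfrac{2T}{\delta} + 3\log^{1/2}\tfrac{4T}{\delta}\bigr)$. This combines three ingredients: (i) a linear functional minimized over the convex hull $\bOmega$ is minimized at an extreme point, hence $\min_{\bLambda\in\bOmega}\langle\nabla\Phi(\bar\bLambda_{k-1}),\bLambda\rangle = \min_{\pi\in\Piexp}\Exp_{f,\pi}[\sum_h\bphi_h^\top\nabla\Phi(\bar\bLambda_{k-1})\bphi_h]$, precisely the comparator in \Cref{asm:regret_min}; (ii) the regret bound with $\alpha=1/2$, rescaled by $HM$, controls $\tfrac1m\sum_{t\in\text{epoch }k}\Exp_{f,\pi_t}[\sum_h\bphi_h^\top\nabla\Phi(\bar\bLambda_{k-1})\bphi_h]$ against that minimum up to $HM\CR\log^{\pR}(2T/\delta)/\sqrt m$; and (iii) since each episode contributes at most $HM$ to $\langle\nabla\Phi(\bar\bLambda_{k-1}),\cdot\rangle$ (by \Cref{asm:regular_phi}(2)), Azuma's inequality converts the expected covariances $\bLambda_{\pi_t}$ appearing in the regret bound into the realized $\hat\bLambda_k$ at the cost of $3HM\sqrt{\log(4T/\delta)/m}$. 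A union bound over the $K\le T$ epochs (and the two high-probability events per epoch) yields the stated logarithmic factors.

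The second step is the \textbf{Frank--Wolfe contraction}. Writing $g_k := \Phi(\bar\bLambda_k) - \min_{\bLambda\in\bOmega}\Phi(\bLambda)$, $\beta$-smoothness gives $\Phi(\bar\bLambda_k) \le \Phi(\bar\bLambda_{k-1}) + \tfrac1k\langle\nabla\Phi(\bar\bLambda_{k-1}),\hat\bLambda_k-\bar\bLambda_{k-1}\rangle + \tfrac{\beta}{2k^2}\|\hat\bLambda_k-\bar\bLambda_{k-1}\|^2$; bounding the quadratic term by $\beta R^2/(2k^2)$ (both iterates lie in $\mathrm{conv}(\bOmegahat)$, of diameter $R$), invoking step one, and using convexity $\langle\nabla\Phi(\bar\bLambda_{k-1}),\bLambda^\star-\bar\bLambda_{k-1}\rangle \le \Phi(\bLambda^\star) - \Phi(\bar\bLambda_{k-1})$ for $\bLambda^\star := \argmin_{\bOmega}\Phi$ gives the recursion $g_k \le (1-\tfrac1k)g_{k-1} + \tfrac1k\epsilon_k + \tfrac{\beta R^2}{2k^2}$. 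Multiplying by $k$ and telescoping yields $g_K \le \tfrac1K\sum_{k\le K}\epsilon_k + \tfrac{\beta R^2(1+\log K)}{2K} \le \max_k\epsilon_k + \cO(\tfrac{\beta R^2\log K}{K})$. Substituting $m\asymp T^{2/3}$ and $K\asymp T^{1/3}$ makes both terms of order $T^{-1/3}$; collecting the constants produces exactly the claimed bound $\tfrac{\beta R^2\log T + HM(\CR\log^{\pR}\frac{2T}{\delta} + 3\log^{1/2}\frac{4T}{\delta})}{T^{1/3}}$.

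The main obstacle I expect is step one: reconciling the regret oracle's guarantee — stated for a \emph{fixed, bounded, expected} cost — with what Frank--Wolfe needs, namely an approximate minimizer of a linear functional of the \emph{realized} covariance evaluated at the \emph{current iterate}. The epoch structure (freeze the gradient, restart $\regalg$) is what makes the cost fixed within an epoch; the rescaling by $HM$ is what enforces the $|\sum_h\cost_h|\le 1$ normalization; and the martingale concentration is what bridges expected to realized covariances. Once these are in place the Frank--Wolfe convergence argument is routine, but the bookkeeping — keeping all iterates in a set on which $\Phi$ is finite and $\beta$-smooth, and tracking the union-bound budget and the cost normalization across the $K$ epochs so that the $T^{1/3}$ balance comes out cleanly — is where the care is needed.
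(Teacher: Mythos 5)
Your proposal is correct and follows essentially the same route as the paper: the paper also casts \expdesign as inexact Frank--Wolfe with $N \asymp T^{1/3}$ iterations of $K \asymp T^{2/3}$ episodes, establishes the per-iteration approximate linear-minimization guarantee by combining the extreme-point argument, the regret bound, and Azuma--Hoeffding, and then invokes the standard approximate-FW recursion (cited as Lemma C.1 of \cite{wagenmaker2022instance} rather than re-derived) together with the telescoping identity for the $\gamma_n = 1/(n+1)$ averaging. The only cosmetic difference is your invocation of \Cref{asm:full_rank_cov} for the warm-up, which is not needed since \Cref{asm:regular_phi} already guarantees smoothness on all of $\bOmegahat$.
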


\Cref{lem:fwregret} shows that, given access only to a regret minimization oracle, it is possible to solve experiment design problems on arbitrary dynamical systems. 
The requirement that $\alpha = 1/2$ is for expositional purposes only---we generalize this result to arbitrary $\alpha$ (and more general feature maps) in \Cref{sec:nl_exp_design}.
Under certain conditions, it can be shown that, if the exploration policies \expdesign runs to collect $\frakD$ are \emph{rerun}, the newly collected data satisfies a similar guarantee as \Cref{lem:fwregret}. This lets us run \expdesign to learn an approximate solution of $\min_{\bLambda \in \bOmega} \Phi( \bLambda)$, and then rerun the learned policies as many times as desired to collect additional data approximately minimizing $\Phi$.\loose

\iftoggle{arxiv}{}{\vspace{-0.25em}}
\subsection{Overview of \expdesign Algorithm}\label{sec:expdesign_overview}
\iftoggle{arxiv}{}{\vspace{-0.25em}}

\expdesign is inspired by recent work on experiment design in reinforcement learning \citep{hazan2019provably,zahavy2021reward,wagenmaker2022instance,wagenmaker2022leveraging}, and can be seen as an extension of the \fwregret algorithm of \cite{wagenmaker2022instance} to arbitrary systems. We refer the reader to \cite{wagenmaker2022instance} for a more in-depth discussion of the \fwregret algorithm, and briefly sketch its extension to arbitrary systems here (see \Cref{sec:nl_exp_design} and \Cref{alg:regret_fw} for precise definitions).

\begin{algorithm}[H]
\begin{algorithmic}[1]
\State \textbf{input}: objective $\Phi$, episodes $T$, confidence $\delta$, regret algorithm $\regalg$, exploration policies $\Piexp$
\State Set $K \leftarrow \cO(T^{2/3}), N \leftarrow \cO(T^{1/3})$, $\gamma_n \leftarrow \frac{1}{n+1}$
\Statex {\color{blue} // $\bphi_h^{k,n} := \bphi(\bx_h^{k,n},\bu_h^{k,n})$ for $(\bx_h^{k,n},\bu_h^{k,n})$ the state-input at step $h$ of episode $k$ of iteration $n$}
\State Play any $\piexp \in \Piexp$ for $K$ episodes, set $\bLambda_0 \leftarrow \frac{1}{K} \sum_{k=1}^K \sum_{h=1}^H \bphi_h^{k,0} (\bphi_h^{k,0})^\top$ 
\For{$n = 1,2,\ldots,N$}
	\State Compute derivative of $\Phi(\bLambda_{n-1})$, $\Xi_{n} \leftarrow \nabla_{\bLambda} \Phi(\bLambda)|_{\bLambda = \bLambda_{n-1}}$
	\State Run $\regalg$ on cost $\cost_h^n(\bx,\bu) \leftarrow \frac{1}{M} \cdot \bphi(\bx,\bu)^\top (\Xi_{n}) \bphi(\bx,\bu)$ \label{line:fw_main_regret}
	for $K$ episodes
	\State $\bLambda_{n} \leftarrow (1-\gamma_{n}) \bLambda_{n-1} + \frac{\gamma_{n}}{K} \cdot \sum_{k=1}^K  \sum_{h=1}^H  \bphi_h^{k,n} (\bphi_h^{k,n})^\top$ \label{line:fw_main_lam_update}
\EndFor
\State \textbf{return} $\frac{1}{T} \sum_{n=0}^{N} \sum_{k = 1}^K \sum_{h=1}^H  \bphi_h^{k,n} (\bphi_h^{k,n})^\top$
\end{algorithmic}
\caption{Dynamic Optimal Experiment Design (\expdesign, Informal)}
\label{alg:regret_fw_informal}
\end{algorithm}

\iftoggle{arxiv}{
Conceptually, \expdesign runs a variant of conditional gradient descent on the objective $\Phi(\bLambda)$. At each iteration, $n$, it computes the gradient of the loss at the current iterate, $\Xi_{n} \leftarrow \nabla_{\bLambda} \Phi(\bLambda)|_{\bLambda = \bLambda_{n-1}}$. To run a standard gradient descent algorithm on this objective, we would simply update $\bLambda_{n-1}$ by taking a step in the direction $\Xi_{n}$. However, our objective is to minimize $\Phi$ over the constraint set, $\bOmega$. Thus, rather than taking a step in the direction $\Xi_n$, we wish to take a step in the direction of steepest descent \emph{within the constraint set}.

The challenge is that the constraint set in our setting, $\bOmega$, is \emph{unknown}, as it depends on the expectation over trajectories induced on the unknown dynamics $(f_h)_{h=1}^H$, and therefore we cannot directly compute this steepest descent direction. The key observation is that the computation of this steepest descent direction is equivalent to solving:
\begin{align*}
 \argmin_{\piexp \in \Piexp} \Exp_{f,\piexp} \left [ \sum_{h = 1}^H  \bphi(\bx_h,\bu_h)^\top (\Xi_n) \bphi(\bx_h,\bu_h) \right ].
\end{align*}
This is simply a policy optimization problem, however, and can be solved approximately by $\regalg$ under \Cref{asm:regret_min}. Thus, in the call to $\regalg$ on \Cref{line:fw_main_regret}, we approximate the steepest descent direction, and on \Cref{line:fw_main_lam_update}
update $\bLambda_{n-1}$ in this direction. Convergence of this procedure to the optimal value, $\min_{\bLambda \in \bOmega} \Phi(\bLambda)$, can then be shown by the standard analysis of conditional gradient descent. We remark that, under \Cref{asm:regular_phi} and \Cref{asm:regret_min}, this argument is completely generic and does not require that our system, \eqref{eq:system_general}, exhibit any additional properties. 
}{
\vspace{-1em}
Conceptually, \expdesign runs a variant of conditional gradient descent on the objective $\Phi(\bLambda)$. At each iteration, $n$, it computes the gradient of the loss at the current iterate, $\Xi_{n} \leftarrow \nabla_{\bLambda} \Phi(\bLambda)|_{\bLambda = \bLambda_{n-1}}$. To run a standard gradient descent algorithm on this objective, we would simply update $\bLambda_{n-1}$ by taking a step in the direction $\Xi_{n}$. However, our objective is to minimize $\Phi$ over the constraint set, $\bOmega$. Thus, rather than taking a step in the direction $\Xi_n$, we wish to take a step in the direction of steepest descent \emph{within the constraint set}.

The challenge is that the constraint set in our setting, $\bOmega$, is \emph{unknown}, as it depends on the expectation over trajectories induced on the unknown dynamics $(f_h)_{h=1}^H$, and therefore we cannot directly compute this steepest descent direction. The key observation is that the computation of this steepest descent direction is equivalent to solving:
\begin{align*}
\textstyle \argmin_{\piexp \in \Piexp} \Exp_{f,\piexp}[ \tsum_{h = 1}^H  \bphi(\bx_h,\bu_h)^\top (\Xi_n) \bphi(\bx_h,\bu_h)].
\end{align*}
This is simply a policy optimization problem, however, and can be solved approximately by $\regalg$ under \Cref{asm:regret_min}. Thus, in the call to $\regalg$ on \Cref{line:fw_main_regret}, we approximate the steepest descent direction, and on \Cref{line:fw_main_lam_update}
update $\bLambda_{n-1}$ in this direction. Convergence of this procedure to the optimal value, $\min_{\bLambda \in \bOmega} \Phi(\bLambda)$, can then be shown by the standard analysis of conditional gradient descent. We remark that, under \Cref{asm:regular_phi} and \Cref{asm:regret_min}, this argument is completely generic and does not require that our system, \eqref{eq:system_general}, exhibit any additional properties. 
}

\iftoggle{arxiv}{}{\vspace{-0.25em}}
\subsection{From \Cref{lem:fwregret} to \Cref{thm:main}}
\iftoggle{arxiv}{}{\vspace{-0.25em}}
In \Cref{alg:main}, our goal is to collect covariates, $\bLamchk_{T_\ell}^{-1}$, such that $\tr(\cH(\Ahat^{\ell}) \bLamchk_{T_\ell}^{-1})$ is as small as possible. To achieve this, we apply \expdesign to the objective $\Phi_\ell(\bLambda) = \tr(\cH(\Ahat^{\ell}) \bLamchk^{-1})$, with \Cref{asm:regret_min} instantiated by the \LC algorithm of \cite{kakade2020information}. By the guarantee given in \Cref{lem:fwregret}, after running for a number of episodes $N$ which scales polynomially in problem parameters,
\expdesign will collect covariates $\bLambda_N$ such that $\Phi_\ell(\frac{1}{N} \bLambda_N) \le 2 \cdot \min_{\bLambda \in \bOmega} \Phi_\ell(\bLambda)$, which implies $\tr(\cH(\Ahat^{\ell}) \bLamchk_N^{-1}) \le \frac{2}{N} \cdot \min_{\bLambda \in \bOmega} \tr(\cH(\Ahat^{\ell}) \bLamchk^{-1})$.
By rerunning the policies \expdesign used to collect this $\bLambda_N$ for $T_\ell/N$ additional times, we can ensure $\tr(\cH(\Ahat^{\ell}) \bLamchk_{T_\ell}^{-1}) \lesssim \frac{1}{T_\ell} \cdot \min_{\bLambda \in \bOmega} \tr(\cH(\Ahat^\ell) \bLamchk^{-1})$ as desired.

We remark that the \LC algorithm requires access to a computation oracle. As the focus of this work is primarily statistical, we leave addressing this computational challenge for future work. Furthermore, as we show in the following section, computationally efficient, sampling-based implementations of our approach are very effective in practice.
We remark as well that the objective we ultimately care about minimizing is $\tr(\cH(\Ast) \bLamchk^{-1})$. 
As we show, by including a small amount of uniform exploration, we can ensure that \iftoggle{arxiv}{$\cH(\Ahat^\ell)$ is not too far from $\cH(\Ast)$, and so}{} the suboptimality incurred optimizing $\tr(\cH(\Ahat^\ell) \bLamchk^{-1})$ instead of $\tr(\cH(\Ast) \bLamchk^{-1})$ only contributes to the lower-order terms of the final guarantee in \Cref{thm:main}.

\section{Experimental Results}\label{sec:experiments}

\begin{figure}
\begin{minipage}[c]{0.45\linewidth}
\includegraphics[width=\linewidth]{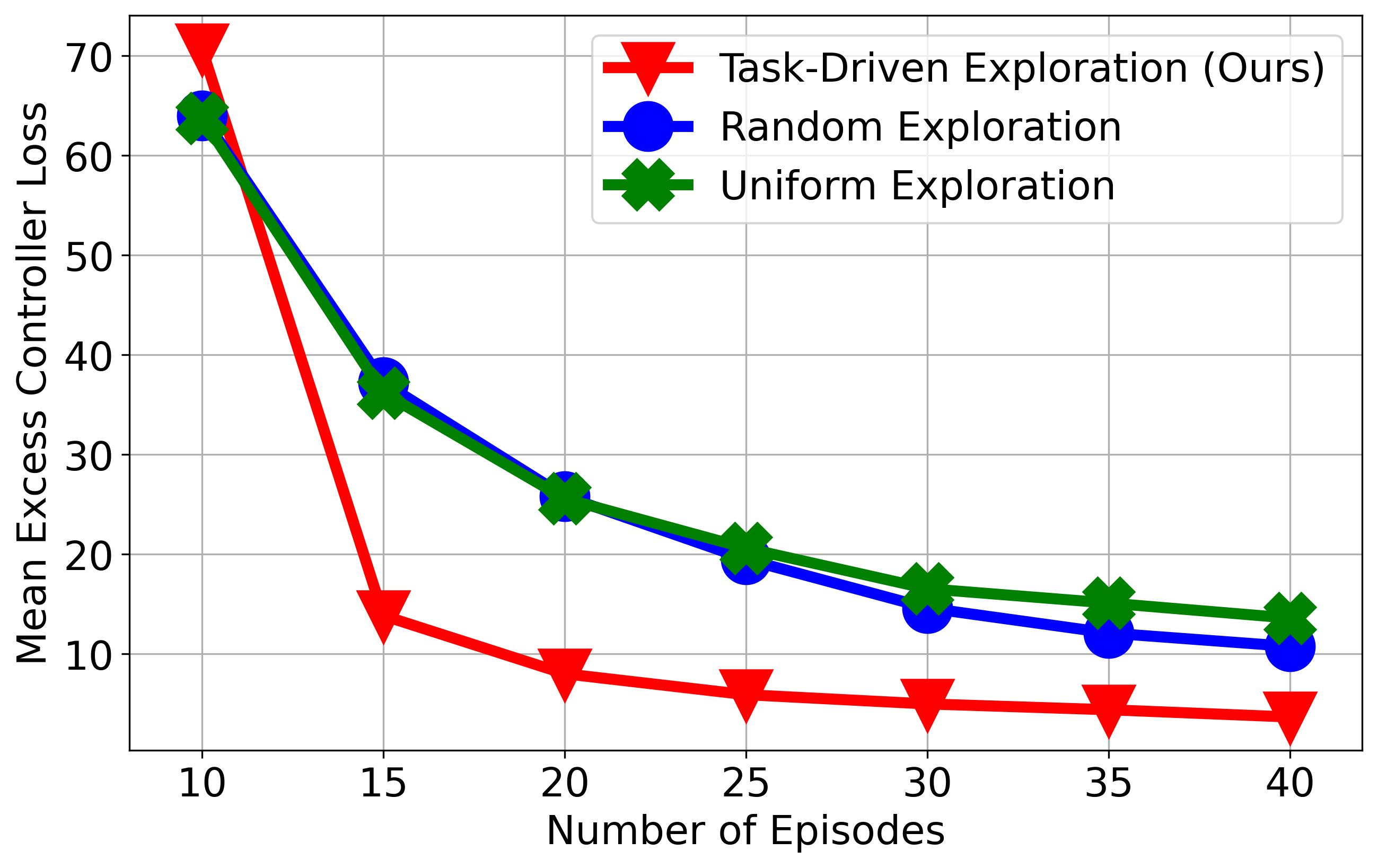}
\caption{Performance on Drone}
\label{fig:drone}
\end{minipage}
\hfill
\begin{minipage}[c]{0.45\linewidth}
\includegraphics[width=\linewidth]{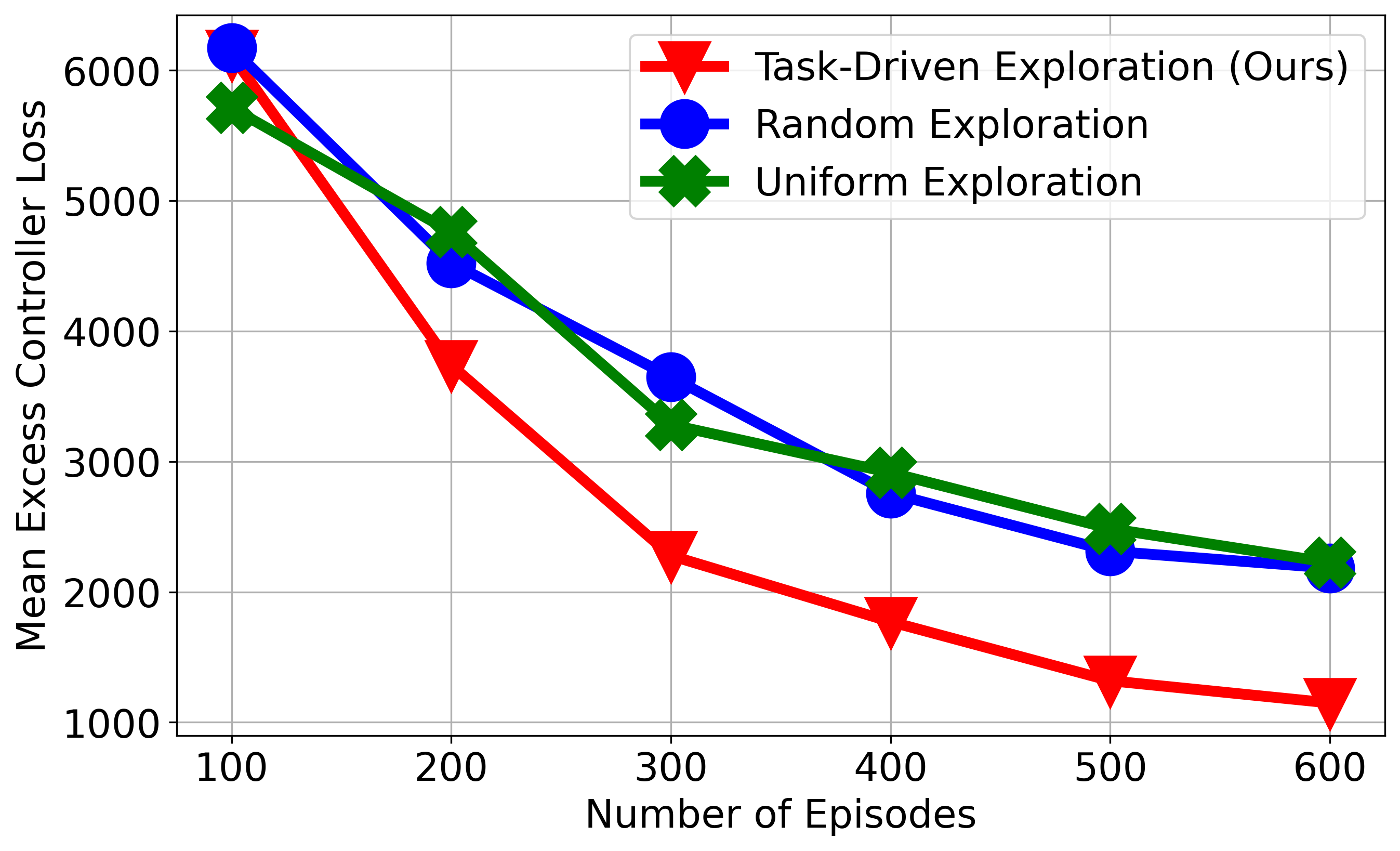}
\caption{Performance on Car}
\label{fig:car}
\end{minipage}%
\vspace{-1em}
\end{figure}

\iftoggle{arxiv}{}{
\begin{wrapfigure}{r}{0.44\textwidth}
\vspace{-5em}
  \begin{center}
    \includegraphics[width=1.0\linewidth]{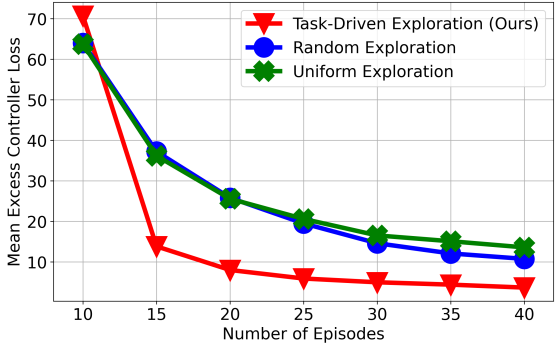}
    \vspace{-2em}
    \caption{Performance on Drone}
\label{fig:drone}
  \end{center}
\end{wrapfigure}}

Finally, we demonstrate the effectiveness of our proposed approach (\Cref{alg:main}, the \texttt{Task-Driven Exploration} method in \Cref{fig:motivating,fig:drone,fig:car}) on several systems motivated by robotic applications. 
We compare \Cref{alg:main} with an approach that plays $\bu_h \sim \cN(0,\sigma_{\bu}^2 \cdot I)$ (\texttt{Gaussian Exploration}), and an approach inspired by \cite{mania2020active} (\texttt{Uniform Exploration}), which seeks to estimate $\Ast$ uniformly well, playing inputs that reduce $\| \Ahat - \Ast \|_\op$.

\iftoggle{arxiv}{}{
\begin{wrapfigure}{r}{0.44\textwidth}
\vspace{-2.5em}
  \begin{center}
    \includegraphics[width=1.0\linewidth]{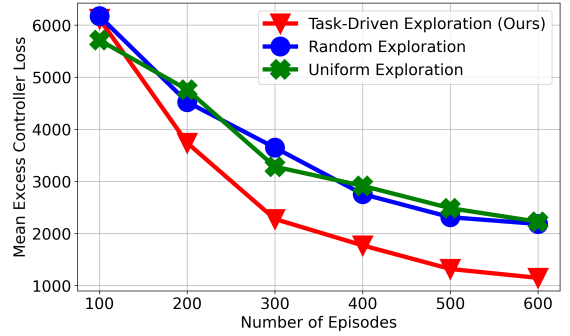}
    \vspace{-2em}
    \caption{Performance on Car}
\label{fig:car}
  \end{center}
  \vspace{-1em}
\end{wrapfigure}}

To benchmark the performance of these approaches, we consider an affine system with dynamics corresponding to that of a simplified 3-D drone (i.e., 3-D double integrator with a gravity term), and a nonlinear system with dynamics corresponding to that of a 2-D car. For both systems, we choose $H = 50$, and plot the value of $\cJ(\pihat_t;\Ast) - \cJ(\pist(\Ast);\Ast)$ for $\pihat_t$ the certainty-equivalence controller computed on the estimate of the system obtained at time $t$. For the drone, we let $\Picon$ be the class of linear-affine feedback controllers, and for the car, $\Picon$ is a set of nonlinear controllers with dimension 4. While the optimal controller for the drone can be computed in closed-form, for the car we rely on a sampling-based routine to find an approximately optimal controller. The model-task hessian $\cH(\Ahat^\ell)$ is computed via automatic differentiation.
For the exploration policies of \texttt{Task-Driven Exploration} and \texttt{Uniform Exploration}, $\Piexp$, we rely on MPC-style sampling based methods. For all approaches, we require that $\Exp_{A,\piexp}[\sum_{h=1}^H \| \bu_h \|_2^2] \le \gamma^2$ for some $\gamma^2 > 0$ and all $\piexp \in \Piexp$. 
On all examples, we implement \expdesign with $\regalg$ a posterior sampling-inspired version of the \LC of \cite{kakade2020information}.
\Cref{fig:motivating,fig:car} shows performance averaged over 100 trials, and \Cref{fig:drone} over 200 trials. Additional experimental details can be found in \Cref{sec:experiment_details}.

As illustrated in \Cref{fig:motivating,fig:drone,fig:car}, our approach yields a non-trivial gain over existing approaches on all systems. In particular, in \Cref{fig:drone,fig:car} it improves on the sample complexity of existing approaches by roughly a factor of 2---for example, in the drone system, reaching excess controller cost of $10$ after less than $20$ episodes, as compared to over $40$ episodes for existing approaches.

\iftoggle{arxiv}{
Our implementation is very modular, and any piece (for example, the parameterization of $\Piexp$ and $\Picon$, the policy optimizer, or the exploration routine) can be easily replaced with other procedures. Our results therefore highlight that, even when using, for example, a possibly suboptimal policy optimizer, exploring so as to minimize uncertainty in the model-task hessian yields a non-trivial gain. We expect that this would hold true regardless of the policy optimizer used---the model-task hessian will adapt to the structure of the policy optimizer, inducing the exploration that will minimize parameter uncertainty most relevant to the given optimizer. Integration of our approach with deep model-based RL approaches is an interesting direction for future work, but we believe the approach will scale to these settings as well.}
{
Our implementation is very modular, and any piece (for example, the parameterization of $\Piexp$ and $\Picon$, the policy optimizer, or the exploration routine) can be easily replaced with other procedures. Our results highlight that, even when using, for example, a possibly suboptimal policy optimizer, exploring so as to minimize uncertainty in the model-task hessian yields a non-trivial gain. We expect that this would hold true regardless of the policy optimizer used---the model-task hessian will adapt to the structure of the policy optimizer. Integration of our approach with deep model-based RL approaches is an interesting direction for future work, but we believe the approach will scale to these settings as well. \loose}

\iftoggle{arxiv}{
\subsection*{Acknowledgements}
AW would like to thank Kevin Tully for helpful discussions. The work of AW is supported by NSF HDR 62-0221. The work of KJ is supported in part by NSF TRIPODS 2023166 and CIF 2007036.
}{}

\newpage
\bibliographystyle{icml2022}
\bibliography{bibliography.bib}

\newpage
\appendix

\section{Technical Tools}
\iftoggle{arxiv}{}{
\paragraph{Additional Notation.}
We let $\cS^{d-1}$ refer to the unit ball in $d$ dimensions and $\bbS_+^d$ (resp. $\bbS_{++}^d$) the set of positive semi-definite matrices (resp. positive definite matrices) in $\R^{d \times d}$. Throughout, $\cO(\cdot)$ denotes standard big-O notation, $\cOtil(\cdot)$ hides additional logarithmic factors. $\poly(\cdot)$ denotes some term that is polynomial in its arguments, with exponents absolute constants. 
$\simplex_\cX$ denotes the set of distributions over set $\cX$.
}

\begin{lemma}\label{lem:gaussian_bound}
Let $\bw_i \sim \cN(0,I_{\dimx})$ for all $i \in \{ 1,2, \ldots, n \}$. Then,
\begin{align*}
\Exp \left [ \left (\sum_{i=1}^{n} \| \bw_i \|_2 \right )^c \right ] \le n^2 \cdot \poly(\dimx). 
\end{align*}
for $c$ an absolute constant.
\end{lemma}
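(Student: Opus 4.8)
The plan is to reduce the moment of a sum of i.i.d.\ Gaussian norms to a single Gaussian-norm moment via convexity, and then bound that moment using standard $\chi^2$ moment estimates.

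First I would dispose of the trivial regime $c \le 1$: here $t \mapsto t^c$ is subadditive on $\R_{\ge 0}$, so $\left(\sum_{i=1}^n \|\bw_i\|_2\right)^c \le \sum_{i=1}^n \|\bw_i\|_2^c$, and taking expectations and using that the $\bw_i$ are identically distributed gives $\Exp\big[(\sum_i \|\bw_i\|_2)^c\big] \le n\,\Exp[\|\bw_1\|_2^c]$. So assume henceforth $c \ge 1$. For $c \ge 1$, apply Jensen's inequality (equivalently the power-mean inequality) to the convex map $t \mapsto t^c$:
$$\left(\sum_{i=1}^n \|\bw_i\|_2\right)^c = n^c\left(\frac1n\sum_{i=1}^n \|\bw_i\|_2\right)^c \le n^{c-1}\sum_{i=1}^n \|\bw_i\|_2^c.$$
Taking expectations and using that the $\bw_i$ are i.i.d., $\Exp\big[(\sum_i \|\bw_i\|_2)^c\big] \le n^c\,\Exp[\|\bw_1\|_2^c]$. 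Since $c$ is an absolute constant we have $n^c \le n^2$ whenever $c \le 2$ (and in any case $n^c$ is of the stated form); it therefore remains to show $\Exp[\|\bw_1\|_2^c] \le \poly(\dimx)$.

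For this last step, observe that $\|\bw_1\|_2^2 \sim \chi^2_{\dimx}$. By Jensen's inequality again, pass to the smallest even integer $2m \ge c$, so that $\Exp[\|\bw_1\|_2^c] \le \Exp\big[(\chi^2_{\dimx})^m\big]^{c/(2m)}$, and then use that $\Exp[(\chi^2_{\dimx})^m] = \prod_{j=0}^{m-1}(\dimx + 2j)$ is a degree-$m$ polynomial in $\dimx$; hence $\Exp[\|\bw_1\|_2^c] = \cO(\dimx^{c/2})$, which is $\poly(\dimx)$ because $c$ is constant. (Alternatively one can invoke the exact formula $\Exp[\|\bw_1\|_2^c] = 2^{c/2}\,\Gamma(\tfrac{\dimx+c}{2})/\Gamma(\tfrac{\dimx}{2})$ and the bound $\Gamma(\tfrac{\dimx+c}{2})/\Gamma(\tfrac{\dimx}{2}) = \cO(\dimx^{c/2})$.) Combining the displayed inequalities yields the claim.

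The main—and essentially only—obstacle is bookkeeping around the exponent $c$: splitting into $c \ge 1$ versus $c < 1$ and extracting a clean polynomial-in-$\dimx$ bound on $\Exp[\|\bw_1\|_2^c]$. There is no substantive difficulty; a generous use of Jensen to reduce to even integer moments of $\chi^2_{\dimx}$ suffices and avoids any $\Gamma$-function asymptotics.
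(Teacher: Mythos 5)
Your proof is correct and follows essentially the same route as the paper's: reduce the $c$-th moment of the sum to per-vector moments, then bound $\Exp[\|\bw_1\|_2^c]$ by $\poly(\dimx)$ via $\chi^2$ moments. Your power-mean step giving $n^{c-1}\sum_i\|\bw_i\|_2^c$ is in fact the careful version of the paper's first inequality (which as written drops a factor and only literally yields $n^2$ rather than $n^c$), and your remark that the true exponent is $n^c$ for constant $c$ matches how the lemma is actually used downstream.
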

\begin{proof}
We first bound 
\begin{align*}
(\sum_{i=1}^{n} \| \bw_i \|_2 )^c \le n \cdot \max_{i} \| \bw_i \|_2^c \le n \cdot \sum_{i} \| \bw_i \|_2^c.
\end{align*}
The result then follows since we can bound the $\Exp[\| \bw_i \|_2^c] \le \poly(d)$ for $\bw_i \sim \cN(0,I)$ and $c$ an absolute constant.
\end{proof}

\begin{lemma}[Lemma I.4 of \cite{wagenmaker2021task}]\label{lem:gaminv_diff}
Assume $A,B \in \bbS^d_{++}$, $\| A - B \|_\op \le \epsilon$, and $\epsilon < \lammin(B)$. Then
\begin{align*}
\| A^{-1} - B^{-1} \|_\op \le \frac{\epsilon}{\lammin(B)(\lammin(B) - \epsilon)}.
\end{align*}
\end{lemma}

\subsection{Martingale Regression in General Norms}\label{sec:mdm}

For the following two results, we consider the martingale regression setting of \cite{wagenmaker2021task} (referred to as the \textsf{MDM} setting). In particular, we consider observations of the form
\begin{align}\label{eq:mdm}
y_{\sft} = \inner{\bmust}{\bz_{\sft}} + w_{\sft}, 
\end{align}
for $y_{\sft} \in \R$, unknown parameter $\bmust \in \R^{\dimmu}$, $w_{\sft} \mid \cF_{\sft-1} \sim \cN(0,\sigw^2)$, and $\bz_{\sft}$ $\cF_{\sft-1}$-measurable, for a filtration $(\cF_{\sft})_{\sft \ge 1}$. This setting therefore encompasses general stochastic processes where the observations are linear---the evolution of $\bz_{\sft}$ could be arbitrary. 

We consider the setting where we interact with \eqref{eq:mdm} for $\sfT$ steps, collecting observations $\{ (\bz_{\sft},y_{\sft}) \}_{\sft = 1}^{\sfT}$, and then form the least-squares estimate of $\bmust$:
\begin{align*}
\bmuhat = \left ( \sum_{\sft=1}^{\sfT} \bz_{\sft} \bz_{\sft}^\top \right )^{-1} \sum_{\sft=1}^{\sfT} \bz_{\sft} y_{\sft}.
\end{align*}
We also denote $\bSigma_{\sfT} := \sum_{\sft=1}^{\sfT} \bz_{\sft} \bz_{\sft}^\top$. The following results characterize the estimation error of $\bmuhat$ in the $M$-norm and 2-norm. 

\begin{proposition}[Theorem 7.2 of \cite{wagenmaker2021task}]\label{prop:M_reg_bound}
Fix any matrices $\bGamma \in \bbS_{++}^{\dimmu}, M \in \bbS_+^{\dimmu}$, with $M \neq 0$. Given a parameter $\beta \in (0,1/4)$, define the event
\begin{align*}
\cE := \{ \| \bSigma_{\sfT} - \bGamma \|_\op \le \beta \lammin(\bGamma) \}.
\end{align*}
Then, if $\cE$ holds, the following holds with probability at least $1-\delta$:
\begin{align*}
\| \bmuhat - \bmust \|_M^2 \le 5 (1 + \zeta) \cdot \sigw^2 \log \frac{6 \dimmu}{\delta} \cdot \tr(M \bGamma^{-1})
\end{align*}
where $\zeta = 26 \beta^2 \lammax(\bGamma) \tr(\bGamma^{-1})$.
\end{proposition}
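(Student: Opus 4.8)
Write $S_{\sfT} := \sum_{\sft=1}^{\sfT} \bz_{\sft} w_{\sft}$, so that $\bmuhat - \bmust = \bSigma_{\sfT}^{-1} S_{\sfT}$ and $S_{\sfT}$ is an $\R^{\dimmu}$-valued martingale whose predictable quadratic variation is exactly $\sigw^2 \bSigma_{\sfT}$. The only difficulty is that the normalizer $\bSigma_{\sfT}^{-1}$ is random, and the event $\cE$ is precisely what lets us derandomize it. I would proceed in three steps: (i) peel off the part of $\bmuhat - \bmust$ that is a \emph{deterministic} linear image of the martingale $S_{\sfT}$; (ii) bound the resulting remainder using $\cE$; and (iii) control the two remaining quadratic forms in $S_{\sfT}$ by a direction-wise self-normalized inequality, which is where the $\tr(M\bGamma^{-1})$ scaling (rather than a looser dimension-times-$\lambda_{\max}$ scaling) is produced.

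For (i), from $\bSigma_{\sfT}(\bmuhat - \bmust) = S_{\sfT}$ one gets the exact identity $\bmuhat - \bmust = \bGamma^{-1}S_{\sfT} - \bGamma^{-1}(\bSigma_{\sfT} - \bGamma)(\bmuhat - \bmust)$. Combining the triangle inequality in the $M$-norm with $\|a+b\|_M^2 \le (1+\eta)\|a\|_M^2 + (1+\eta^{-1})\|b\|_M^2$ for a constant $\eta>0$ fixed at the end, it suffices to bound the main term $\|\bGamma^{-1}S_{\sfT}\|_M^2$ and the remainder $\|\bGamma^{-1}(\bSigma_{\sfT}-\bGamma)(\bmuhat-\bmust)\|_M^2$. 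For (ii), on $\cE$ we have $\|\bGamma^{-1/2}(\bSigma_{\sfT}-\bGamma)\bGamma^{-1/2}\|_\op \le \beta$ and $\|\bGamma^{1/2}\bSigma_{\sfT}^{-1}\bGamma^{1/2}\|_\op \le (1-\beta)^{-1}$ (equivalently $\lammin(\bGamma^{-1/2}\bSigma_{\sfT}\bGamma^{-1/2}) \ge 1-\beta$; this also follows from \Cref{lem:gaminv_diff}), so factoring everything through $\bGamma^{\pm1/2}$ gives
\begin{align*}
\big\| \bGamma^{-1}(\bSigma_{\sfT}-\bGamma)(\bmuhat-\bmust) \big\|_M \le \frac{\beta}{1-\beta}\,\sqrt{\lammax\!\big(\bGamma^{-1/2}M\bGamma^{-1/2}\big)}\;\big\| S_{\sfT}\big\|_{\bGamma^{-1}}.
\end{align*}

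For (iii) I would prove: for any deterministic $N\succeq 0$, on $\cE$, with probability at least $1-\delta$, $\|S_{\sfT}\|_N^2 \lesssim \sigw^2\log(\dimmu/\delta)\,\tr(N\bGamma)$. Diagonalize $N=\sum_{k=1}^{\dimmu}\mu_k q_k q_k^\top$; each scalar martingale $\langle q_k,S_{\sfT}\rangle = \sum_{\sft}\langle q_k,\bz_{\sft}\rangle w_{\sft}$ has quadratic variation $\sigw^2 q_k^\top\bSigma_{\sfT}q_k$, which on $\cE$ lies in the \emph{deterministic} window $[1-\beta,1+\beta]\cdot\sigw^2 q_k^\top\bGamma q_k$. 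Running the standard scalar method-of-mixtures (Laplace) bound with the mixing/ridge parameter calibrated to $\sigw^2 q_k^\top\bGamma q_k$, and union bounding over the $\dimmu$ directions, yields $\langle q_k,S_{\sfT}\rangle^2 \lesssim \sigw^2 q_k^\top\bGamma q_k\log(\dimmu/\delta)$ for all $k$; summing against $\mu_k$ uses $\sum_k\mu_k q_k^\top\bGamma q_k = \tr(N\bGamma)$. Specializing to $N=\bGamma^{-1}M\bGamma^{-1}$ gives $\|\bGamma^{-1}S_{\sfT}\|_M^2 \lesssim \sigw^2\log(\dimmu/\delta)\tr(M\bGamma^{-1})$, and to $N=\bGamma^{-1}$ gives $\|S_{\sfT}\|_{\bGamma^{-1}}^2 \lesssim \sigw^2\dimmu\log(\dimmu/\delta)$. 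Assembling (i)--(iii) and using $\lammax(\bGamma^{-1/2}M\bGamma^{-1/2})\,\dimmu \le \tr(M\bGamma^{-1})\cdot\dimmu \le \tr(M\bGamma^{-1})\,\lammax(\bGamma)\tr(\bGamma^{-1})$ — the first because $\lambda_{\max}\le\tr$ for a PSD matrix, the second because $\lammax(\bGamma)\tr(\bGamma^{-1})\ge\dimmu$ — gives
\begin{align*}
\|\bmuhat - \bmust\|_M^2 \;\lesssim\; \Big( (1+\eta) + (1+\eta^{-1})(1-\beta)^{-2}\,\beta^2\,\lammax(\bGamma)\tr(\bGamma^{-1}) \Big)\,\sigw^2\log\tfrac{\dimmu}{\delta}\,\tr(M\bGamma^{-1}).
\end{align*}
Fixing $\eta$ to a small constant, splitting the failure probability over the $\dimmu$ directions, and tracking the universal constants produced by the mixture bound (which, for $\beta<1/4$, comfortably fit inside the stated $5$, $6\dimmu$, and $26$) recovers exactly $5(1+\zeta)\,\sigw^2\log\frac{6\dimmu}{\delta}\,\tr(M\bGamma^{-1})$ with $\zeta = 26\beta^2\lammax(\bGamma)\tr(\bGamma^{-1})$.

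\emph{Where the difficulty lies.} The one genuinely delicate step is the self-normalized lemma in (iii): a black-box self-normalized bound applied to $S_{\sfT}^\top N S_{\sfT}$ would only give the much weaker $\dimmu\,\lammax(N\bGamma)\log(\cdot)$. Obtaining $\tr(N\bGamma)\log(\cdot)$ forces one to work direction-by-direction and to calibrate the mixing parameter per direction, and this calibration is legitimate precisely because $\cE$ confines each $q_k^\top\bSigma_{\sfT}q_k$ to a deterministic multiplicative window around $q_k^\top\bGamma q_k$. Everything else — the identity in (i), the factorization in (ii), and the final constant-chasing that yields $\zeta$ as the ratio of remainder to main term — is routine.
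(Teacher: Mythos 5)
This statement is not proved in the paper at all: it is imported verbatim as Theorem 7.2 of \cite{wagenmaker2021task}, so there is no in-paper argument to compare against. Your blind proof is correct and is essentially the argument used in the original source: the exact identity $\bmuhat-\bmust=\bGamma^{-1}S_{\sfT}-\bGamma^{-1}(\bSigma_{\sfT}-\bGamma)\bSigma_{\sfT}^{-1}S_{\sfT}$, the factorization through $\bGamma^{\pm 1/2}$ on $\cE$, and the direction-by-direction self-normalized bound that converts $\lammax$-type scalings into $\tr(N\bGamma)$ are all sound, and your final inequalities ($\lammax\le\tr$ for PSD matrices and $\dimmu\le\lammax(\bGamma)\tr(\bGamma^{-1})$) check out, with the constants fitting inside $5$, $26$, and $6\dimmu$ for $\beta<1/4$. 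The one step you should make explicit in a full write-up is how the event $\cE$ interacts with the per-direction martingale bound: you cannot literally condition on $\cE$ and keep the increments conditionally Gaussian, so the clean route is to invoke the unconditional method-of-mixtures bound (which controls $\langle q_k,S_{\sfT}\rangle$ in terms of the random $q_k^\top\bSigma_{\sfT}q_k$ plus a ridge calibrated to $q_k^\top\bGamma q_k$) and only then intersect with $\cE$ to replace $q_k^\top\bSigma_{\sfT}q_k$ by $(1+\beta)q_k^\top\bGamma q_k$; your remark about the deterministic window shows you have the right idea, and this ordering makes it rigorous while yielding a statement of the intended form $\Pr[\cE\cap\{\text{bound fails}\}]\le\delta$.
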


\begin{proposition}[Lemma E.1 of \cite{wagenmaker2021task}]\label{prop:fro_reg_bound}
On the event
\begin{align*}
\cEop := \{ \lammin(\bSigma_{\sfT}) \ge \lamun \sfT, \bSigma_{\sfT} \preceq \sfT \bGambar_{\sfT} \},
\end{align*}
then we have that with probability at least $1-\delta$:
\begin{align*}
\| \bmuhat - \bmust \|_2 \le C \cdot \sigw \sqrt{\frac{\log 1/\delta + \dimmu + \log \det(\bGambar_{\sfT} / \lamun + I)}{\lamun \sfT}}.
\end{align*}
\end{proposition}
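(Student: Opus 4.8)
The plan is to write the error as a vector-valued martingale, normalize it by $\bSigma_\sfT$, and control it with a self-normalized tail inequality. First, on $\cEop$ we have $\lammin(\bSigma_\sfT) \ge \lamun\sfT > 0$, so $\bSigma_\sfT$ is invertible there and the estimator $\bmuhat$ is well-defined; plugging the model \eqref{eq:mdm} into the normal equations gives $\bmuhat - \bmust = \bSigma_\sfT^{-1} S_\sfT$ with $S_\sfT := \sum_{\sft=1}^\sfT \bz_\sft w_\sft$. Here $(S_\sfT)$ is a vector-valued martingale because $\bz_\sft$ is $\cF_{\sft-1}$-measurable and $w_\sft \mid \cF_{\sft-1} \sim \cN(0,\sigw^2)$. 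The essential difficulty is that $\bSigma_\sfT$ is itself random and, under feedback, may depend on the noise $(w_\sft)$, so one cannot condition on the covariates and treat $S_\sfT$ as a fixed-design Gaussian vector; this is exactly the regime handled by self-normalized martingale concentration (as in \cite{abbasi2011improved}), which is the main tool.

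Second, I would reduce the $\ell_2$ error to a self-normalized quantity using only deterministic manipulations valid on $\cEop$. Since $\bSigma_\sfT \succeq \lamun\sfT\, I$ there, we get $\bSigma_\sfT^{-2} \preceq (\lamun\sfT)^{-1}\bSigma_\sfT^{-1}$, hence $\|\bmuhat - \bmust\|_2^2 = S_\sfT^\top \bSigma_\sfT^{-2} S_\sfT \le (\lamun\sfT)^{-1}\|S_\sfT\|_{\bSigma_\sfT^{-1}}^2$. To be able to apply the self-normalized bound with a \emph{deterministic} regularizer $\lambda := \lamun\sfT$, note that $\bSigma_\sfT \succeq \lambda I$ on $\cEop$ implies $\bSigma_\sfT \succeq \tfrac12(\bSigma_\sfT + \lambda I)$, so $\|S_\sfT\|_{\bSigma_\sfT^{-1}}^2 \le 2\|S_\sfT\|_{\bar V_\sfT^{-1}}^2$ with $\bar V_\sfT := \bSigma_\sfT + \lambda I$.

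Third, I would invoke the self-normalized tail inequality for vector martingales: with probability at least $1-\delta$, $\|S_\sfT\|_{\bar V_\sfT^{-1}}^2 \le 2\sigw^2\log\big(\det(\bar V_\sfT)^{1/2}\det(\lambda I)^{-1/2}/\delta\big)$. On $\cEop$ we also have $\bSigma_\sfT \preceq \sfT\bGambar_\sfT$, so $\bar V_\sfT \preceq \sfT(\bGambar_\sfT + \lamun I)$ and therefore $\det(\bar V_\sfT)/\det(\lambda I) \le \sfT^\dimmu\det(\bGambar_\sfT + \lamun I)/(\lamun\sfT)^\dimmu = \det(\bGambar_\sfT/\lamun + I)$. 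Combining the three displays,
\[
\|\bmuhat - \bmust\|_2^2 \le \frac{2\|S_\sfT\|_{\bar V_\sfT^{-1}}^2}{\lamun\sfT} \le \frac{2\sigw^2\big(\log\det(\bGambar_\sfT/\lamun + I) + 2\log(1/\delta)\big)}{\lamun\sfT},
\]
and taking square roots gives the claimed bound; the additive $\dimmu$ term in the statement is free slack (one could alternatively replace the self-normalized step by a cruder $\tfrac12$-net argument over $\cS^{\dimmu-1}$ together with a scalar martingale bound, which introduces the $\dimmu$ directly).

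The step I expect to be the main obstacle — or at least the one demanding care — is the first application of self-normalized concentration: ensuring the tail bound genuinely applies when $\bz_\sft$ is only $\cF_{\sft-1}$-measurable rather than deterministic, so that the random normalizer $\bSigma_\sfT$ is handled correctly. This is precisely what the stopping-time / method-of-mixtures argument behind that inequality provides, and is why the naive route of conditioning on $\bSigma_\sfT$ fails; everything else reduces to the eigenvalue and determinant comparisons on $\cEop$ carried out above.
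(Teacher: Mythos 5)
Your proof is correct; the paper does not prove this proposition itself but imports it as Lemma E.1 of \cite{wagenmaker2021task}, whose argument is the same standard route you take: write $\bmuhat-\bmust=\bSigma_{\sfT}^{-1}S_{\sfT}$, pass to the self-normalized norm via the eigenvalue lower bound on $\cEop$, apply the self-normalized martingale inequality of \cite{abbasi2011improved} with the deterministic regularizer $\lamun\sfT$, and control the determinant using $\bSigma_{\sfT}\preceq\sfT\bGambar_{\sfT}$. Your version is in fact marginally tighter, since the additive $\dimmu$ in the stated bound is slack you never need.
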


\subsubsection{Connection Between \eqref{eq:system} and \eqref{eq:mdm}}\label{sec:knr_mdm}
We will apply the results \Cref{prop:M_reg_bound} and \Cref{prop:fro_reg_bound} in the setting of \eqref{eq:system} in order to obtain estimation bounds on $\Ast$. As the setting of \eqref{eq:system} has vector observations, we briefly describe here how it can be mapped into the setting described above. 

Recall that \eqref{eq:system} evolves as
\begin{align*}
\bx_{h+1} = \Ast \bphi(\bx_h,\bu_h) + \bw_h, \quad h = 1,\ldots, H,
\end{align*}
for $\bx_h \in \R^{\dimx}$, $\bphi(\bx,\bu) \in \R^{\dimphi}$, and $\Ast \in \R^{\dimx \times \dimphi}$. We assume that $\bx_1$ is some fixed starting state. Assume that we have run for $T$ episodes, and collected observations $\{ (\bx_1^t,\bu_1^t,\bx_2^t,\ldots,\bx_h^t,\bu_h^t,\bx_{h+1}^t) \}_{t = 1}^T$. Now let $\bmust := \vec(\Ast)$. Furthermore, for any $t,h$, and $i \in [\dimx]$, let $\sft = (t,h,i)$ and $\bz_{\sft} = [\bm{0}_{\dimphi (i-1)}, \bphi(\bx_h^t,\bu_h^t), \bm{0}_{\dimphi ( \dimx - i )}] \in \R^{\dimx \dimphi}$ where $\bm{0}_d$ denotes the zero vector of length $d$. Then we see that
\begin{align*}
[\bx_{h+1}^t]_{i} = \inner{\bmust}{\bz_{\sft}} + [\bw_h^t]_i.
\end{align*}
Setting $y_{\sft} = [\bx_{h+1}^t]_{i}$ and $w_{\sft} = [\bw_h^t]_i$, it is clear that this follows the observation model of \eqref{eq:mdm} with $\dimmu = \dimphi \dimx$ and $\sfT = \dimx T H$. It is also straightforward to see that the measurability assumptions of the setting of \eqref{eq:mdm} are satisfied by this.

\section{Proof of Main Result}

\begin{algorithm}[h]
\begin{algorithmic}[1]
\State \textbf{inputs:} number of episodes to run $T$, cost function $(\cost_h)_{h=1}^H$, confidence $\delta$, control policies $\Picon$, exploration policies $\Piexp$
\State $\Ahat^1 \leftarrow $ anything, $\ellf \leftarrow \lceil \log_2 T/8 \rceil$
\For{$\ell = 1,2,3, \ldots, \ellf$}
	\State $T_\ell \leftarrow 2^\ell, \delta_\ell \leftarrow \delta / 8 \ell^2$
	\State Compute estimate of cost matrix: $\cH_\ell \leftarrow \cH(\Ahat^\ell)$
	\State $\Pi_\ell \leftarrow \learnpolicies(\cH_\ell,T_\ell,\delta_\ell,\regalg,\Piexp)$ (\Cref{alg:learn_policies}), with $\regalg$ the \LC algorithm \citep{kakade2020information}
	\State Rerun each policy in $\Pi_\ell$ $N_\ell = \lceil T_\ell / |\Pi_\ell| \rceil$ times, denote collected data $\frakD_\ell$
	\State Estimate system parameters
	\State \begin{align*}
	\Ahat^{\ell+1} = \argmin_A \sum_{h=1}^H \sum_{(\bx_{h+1},\bu_h,\bx_h) \in \frakD_\ell} \| \bx_{h+1} - A \bphi(\bx_{h},\bu_h) \|_2^2 
	\end{align*}
\EndFor
\State \textbf{return} $\pihat_T \leftarrow \pist(\Ahat^{\ellf+1})$
\end{algorithmic}
\caption{Optimal Exploration in Nonlinear Systems (Full Version of \Cref{alg:main})}
\label{alg:main_app}
\end{algorithm}

\begin{theorem}[Full Version of \Cref{thm:main}]\label{thm:main_app}
Assume \Cref{asm:bounded_features,asm:full_rank_cov,asm:bounded_cost,asm:smooth_phi,asm:smooth_controller,asm:smooth_thetast} hold. Then if
\begin{align}\label{eq:main_thm_app_T_burnin}
T \ge \poly \left ( \dimphi, \dimx, H, \BA, \Bphi, \Lphi, \Lzeta, \Lcost, \LKst, \sigw, \sigw^{-1}, \tfrac{1}{\lamminst}, \log \tfrac{T}{\delta} \right ) \cdot \max \left \{ 1, \tfrac{1}{\rcost(\Ast)^2}, \tfrac{1}{\rtheta(\Ast)^2} \right \},
\end{align}
with probability at least $1-\delta$, \Cref{alg:main_app} plays exploration policies $\piexp \in \Piexp$ at every episode, runs for at most $T$ episodes, and the controller $\Khat_T$ returned \Cref{alg:main_app} satisfies, with probability at least $1-\delta$:
\begin{align*}
\cJ(\Khat_T;\Ast) - \cJ(\Kst(\Ast);\Ast) \le \frac{\sigw^2}{T} \cdot \min_{\bLambda \in \bOmega} \tr \left ( \cH(\Ast) \bLamchk^{-1} \right )  \cdot C \log \frac{6 \dimx \dimphi}{\delta} + \frac{\Clot}{T^{3/2}}
\end{align*}
for $C$ a universal constant and
\begin{align*}
\Cpoly :=  \poly \left ( \dimphi, \dimx, H, \BA, \Bphi, \Lphi, \Lzeta, \Lcost, \LKst, \sigw, \sigw^{-1}, \tfrac{1}{\lamminst}, \log \tfrac{T}{\delta} \right ).
\end{align*}
\end{theorem}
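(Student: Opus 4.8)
The plan is to chain together three ingredients already available: the quadratic reduction of \Cref{prop:quadratic_loss_approx}, which converts the controller suboptimality $\cJ(\Khat_T;\Ast)-\cJ(\Kst(\Ast);\Ast)$ into the estimation error $\|\vec(\Ast-\Ahat^{\ellf+1})\|_{\cH(\Ast)}^2$ plus a cubic remainder $\cOst(\|\Ast-\Ahat^{\ellf+1}\|_\fro^3)$; the martingale-regression bound of \Cref{prop:M_reg_bound} (via the \textsf{MDM} reduction of \Cref{sec:knr_mdm}, with $\bmust=\vec(\Ast)$, $M=\cH(\Ast)$, $\dimmu=\dimx\dimphi$), which controls the $\cH(\Ast)$-norm error of the least-squares estimate in terms of the realized covariance; and the experiment-design guarantee of \Cref{lem:fwregret} applied to $\Phi_\ell(\bLambda)=\tr(\cH_\ell\bLamchk^{-1})$, which certifies that the data collected in each epoch drives $\tr(\cH(\Ahat^\ell)\bLamchk^{-1})$ to within a constant of $\min_{\bLambda\in\bOmega}\tr(\cH(\Ahat^\ell)\bLamchk^{-1})$. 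The Frobenius-norm bound of \Cref{prop:fro_reg_bound} will simultaneously handle the cubic remainder (it gives $\|\Ast-\Ahat^{\ellf+1}\|_\fro=\cO(T^{-1/2})$, hence the remainder is $\cO(T^{-3/2})$) and provide the consistency needed to control the Hessian perturbation.

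First I would run a per-epoch induction. Fix epoch $\ell$, conditioning on the data from epochs $1,\dots,\ell-1$ (so that $\Ahat^\ell$ and $\cH_\ell=\cH(\Ahat^\ell)$ are fixed). The steps are: (i) \emph{Covariance control and consistency.} Mixing a small fraction of the uniform-excitation policy from \Cref{asm:full_rank_cov} into the exploration guarantees $\lammin(\Exp[\bLambda_{T_\ell}])\gtrsim \lamminst T_\ell$; a matrix Azuma/Bernstein bound (using $\|\bphi\|_2\le\Bphi$ from \Cref{asm:bounded_features}) then shows $\|\bLambda_{T_\ell}-\Exp[\bLambda_{T_\ell}]\|_\op\le\beta\lammin(\Exp[\bLambda_{T_\ell}])$ w.h.p.\ once $T_\ell$ exceeds a polynomial threshold, so the event $\cE$ of \Cref{prop:M_reg_bound} holds with $\bGamma$ the population covariance, and $\cEop$ of \Cref{prop:fro_reg_bound} holds. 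Feeding $\cEop$ into \Cref{prop:fro_reg_bound} gives $\|\Ahat^{\ell+1}-\Ast\|_\fro=\cO(\sqrt{\dimx\dimphi/(\lamminst T_\ell)}\cdot\log(\cdot/\delta))$; the burn-in condition \eqref{eq:main_thm_app_T_burnin} ensures this is below $\min\{\rcost(\Ast),\rtheta(\Ast)\}$ for all $\ell$ (in particular $\ell=1$ needs $\Ahat^2$ to land in the good ball, which the burn-in on $T_1$ times the polynomial factor secures; subsequent epochs only improve). (ii) \emph{Hessian perturbation.} By smoothness of the pipeline — differentiability of $\bphi$ in $\bu$ (\Cref{asm:smooth_phi}), of $\Ktheta$ in $\btheta$ (\Cref{asm:smooth_controller}), and of $\bthetast(\cdot)$ in $A$ near $\Ast$ (\Cref{asm:unique_min}) — the map $A\mapsto\cH(A)$ is Lipschitz on $\cB_\fro(\Ast;\rtheta(\Ast))$, so $\|\cH_\ell-\cH(\Ast)\|_\op\le L\|\Ahat^\ell-\Ast\|_\fro=\cO(T_\ell^{-1/2})$ up to polynomial factors. (iii) \emph{Experiment design.} Verify \Cref{asm:regular_phi} for $\Phi_\ell$ restricted to covariances with $\lammin\ge c\lamminst$ (again enforced by the injected uniform exploration): there $\bLambda\mapsto\tr(\cH_\ell\bLamchk^{-1})$ is convex, smooth, and has bounded $M$ because $\|\cH_\ell\|_\op$ is bounded (from the regularity constants) and the eigenvalues are bounded below. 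Then apply \Cref{lem:fwregret} with $\regalg$ the \LC algorithm of \citep{kakade2020information}: after a polynomial-in-parameters number of episodes, \learnpolicies returns $\Pi_\ell$ with $\Phi_\ell(\tfrac1N\bLambda_N)\le2\min_{\bLambda\in\bOmega}\Phi_\ell(\bLambda)$; rerunning each policy in $\Pi_\ell$ $N_\ell=\lceil T_\ell/|\Pi_\ell|\rceil$ times and invoking matrix concentration once more yields $\tr(\cH_\ell\bLamchk_{T_\ell}^{-1})\lesssim \tfrac1{T_\ell}\min_{\bLambda\in\bOmega}\tr(\cH_\ell\bLamchk^{-1})$.

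Next I would assemble the final bound from the last epoch. Plug the realized covariance into \Cref{prop:M_reg_bound}: on $\cE$, $\|\vec(\Ast-\Ahat^{\ellf+1})\|_{\cH(\Ast)}^2\le 5(1+\zeta)\sigw^2\log\tfrac{6\dimx\dimphi}{\delta}\,\tr(\cH(\Ast)\bLamchk_{T_{\ellf}}^{-1})$ with $\zeta=\cO(\beta^2\lammax\tr(\bGamma^{-1}))=o(1)$ by the concentration in step (i). Replace $\cH(\Ast)$ by $\cH_\ell$ using the Lipschitz bound of step (ii) — this costs a factor $1+\cO(T_{\ellf}^{-1/2})$ multiplicatively plus an additive $\cO(\|\cH_\ell-\cH(\Ast)\|_\op\cdot\tr(\bLamchk_{T_{\ellf}}^{-1}))$, both lower-order — and then use step (iii) together with $\min_{\bLambda\in\bOmega}\tr(\cH_\ell\bLamchk^{-1})\le(1+o(1))\min_{\bLambda\in\bOmega}\tr(\cH(\Ast)\bLamchk^{-1})$ (again via the Lipschitz bound, noting the minimum is over the same $\bOmega$ and the eigenvalue floor makes $\bLamchk^{-1}$ uniformly bounded). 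Since $T_{\ellf}=2^{\ellf}\ge T/16$, this produces the leading term $\tfrac{\sigw^2}{T}\min_{\bLambda\in\bOmega}\tr(\cH(\Ast)\bLamchk^{-1})\cdot C\log\tfrac{6\dimx\dimphi}{\delta}$. The remainder $\cOst(\|\Ast-\Ahat^{\ellf+1}\|_\fro^3)=\cO(T^{-3/2})$ from \Cref{prop:fro_reg_bound}, and all the lower-order perturbation terms above, are collected into $\Clot/T^{3/2}$. For the episode budget: $\sum_{\ell=1}^{\ellf}(|\Pi_\ell|N_\ell + (\text{\learnpolicies overhead}_\ell))\le \sum_\ell 2T_\ell + \ellf\cdot\poly(\cdot)\le 2^{\ellf+2}+\cO(\log T)\poly(\cdot)\le T$, where the last inequality uses $\ellf=\lceil\log_2 T/8\rceil$ and the burn-in \eqref{eq:main_thm_app_T_burnin}. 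Finally, union-bound the $\cO(\ellf)$ high-probability events (two per epoch: covariance concentration and the \Cref{lem:fwregret}/regression events) with $\delta_\ell=\delta/(8\ell^2)$, so the total failure probability is at most $\delta\sum_\ell \cO(1)/\ell^2 \le \delta$.

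The main obstacle I expect is the bookkeeping around adaptivity and the ``rerun'' step: the objective $\Phi_\ell$ is itself random (it depends on $\Ahat^\ell$), \Cref{lem:fwregret} only guarantees the \emph{average} covariance produced during \learnpolicies is near-optimal, and we must argue that \emph{rerunning} the returned policies yields a fresh covariance that is (a) still near the optimum of $\Phi_\ell$ and (b) concentrated enough that the event $\cE$ of \Cref{prop:M_reg_bound} holds with $\bGamma$ the population covariance of the rerun mixture. This requires a careful two-sided matrix concentration argument on the rerun data and a verification that $\Phi_\ell$ meets \Cref{asm:regular_phi} \emph{uniformly} over the (random) $\cH_\ell$, which is exactly where the uniform feature-excitation assumption and the boundedness of the regularity constants are load-bearing. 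A secondary subtlety is the base case of the induction — ensuring $\Ahat^2$ already lies in the ball where $\cH(\cdot)$ and $\bthetast(\cdot)$ are well-behaved — which forces the $\max\{1,\rcost(\Ast)^{-2},\rtheta(\Ast)^{-2}\}$ factor in the burn-in.
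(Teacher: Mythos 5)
Your proposal is correct and follows essentially the same route as the paper's proof: the quadratic reduction of \Cref{prop:quadratic_loss_approx}, the $\cH$-norm regression bound of \Cref{prop:M_reg_bound} via the \textsf{MDM} reduction, the Frank--Wolfe experiment-design guarantee instantiated with \LC and a rerun/concentration step, the Frobenius-norm bound to absorb the cubic remainder and the Hessian perturbation into $\Clot/T^{3/2}$, and the same epoch-doubling episode count and union bound. You also correctly flag the load-bearing technical points (rerun concentration and the adaptivity of $\Phi_\ell$), which the paper handles in \Cref{lem:knr_est_bound} and \Cref{lem:fw_exploration_good_event}; the only minor deviation is that your $(1+o(1))$ multiplicative comparison of $\min_{\bLambda}\tr(\cH_\ell\bLamchk^{-1})$ to $\min_{\bLambda}\tr(\cH(\Ast)\bLamchk^{-1})$ is slightly optimistic — the paper pays a factor of $2$ by mixing in the full-rank covariance (\Cref{lem:ce_exp_design}) — but this is absorbed into the universal constant $C$.
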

\begin{proof}
Let $\cE_\ell$ denote that the good event of \Cref{lem:knr_est_bound} holds at round $\ell$ which, by \Cref{lem:knr_est_bound}, occurs with probability at least $1-6\delta_\ell$. By our setting of $\delta_\ell = \delta/12 \ell^2$, we have that the total failure probability of $\cE_\ell$ for all $\ell$ is bounded as
\begin{align*}
\sum_{\ell=1}^{\infty} 6 \cdot \frac{\delta}{12 \ell^2} \le \delta.
\end{align*}
Henceforth we assume that $\cE := \cap_\ell \cE_\ell$ holds. Let $\Ahat := \Ahat^{\ellf+1}$, and $\Ahatm := \Ahat^{\ellf}$.

Before proceeding to the main proof, we note that the conclusion that \Cref{alg:main_app} only explores with policies in $\Piexp$ follows from the definition of \learnpolicies and \LC. Note that \learnpolicies only interacts with \eqref{eq:system} through calls to \expdesign, which itself only interacts with \eqref{eq:system} through calls to $\regalg$, instantiated in \Cref{alg:main_app} by \LC. Inspection of the \LC algorithm in \cite{kakade2020information} reveals that \LC only interacts with \eqref{eq:system} by playing policies in $\Piexp$, from which the conclusion follows.

\paragraph{Bounding the Number of Episodes.}
Denote $\Toed_\ell = |\Pi_\ell|$.
Note that by construction we always have $N_\ell \Toed_\ell \ge T_\ell$.
By \Cref{lem:T_bounds}, as long as \eqref{eq:main_thm_app_T_burnin} is met, we can bound the total number of episodes collected up to and including round $\ell$ by $4T_{\ell}$ for $\ell \in \{\ellf, \ellf - 1 \}$.
We therefore, in the following, will make use of the fact that
$$ c \cdot T \le N_{\ell} K_{\ell}  \le c' \cdot T$$
for $\ell \in \{\ellf, \ellf - 1\}$ and absolute constants $c,c'$. Furthermore, it also follows from this that the total number of episodes run by \Cref{alg:main_app} is bounded by 
\begin{align*}
4 T_{\ellf} = 4 \cdot 2^{\lceil \log T/8 \rceil} \le 8 \cdot \frac{T}{8} = T,
\end{align*}
so \Cref{alg:main_app} runs for at most $T$ episodes.

\paragraph{Approximating the Controller Loss.}
Let $\rest(\Ast) := \min \{ 1, \rcost(\Ast), \rtheta(\Ast) \}$, for $\rcost(\Ast)$ as in \Cref{asm:bounded_cost} and $\rtheta(\Ast)$ as in \Cref{asm:smooth_thetast}. 
By \Cref{lem:quadratic_loss_approx}, under \Cref{asm:bounded_features,asm:smooth_phi,asm:smooth_controller,asm:bounded_cost,asm:smooth_thetast}, as long as $\Ahat \in \cB_{\fro}(\Ast;\rest(\Ast))$, we have
\begin{align*}
& \cJ(\pihat_T; \Ast) - \cJ(\Kst(\Ast); \Ast)  \le \| \vec(\Ahat - \Ast) \|_{\cH(\Ast)}^2  \\
& \qquad + \poly(\LKst, \| \Ast \|_\op, \Lphi, \Lzeta, \Lcost, \sigw^{-1}, H, \dimx)\cdot \| \Ahat - \Ast \|_\op^3.
\end{align*}
Furthermore, we can bound
\begin{align*}
\| \vec(\Ahat - \Ast) \|_{\cH(\Ast)}^2  & = \| \vec(\Ahat - \Ast) \|_{\cH(\Ahatm)}^2  + \vec(\Ahat - \Ast)^\top (\cH(\Ast) - \cH(\Ahatm)) \vec(\Ahat - \Ast) \\
& \le (\Ahat - \Ast)^\top \cH(\Ahatm) (\Ahat - \Ast) + \| \Ahatm - \Ast \|_\op^2 \| \cH(\Ast) - \cH(\Ahatm) \|_\op.
\end{align*}

\paragraph{Bounding the Hessian Estimation Error.}
On $\cE$, by \Cref{lem:knr_est_bound} and as long as \eqref{eq:main_thm_app_T_burnin} is met, we have (note that at the final epoch, the plug-in estimator $\cH(\Ahatm)$ is given as input to \expdesign):
\begin{align*}
\| \vec(\Ahat - \Ast) \|_{\cH(\Ahatm)}^2  & \le \frac{60}{N_{\ellf} \Toed_{\ellf}} \cdot  \min_{\bLambda \in \bOmega} \tr \left ( \cH(\Ahatm) \bLamchk^{-1} \right )  \cdot \sigw^2 \log \frac{6 \dimx \dimphi}{\delta} \\
 & \qquad + \poly \left ( \dimphi, \dimx, \frac{1}{\lamminst}, \BA, \Bphi, \log \frac{1}{\sigw}, H,\| \cH(\Ahatm) \|_\op, \log \frac{T_{\ellf}}{\delta} \right ) \cdot \frac{1}{N_{\ellf}^2} \\
 & \le \frac{C}{T} \cdot  \min_{\bLambda \in \bOmega} \tr \left ( \cH(\Ahatm) \bLamchk^{-1} \right )  \cdot \sigw^2 \log \frac{6 \dimx \dimphi}{\delta} \\
 & \qquad + \poly \left ( \dimphi, \dimx, \frac{1}{\lamminst}, \BA, \Bphi, \log \frac{1}{\sigw},  H, \| \cH(\Ahatm) \|_\op, \log \frac{T}{\delta} \right ) \cdot \frac{1}{T^2}
\end{align*}
where the last line uses that $N_{\ellf} \Toed_{\ellf}$ is within a constant of $T$, and that $\Toed_{\ellf}$ can be bounded by
\begin{align*}
\poly \left ( \dimphi, \dimx, \frac{1}{\lamminst}, \BA, \Bphi, \log \frac{1}{\sigw}, H, \log \frac{T}{\delta} \right ) 
\end{align*}
by \Cref{lem:knr_est_bound}.
By \Cref{lem:ce_exp_design} we can bound
\begin{align*}
\min_{\bLambda \in \bOmega} \tr \left ( \cH(\Ahatm) \bLamchk^{-1} \right ) \le \min_{\bLambda \in \bOmega}2 \tr \left ( \cH(\Ast) \bLamchk^{-1} \right ) + \frac{2\dimx \dimphi}{\lamminst} \cdot \| \cH(\Ast) - \cH(\Ahatm) \|_\op
\end{align*}
and by \Cref{lem:Hst_to_Hhat}, under \Cref{asm:bounded_features,asm:smooth_phi,asm:smooth_controller,asm:bounded_cost,asm:smooth_thetast}, and as long as $\Ahatm \in \cB_{\fro}(\Ast;\rest(\Ast))$, we can bound 
\begin{align*}
\| \cH(\Ast) - \cH(\Ahatm) \|_\op \le  \poly(\LKst, \| \Ast \|_\op, \Lphi, \Lzeta, \Lcost, \sigw^{-1}, H, \dimx)\cdot \| \Ahatm - \Ast \|_\op.
\end{align*}
Let
\begin{align*}
\Clot :=  \poly \left (\LKst, \BA, \Bphi, \Lphi, \Lzeta, \Lcost, \dimx,  \dimphi, \frac{1}{\lamminst}, \sigw, \sigw^{-1}, H, \| \cH(\Ast) \|_\op, \log \frac{T}{\delta} \right )
\end{align*}
denote some lower-order constant, whose precise polynomial dependence may change from line to line. 
On $\cE$, by \Cref{lem:knr_est_bound} we can bound 
\begin{align}\label{eq:main_op_norm_bound}
\| \Ahat - \Ast \|_\op, \| \Ahatm - \Ast \|_\op \le \Clot \cdot \frac{1}{\sqrt{T}},
\end{align}
and so assuming the burn-in \eqref{eq:main_thm_app_T_burnin} is met, we can bound $\| \Ahatm - \Ast \|_\op \le \frac{1}{2} \rest(\Ast)$ and $\| \Ahat - \Ast \|_\op \le \frac{1}{2} \rest(\Ast)$. This then implies that
\begin{align*}
\| \cH(\Ast) - \cH(\Ahatm) \|_\op \le  \Clot \cdot \frac{1}{\sqrt{T}},
\end{align*}
so in particular we can bound
\begin{align*}
& \poly \left ( \dimphi, \dimx, \frac{1}{\lamminst}, \BA, \Bphi, H, \log \frac{1}{\sigw}, \| \cH(\Ahatm) \|_\op, \log \frac{T}{\delta} \right ) \\
&  \le  \poly \left ( \dimphi, \dimx, \frac{1}{\lamminst}, \BA, \Bphi, H, \log \frac{1}{\sigw}, \| \cH(\Ast) \|_\op, \log \frac{T}{\delta} \right ).
\end{align*}

We have therefore shown that
\begin{align*}
\cJ(\pihat_T; \Ast) - \cJ(\Kst(\Ast); \Ast) & \le \frac{C}{T} \cdot  \min_{\bLambda \in \bOmega} \tr \left ( \cH(\Ast) \bLamchk^{-1} \right )  \cdot \sigw^2 \log \frac{6 \dimx \dimphi}{\delta} \\
& \qquad + \Clot \cdot \left ( \| \Ahat - \Ast \|_\op^3 + \| \Ahatm - \Ast \|_\op^3 + \frac{1}{T^2} + \frac{1}{T} \cdot \| \Ahatm - \Ast \|_\op \right ) \\
& \le \frac{C}{T} \cdot  \min_{\bLambda \in \bOmega} \tr \left ( \cH(\Ast) \bLamchk^{-1} \right )  \cdot \sigw^2 \log \frac{6 \dimx \dimphi}{\delta} + \frac{\Clot}{T^{3/2}},
\end{align*}
The final result follows from using \Cref{lem:hessian_norm_bound} to bound
\begin{align*}
\| \cH(\Ast) \|_\op \le \poly(\| \Ast \|_\op, \Bphi, \Lphi, \Lzeta, \Lcost, \LKst, \sigw^{-1}, H, \dimx).
\end{align*}
\end{proof}

\begin{proof}[Proof of \Cref{thm:main}]
The proof of \Cref{thm:main} is identical to that of \Cref{thm:main_app}, the only difference being that we replace \Cref{asm:smooth_thetast} with \Cref{asm:unique_min}. However, by \Cref{prop:theta_global_min}, the conditions of \Cref{asm:smooth_thetast} are met when \Cref{asm:unique_min} holds.
\end{proof}

\subsection{Supporting Lemmas}

\begin{lemma}\label{lem:knr_satisfies_ed_asm}
Under \Cref{asm:full_rank_cov,asm:bounded_features}, the system \eqref{eq:system} satisfies \Cref{asm:mat_psi,asm:full_rank_cov_app} with
\begin{align*}
\bpsi(\traj) \leftarrow I_{\dimx} \otimes \sum_{h=1}^H \bphi(\bx_h^{\traj},\bu_h^{\traj}) \bphi(\bx_h^{\traj},\bu_h^{\traj})^\top, \quad D =\dimx H \Bphi^2,
\end{align*}
$\dimpsi \leftarrow \dimphi \dimx$, and where $\bx_h^{\traj}$ (resp. $\bu_h^{\traj}$) denotes the state (resp. input) at step $h$ of trajectory $\traj$. Furthermore, it satisfies \Cref{asm:regret_min_app} with $\regalg$ instantiated with the \LC algorithm of \cite{kakade2020information} and
\begin{align*}
\CR = C \cdot H \sqrt{\dimphi(\dimphi + \dimx + \BA)} \cdot \log(1 + \Bphi H/\sigw), \quad \pR = 3/2, \quad \alpha = 1/2
\end{align*}
for a universal constant $C$. 
\end{lemma}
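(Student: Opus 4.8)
The statement splits into two independent claims, which I would prove separately. The first---that \eqref{eq:system} satisfies \Cref{asm:mat_psi,asm:full_rank_cov_app} with the indicated choice of $\bpsi$, $D$, and $\dimpsi$---is a direct computation pushing \Cref{asm:bounded_features,asm:full_rank_cov} through the Kronecker structure. The second---that \LC of \cite{kakade2020information} realizes the regret-minimization oracle \Cref{asm:regret_min_app} with the stated $\CR$, $\pR$, $\alpha$---is a matter of quoting the regret guarantee for \LC and checking that the cost class required by \Cref{asm:regret_min_app} lies within its scope.

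For the first claim: with $\bpsi(\traj) = I_{\dimx}\otimes\sum_{h=1}^H \bphi(\bx_h^{\traj},\bu_h^{\traj})\bphi(\bx_h^{\traj},\bu_h^{\traj})^\top$, \Cref{asm:bounded_features} gives $\tr\big(\sum_h \bphi(\bx_h^{\traj},\bu_h^{\traj})\bphi(\bx_h^{\traj},\bu_h^{\traj})^\top\big) = \sum_h \|\bphi(\bx_h^{\traj},\bu_h^{\traj})\|_2^2 \le H\Bphi^2$, and since $\tr(I_{\dimx}\otimes M) = \dimx\,\tr(M)$ we get $\tr(\bpsi(\traj)) \le \dimx H \Bphi^2 = D$ for every $\traj$, which is the boundedness demanded by \Cref{asm:mat_psi}; the ambient dimension is $\dimpsi = \dimx\dimphi$ since $\bpsi(\traj)\in\bbS_+^{\dimx\dimphi}$. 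For \Cref{asm:full_rank_cov_app} I would take the same $\omega\in\simplex_{\Piexp}$ furnished by \Cref{asm:full_rank_cov}: then $\Exp_{\piexp\sim\omega}[\bpsi(\traj)] = I_{\dimx}\otimes \Exp_{\piexp\sim\omega}[\bLambda_{\piexp}]$, and because the eigenvalues of $I_{\dimx}\otimes B$ are exactly those of $B$ (each with multiplicity $\dimx$), $\lammin(\Exp_{\piexp\sim\omega}[\bpsi(\traj)]) = \lammin(\Exp_{\piexp\sim\omega}[\bLambda_{\piexp}]) \ge \lamminst$, as needed.

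For the second claim: the costs appearing in \Cref{asm:regret_min_app} are of the form $\cost_h(\bx,\bu) = \bphi(\bx,\bu)^\top Q_h \bphi(\bx,\bu)$ with $|\sum_h \cost_h(\bx_h,\bu_h)| \le 1$ over all admissible state-input sequences---precisely the class of bounded, quadratic-in-the-known-feature costs that \LC is designed to handle on systems of the form \eqref{eq:system}. I would then invoke the regret bound of \cite{kakade2020information}: with probability at least $1-\delta$, over $T$ episodes of length $H$, the regret of \LC against the best policy in $\Piexp$ is $\cOtil\big(H\sqrt{\dimphi(\dimphi+\dimx+\BA)\,T}\big)$, where the logarithmic factors consist of a $\log(1+\Bphi H/\sigw)$ term (from the information-gain / elliptical-potential argument, controlled by the feature radius $\Bphi$, horizon $H$, and noise level $\sigw$) together with a $\sqrt{\log(T/\delta)}$ confidence term. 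Reading off exponents, this is exactly $\CR\cdot\log^{\pR}\tfrac{T}{\delta}\cdot T^{\alpha}$ with $\alpha = 1/2$, $\pR = 3/2$, and $\CR = C\cdot H\sqrt{\dimphi(\dimphi+\dimx+\BA)}\,\log(1+\Bphi H/\sigw)$ for a universal constant $C$. One must verify that \LC's planning step ranges over $\Piexp$ (so the comparator $\min_{\pi\in\Piexp}$ in \Cref{asm:regret_min_app} is legitimate) and that the $\cN(0,\sigw^2 I)$ noise here matches the (sub-)Gaussian assumption used in \cite{kakade2020information}; both hold.

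The main obstacle is the bookkeeping in the second claim: translating the regret statement of \cite{kakade2020information}---typically written in $\cOtil$ form with implicit constants and a cost normalization possibly different from ours---into the precise shape $\CR\log^{\pR}\tfrac{T}{\delta}\,T^{\alpha}$ with the stated dependence on $\dimphi,\dimx,\BA,H,\Bphi,\sigw$. In particular, care is needed to confirm that the log power is $3/2$ and not $1$ or $2$ (it arises as $\sqrt{\log(T/\delta)}$ from the martingale concentration times one power of $\log$ hidden in the information-gain term for these feature dynamics), and that every boundedness constant their analysis requires is supplied by \Cref{asm:bounded_features} together with the normalization $|\sum_h \cost_h|\le 1$. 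The first claim is routine.
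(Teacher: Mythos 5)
Your proposal is correct and follows essentially the same route as the paper: the trace/Kronecker computation for \Cref{asm:mat_psi,asm:full_rank_cov_app} is identical, and the oracle claim is obtained by reading off the \LC regret guarantee. The only difference is that the paper does not cite \cite{kakade2020information} directly but first restates their result as its own high-probability regret bound (\Cref{thm:knr_regret}, with the explicit $H\sqrt{\dimphi(\dimphi+\dimx+\BA+\log\frac{1}{\delta})T}\log(1+\Bphi HT/\sigw)$ form), which is precisely the translation bookkeeping you flag as the main obstacle.
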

\begin{proof}
That \Cref{asm:full_rank_cov_app} is satisfied is immediate under \Cref{asm:full_rank_cov}. It is clear that $\bpsi(\traj) \in \cS_+^{\dimphi \dimx}$. To obtain a bound on $D$, we only need to bound the trace of $\bpsi(\traj)$:
\begin{align*}
\tr(\bpsi(\traj)) & = \sum_{h=1}^H \tr(I_{\dimx}) \cdot \tr(\bphi(\bx_h^{\traj},\bu_h^{\traj})) \bphi(\bx_h^{\traj},\bu_h^{\traj}))^\top) \\
& = \dimx \sum_{h=1}^H \| \bphi(\bx_h^{\traj},\bu_h^{\traj})) \|_2^2 \\
& \le \dimx H \Bphi^2
\end{align*}
where the inequality holds under \Cref{asm:bounded_features}.

To show that \Cref{asm:regret_min_app} is satisfied in this setting, we have by \Cref{thm:knr_regret}
that with probability at least $1-\delta$, \LC has regret bounded as (using that $\cmax \le 1$ in the setting of \Cref{asm:regret_min_app}):
\begin{align*}
\cR_T \le C \cdot H \sqrt{\dimphi \cdot (\dimphi + \dimx + \BA + \log \frac{1}{\delta} ) \cdot T } \cdot \log \left ( 1 + \Bphi HT /\sigw \right )
\end{align*}
for $C$ a universal constant. We can therefore take $\alpha = 1/2$, $\pR = 3/2$, and 
\begin{align*}
\CR = C' \cdot H \sqrt{\dimphi(\dimphi + \dimx + \BA)} \cdot \log(1 + \Bphi H/\sigw).
\end{align*}
\end{proof}

\begin{lemma}\label{lem:T_bounds}
Let $\Tbar_\ell$ denote the total number of episodes collected by \Cref{alg:main_app} at round $\ell$. For
\begin{align}\label{eq:T_bound_burnin}
T_\ell \ge  \poly \left ( \dimphi, \dimx, \frac{1}{\lamminst}, \BA, \Bphi, \log \frac{1}{\sigw}, H, \log \frac{T_\ell}{\delta} \right ), 
\end{align}
on the success event of \Cref{lem:knr_est_bound}, we have $2 T_\ell \ge \Tbar_\ell$ and
\begin{align*}
2T_\ell \ge \sum_{i=1}^{\ell-1} \Tbar_i .
\end{align*}
\end{lemma}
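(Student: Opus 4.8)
The plan is to separately account for the two sources of episodes consumed in a generic round $i$ of \Cref{alg:main_app}: the call $\learnpolicies(\cH_i,T_i,\delta_i,\regalg,\Piexp)$ (which, through \expdesign and the \LC oracle, interacts with the system \eqref{eq:system}), and the subsequent rerun phase, which replays each of the $|\Pi_i|$ policies returned by \learnpolicies exactly $N_i=\lceil T_i/|\Pi_i|\rceil$ times. Writing $E_i$ for the number of episodes used by the round-$i$ \learnpolicies call and using $\lceil x\rceil\le x+1$, one gets the unconditional decomposition
\begin{align*}
\Tbar_i \;=\; E_i + N_i|\Pi_i| \;\le\; E_i + |\Pi_i| + T_i, \qquad \forall\, i\le \ell.
\end{align*}
The only substantive ingredient is then to invoke \Cref{lem:knr_est_bound} --- on whose success event we are working --- for a bound of the form $E_i + |\Pi_i| \le Q_i := \poly(\dimphi,\dimx,\tfrac{1}{\lamminst},\BA,\Bphi,\log\tfrac{1}{\sigw},H,\log\tfrac{T_i}{\delta_i})$; morally this holds because \expdesign needs only polynomially many episodes to reach a constant-factor-optimal experiment design (the $T^{-1/3}$ rate of \Cref{lem:fwregret}) and produces only polynomially many policies across its inner iterations. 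Since $T_i \le T_\ell$ and $\delta_i$ is decreasing in $i$ (so $\delta_i \ge \delta_\ell$), and since $\ell = \log_2 T_\ell$ so that $\log(T_\ell/\delta_\ell) \lesssim \log(T_\ell/\delta)$, every $Q_i$ with $i \le \ell$ is dominated by a single polynomial $\overline{Q}$ in precisely the quantities appearing in the burn-in \eqref{eq:T_bound_burnin}.

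With this in hand the two claims are immediate. For $\Tbar_\ell \le 2T_\ell$: take $i = \ell$ in the decomposition to get $\Tbar_\ell \le T_\ell + \overline{Q}$, and choose the (implicit) polynomial in \eqref{eq:T_bound_burnin} to be at least $\overline{Q}$, so that the burn-in forces $\overline{Q} \le T_\ell$. For $\sum_{i=1}^{\ell-1}\Tbar_i \le 2T_\ell$: sum the decomposition over $i < \ell$ to get $\sum_{i=1}^{\ell-1}\Tbar_i \le \sum_{i=1}^{\ell-1}2^i + (\ell-1)\overline{Q} \le 2^\ell + \ell\,\overline{Q} = T_\ell + \ell\,\overline{Q}$, and observe that since $\ell = \log_2 T_\ell$ the correction $\ell\,\overline{Q}$ is still a polynomial in the burn-in parameters (the extra $\log_2 T_\ell$ factor is absorbed into the existing logarithmic dependence); enlarging the polynomial in \eqref{eq:T_bound_burnin} once more then makes $\ell\,\overline{Q} \le T_\ell$.

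I expect essentially all of the work to be clerical. The one place care is needed is in pinning down exactly which quantities \Cref{lem:knr_est_bound} controls, so that the claim $E_i + |\Pi_i| \le Q_i$ is genuinely licensed --- in particular that the episode budget and the number of returned policies of \learnpolicies / \expdesign are both polynomial and do not secretly scale with $T_i$ (if they did, the constant $2$ would be unachievable). Beyond that, the only things to watch are (a) monotonicity of $Q_i$ in $i$, handled via $T_i \le T_\ell$ and $\delta_i \ge \delta_\ell$, and (b) making sure the stray $\ell = \log_2 T_\ell$ factor in the second bound is charged against a $\log T_\ell$ term inside the polynomial rather than a power of $T_\ell$. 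There is no genuine mathematical obstacle here; the lemma is a bookkeeping step isolating the episode accounting so that \Cref{thm:main_app} can freely use $\Tbar_\ell \le 2T_\ell$ and $\sum_{i<\ell}\Tbar_i \le 2T_\ell$.
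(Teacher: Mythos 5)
Your proposal is correct and follows essentially the same route as the paper's proof: decompose $\Tbar_i$ into the rerun cost $N_i|\Pi_i|\le T_i+|\Pi_i|$ plus the \learnpolicies episode count, invoke \Cref{lem:knr_est_bound} to bound both the number of returned policies and the exploration overhead by a polynomial in the burn-in parameters, and absorb the extra factor of $\ell\le\log_2 T_\ell$ in the sum over rounds into the polynomial's logarithmic dependence. The only cosmetic difference is that the paper writes the \learnpolicies overhead explicitly as $(16+2\log\Toed_i)\Toed_i$ rather than as an abstract $E_i$, but the bookkeeping is identical.
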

\begin{proof}
Recall that $T_\ell = 2^\ell$ and
$N_\ell = \lceil T_\ell/\Toed_\ell \rceil$ for $\Toed_\ell = |\Pi_\ell|$. 
By \Cref{lem:knr_est_bound}, $\Tbar_\ell$ can be bounded as
\begin{align*}
\Tbar_\ell \le N_\ell \Toed_\ell + (16 + 2 \log \Toed_\ell) \Toed_\ell
\end{align*}
and $\Toed_\ell$ can be bounded as
\begin{align}\label{eq:T_Kout_bound}
\Toed_\ell \le \poly \left ( \dimphi, \dimx, \frac{1}{\lamminst}, \BA, \Bphi, \log \frac{1}{\sigw}, H, \log \frac{T_\ell}{\delta} \right ) .
\end{align}

Note that we can bound
\begin{align*}
\Tbar_i & \le N_i \Toed_i + (16 + 2 \log \Toed_i) \Toed_i \\
& \le T_i + (17 + 2 \log \Toed_i) \Toed_i \\
& \le T_i + \poly \left ( \dimphi, \dimx, \frac{1}{\lamminst}, \BA, \Bphi, \log \frac{1}{\sigw}, H, \log \frac{T_\ell}{\delta} \right ) .
\end{align*}
From this it is immediately obvious that $2 T_\ell \ge \Tbar_\ell$ as long as \eqref{eq:T_bound_burnin} is satisfied.

To show the second conclusion note that, by our choice of $T_i = 2^i$, we have that $T_\ell \ge \sum_{i=1}^{\ell - 1} T_i$, so it therefore remains to show that
\begin{align*}
T_\ell \ge \sum_{i=1}^{\ell-1}  \poly \left ( \dimphi, \dimx, \frac{1}{\lamminst}, \BA, \Bphi, \log \frac{1}{\sigw}, H, \log \frac{T_i}{\delta} \right ) .
\end{align*}
However, we can bound 
\begin{align*}
\sum_{i=1}^{\ell-1}  \poly \left ( \dimphi, \dimx, \frac{1}{\lamminst}, \BA, \Bphi, \log \frac{1}{\sigw}, H, \log \frac{T_i}{\delta} \right )  \le   \poly \left ( \dimphi, \dimx, \frac{1}{\lamminst}, \BA, \Bphi, \log \frac{1}{\sigw}, H, \log \frac{T_\ell}{\delta} \right )  \cdot \log T_\ell,
\end{align*}
so a sufficient condition is
\begin{align*}
T_\ell \ge  \poly \left ( \dimphi, \dimx, \frac{1}{\lamminst}, \BA, \Bphi, \log \frac{1}{\sigw},  H, \log \frac{T_\ell}{\delta} \right )  \cdot \log T_\ell
\end{align*}
which we see is met when \eqref{eq:T_bound_burnin} holds. 
\end{proof}

\begin{lemma}\label{lem:knr_est_bound}
Consider running \Cref{alg:learn_policies} with weight matrix $\cH$, parameter $\Ntil$, and confidence $\delta$, and rerunning each policy in $\Piout$ $N \le \Ntil$ times. Then, under \Cref{asm:full_rank_cov,asm:bounded_features}, with probability at least $1-6\delta$:
\begin{align*}
\| \vec(\Ahat - \Ast) \|_\cH^2 & \le \frac{60}{N\Kout} \cdot  \min_{\bLambda \in \bOmega} \tr \left ( \cH \bLamchk^{-1} \right )  \cdot \sigw^2 \log \frac{6 \dimx \dimphi}{\delta}  \\
& \qquad + \poly \left ( \dimphi, \dimx, \frac{1}{\lamminst}, \BA, \Bphi, \log \frac{1}{\sigw}, H, \| \cH \|_\op, \log \frac{\Ntil}{\delta} \right )  \cdot \frac{1}{N^2}
\end{align*}
where $\Ahat$ denotes the least-squares estimate of $\Ast$ obtained on the data generated by rerunning $\Piout$. In addition, we have
\begin{align*}
\| \Ahat - \Ast \|_\fro \le   \poly \left ( \dimphi, \dimx, \frac{1}{\lamminst}, \BA, \Bphi, \sigw, H, \log \frac{\Ntil}{\delta} \right )  \cdot \frac{1}{\sqrt{N}}.
\end{align*}
Furthermore, we have
\begin{align*}
\Kout \le  \poly \left ( \dimphi, \dimx, \frac{1}{\lamminst}, \BA, \Bphi, \log \frac{1}{\sigw}, H, \log \frac{\Ntil}{\delta} \right ) 
\end{align*}
and the total number of episodes collected by this procedure is bounded by $N \Kout + (16 + 2 \log(\Kout)) \cdot \Kout$, for $\Kout = |\Piout|$.
\end{lemma}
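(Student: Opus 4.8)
The plan is to reduce to the general-norm martingale regression bounds of \Cref{prop:M_reg_bound,prop:fro_reg_bound} through the correspondence between \eqref{eq:system} and the \textsf{MDM} model of \Cref{sec:knr_mdm}, with the experiment-design guarantee of \Cref{lem:fwregret} supplying the covariance we regress against. Throughout I write $\bLamchk = I_{\dimx}\otimes\bLambda$ as in the paper.

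\textbf{Step 1 (the returned policies solve the design problem).} First I would verify, via \Cref{lem:knr_satisfies_ed_asm}, that \eqref{eq:system} with $\regalg \leftarrow$ \LC meets the hypotheses of \expdesign, and that the objective $\Phi(\bLambda) := \tr(\cH\,\bLamchk^{-1})$ is convex and---on the region of $\bOmega$ where $\bLambda$ is bounded away from singular, which is where \learnpolicies operates thanks to the uniform excitation of \Cref{asm:full_rank_cov}---smooth and gradient-bounded in the sense of \Cref{asm:regular_phi}, with parameters polynomial in $\dimphi,\dimx,1/\lamminst,\Bphi,H,\|\cH\|_\op$ (using \Cref{asm:bounded_features}). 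Applying \Cref{lem:fwregret} (with $\alpha=1/2$, and the rerun remark following it) together with a small-support representation of the returned design, \learnpolicies returns $\Piout$ with $\Kout := |\Piout| \le \poly(\dimphi,\dimx,1/\lamminst,\BA,\Bphi,\log\tfrac{1}{\sigw},H,\log\tfrac{\Ntil}{\delta})$ such that the average covariance $\bLamout := \tfrac{1}{\Kout}\sum_{\pi\in\Piout}\bLambda_\pi \in \bOmega$ has $\lammin(\bLamout)\ge\lamminst/2$ (from the uniform-exploration component) and $\tr(\cH\,(I_{\dimx}\otimes\bLamout)^{-1}) \le 2\min_{\bLambda\in\bOmega}\tr(\cH\,\bLamchk^{-1})$. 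Here the additive $O(\Ntil^{-1/3})$ slack of \Cref{lem:fwregret} is absorbed because $\min_{\bLambda\in\bOmega}\tr(\cH\,\bLamchk^{-1})\ge\|\cH\|_\op/(H\Bphi^2)$, so once $\Ntil$ exceeds a ($\|\cH\|_\op$-independent) polynomial the error is at most the optimum itself, yielding the multiplicative guarantee.

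\textbf{Steps 2--3 (concentration of the rerun covariance, then regression).} Rerunning each $\pi\in\Piout$ exactly $N$ times yields $\bLambda_T = \sum_{\pi\in\Piout}\sum_{j=1}^{N}\bLambda^{(\pi,j)}$ with $\bLambda^{(\pi,j)} := \sum_{h=1}^{H}\bphi_h^{(\pi,j)}(\bphi_h^{(\pi,j)})^\top$ independent given $\Piout$, $\Exp[\bLambda^{(\pi,j)}]=\bLambda_\pi$, and $\|\bLambda^{(\pi,j)}\|_\op\le H\Bphi^2$ by \Cref{asm:bounded_features}. A matrix Bernstein inequality gives, with probability $\ge 1-\delta$, $\|\bLambda_T - N\Kout\bLamout\|_\op \le \poly(\dimphi,\Bphi,H)\sqrt{N\Kout\log(\dimphi/\delta)}$. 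Setting $\bGamma := I_{\dimx}\otimes(N\Kout\bLamout)$, the \textsf{MDM} identification of \Cref{sec:knr_mdm} makes $\bSigma_{\sfT}=I_{\dimx}\otimes\bLambda_T$, $\bmust=\vec(\Ast)$, $\bmuhat=\vec(\Ahat)$, and $\lammin(\bGamma)\ge N\Kout\lamminst/2$, so the event $\cE$ of \Cref{prop:M_reg_bound} (with $\beta=\poly(\dimphi,\Bphi,H,1/\lamminst)\sqrt{\log(\dimphi/\delta)/N}<1/4$ once $N$ is polynomially large) and the event $\cEop$ of \Cref{prop:fro_reg_bound} both hold. \Cref{prop:M_reg_bound} with $M\leftarrow\cH$ then gives $\|\vec(\Ahat-\Ast)\|_\cH^2 \le 5(1+\zeta)\sigw^2\log\tfrac{6\dimx\dimphi}{\delta}\,\tr(\cH\bGamma^{-1})$, where by Step 1 $\tr(\cH\bGamma^{-1}) = \tfrac{1}{N\Kout}\tr(\cH\,(I_{\dimx}\otimes\bLamout)^{-1}) \le \tfrac{2}{N\Kout}\min_{\bLambda\in\bOmega}\tr(\cH\,\bLamchk^{-1})$, and $\zeta = 26\beta^2\lammax(\bGamma)\tr(\bGamma^{-1}) \le 26\beta^2\cdot\tfrac{2\dimx\dimphi H\Bphi^2}{\lamminst} = \poly(\cdots)/N$; combining these and bounding $\min_{\bLambda\in\bOmega}\tr(\cH\,\bLamchk^{-1})\le\poly(\dimx,\dimphi,1/\lamminst)\|\cH\|_\op$, the $\zeta$-contribution folds into the stated $\poly(\dimphi,\dimx,1/\lamminst,\BA,\Bphi,\log\tfrac{1}{\sigw},H,\|\cH\|_\op,\log\tfrac{\Ntil}{\delta})/N^2$ term, giving the first display (with its absolute constant after accounting for the $O(1)$ slack in Steps 1--2). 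The Frobenius bound follows identically from \Cref{prop:fro_reg_bound} on $\cEop$: there $\lamun\sfT\gtrsim\lamminst N\Kout$ and $\log\det(\bGambar_{\sfT}/\lamun+I)\le\poly(\dimx,\dimphi)\log(\Bphi H/(\sigw\lamminst))$, giving $\|\Ahat-\Ast\|_\fro\le\poly(\dimphi,\dimx,1/\lamminst,\BA,\Bphi,\sigw,H,\log\tfrac{\Ntil}{\delta})/\sqrt{N}$. The bound on $\Kout$ is the one from Step 1, the episode count is routine bookkeeping over the rerun step, and the total failure probability $6\delta$ comes from a union bound over the handful of high-probability events invoked.

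\textbf{Main obstacle.} The crux is Step 1: showing that \learnpolicies returns a \emph{small} family of policies whose mixture is an $O(1)$-optimal design for the \emph{weighted} $A$-optimal objective $\tr(\cH\,\bLamchk^{-1})$ over the unknown, data-dependent feasible set $\bOmega$. This requires (i) instantiating \Cref{asm:regular_phi} for this $\Phi$ on a region where $\bLambda$ stays bounded away from singular---precisely where \Cref{asm:full_rank_cov} enters, and what propagates the $1/\lamminst$ dependence throughout---and (ii) converting the purely additive $O(\Ntil^{-1/3})$ guarantee of \Cref{lem:fwregret} into a multiplicative one so that it does not corrupt the leading $\tfrac{1}{N\Kout}\min_{\bLambda\in\bOmega}\tr(\cH\,\bLamchk^{-1})$ term; the remaining steps are standard matrix concentration plus black-box application of \Cref{prop:M_reg_bound,prop:fro_reg_bound}.
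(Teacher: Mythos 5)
Your proposal is correct and follows essentially the same architecture as the paper's proof: reduce to the \textsf{MDM} mapping of \Cref{sec:knr_mdm}, use the experiment-design guarantee to obtain a small policy set whose mixture is a constant-factor-optimal design for $\tr(\cH\,\bLamchk^{-1})$ with full-rank covariance, concentrate the rerun covariates around $N\sum_{\pi\in\Piout}\bLamchk_\pi$, and then invoke \Cref{prop:M_reg_bound} (folding the $\zeta$ term into the $1/N^2$ remainder) and \Cref{prop:fro_reg_bound}. The only cosmetic differences are that the paper obtains the multiplicative design guarantee and the covariance concentration from the runtime termination conditions of \learnpolicies via \Cref{lem:fw_exploration_good_event} (with constant $12$ rather than your $2$, and an absolute rather than $\lamminst$-normalized eigenvalue floor), whereas you derive them from a static burn-in plus matrix Bernstein; both routes land in the same place.
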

\begin{proof}
By \Cref{lem:knr_satisfies_ed_asm}, the assumptions of \Cref{lem:fw_exp_good} and \Cref{lem:fw_exploration_good_event} are met, so we can therefore apply these results in our setting. 
By \Cref{lem:fw_exp_good}, the event $\cEexp$ occurs with probability at least $1-\delta$. Throughout the remainder of the proof we union bound over the success event of \Cref{lem:fw_exploration_good_event} and $\cEexp$, which together occur with probability at least $1-4\delta$. 

Let 
\begin{align*}
\bLamtil := \sum_{t=1}^{N\Tout} \bpsi(\traj^t) = I_{\dimx} \otimes \sum_{t=1}^{N \Tout} \sum_{h=1}^H \bphi(\bx_h^t,\bu_h^t) \bphi(\bx_h^t,\bu_h^t)^\top
\end{align*}
denote the features returned by rerunning every policy in $\Piout$ $N$ times.
By \Cref{lem:fw_exploration_good_event}, we then have that: 
\begin{align*}
\left \| \bLamtil - N \cdot \tsum_{\pi \in \Piout} \bLamchk_{\pi} \right \|_\op \le \underbrace{\frac{\sqrt{ \Kout} \cdot \sqrt{8 \dimphi \dimx \log(1 + 8 \sqrt{N\Kout}) + 8 \log 1/\delta}}{\sqrt{N} \cdot 6272  \dimphi \dimx \log \frac{68\Ntil}{\delta}}}_{=: \beta} \cdot \lammin \left ( N \cdot \tsum_{\pi \in \Piout} \bLamchk_{\pi} \right ).
\end{align*}
Applying \Cref{prop:M_reg_bound} with $\cE$ the event that the above conclusion holds
and $\bGamma := N \cdot \tsum_{\pi \in \Piout} \bLamchk_{\pi}$, we obtain that, with probability at least $1-\delta$ (using the mapping to the martingale regression setting described in \Cref{sec:knr_mdm}):
\begin{align*}
\| \vec(\Ahat - \Ast) \|_\cH^2 \le 5 (1 + \zeta) \cdot \sigw^2 \log \frac{6 \dimx \dimphi}{\delta} \cdot \tr(\cH \bGamma^{-1})
\end{align*}
for $\zeta = 26 \beta^2 \lammax(\bGamma) \tr(\bGamma^{-1})$ and $\beta$ as defined above. Since $\| \bphi(\bx,\bu) \|_2 \le \Bphi$ under \Cref{asm:bounded_features}, we have $\| \bLamchk_\pi \|_2 \le \Bphi^2$, so we can upper bound $\lammax(\bGamma) \le N \Kout \Bphi^2$. 
By \Cref{lem:fw_exploration_good_event}, we can also bound (using that $D = \dimx H \Bphi^2$ by \Cref{lem:knr_satisfies_ed_asm}):
\begin{align*}
\tr(\bGamma^{-1}) \le  \frac{1}{N \cdot 6272 \dimx H \Bphi^2 \log \frac{68\Ntil}{\delta}}.
\end{align*}
Combining these and using that
\begin{align}\label{eq:H_reg_Kout_bound}
\Kout \le \poly \left ( \dimphi, \dimx, \frac{1}{\lamminst}, \BA, \Bphi, \log \frac{1}{\sigw}, H, \log \frac{\Ntil}{\delta} \right )
\end{align}
as shown in \Cref{lem:Kout_bound} (and using our bounds on $\CR$ and $\pR$ in \Cref{lem:knr_satisfies_ed_asm}), we can therefore bound
\begin{align*}
\zeta \le  \poly \left ( \dimphi, \dimx, \frac{1}{\lamminst}, \BA, \Bphi, \log \frac{1}{\sigw}, H, \log \frac{\Ntil}{\delta} \right ) \cdot \frac{1}{N}.
\end{align*}
Using that $\tr(\cH \bGamma^{-1}) \le \| \cH \|_\op \cdot \tr(\bGamma^{-1})$, and the bound on $\tr(\bGamma^{-1})$ given above, it follows that
\begin{align*}
\| \vec(\Ahat - \Ast) \|_\cH^2 \le 5 \sigw^2 \log \frac{6 \dimx \dimphi}{\delta} \cdot \tr(\cH \bGamma^{-1}) + \poly \left ( \dimphi, \dimx, \frac{1}{\lamminst}, \BA, \Bphi, \log \frac{1}{\sigw}, H, \| \cH \|_\op, \log \frac{\Ntil}{\delta} \right )  \cdot \frac{1}{N^2}.
\end{align*}
Finally, by \Cref{lem:fw_exploration_good_event}, we can bound
\begin{align*}
\tr(\cH \bGamma^{-1}) \le \frac{12}{N \Kout} \cdot \min_{\bLambda \in \bOmega} \tr (\cH \bLamchk^{-1}).
\end{align*}
By \Cref{lem:Kout_bound}, we can bound the total number of episodes collected by \Cref{alg:learn_policies} by $(16 + 2 \log(\Kout)) \cdot \Kout$.

\paragraph{Bound on Frobenius Norm Error.}
By \Cref{lem:fw_exploration_good_event}, we can lower bound
\begin{align*}
\lammin(\bLamtil) \ge N \cdot 6272 \dimx \dimphi H \Bphi^2 \log \frac{68\Ntil}{\delta}.
\end{align*}
Furthermore, since $\| \bphi(\bx,\bu) \|_2 \le \Bphi$, we always have $\| \bLamtil \|_\op \le N \Kout \Bphi^2$, which implies $\bLamtil \preceq N\Kout \Bphi^2 \cdot I$. By \Cref{prop:fro_reg_bound}, we then have that with probability at least $1-\delta$ (again using the mapping to the martingale regression setting described in \Cref{sec:knr_mdm}):
\begin{align*}
\| \Ahat - \Ast \|_\fro & \le C \cdot \sigw \sqrt{\frac{\log 1/\delta + \dimx \dimphi + \log\det( \frac{\Kout}{6272 \dimx \dimphi H  \log \frac{68\Ntil}{\delta}} \cdot I + I)}{N \cdot 6272 \dimx \dimphi H \Bphi^2 \log \frac{68\Ntil}{\delta}}} \\
& \le \poly \left ( \dimphi, \dimx, \frac{1}{\lamminst}, \BA, \Bphi, \sigw, H,  \log \frac{\Ntil}{\delta} \right ) \cdot \frac{1}{\sqrt{N}}.
\end{align*}
\end{proof}

\begin{lemma}\label{lem:ce_exp_design}
Under \Cref{asm:full_rank_cov}, for any $\cH,\cH'$, we can bound
\begin{align*}
\min_{\bLamchk \in \bOmega} \tr(\cH \bLamchk^{-1}) \le \min_{\bLamchk \in \bOmega} 2\tr(\cH' \bLamchk^{-1}) + \frac{2 \dimx \dimphi}{\lamminst} \cdot \| \cH - \cH' \|_\op.
\end{align*}
\end{lemma}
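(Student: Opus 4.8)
The plan is to produce a single covariance $\bLambda_{1/2}\in\bOmega$ that is simultaneously (i) within a factor $2$ of the $\cH'$-optimal covariance, and (ii) well conditioned, and then to compare $\tr(\cH\bLamchk^{-1})$ with $\tr(\cH'\bLamchk^{-1})$ at that point (throughout, $\bLamchk:=I_{\dimx}\otimes\bLambda$, and I work in the applications' regime $\cH,\cH'\succeq 0$, which is the only case used). First I would let $\bLambda_1\in\bOmega$ (approximately) attain $\min_{\bLambda\in\bOmega}\tr(\cH'\bLamchk^{-1})$, assuming this minimum is finite (otherwise the claimed bound is vacuous) and passing to a minimizing sequence if it is not attained. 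The naive attempt — bound $\tr(\cH\bLamchk_1^{-1})$ directly — fails because $\bLambda_1$ may be singular, or nearly so, in directions where $\cH'$ has no mass, so $\tr(\bLamchk_1^{-1})$ is uncontrolled. This is exactly where \Cref{asm:full_rank_cov} enters: choose $\omega\in\simplex_{\Piexp}$ with $\bar\bLambda:=\Exp_{\piexp\sim\omega}[\bLambda_{\piexp}]\succeq\lamminst I_{\dimphi}$, note $\bar\bLambda\in\bOmega$, and set $\bLambda_{1/2}:=\tfrac12\bLambda_1+\tfrac12\bar\bLambda$, which lies in $\bOmega$ by convexity of $\bOmega$.

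Next I would record the two PSD inequalities $\bLambda_{1/2}\succeq\tfrac12\bLambda_1$ and $\bLambda_{1/2}\succeq\tfrac12\bar\bLambda\succeq\tfrac{\lamminst}{2}I_{\dimphi}$, which are preserved under $I_{\dimx}\otimes(\cdot)$, giving $\bLamchk_{1/2}^{-1}\preceq 2\,\bLamchk_1^{-1}$ and $\bLamchk_{1/2}^{-1}\preceq\tfrac{2}{\lamminst}I_{\dimx\dimphi}$. Then I would split $\tr(\cH\bLamchk_{1/2}^{-1})=\tr(\cH'\bLamchk_{1/2}^{-1})+\tr((\cH-\cH')\bLamchk_{1/2}^{-1})$. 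For the first term, using $\cH'\succeq 0$ and $\bLamchk_{1/2}^{-1}\preceq 2\bLamchk_1^{-1}$ gives $\tr(\cH'\bLamchk_{1/2}^{-1})\le 2\tr(\cH'\bLamchk_1^{-1})=2\min_{\bLambda\in\bOmega}\tr(\cH'\bLamchk^{-1})$. For the cross term, I would use the elementary fact $\tr(SB)\le\opnorm{S}\tr(B)$ for symmetric $S$ and $B\succeq 0$ (here $S=\cH-\cH'$, $B=\bLamchk_{1/2}^{-1}$), together with $\tr(\bLamchk_{1/2}^{-1})\le\tfrac{2}{\lamminst}\dimx\dimphi$, to get $\tfrac{2\dimx\dimphi}{\lamminst}\,\opnorm{\cH-\cH'}$. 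Finally, since $\bLambda_{1/2}\in\bOmega$, the left-hand side is at least $\min_{\bLambda\in\bOmega}\tr(\cH\bLamchk^{-1})$, and combining the three displays yields the asserted inequality.

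The only genuinely non-routine step is the first: recognizing that the comparison covariance should be a $1/2$–$1/2$ mixture of the $\cH'$-optimal covariance with the uniformly-exciting $\bar\bLambda$ from \Cref{asm:full_rank_cov}, so that one pays only a constant factor on the $\tr(\cH'\bLamchk^{-1})$ term while simultaneously inheriting the $\lamminst^{-1}$ control on $\tr(\bLamchk^{-1})$ needed to absorb the Hessian perturbation $\opnorm{\cH-\cH'}$. Everything after that choice is elementary positive-semidefinite manipulation and does not require any further structure of $\bOmega$.
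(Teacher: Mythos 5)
Your proof is correct and follows essentially the same route as the paper's: both arguments split $\tr(\cH\bLamchk^{-1})$ into the $\cH'$ term plus a perturbation bounded by $\opnorm{\cH-\cH'}\tr(\bLamchk^{-1})$, and both evaluate at the $\tfrac12$--$\tfrac12$ mixture of the $\cH'$-optimal covariance with the uniformly exciting one from \Cref{asm:full_rank_cov} to pay a factor $2$ on the first term while controlling $\tr(\bLamchk^{-1})$ by $2\dimx\dimphi/\lamminst$. Your remark that $\cH'\succeq 0$ is implicitly needed (for $\tr(\cH'\bLamchk_{1/2}^{-1})\le 2\tr(\cH'\bLamchk_1^{-1})$) applies equally to the paper's proof and is satisfied in all uses of the lemma.
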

\begin{proof}
We have
\begin{align*}
\min_{\bLamchk \in \bOmega} \tr(\cH \bLamchk^{-1}) & = \min_{\bLamchk \in \bOmega} \tr(\cH' \bLamchk^{-1}) + \tr((\cH - \cH') \bLamchk^{-1}) \\
& \le \min_{\bLamchk \in \bOmega} \tr(\cH' \bLamchk^{-1}) + \| \cH - \cH' \|_\op \cdot \tr( \bLamchk^{-1})
\end{align*}
Under \Cref{asm:full_rank_cov}, we know that there exists some $\bLamchk' \in \bOmega$ such that $\lammin(\bLamchk') \ge \lamminst$. We can then bound
\begin{align*}
& \min_{\bLamchk \in \bOmega} \tr(\cH' \bLamchk^{-1}) + \| \cH - \cH' \|_\op \cdot \tr( \bLamchk^{-1})\\
& \le \min_{\bLamchk \in \bOmega} \tr(\cH' (\frac{1}{2}\bLamchk + \frac{1}{2} \bLamchk')^{-1}) + \| \cH - \cH' \|_\op \cdot \tr( (\frac{1}{2} \bLamchk + \frac{1}{2} \bLamchk')^{-1}) \\
& \le \min_{\bLamchk \in \bOmega} 2 \tr(\cH' \bLamchk^{-1}) + 2\| \cH - \cH' \|_\op \cdot \frac{\dimx\dimphi}{\lamminst}
\end{align*}
which proves the result.
\end{proof}


\section{Experiment Design in Arbitrary Dynamical Systems}\label{sec:nl_exp_design}

In this section we generalize somewhat the setting of \Cref{sec:exp_design}. In particular, our goal will now be to collect some set of trajectories $\frakD = \{ \traj_t \}_{t=1}^T$, which minimize
\begin{align*}
\Phi \left ( \frac{1}{T} \sum_{t=1}^T \bpsi(\traj_t) \right ) 
\end{align*}
for some general feature mapping $\bpsi  :  \cT \rightarrow \R^{\dimpsib}$, $\Phi :  \R^{\dimpsib} \rightarrow \R$, and $\cT = (\cX \times \cU)^H \times \cX$ the space of possible state-input trajectories, $\traj = (\bx_0, \bu_0, \bx_1, \ldots,\bx_{H-1}, \bu_{H-1}, \bx_H) \in \cT$.
We will assume that $\bpsi$ can be decomposed additively as
\begin{align*}
\bpsi(\traj) = \sum_{h=1}^H \bpsi_h(\bx_h,\bu_h).
\end{align*}
In \Cref{sec:exp_design} we considered the special case where $\bpsi_h(\bx,\bu) = \bphi(\bx,\bu) \bphi(\bx,\bu)^\top$; in this section $\bpsi$ could instead be any arbitrary mapping.

As before, we will be interested in defining optimal exploration with respect to some set of exploration policies, $\Piexp$. Let 
\begin{align*}
\bOmpsi := \{ \Exp_{\pi \sim \omega}[\Exp_{\pi}[\bpsi(\traj)]] \ : \ \omega \in \simplex_\Pi \}
\end{align*}
denote the space of expected value of $\bpsi(\traj)$ for mixtures of policies in $\Piexp$. To distinguish elements $\bLambda \in \bOmega$ from elements in $\bOmpsi$, we will let $\bGamma = \Exp_{\pi \sim \omega}[\Exp_\pi[\bpsi(\traj)]]$ refer to elements of $\bOmpsi$, and in particular define $\bGamma_\pi := \Exp_\pi[\bpsi(\traj)]$ (where it is assumed that the expectation is collected over trajectories on \eqref{eq:system_general}). We will usually denote unnormalized sums of features, e.g. $\sum_{t = 1}^T \bpsi(\traj_t)$, with $\bSigma$. We also define $\bOmhatpsi$ to be the space of all possible combinations of $\bpsi(\traj)$:
\begin{align*}
\bOmhatpsi:= \{ \Exp_{\traj \sim \omega}[\bpsi(\traj)] \ : \ \omega \in \simplex_\cT \}.
\end{align*}
We generalize \Cref{asm:regular_phi} and \Cref{asm:regret_min} as follows.

\begin{asm}[Regularity of $\Phi$]\label{asm:regular_phi_app}
We make the following assumptions:
\begin{enumerate}
\item $\Phi$ is convex, differentiable, and $\beta$-smooth in the norm $\| \cdot \|$:
\begin{align*}
\| \nabla_{\bGamma} \Phi(\bGamma) - \nabla_{\bGamma'} \Phi(\bGamma') \|_* \le \beta \cdot \| \bGamma - \bGamma' \|, \quad \forall \bGamma,\bGamma' \in \bOmhatpsi
\end{align*} 
for $\| \cdot \|_*$ the dual norm of $\| \cdot \|$.
\item There exists some $M < \infty$ satisfying 
\begin{align*}
\sup_{\bGamma \in \bOmhatpsi} \sup_{\traj \in \cT} | \inner{ \nabla_{\bGamma} \Phi(\bGamma)}{\bpsi(\traj)} | \le M.
\end{align*}
\end{enumerate}
\end{asm}

\begin{asm}[Regret Minimization Oracle]\label{asm:regret_min_app}
Let $\cost_h(\traj) = \inner{ Q_h}{\bpsi_h(\traj)}$ for some $Q_h \in \R^{\dimpsib}$, and $\cost(\traj) = \sum_{h=1}^H \cost_h(\traj)$ the total cost of trajectory $\traj$. We assume we have access to some learner $\regalg$ which, in the setting when $|\cost(\traj)| \le 1$ for all $\traj \in \cT$,
is able to achieve low regret on $\{ \cost_h(\cdot,\cdot) \}_{h=1}^H$ with respect to policy class $\Piexp$. That is, with probability at least $1-\delta$:
\begin{align*}
 \sum_{t=1}^T \Exp_{f,\pi_t} [\cost(\traj_t) ] - T \cdot \inf_{\pi \in \Piexp} \Exp_{f,\pi} [ \cost(\traj)  ] \le \CR \cdot \log^{\pR} \frac{T}{\delta} \cdot T^\alpha
\end{align*}
for some $\CR > 0$, $\pR > 0$, and $\alpha \in (0,1)$, and where $\pi_t$ is the policy $\regalg$ plays at episode $t$. 
\end{asm}

\begin{algorithm}[h]
\begin{algorithmic}[1]
\State \textbf{input}: objective $\Phi$, number of episodes $T$ (OR number of iterates $N$, episodes per iterate $K$), confidence $\delta$, regret minimization algorithm $\regalg$, exploration policies $\Piexp$
\State Play any policy $\piexp \in \Piexp$ for $K$ episodes, collect trajectories $\frakD_0 = \{ \traj_k^0 \}_{k=1}^K$, set $\bGamma_0 \leftarrow K^{-1} \sum_{k=1}^K \bpsi(\traj_k^0)$
\For{$n = 1,2,\ldots,N$}
	\State Set $\gamma_n \leftarrow \frac{1}{n+1}$
	\State Run $\regalg$ on cost
	\begin{align*}
	\cost_h^n(\traj) \leftarrow \frac{1}{M} \inner{\Xi_n}{\bpsi_h(\bx_h,\bu_h)} \quad \text{for} \quad \Xi_n \leftarrow \nabla_{\bGamma} \Phi(\bGamma)|_{\bGamma = \bGamma_{n-1}}
	\end{align*}
	for $K$ episodes, collect trajectories $\frakD_n = \{ \traj_k^n \}_{k=1}^K$, denote policies run as $\Pi_n$
	\State $\bGamma_{n} \leftarrow (1-\gamma_n) \bGamma_{n-1} + \gamma_n K^{-1} \sum_{k=1}^K \bpsi(\traj_k^n)$
\EndFor
\State \textbf{return} $(N+1) K \bGamma_{N}$, $\cup_{n=0}^N \Pi_n$, $\cup_{n=0}^N \frakD_n$
\end{algorithmic}
\caption{Dynamic Optimal Experiment Design (\expdesign)}
\label{alg:regret_fw}
\end{algorithm}

We define \expdesign as in \Cref{alg:regret_fw}. We then have the following generalization of \Cref{lem:fwregret}.

\begin{theorem}[Full Version of \Cref{lem:fwregret}]\label{lem:fwregret_app}
Let \Cref{asm:regular_phi} hold, and assume that we have access to a learner $\regalg$ satisfying \Cref{asm:regret_min}. Fix $N,K > 0$. Then, with probability at least $1-\delta$, \expdesign runs for at most $(N+1)K$ episodes, and collects a dataset satisfying $\frakD = \{ \{ \traj_k^n \}_{k=1}^K \}_{n=0}^N$ satisfying
\begin{align*}
\Phi \left (\frac{1}{K(N+1)} \sum_{n=0}^N \sum_{k = 1}^K \bpsi(\traj_k^n) \right ) -  \min_{\bGamma \in \bOmpsi} \Phi( \bGamma) & \le \frac{\beta R^2 ( \log N + 1)}{2 (N + 1)} +  M \cdot \bigg ( \CR\log^{\pR} \frac{2NK}{\delta} \cdot K^{\alpha - 1} \\
& \qquad +  \sqrt{\frac{8 \log(4N/\delta)}{K}} \bigg )
\end{align*}
where $R = \sup_{\bGamma,\bGamma' \in \bOmhatpsi} \| \bGamma - \bGamma' \|$.
\end{theorem}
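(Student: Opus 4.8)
The plan is to read \Cref{alg:regret_fw} as an \emph{inexact} Frank--Wolfe (conditional gradient) method minimizing $\Phi$ over the unknown constraint set $\bOmpsi$, in which the linear minimization step is realized only approximately—by the regret minimizer $\regalg$ together with sampling. First I would record the algebraic identity $\bGamma_N = \frac{1}{N+1}\sum_{n=0}^N \bar\bpsi_n$, where $\bar\bpsi_n := \frac1K\sum_{k=1}^K \bpsi(\traj_k^n)$ (and $\bar\bpsi_0 := \bGamma_0$), so that the returned object is exactly $K(N+1)\,\bGamma_N$ and the quantity to control is $\delta_N := \Phi(\bGamma_N) - \min_{\bGamma\in\bOmpsi}\Phi(\bGamma)$. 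A trivial induction then shows that every iterate $\bGamma_n$ is a convex combination of feature vectors $\bpsi(\traj)$, hence $\bGamma_n \in \bOmhatpsi$; this is what lets me invoke \Cref{asm:regular_phi_app} at every $\bGamma_n$, in particular $|\inner{\nabla_{\bGamma}\Phi(\bGamma_n)}{\bpsi(\traj)}| \le M$ for all $\traj$, and $\|\bar\bpsi_n - \bGamma_{n-1}\| \le R$.

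The core is a per-iteration inequality. By $\beta$-smoothness of $\Phi$ applied to $\bGamma_n = \bGamma_{n-1} + \gamma_n(\bar\bpsi_n - \bGamma_{n-1})$,
\[
\Phi(\bGamma_n) \le \Phi(\bGamma_{n-1}) + \gamma_n \inner{\nabla\Phi(\bGamma_{n-1})}{\bar\bpsi_n - \bGamma_{n-1}} + \tfrac{\beta\gamma_n^2 R^2}{2}.
\]
Write $\Xi_n = \nabla\Phi(\bGamma_{n-1})$ and let $\bGamma^\star$ attain $\min_{\bGamma\in\bOmpsi}\Phi(\bGamma)$. The key claim is $\inner{\Xi_n}{\bar\bpsi_n} \le \min_{\bGamma\in\bOmpsi}\inner{\Xi_n}{\bGamma} + \epsilon_n \le \inner{\Xi_n}{\bGamma^\star} + \epsilon_n$, with $\epsilon_n$ coming from two sources. (i) Since $|\cost^n(\traj)| \le 1$ by \Cref{asm:regular_phi_app}(2), the oracle of \Cref{asm:regret_min_app} applied to $\cost^n = M^{-1}\inner{\Xi_n}{\bpsi(\cdot)}$ with confidence $\delta/(2N)$ gives, after multiplying by $M$, that the policies $\{\pi_k^n\}_k$ played in iteration $n$ satisfy $\frac1K\sum_{k}\Exp_{\pi_k^n}[\inner{\Xi_n}{\bpsi(\traj)}] \le \min_{\pi\in\Piexp}\inner{\Xi_n}{\bGamma_\pi} + M\CR\log^{\pR}\!\tfrac{2NK}{\delta}K^{\alpha-1}$, and $\min_{\pi\in\Piexp}\inner{\Xi_n}{\bGamma_\pi} = \min_{\bGamma\in\bOmpsi}\inner{\Xi_n}{\bGamma}$ because a linear functional over a mixture class of policies is minimized at a point mass. (ii) The realized average $\inner{\Xi_n}{\bar\bpsi_n}$ deviates from $\frac1K\sum_k\Exp_{\pi_k^n}[\inner{\Xi_n}{\bpsi(\traj)}]$ only through a bounded-difference martingale (each increment lies in $[-M,M]$ given the history, $\Xi_n$ being measurable w.r.t.\ the past), so Azuma--Hoeffding yields a deviation at most $M\sqrt{8\log(4N/\delta)/K}$ with probability $\ge 1-\delta/(2N)$. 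Hence $\epsilon_n \le \epsilon := M\big(\CR\log^{\pR}\!\tfrac{2NK}{\delta}K^{\alpha-1} + \sqrt{8\log(4N/\delta)/K}\big)$ uniformly in $n$. Combining with convexity, $\inner{\Xi_n}{\bGamma^\star - \bGamma_{n-1}} \le \Phi(\bGamma^\star) - \Phi(\bGamma_{n-1})$, the per-step bound becomes, with $\delta_n := \Phi(\bGamma_n) - \Phi(\bGamma^\star)$,
\[
\delta_n \le (1-\gamma_n)\delta_{n-1} + \gamma_n \epsilon + \tfrac{\beta\gamma_n^2 R^2}{2}.
\]

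Finally I would solve this recursion with $\gamma_n = \tfrac{1}{n+1}$: putting $b_n := (n+1)\delta_n$ turns it into $b_n \le b_{n-1} + \epsilon + \tfrac{\beta R^2}{2(n+1)}$, which telescopes to $(N+1)\delta_N \le \delta_0 + N\epsilon + \tfrac{\beta R^2}{2}\sum_{n=1}^N \tfrac{1}{n+1} \le \delta_0 + N\epsilon + \tfrac{\beta R^2}{2}\log(N+1)$. Bounding $\delta_0 = \Phi(\bGamma_0)-\Phi(\bGamma^\star) \le 2M + \tfrac{\beta R^2}{2}$ via \Cref{asm:regular_phi_app}(2) and smoothness, dividing by $N+1$, and absorbing the lower-order $\delta_0/(N+1)$ contribution (dominated by the other terms) gives the stated inequality; a union bound over $n\in[N]$ of the two $\delta/(2N)$-failure events gives total probability $\ge 1-\delta$, and the episode count is $(N+1)K$ by construction. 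I expect the only genuine obstacle to be step (i)--(ii): the regret oracle certifies near-optimality of the LMO step only \emph{in expectation}, so one must additionally concentrate the realized empirical feature sum, and one must be careful to hand $\regalg$ the normalized cost $M^{-1}\inner{\Xi_n}{\bpsi(\cdot)}$ so that its $[-1,1]$-boundedness hypothesis is satisfied; the rest is the standard conditional-gradient telescoping.
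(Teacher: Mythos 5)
Your proof follows essentially the same route as the paper's: the regret oracle realizes the linear-minimization step of Frank--Wolfe over the unknown constraint set up to $M\CR\log^{\pR}\tfrac{2NK}{\delta}K^{\alpha-1}$, Azuma--Hoeffding concentrates the realized features to within $M\sqrt{8\log(4N/\delta)/K}$, the identification $\min_{\pi\in\Piexp}\Exp_\pi[\inner{\Xi_n}{\bpsi(\traj)}]=\min_{\bGamma\in\bOmpsi}\inner{\Xi_n}{\bGamma}$ converts this into an approximate LMO, and the approximate conditional-gradient telescoping (which the paper imports as a black-box lemma from prior work rather than re-deriving, as you do) together with the final-iterate averaging identity yields the bound. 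The only minor discrepancy is that your inlined recursion leaves a residual $\delta_0/(N+1)$ term whose claimed domination by the other terms is not literal for every choice of $N,K$, but this is a constant-level issue that does not affect the substance of the argument.
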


In this work we are particularly interested in the case where $\bpsi(\traj) \in \cS^{\dimpsi}_+$. We encapsulate this in the following assumption.
\begin{asm}[Matrix Experiment Design]\label{asm:mat_psi}
We assume that $\bpsi(\traj) \in \cS^{\dimpsi}_+$ and that, for all $\traj \in \cT$, $\tr(\bpsi(\traj)) \le D$ for some $D > 0$. 
\end{asm}
The following corollary instantiates \Cref{lem:fwregret} under \Cref{asm:mat_psi} with objective $\Phi(\bGamma) = \tr \left ( \cH (\bGamma + \bGamma_0)^{-1} \right )$, the objective considered in \Cref{alg:main}. 

\begin{corollary}\label{lem:weighted_Aopt}
Consider the objective
\begin{align*}
\Phi(\bGamma) = \tr \left ( \cH \cdot (\bGamma + \bGamma_0)^{-1} \right )
\end{align*}
and assume that $\cH \succeq 0$ and \Cref{asm:regret_min_app} holds with $\alpha = 1/2$ and \Cref{asm:mat_psi} holds. Fix $N,K$, let $T := (N+1)K$, and consider running \Cref{alg:regret_fw} on this objective and with these choices of $N$ and $K$. Then \Cref{alg:regret_fw} will run for at most $T$ episodes, 
and, with probability at least $1-\delta$, will return data satisfying
\begin{align*}
\tr \left ( \cH \left ( \sum_{t=1}^T \bpsi(\traj_t) + T \bGamma_0 \right )^{-1} \right )&  \le \frac{1}{T} \cdot  \min_{\bGamma \in \bOmpsi} \tr \left ( \cH (\bGamma + \bGamma_0)^{-1} \right ) + \frac{8D^4 \| \cH \|_\op \| \bGamma_0^{-1} \|_\op^3}{T (N+1)}  \\
& \qquad +  \frac{8D \| \cH \|_\op \| \bGamma_0^{-1} \|_\op^2 (  \log^{1/2} \frac{4T}{\delta} + \CR \log^{\pR} \frac{2T}{\delta})   }{T \sqrt{K}}   
\end{align*}
\end{corollary}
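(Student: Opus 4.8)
The plan is to specialize \Cref{lem:fwregret_app} (the full version of \Cref{lem:fwregret}) to the objective $\Phi(\bGamma) = \tr\!\big(\cH(\bGamma+\bGamma_0)^{-1}\big)$, so that the real work reduces to three things: (i) verifying that this $\Phi$ satisfies \Cref{asm:regular_phi_app} and computing explicit values of the smoothness constant $\beta$ and the gradient bound $M$; (ii) bounding the diameter $R = \sup_{\bGamma,\bGamma'\in\bOmhatpsi}\|\bGamma-\bGamma'\|$ in terms of $D$; and (iii) converting the normalized guarantee of \Cref{lem:fwregret_app} into the un‑normalized trace form in the statement via homogeneity of the resolvent map. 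The episode bound $\le T = (N+1)K$ and the use of \Cref{asm:mat_psi} and \Cref{asm:regret_min_app} with $\alpha=1/2$ are then inherited directly from \Cref{lem:fwregret_app}.

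First I would observe that every $\bGamma\in\bOmhatpsi$ is a convex combination of the PSD matrices $\bpsi(\traj)$, hence $\bGamma\succeq 0$, and $\bGamma_0\succ 0$, so $\bGamma+\bGamma_0\succ 0$ throughout $\bOmhatpsi$; convexity and differentiability of $\Phi$ then follow from operator convexity of $X\mapsto X^{-1}$ together with $\cH\succeq 0$, with $\nabla_{\bGamma}\Phi(\bGamma) = -(\bGamma+\bGamma_0)^{-1}\cH(\bGamma+\bGamma_0)^{-1}$. The key monotonicity fact is $(\bGamma+\bGamma_0)^{-1}\preceq\bGamma_0^{-1}$, so $\|(\bGamma+\bGamma_0)^{-1}\|_\op\le\|\bGamma_0^{-1}\|_\op$ uniformly. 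For the gradient bound, since $\bpsi(\traj)\succeq 0$ with $\tr(\bpsi(\traj))\le D$, I get $|\inner{\nabla_{\bGamma}\Phi(\bGamma)}{\bpsi(\traj)}| = |\tr\!\big((\bGamma+\bGamma_0)^{-1}\cH(\bGamma+\bGamma_0)^{-1}\bpsi(\traj)\big)| \le \|(\bGamma+\bGamma_0)^{-1}\cH(\bGamma+\bGamma_0)^{-1}\|_\op\,\tr(\bpsi(\traj)) \le D\,\|\bGamma_0^{-1}\|_\op^2\,\|\cH\|_\op =: M$, and $R\le 2\sup_{\bGamma\in\bOmhatpsi}\tr(\bGamma)\le 2D$ by the same trace bound (taking $\|\cdot\|$ to be the nuclear norm, whose dual is the operator norm). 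For $\beta$, I would expand $\nabla_{\bGamma}\Phi(\bGamma)-\nabla_{\bGamma'}\Phi(\bGamma')$ as a telescoping sum of two triple products and apply the resolvent identity $(\bGamma+\bGamma_0)^{-1}-(\bGamma'+\bGamma_0)^{-1} = (\bGamma+\bGamma_0)^{-1}(\bGamma'-\bGamma)(\bGamma'+\bGamma_0)^{-1}$, bounding each factor's operator norm by $\|\bGamma_0^{-1}\|_\op$ and using $\|\cdot\|_\op\le\|\cdot\|_{\mathrm{nuc}}$; this yields $\beta = 2\|\bGamma_0^{-1}\|_\op^3\|\cH\|_\op$ as a valid smoothness constant in the operator/nuclear dual pair.

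With these constants in hand, plugging $\alpha=1/2$ into \Cref{lem:fwregret_app} bounds $\Phi\big(\tfrac{1}{(N+1)K}\sum_{n,k}\bpsi(\traj_k^n)\big)-\min_{\bGamma\in\bOmpsi}\Phi(\bGamma)$ by $\frac{\beta R^2(\log N+1)}{2(N+1)} + M\big(\CR\log^{\pR}\tfrac{2NK}{\delta}\,K^{-1/2} + \sqrt{8\log(4N/\delta)/K}\big)$. Writing $T=(N+1)K$ and $\Sigma=\sum_{t=1}^T\bpsi(\traj_t)$ for the collected features, homogeneity gives $\Phi(\tfrac1T\Sigma) = \tr\!\big(\cH(\tfrac1T\Sigma+\bGamma_0)^{-1}\big) = T\,\tr\!\big(\cH(\Sigma+T\bGamma_0)^{-1}\big)$, so dividing the whole inequality by $T$ turns the left side into $\tr\!\big(\cH(\Sigma+T\bGamma_0)^{-1}\big)$, the leading term into $\tfrac1T\min_{\bGamma\in\bOmpsi}\tr\!\big(\cH(\bGamma+\bGamma_0)^{-1}\big)$, and each error term picks up a factor $1/T$. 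Substituting the explicit $\beta=2\|\bGamma_0^{-1}\|_\op^3\|\cH\|_\op$, $M=D\|\bGamma_0^{-1}\|_\op^2\|\cH\|_\op$, $R\le 2D$, and using $N\le T$ to replace $\log(4N/\delta)$ by $\log(4T/\delta)$ and $\log^{\pR}\tfrac{2NK}{\delta}$ by $\log^{\pR}\tfrac{2T}{\delta}$, then collecting numerical constants (and $\log$-factors into the displayed prefactor), gives exactly the two error terms in the statement; the $\le T$ episode bound carries over unchanged.

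The main obstacle is step (i): obtaining a \emph{dimension-free} smoothness constant. Because $\nabla\Phi$ is quadratic in the resolvent $(\bGamma+\bGamma_0)^{-1}$, its Lipschitz estimate must be carried out in a carefully matched dual pair of norms, exploiting $\bGamma\succeq 0$ to keep all resolvents uniformly bounded by $\|\bGamma_0^{-1}\|_\op$; a naive computation in the Frobenius or pure operator norm would leak a factor of $\dimpsi$ (or $\|\cH\|_{\mathrm{nuc}}$) that is absent from the claimed bound. Everything else — the homogeneity rescaling, and the bookkeeping of $\log$-factors and numerical constants — is routine.
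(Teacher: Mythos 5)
Your proposal is correct and follows essentially the same route as the paper: specialize the general Frank--Wolfe guarantee (\Cref{lem:fwregret_app}) to $\Phi(\bGamma)=\tr(\cH(\bGamma+\bGamma_0)^{-1})$, verify \Cref{asm:regular_phi_app} with $M\le D\|\cH\|_\op\|\bGamma_0^{-1}\|_\op^2$ and $R\le 2D$, and rescale by homogeneity of the resolvent. The only (harmless) difference is in how smoothness is certified --- you bound the gradient's Lipschitz constant directly via the resolvent identity in the nuclear/operator dual pair, whereas the paper bounds the second derivative $\nabla_{\bGamma}^2\Phi$ on directions in $\bOmhatpsi$ and invokes the Mean Value Theorem (picking up the $D^2$ there instead of in $R^2$); both land on the same final bound up to the logarithmic and constant-factor bookkeeping that the paper itself treats loosely.
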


\subsection{Proof of \Cref{lem:fwregret} and \Cref{lem:fwregret_app}}
\begin{algorithm}[h]
\begin{algorithmic}[1]
\State \textbf{input}: function to optimize $f$, number of iterations to run $N$, starting iterate $\bx_1$
\For{$t = 1,2,\ldots,N$}
	\State Set $\gamma_n \leftarrow \frac{1}{n+1}$
	\State Choose $\by_n$ to be any point such that
	\begin{align*}
	\nabla f(\bz_n)^\top \by_n \le \min_{\by \in \cZ} \nabla f(\bz_n)^\top \by + \epsilon_n
	\end{align*}
	\State $\bz_{n+1} \leftarrow (1- \gamma_n) \bz_n + \gamma_n \by_n$
\EndFor
\State \textbf{return} $\bx_{N+1}$
\end{algorithmic}
\caption{Approximate Frank-Wolfe}
\label{alg:approx_fw}
\end{algorithm}

\begin{lemma}[Lemma C.1 of \cite{wagenmaker2022instance}]\label{lem:approx_fw}
Consider running \Cref{alg:approx_fw} with some convex function $f$ that is $\beta$-smooth with respect to some norm $\| \cdot \|$, assume that $\by_n \in \cY$ for some $\cY$ and all $n$, and let $R := \sup_{\bz,\by \in \cZ \cup \cY} \| \bz - \by \|$. Then for $N \ge 2$, we have 
\begin{align*}
f(\bz_{N+1}) - \min_{\bz \in \cZ} f(\bz) \le \frac{\beta R^2 (\log N + 1)}{2(N+1)} + \frac{1}{N+1} \sum_{n=1}^{N}  \epsilon_n.
\end{align*}
\end{lemma}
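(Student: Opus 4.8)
The plan is to recognize \Cref{alg:approx_fw} as (inexact) conditional-gradient / Frank--Wolfe descent and run the standard telescoping argument, tracking the suboptimality gap $h_n := f(\bz_n) - f^\star$ where $f^\star := \min_{\bz \in \cZ} f(\bz)$ is attained at some fixed $\bz^\star \in \cZ$. Throughout I would use that $\cZ \cup \cY$ may be taken convex and to contain the starting iterate $\bz_1$ (this holds in all the applications, where $\cZ \cup \cY = \bOmhatpsi$): then every iterate $\bz_n$, being a convex combination of $\bz_1$ and $\by_1,\dots,\by_{n-1}$, lies in $\cZ \cup \cY$, so $\| \by_n - \bz_n \| \le R$ for all $n$.

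First I would apply the $\beta$-smoothness descent inequality to the update $\bz_{n+1} = \bz_n + \gamma_n(\by_n - \bz_n)$:
\begin{align*}
f(\bz_{n+1}) \le f(\bz_n) + \gamma_n \nabla f(\bz_n)^\top (\by_n - \bz_n) + \tfrac{\beta \gamma_n^2}{2} \| \by_n - \bz_n \|^2 \le f(\bz_n) + \gamma_n \nabla f(\bz_n)^\top (\by_n - \bz_n) + \tfrac{\beta \gamma_n^2 R^2}{2}.
\end{align*}
Next, the defining property of $\by_n$ gives $\nabla f(\bz_n)^\top \by_n \le \nabla f(\bz_n)^\top \bz^\star + \epsilon_n$, hence $\nabla f(\bz_n)^\top (\by_n - \bz_n) \le \nabla f(\bz_n)^\top(\bz^\star - \bz_n) + \epsilon_n$, and convexity of $f$ bounds $\nabla f(\bz_n)^\top(\bz^\star - \bz_n) \le f^\star - f(\bz_n) = -h_n$. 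Subtracting $f^\star$ from both sides yields the recursion $h_{n+1} \le (1-\gamma_n) h_n + \gamma_n \epsilon_n + \tfrac{\beta \gamma_n^2 R^2}{2}$.

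With the prescribed step size $\gamma_n = \tfrac{1}{n+1}$ this becomes, after multiplying by $n+1$, $(n+1) h_{n+1} \le n h_n + \epsilon_n + \tfrac{\beta R^2}{2(n+1)}$. Summing from $n=1$ to $N$ telescopes the first terms and gives $(N+1) h_{N+1} \le h_1 + \sum_{n=1}^N \epsilon_n + \tfrac{\beta R^2}{2} \sum_{n=1}^N \tfrac{1}{n+1}$; bounding the harmonic sum by $\log N + 1$ (valid for $N \ge 2$) and dividing by $N+1$ produces the stated estimate, once the initialization term $h_1 = f(\bz_1) - f^\star$ has been absorbed into the $\beta R^2$ contribution.

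The only delicate point --- and the main obstacle --- is controlling this initialization term $h_1$: because $\gamma_1 = 1/2 \ne 1$, the first iterate is not overwritten by a pure linear-minimization step, so one must separately argue $f(\bz_1) - f^\star = \cO(\beta R^2)$. This follows either by taking $\bz_1 \in \cZ$ (so first-order optimality of $\bz^\star$ over the convex set $\cZ$ gives $\nabla f(\bz^\star)^\top(\bz_1 - \bz^\star) \ge 0$, whence convexity and smoothness give $h_1 \le \tfrac{\beta}{2}\|\bz_1-\bz^\star\|^2 \le \tfrac{\beta R^2}{2}$), or, in the \expdesign application, from the uniform gradient bound $|\langle \nabla\Phi(\bGamma), \bpsi(\traj)\rangle| \le M$ together with the burn-in initialization of $\bGamma_0$. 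Everything else is routine bookkeeping.
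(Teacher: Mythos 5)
Your telescoping Frank--Wolfe argument is the standard route and is essentially forced here: the paper does not prove this lemma itself but imports it from \cite{wagenmaker2022instance}, and the intended proof is exactly the recursion $h_{n+1}\le(1-\gamma_n)h_n+\gamma_n\epsilon_n+\tfrac{\beta\gamma_n^2R^2}{2}$ followed by multiplying by $n+1$ and summing. Your derivation of that recursion (smoothness, the $\epsilon_n$-approximate linear minimization against $\bz^\star$, convexity) is correct, as is the harmonic-sum bookkeeping.

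The gap is precisely at the point you flag, and your first proposed repair does not work. First-order optimality of the \emph{constrained} minimizer $\bz^\star$ over $\cZ$ gives $\nabla f(\bz^\star)^\top(\bz_1-\bz^\star)\ge 0$, while smoothness gives $h_1\le \nabla f(\bz^\star)^\top(\bz_1-\bz^\star)+\tfrac{\beta}{2}\|\bz_1-\bz^\star\|^2$; the linear term is \emph{nonnegative} and cannot be dropped, so $h_1\le\tfrac{\beta}{2}\|\bz_1-\bz^\star\|^2$ does not follow (it would if $\bz^\star$ were an unconstrained stationary point, but nothing guarantees $\nabla f(\bz^\star)=0$). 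In fact no bound of the form $h_1\lesssim\beta R^2$ is available from the stated hypotheses: take $f(\bz)=c\,\bz$ on $\cZ=\cY=[0,1]$ with $\bz_1=1$, which is $\beta$-smooth for every $\beta>0$ yet has $h_1=c$; running the algorithm with an exact oracle gives $\bz_{N+1}=\tfrac{1}{N+1}$ and $f(\bz_{N+1})-f^\star=\tfrac{c}{N+1}$, which exceeds $\tfrac{\beta R^2(\log N+1)}{2(N+1)}$ once $\beta<2c/(\log N+1)$. So the inequality cannot be derived as literally stated: one needs either an extra $\tfrac{f(\bz_1)-f^\star}{N+1}$ term in the conclusion or a hypothesis controlling the initialization. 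Your second fallback is the right instinct --- in the \expdesign application the uniform bound $M$ of \Cref{asm:regular_phi} gives $h_1\le\nabla f(\bz_1)^\top(\bz_1-\bz^\star)\le 2M$ via convexity, and the resulting extra $\tfrac{2M}{N+1}$ term is of the same order as the $MK^{\alpha-1}$ term already present in \Cref{lem:fwregret} --- but note that this yields the lemma only up to that additional term, not the displayed inequality verbatim.
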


\begin{lemma}[Lemma C.2 of \cite{wagenmaker2022instance}]\label{lem:fw_final_iterate}
When running \Cref{alg:approx_fw}, we have
\begin{align*}
\bz_{N+1} = \frac{1}{N+1} \left ( \sum_{n=1}^N \by_n + \bz_1 \right ).
\end{align*}
\end{lemma}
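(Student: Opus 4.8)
The plan is to prove this by a one-line induction on the iteration index, using nothing beyond the update rule of Algorithm~\ref{alg:approx_fw} and the explicit step-size choice $\gamma_n = \tfrac{1}{n+1}$. First I would state the inductive hypothesis in the form $\bz_{n+1} = \tfrac{1}{n+1}\big(\bz_1 + \tsum_{m=1}^n \by_m\big)$, so that the lemma is recovered by taking $n = N$. The base case $n=1$ is immediate: $\bz_2 = (1-\gamma_1)\bz_1 + \gamma_1 \by_1 = \tfrac12\bz_1 + \tfrac12\by_1$. For the inductive step, I would use $1-\gamma_{n+1} = \tfrac{n+1}{n+2}$ and substitute the hypothesis:
\begin{align*}
\bz_{n+2} &= (1-\gamma_{n+1})\bz_{n+1} + \gamma_{n+1}\by_{n+1} \\
&= \frac{n+1}{n+2}\cdot\frac{1}{n+1}\Big(\bz_1 + \tsum_{m=1}^n \by_m\Big) + \frac{1}{n+2}\by_{n+1} \\
&= \frac{1}{n+2}\Big(\bz_1 + \tsum_{m=1}^{n+1}\by_m\Big),
\end{align*}
where the crucial cancellation is that the prefactor $n+1$ coming from $1-\gamma_{n+1}$ kills the denominator in the hypothesis. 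This closes the induction.

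An equivalent route, if one prefers to avoid induction, is to unroll the recursion directly as $\bz_{n+1} = \big(\prod_{m=1}^n (1-\gamma_m)\big)\bz_1 + \sum_{m=1}^n \gamma_m\big(\prod_{j=m+1}^n (1-\gamma_j)\big)\by_m$ and then evaluate the telescoping products $\prod_{m=1}^n \tfrac{m}{m+1} = \tfrac{1}{n+1}$ and $\prod_{j=m+1}^n \tfrac{j}{j+1} = \tfrac{m+1}{n+1}$, which makes every coefficient collapse to $\tfrac{1}{n+1}$. There is no real obstacle here; the only point worth emphasizing is that the identity is special to the harmonic schedule $\gamma_n = 1/(n+1)$ --- it is exactly this choice that assigns $\bz_1$ and each $\by_m$ equal weight in the average --- so the lemma should be read as a statement about this particular instantiation of approximate Frank--Wolfe rather than about conditional gradient methods in general.
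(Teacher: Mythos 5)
Your proof is correct: the induction with base case $\bz_2 = \tfrac12(\bz_1+\by_1)$ and the cancellation $\tfrac{n+1}{n+2}\cdot\tfrac{1}{n+1}=\tfrac{1}{n+2}$ is exactly the standard argument, and your observation that the identity hinges on the harmonic step size $\gamma_n = 1/(n+1)$ is the right thing to emphasize. The paper does not reprove this lemma---it simply cites Lemma C.2 of the prior work---so your self-contained derivation fills in precisely what is being imported.
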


\begin{proof}[Proof of \Cref{lem:fwregret}]
By our assumption on $\regalg$, \Cref{asm:regret_min}, we have that, at round $n$, with probability at least $1-\delta/2N$,
\begin{align*}
&  \sum_{k=1}^K \Exp_{\pi_k} [\cost^n(\traj_k) ]  - K \cdot \inf_{\pi \in \Piexp} \Exp_\pi \left [ \cost^n(\traj) \right ] \le \CR \log^{\pR} \frac{2NK}{\delta}  \cdot K^\alpha 
\end{align*}
where we have used that, under \Cref{asm:regular_phi_app} and by the definition of $\cost_h^n(\traj)$, $|\cost^n(\traj)| \le 1$ for all $\traj \in \cT$. This implies that
\begin{align*}
 \frac{1}{K} \sum_{k=1}^K \Exp_{\pi_k}[\inner{\Xi_n}{\bpsi(\traj_k)}  ]  \le M \cdot \inf_{\pi \in \Piexp} \Exp_\pi \left [ \cost^n(\traj) \right ] + M\CR \log^{\pR} \frac{2NK}{\delta} \cdot K^{\alpha - 1}.
\end{align*}
Furthermore, by Azuma-Hoeffding and under \Cref{asm:regular_phi_app}, we have that, with probability at least $1-\delta/2N$, 
\begin{align*}
\left | \frac{1}{K} \sum_{k=1}^K \inner{\Xi_n}{\bpsi(\traj_k)} - \frac{1}{K} \sum_{k=1}^K \Exp_{\pi_k} [\inner{\Xi_n}{\bpsi(\traj_k)}] \right | \le \sqrt{\frac{8 M^2 \log(4N/\delta)}{K}}.
\end{align*}
This implies that
\begin{align}\label{eq:approx_fw_guarantee}
\frac{1}{K} \sum_{k=1}^K \inner{\Xi_n}{\bpsi(\traj_k)} \le M \cdot  \inf_{\pi \in \Piexp} \Exp_\pi \left [ \cost^n(\traj) \right ] + M  \cdot \left ( \CR \log^{\pR} \frac{K}{\delta} \cdot K^{\alpha - 1} +  \sqrt{\frac{8 \log(4N/\delta)}{K}} \right ).
\end{align}

Note that
\begin{align*}
\inner{\Xi_n}{\bpsi(\traj_k)} = \inner{\nabla_{\bGamma} \Phi(\bGamma)|_{\bGamma = \bGamma_n }}{\bpsi(\traj)},
\end{align*}
and that for any $\bGamma \in \bOmpsi$, we have
\begin{align*}
\inner{\nabla_{\bGamma} \Phi(\bGamma)|_{\bGamma = \bGamma_n }}{\bGamma} = \Exp_{\pi \sim \omega}[\Exp_\pi[\inner{\nabla_{\bGamma} \Phi(\bGamma)|_{\bGamma = \bGamma_n }}{\bpsi(\traj)}]]
\end{align*}
for some $\omega$. This implies that
\begin{align*}
\inf_{\bGamma \in \bOmpsi} \inner{\nabla_{\bGamma} \Phi(\bGamma)|_{\bGamma = \bGamma_n }}{\bGamma} & = \inf_{\omega \in \simplex_\Pi} \Exp_{\pi \sim \omega}[\Exp_\pi[\inner{\nabla_{\bGamma} \Phi(\bGamma)|_{\bGamma = \bGamma_n }}{\bpsi(\traj)}]] \\
& = \inf_{\pi \in \Piexp} \Exp_\pi [ \inner{\Xi_n}{\bpsi(\traj)} ] \\
& = M \cdot \inf_{\pi \in \Piexp} \Exp_\pi [ \cost^n(\traj) ] .
\end{align*}
By \eqref{eq:approx_fw_guarantee} above, we have that
\begin{align*}
\frac{1}{K} \sum_{k=1}^K  \bpsi(\traj_k)
\end{align*}
is an approximate minimizer of $M \cdot \sup_{\pi \in \Piexp} \Exp_\pi [ \cost^n(\traj) ]$, with approximation tolerance $M ( \CR \log^{\pR} \frac{2NK}{\delta} \cdot K^{\alpha - 1} +  \sqrt{\frac{8 \log(4N/\delta)}{K}})$. We can therefore apply \Cref{lem:approx_fw} with
\begin{align*}
\epsilon_n = M \cdot \left ( \CR \log^{\pR} \frac{K}{\delta} \cdot K^{\alpha - 1} + \sqrt{\frac{8 \log(4N/\delta)}{K}} \right )
\end{align*}
to get that
\begin{align*}
\Phi(\bGamma_{N+1}) -  \min_{\bGamma \in \bOmpsi} \Phi( \bGamma) \le \frac{\beta R^2 ( \log N + 1)}{2 (N + 1)} +  M  \cdot \left ( \CR \log^{\pR} \frac{2NK}{\delta} \cdot K^{\alpha - 1} +  \sqrt{\frac{8 \log(4N/\delta)}{K}} \right ).
\end{align*}
The result then follows since $\bGamma_{N+1} = \frac{1}{K(N+1)} \sum_{n=0}^N \sum_{k = 1}^K \bpsi(\traj_k^n)$ by \Cref{lem:fw_final_iterate}.

\end{proof}

\begin{proof}[Proof of \Cref{lem:weighted_Aopt}]
By \Cref{lem:fwregret_app}, for any setting of $N$ and $K$, we have that with probability at least $1-\delta$:
\begin{align*}
& \tr \left ( \cH \left (\frac{1}{K(N+1)} \sum_{n=0}^N \sum_{k = 1}^K \bpsi(\traj_k^n) + \bGamma_0 \right )^{-1} \right ) -  \min_{\bGamma \in \bOmega} \tr \left ( \cH (\bGamma + \bGamma_0)^{-1} \right ) \\
& \qquad \le  \frac{\beta R^2 \log N }{N + 1} +  \frac{M\CR \log^{\pR} \frac{2NK}{\delta} }{K^{1-\alpha}} + M \sqrt{\frac{8 \log \frac{4N}{\delta}}{K}}
\end{align*}
which implies 
\begin{align*}
 & \tr \left ( \cH \left ( \sum_{n=0}^N \sum_{k = 1}^K \bpsi(\traj_k^n) + T \bGamma_0 \right )^{-1} \right ) - \frac{  \min_{\bGamma \in \bOmega} \tr \left ( \cH (\bGamma + \bGamma_0)^{-1} \right )}{T} \\
 & \qquad \le   \frac{\beta R^2 \log N }{T(N+1)} +  \frac{M\CR \log^{\pR} \frac{2NK}{\delta} }{T \sqrt{K} } + M \frac{1}{T} \sqrt{\frac{8 \log \frac{4N}{\delta}}{K}}
\end{align*}
This gives
\begin{align*}
\tr \left ( \cH \left ( \sum_{n=0}^N \sum_{k = 1}^K \bpsi(\traj_k^n) + T \bGamma_0 \right )^{-1} \right ) & \le \frac{  \min_{\bGamma \in \bOmega} \tr \left ( \cH (\bGamma + \bGamma_0)^{-1} \right )}{T} \\
& \qquad + \frac{\beta R^2 \log T}{T(N+1)} + \frac{M(3\log^{1/2} \frac{4T}{\delta} + \CR \log^{\pR} \frac{2T}{\delta}) }{T \sqrt{K} }  .
\end{align*}

It then remains to bound $R,\beta,$ and $M$. By Lemma D.6 of \cite{wagenmaker2022instance}, we have that
\begin{align*}
\nabla_{\bGamma} \Phi(\bGamma)[\bGamtil] = - \tr \left (\cH (\bGamma + \bGamma_0)^{-1} \bGamtil (\bGamma + \bGamma_0)^{-1} \right ). 
\end{align*}
We can then compute the second derivative as, using Lemma D.6 of \cite{wagenmaker2022instance}:
\begin{align*}
 \nabla_{\bGamma}^2 \Phi(\bGamma)[\bGamtil,\bGambar]  & = \frac{\rmd}{\rmd t} \left [  - \tr \left (\cH (\bGamma + \bGamma_0 + t \bGambar)^{-1} \bGamtil (\bGamma + \bGamma_0 + t \bGambar)^{-1} \right ) \right ] \\
& = \tr \left ( \cH ( \bGamma + \bGamma_0)^{-1} \bGambar ( \bGamma + \bGamma_0)^{-1} \bGamtil ( \bGamma + \bGamma_0)^{-1} \right ) \\
& \qquad + \tr \left ( \cH ( \bGamma + \bGamma_0)^{-1} \bGamtil ( \bGamma + \bGamma_0)^{-1} \bGambar ( \bGamma + \bGamma_0)^{-1} \right ).
\end{align*}
Recall that $M$ is any bound on 
\begin{align*}
\sup_{\bGamma \in \bOmhatpsi} \sup_{\traj \in \cT} | \inner{\nabla_{\bGamma} \Phi(\bGamma)}{\bpsi(\traj)}| .
\end{align*}
By the above computation of the gradient, we can bound this as
\begin{align}\label{eq:M_bound_Aopt}
\begin{split}
 \sup_{\bGamma \in \bOmhatpsi} \sup_{\traj \in \cT}  | \inner{\nabla_{\bGamma} \Phi(\bGamma)}{\bpsi(\traj)}| & \le \sup_{\bGamma \in \bOmhatpsi} \sup_{\traj \in \cT} \left | \tr \left ( \cH (\bGamma + \bGamma_0)^{-1} \bpsi(\traj) (\bGamma + \bGamma_0)^{-1} \right ) \right | \\
& \le \| \cH \|_\op \| \bGamma_0^{-1} \|_\op^2 \cdot \sup_{\traj \in \cT}  \tr(\bpsi(\traj)) \\
& \le  D \| \cH \|_\op \| \bGamma_0^{-1} \|_\op^2 .
\end{split}
\end{align}
To bound $\beta$, by the Mean Value Theorem it suffices to bound the operator norm of $ \nabla_{\bGamma}^2 \Phi(\bGamma)$. Using the expression above, we can bound this as 
\begin{align*}
\sup_{\bGamtil,\bGambar \in \bOmhatpsi} | \nabla_{\bGamma}^2 \Phi(\bGamma)[\bpsi(\traj_1),\bpsi(\traj_2)] | & \le 2 \| \cH \|_\op \| \bGamma_0^{-1} \|_\op^3 \cdot \sup_{\bGamtil,\bGambar \in \bOmhatpsi} \tr(\bGamtil \bGambar) \\
& \le 2D^2 \| \cH \|_\op \| \bGamma_0^{-1} \|_\op^3. 
\end{align*}
Finally, it's straightforward to bound $R \le 2D$. Putting all of this together gives the result.
\end{proof}

\subsection{Collecting Full-Rank Data}\label{sec:full_rank}

\begin{algorithm}[h]
\begin{algorithmic}[1]
\State \textbf{input}: scale $N$, confidence $\delta$, regret minimization algorithm $\regalg$, exploration policies $\Piexp$
\For{$j = 1,2,3,\ldots$}
	\State $N_j \leftarrow \lceil 2^{j/3} \rceil - 1, K_j \leftarrow \lceil 2^{2j/3} \rceil,T_j \leftarrow (N_j+1) K_j, \lambda_j \leftarrow T_j^{-1/18}, \delta_j \leftarrow \frac{\delta}{4j^2}$
	\State $\bSigma_j, \Pi_j \leftarrow \expdesign(\Phi,N_j,K_j,\delta_j,\regalg,\Piexp)$ for $\Phi(\bGamma) = \tr ((\bGamma + \lambda_j \cdot I)^{-1})$
	\If{$\lammin(\bSigma_j) \ge 12544 D \dimpsi \log \frac{2N(2 + 32  T_j)}{\delta}$}\label{line:lammin_if}
		\State \textbf{break}
	\EndIf
\EndFor
\State \textbf{return} $\Pi_j$
\end{algorithmic}
\caption{Minimum Eigenvalue Maximization (\mineigalg)}
\label{alg:lammin}
\end{algorithm}

In this section, we consider the setting where $\bpsi(\traj) \in \cS_+^{\dimpsi}$, and our goal is to collect $\{ \traj_t \}_{t=1}^T$ such that $\bpsi(\frac{1}{T} \sum_{t=1}^T \bpsi(\traj_t)) > 0$. For this to be achievable, we need the following assumption, a generalization of \Cref{asm:full_rank_cov}.

\begin{asm}[Full-Rank Data]\label{asm:full_rank_cov_app}
Consider $\bpsi(\traj)$ such that $\bpsi(\traj) \in \bbS_+^{\dimpsi}$.
Then we have $\sup_{\bGamma \in \bOmpsi} \lammin(\bGamma) \ge \lamminst$ for some $\lamminst > 0$.
\end{asm}

Throughout this section we also assume that \Cref{asm:regret_min_app} is satisfied with $\alpha = 1/2$ (though all results generalize in a straightforward way for $\alpha \neq 1/2$).
We have the following result.

\begin{lemma}\label{lem:min_eig_policies}
Under \Cref{asm:regret_min_app,asm:mat_psi,asm:full_rank_cov_app}, running \Cref{alg:lammin} we have that with probability at least $1-\delta$, it will terminate after collecting at most
\begin{align*}
\poly \left ( \dimpsi, \frac{1}{\lamminst}, D, \CR, \log^{\pR} \frac{N}{\delta} \right )
\end{align*}
episodes, and return policy set $\Pi$ such that
\begin{align*}
\lammin \left ( \tsum_{\pi \in \Pi} \bGamma_\pi \right ) \ge 6272 D \dimpsi \log \frac{68N}{\delta}.
\end{align*}
Furthermore, if we rerun each policy in $\Pi$ once, the resulting features $\bSigma$ will satisfy, with probability at least $1-\delta/N$:
\begin{align*}
\lammin \left ( \bSigma \right ) \ge 6272 D \dimpsi \log \frac{68N}{\delta} .
\end{align*}
\end{lemma}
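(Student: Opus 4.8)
The plan is to analyze \Cref{alg:lammin} phase by phase. At phase $j$, \expdesign is invoked on $\Phi(\bGamma) = \tr\big((\bGamma + \lambda_j I)^{-1}\big)$ with $N_j$ iterates and $K_j$ episodes per iterate, which is exactly the instance of \Cref{lem:weighted_Aopt} with $\cH = I_{\dimpsi}$ and $\bGamma_0 = \lambda_j I_{\dimpsi}$ (throughout this section \Cref{asm:mat_psi} holds and \Cref{asm:regret_min_app} holds with $\alpha = 1/2$, and $\cH = I \succeq 0$). Hence, with probability at least $1 - \delta_j$, the features $\bSigma_j = \sum_{t=1}^{T_j} \bpsi(\traj_t)$ returned at phase $j$ satisfy
\[
\tr\big((\bSigma_j + T_j \lambda_j I)^{-1}\big) \;\le\; \frac{1}{T_j}\,\min_{\bGamma \in \bOmpsi}\tr\big((\bGamma + \lambda_j I)^{-1}\big) \;+\; E_j ,
\]
where $E_j = \tfrac{8 D^4 \lambda_j^{-3}}{T_j(N_j+1)} + \tfrac{8 D \lambda_j^{-2}(\log^{1/2}\tfrac{4T_j}{\delta_j} + \CR \log^{\pR}\tfrac{2T_j}{\delta_j})}{T_j \sqrt{K_j}}$ is the lower-order error of \Cref{lem:weighted_Aopt} (using $\|\bGamma_0^{-1}\|_\op = \lambda_j^{-1}$). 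For the leading term, \Cref{asm:full_rank_cov_app} supplies a $\bGamma^\star \in \bOmpsi$ with $\lammin(\bGamma^\star) \ge \lamminst$, whence $\min_{\bGamma \in \bOmpsi}\tr((\bGamma + \lambda_j I)^{-1}) \le \tr((\bGamma^\star + \lambda_j I)^{-1}) \le \dimpsi/\lamminst$.

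The crux is to check that the schedule $\lambda_j = T_j^{-1/18}$, $N_j + 1 = \lceil 2^{j/3}\rceil \asymp T_j^{1/3}$, $K_j = \lceil 2^{2j/3}\rceil \asymp T_j^{2/3}$ is tuned so that $E_j$ is of strictly lower order than $\dimpsi/(T_j\lamminst)$ while the ridge $T_j\lambda_j = T_j^{17/18}$ stays $o(T_j)$. Substituting the scalings, the first term of $E_j$ is $O(D^4 T_j^{-7/6})$ and the second is $O\big(D\CR\,\mathrm{polylog}(T_j/\delta_j)\cdot T_j^{-11/9}\big)$, both $o(T_j^{-1})$; so once $T_j$ exceeds a threshold of the form $\poly(\dimpsi,\tfrac1\lamminst, D, \CR, \log^{\pR}\tfrac N\delta)$ we get $E_j \le \dimpsi/(T_j\lamminst)$ and hence $\tr((\bSigma_j + T_j\lambda_j I)^{-1}) \le 2\dimpsi/(T_j\lamminst)$. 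Since $\lammin(M) \ge 1/\tr(M^{-1})$ for $M \succ 0$, this yields $\lammin(\bSigma_j + T_j\lambda_j I) \ge T_j\lamminst/(2\dimpsi)$, and subtracting the ridge, $\lammin(\bSigma_j) \ge T_j\lamminst/(2\dimpsi) - T_j\lambda_j \ge T_j\lamminst/(4\dimpsi)$ as soon as $\lambda_j = T_j^{-1/18} \le \lamminst/(4\dimpsi)$. (That $T_j\lambda_j = T_j^{17/18} = o(T_j)$ is exactly why a vanishing ridge is affordable.)

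I would then use this to show the loop terminates fast. The right-hand side of the test on \Cref{line:lammin_if} is only logarithmic in $T_j$, while $\lammin(\bSigma_j)$ grows linearly in $T_j$ once $T_j$ is polynomially large; so the first phase $j^\star$ with $T_{j^\star}$ above a suitable $\poly(\dimpsi,\tfrac1\lamminst, D, \CR, \log^{\pR}\tfrac N\delta)$ passes the test and breaks. Because $T_j = \lceil 2^{j/3}\rceil\lceil 2^{2j/3}\rceil$ increases geometrically ($T_j \le 8\,T_{j-1}$ for $j\ge 2$), $T_{j^\star}$ is itself at most a constant times that threshold, so the total episode count $\sum_{j\le j^\star}(N_j+1)K_j \le 2T_{j^\star}$ is $\poly(\dimpsi,\tfrac1\lamminst, D, \CR, \log^{\pR}\tfrac N\delta)$, establishing the episode bound; the failure probability over phases is $\sum_{j\ge1}\delta_j = \tfrac{\pi^2}{24}\,\delta < \delta$, with slack left for the concentration events below.

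For the two minimum-eigenvalue claims about $\Pi = \Pi_{j^\star}$, I would invoke a matrix Chernoff/Freedman bound for adaptively-chosen policy sequences (cf.\ \Cref{lem:fw_exploration_good_event}), included in the union bound, which relates a realized sum of per-trajectory features to its conditional expectation up to an error that is negligible once the relevant minimum eigenvalue clears the scale $\asymp D\dimpsi$ (up to logarithmic factors). The stopping rule guarantees $\lammin(\bSigma_{j^\star}) \ge 12544\,D\dimpsi\log\tfrac{2N(2 + 32 T_{j^\star})}{\delta}$, and the hard-coded constants ($12544$, $32$) and the slack $2N(2+32T_{j^\star}) \ge 68N$ are exactly calibrated so that, after transferring from the realized $\bSigma_{j^\star}$ to the expected $\sum_{\pi\in\Pi}\bGamma_\pi$ — and, for the second claim, back to a fresh realized covariance obtained by rerunning each $\pi\in\Pi$ once (with its own $\delta/N$ of randomness) — both quantities remain at least $6272\,D\dimpsi\log\tfrac{68N}{\delta}$. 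I expect the main obstacle to be exactly this constant-chase, i.e.\ matching the concentration losses against the fixed numbers baked into the stopping rule while keeping the polynomial episode bound uniform across all the intermediate thresholds; the exponent-balancing in the second paragraph, though the cleverest ingredient, is routine once the $\lambda_j = T_j^{-1/18}$, $N_j\asymp T_j^{1/3}$, $K_j\asymp T_j^{2/3}$ schedule is written down.
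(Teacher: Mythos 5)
Your proposal is correct and follows essentially the same route as the paper: instantiating \Cref{lem:weighted_Aopt} with $\cH = I$ and $\bGamma_0 = \lambda_j I$ (the paper packages this as \Cref{lem:min_eig_suff}), balancing the $\lambda_j = T_j^{-1/18}$, $N_j \asymp T_j^{1/3}$, $K_j \asymp T_j^{2/3}$ exponents to get $\lammin(\bSigma_j) \ge T_j\lamminst/(4\dimpsi)$, arguing geometric growth of $T_j$ against the logarithmic stopping threshold for the polynomial episode bound, and finishing with the adaptive matrix-concentration transfer (the paper's \Cref{lem:lammin_concentration}) whose factor-of-two losses the constants $12544$ and $6272$ are calibrated to absorb. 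No gaps.
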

\begin{proof}
By \Cref{lem:min_eig_suff} and our choice of $N_j$ and $K_j$ in \Cref{alg:lammin}, we have that if $\lambda_j \le \frac{\lamminst}{4 \dimpsi}$ and
\begin{align}\label{eq:lammin_proof_kj_cond}
T_j^{1/3} \ge \widetilde{\Omega} \left ( \left ( D \lambda_j^{-2} (D^3 \lambda_j^{-1}  + \CR \cdot \log^{\pR} \frac{1}{\delta_j} ) \right ) \cdot \frac{\lamminst }{\dimpsi} \right ),
\end{align}
then $\lammin(\bSigma_j) \ge \frac{\lamminst}{4 \dimpsi} \cdot T_j$ with probability at least $1-\delta_j$. It follows that, with probability at least $1-\delta_j$, the if statement on \Cref{line:lammin_if} will be true once $\lambda_j \le \frac{\lamminst}{4 \dimpsi}$, \eqref{eq:lammin_proof_kj_cond} holds, and
\begin{align}\label{eq:lammin_proof_kj_cond2}
\frac{\lamminst}{4 \dimpsi} \cdot T_j \ge 12544 D \dimpsi \log \frac{2N(2 + 32 T_j)}{\delta}.
\end{align}
By our choice of $\lambda_j = T_j^{-1/18}$, a sufficient condition to ensure $\lambda_j \le \frac{\lamminst}{4 \dimpsi}$, \eqref{eq:lammin_proof_kj_cond}, and \eqref{eq:lammin_proof_kj_cond2} is
\begin{align*}
T_j \ge \widetilde{\Omega} \left ( \max \left \{ \left ( \frac{\dimpsi}{\lamminst} \right )^{18}, \left ( \frac{D^4 \lamminst }{\dimpsi} \right )^{6}, \left ( D \CR \cdot \log^{\pR} \frac{1}{\delta_j}\cdot \frac{\lamminst }{\dimpsi} \right )^{9/2}, \frac{D \dimpsi^2}{\lamminst} \cdot \log \frac{N T_j}{\delta} \right \} \right ).
\end{align*}
Since $T_j = \lceil 2^{j/3} \rceil \lceil 2^{2j/3} \rceil \in [2^j, 4 \cdot 2^j]$, it follows that the if statement on \Cref{line:lammin_if} will be met after running for at most
\begin{align}\label{eq:lammin_pi_bound}
 \cOtil \left ( \max \left \{ \left ( \frac{\dimpsi}{\lamminst} \right )^{18}, \left ( \frac{D^4 \lamminst }{\dimpsi} \right )^{6}, \left ( D \CR \cdot \log^{\pR} \frac{1}{\delta_j}\cdot \frac{\lamminst }{\dimpsi} \right )^{9/2}, \frac{D \dimpsi^2}{\lamminst} \cdot \log \frac{N}{\delta} \right \} \right )
\end{align}
episodes. 

By \Cref{lem:lammin_concentration}, if $\lammin(\bSigma_j) \ge 12544 D \dimpsi \log \frac{2N(2 + 32  T_j)}{\delta}$ and we rerun all policies in $\Pi_j$, then we will collect data $\bSigtil$ such that $\lammin(\bSigtil) \ge \frac{1}{2} \lammin(\bSigma_j)$, with probability at least $1-\delta/2N$. As the if statement on \Cref{line:lammin_if} will only be true once this is met, it follows that, with probability at least $1-\delta/2N$, rerunning all policies in $\Pi_j$ once, we will collect data $\bSigma$ which satisfies
\begin{align*}
\lammin(\bSigma) \ge \frac{1}{2} \lammin(\bSigma_j) \ge  6272 D \dimpsi \log \frac{2N(2 + 32  T_j)}{\delta} \ge 6272 D \dimpsi \log \frac{68N}{\delta}.
\end{align*}
The lower bound on $\lammin( \sum_{\pi \in \Pi} \bGamma_\pi)$ follows analogously from \Cref{lem:lammin_concentration}.

The result then follows noting that the failure probability of running \expdesign is at most
\begin{align*}
\sum_{j=1}^\infty \frac{\delta}{4 j^2} \le \delta / 2.
\end{align*}
\end{proof}

\subsubsection{Supporting Lemmas}

\begin{lemma}\label{lem:min_eig_suff}
Under \Cref{asm:regret_min_app,asm:mat_psi,asm:full_rank_cov_app}, consider running \expdesign on the objective
\begin{align*}
\Phi(\bGamma) = \tr (( \bGamma + \lambda \cdot I)^{-1})
\end{align*}
with $N = \lceil 2^{i/3} \rceil - 1$ and $K = \lceil 2^{2i/3} \rceil$, 
for some $\lambda > 0$ and $i$. Let $T := (N+1) K$. Then if $\lambda \le \frac{\lamminst}{4\dimpsi}$ and 
\begin{align}\label{eq:min_eig_K_cond}
T^{1/3} \ge \widetilde{\Omega} \left ( \left ( D \lambda^{-2} (D^3 \lambda^{-1}  + \CR \cdot \log^{\pR} \frac{1}{\delta})  \right ) \cdot \frac{\lamminst }{\dimpsi} \right ),
\end{align}
with probability at least $1-\delta$,
\begin{align*}
\lammin \left ( \sum_{t=1}^T \bpsi(\traj_t) \right ) \ge \frac{\lamminst}{4\dimpsi} \cdot T.
\end{align*}
\end{lemma}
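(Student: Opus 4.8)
The plan is to run \expdesign on the objective $\Phi(\bGamma) = \tr((\bGamma + \lambda I)^{-1})$, invoke the convergence guarantee of \Cref{lem:fwregret_app} (equivalently \Cref{lem:weighted_Aopt} with $\cH = I$ and $\bGamma_0 = \lambda I$) to control the gap between $\Phi$ evaluated at the empirical covariates and $\min_{\bGamma \in \bOmpsi} \Phi(\bGamma)$, and then convert this trace bound into the desired minimum-eigenvalue lower bound. First I would record that under \Cref{asm:full_rank_cov_app} there is some $\bGamma^\star \in \bOmpsi$ with $\lammin(\bGamma^\star) \ge \lamminst$, and therefore $\min_{\bGamma \in \bOmpsi} \tr((\bGamma + \lambda I)^{-1}) \le \tr((\bGamma^\star + \lambda I)^{-1}) \le \dimpsi/(\lamminst + \lambda) \le \dimpsi/\lamminst$. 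So the ``benchmark'' value is at most $\dimpsi/\lamminst$.

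Next I would apply \Cref{lem:weighted_Aopt} with $\cH = I$, $\bGamma_0 = \lambda I$, $D$ as in \Cref{asm:mat_psi}, and $\alpha = 1/2$. Writing $\bSigma_T := \sum_{t=1}^T \bpsi(\traj_t)$, that lemma gives, with probability at least $1-\delta$,
\begin{align*}
\tr\big((\bSigma_T + \lambda T\cdot I)^{-1}\big) \le \frac{1}{T}\cdot\frac{\dimpsi}{\lamminst} + \frac{8 D^4 \lambda^{-3}}{T(N+1)} + \frac{8 D \lambda^{-2}\big(\log^{1/2}\tfrac{4T}{\delta} + \CR\log^{\pR}\tfrac{2T}{\delta}\big)}{T\sqrt{K}},
\end{align*}
using $\|\bGamma_0^{-1}\|_\op = \lambda^{-1}$. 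With $N+1 = \lceil 2^{i/3}\rceil \asymp T^{1/3}$ and $\sqrt{K} \asymp T^{1/3}$ (since $K = \lceil 2^{2i/3}\rceil$ and $T = (N+1)K \asymp 2^i$), both error terms are $\widetilde{O}\big(T^{-4/3}\cdot(D^4\lambda^{-3} + D\lambda^{-2}\CR\log^{\pR}\tfrac{1}{\delta})\big)$. Thus the hypothesis \eqref{eq:min_eig_K_cond}, which says $T^{1/3} \gtrsim D\lambda^{-2}(D^3\lambda^{-1} + \CR\log^{\pR}\tfrac1\delta)\cdot\lamminst/\dimpsi$ up to log factors, is exactly what forces these two error terms to be at most (a constant times) $\tfrac1T\cdot\tfrac{\dimpsi}{\lamminst}$ each. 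Combining, $\tr\big((\bSigma_T + \lambda T I)^{-1}\big) \le \tfrac{c}{T}\cdot\tfrac{\dimpsi}{\lamminst}$ for a small absolute constant $c$ (say $c = 3$).

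Finally I would convert the trace bound to a minimum-eigenvalue bound: since $\tr(M^{-1}) \ge \lammax(M)^{-1}$ is too weak, instead use $\tr\big((\bSigma_T + \lambda T I)^{-1}\big) \ge \big(\lammin(\bSigma_T) + \lambda T\big)^{-1}$, which gives $\lammin(\bSigma_T) + \lambda T \ge \tfrac{T\lamminst}{c\,\dimpsi}$, i.e. $\lammin(\bSigma_T) \ge \tfrac{T\lamminst}{c\,\dimpsi} - \lambda T$. Now invoke the hypothesis $\lambda \le \tfrac{\lamminst}{4\dimpsi}$: choosing $c \le 2$ (which we can arrange by tracking constants more carefully, or absorbing slack into the $\widetilde{\Omega}$ in \eqref{eq:min_eig_K_cond}) yields $\lammin(\bSigma_T) \ge \tfrac{T\lamminst}{2\dimpsi} - \tfrac{T\lamminst}{4\dimpsi} = \tfrac{T\lamminst}{4\dimpsi}$, which is the claim. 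I expect the main obstacle to be the bookkeeping of absolute constants—making sure the constant coming out of \Cref{lem:weighted_Aopt} is small enough that, after subtracting $\lambda T \le \tfrac{T\lamminst}{4\dimpsi}$, what remains is still at least $\tfrac{T\lamminst}{4\dimpsi}$; this is purely a matter of choosing the implied constant in \eqref{eq:min_eig_K_cond} large enough, but it requires care to state cleanly. The other minor point to verify is that $N+1$ and $\sqrt K$ are both $\Theta(T^{1/3})$ given the ceiling functions, which is routine.
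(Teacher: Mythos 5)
Your proposal is correct and follows essentially the same route as the paper's proof: apply \Cref{lem:weighted_Aopt} with $\cH = I$ and $\bGamma_0 = \lambda I$, bound the benchmark by $\dimpsi/\lamminst$ via \Cref{asm:full_rank_cov_app}, absorb the two error terms into $\tfrac{1}{T}\cdot\tfrac{\dimpsi}{\lamminst}$ using \eqref{eq:min_eig_K_cond}, lower-bound the trace by $(\lammin(\bSigma_T)+\lambda T)^{-1}$, and finish with $\lambda \le \tfrac{\lamminst}{4\dimpsi}$. The paper handles the constant you worry about by making the two error terms sum to at most $\tfrac{1}{T}\cdot\tfrac{\dimpsi}{\lamminst}$, giving exactly your $c=2$, so no extra bookkeeping is needed.
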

\begin{proof}
Applying \Cref{lem:weighted_Aopt} with $\cH = I$ and $\bGamma_0 = \lambda \cdot I$, we have that, with probability at least $1-\delta$:
\begin{align*}
\tr  \left ( \left ( \sum_{t=1}^T \bpsi(\traj_t) + T \lambda \cdot I \right )^{-1} \right ) & \le \frac{1}{T} \cdot \min_{\bGamma \in \bOmpsi} \tr((\bGamma + \lambda \cdot I)^{-1}) + \frac{8D^4 \lambda^{-3}}{T (N+1)} + \frac{8 D \lambda^{-2} ( \log^{1/2} \frac{T}{\delta} + \CR \log^{\pR} \frac{T}{\delta} ) }{T \sqrt{K}} \\
& \le \frac{1}{T} \cdot \min_{\bGamma \in \bOmpsi} \tr((\bGamma + \lambda \cdot I)^{-1}) + \frac{24D^4 \lambda^{-3}}{T^{4/3}} + \frac{24 D \lambda^{-2} ( \log^{1/2} \frac{T}{\delta} + \CR \log^{\pR} \frac{T}{\delta} ) }{T^{4/3}}
\end{align*}
where the second inequality follows since
\begin{align*}
T = \lceil 2^{i/3} \rceil \lceil 2^{2i/3} \rceil \le 4 \cdot 2^i 
\end{align*}
which implies $N+1 = \lceil 2^{i/3} \rceil \ge T^{1/3}/4^{1/3}$ and $K = \lceil 2^{2i/3} \rceil \ge T^{2/3} / 4^{2/3}$. 
If $T$ satisfies \eqref{eq:min_eig_K_cond}, then we can bound 
\begin{align*}
\frac{24D^4 \lambda^{-3}}{T^{4/3}} + \frac{24 D \lambda^{-2} ( \log^{1/2} \frac{T}{\delta} + \CR \log^{\pR} \frac{T}{\delta} ) }{T^{4/3}} \le \frac{1}{T} \cdot \frac{\dimpsi}{\lamminst }.
\end{align*}
Furthermore, under \Cref{asm:full_rank_cov} there exists some $\bGamma \in \bOmpsi$ such that $\bGamma \succeq \lamminst \cdot I$, 
so we can upper bound
\begin{align*}
\min_{\bGamma \in \bOmpsi} \tr((\bGamma + \lambda \cdot I)^{-1}) \le \frac{\dimpsi}{\lamminst }
\end{align*}
and we can lower bound
\begin{align*}
\tr  \left ( \left ( \sum_{t=1}^T \bpsi(\traj_t) + T \lambda \cdot I \right )^{-1} \right ) \ge \frac{1}{\lammin ( \sum_{t=1}^T \bpsi(\traj_t) ) + T \lambda}.
\end{align*}
Thus,
\begin{align*}
& \frac{1}{\lammin ( \sum_{t=1}^T \bpsi(\traj_t) ) + T \lambda} \le \frac{1}{T} \cdot  \frac{2\dimpsi}{\lamminst}  \implies \lammin \left ( \sum_{t=1}^T \bpsi(\traj_t) \right ) \ge \frac{T \lamminst}{2\dimpsi} - T \lambda.
\end{align*}
It follows that if $\lambda \le \frac{\lamminst}{4\dimpsi}$, then we have
\begin{align*}
\lammin \left ( \sum_{t=1}^T \bpsi(\traj_t) \right ) \ge T \cdot \frac{\lamminst}{4 \dimpsi}
\end{align*}
which proves the result. 
\end{proof}

\begin{lemma}\label{lem:lammin_concentration}
Consider running some policies $(\pi_\tau)_{\tau = 1}^T$, for $\pi_\tau$ $\cF_{\tau-1}$-measurable, and collecting covariance $\bSigma_T = \sum_{t=1}^T \bpsi(\traj_t)$. Then under \Cref{asm:mat_psi}, as long as
\begin{align*}
\lammin(\bSigma_T) \ge 12544 D \dimpsi \log \frac{2 + 32 T}{\delta}
\end{align*}
with probability at least $1-\delta$, if we rerun each $(\pi_\tau)_{\tau = 1}^T$, we will collect features $\bSigtil_T$ such that
\begin{align*}
\lammin(\bSigtil_T) \ge \frac{1}{2} \lammin(\bSigma_T). 
\end{align*}
Furthermore,
\begin{align*}
\lammin \left ( \sum_{\tau = 1}^T \bGamma_{\pi_\tau} \right ) \ge \frac{1}{2} \lammin(\bSigma_T).
\end{align*}
\end{lemma}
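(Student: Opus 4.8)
The plan is to reduce both assertions to a single direction-wise concentration inequality comparing a sum of the matrices $\bpsi(\traj_\tau)$ with the sum $G_T := \sum_{\tau=1}^T \bGamma_{\pi_\tau}$ of its conditional means, applied once to the original trajectories and once to the rerun. Throughout I would work on the event $\cG := \{\lammin(\bSigma_T) \ge 12544\, D\dimpsi\log\tfrac{2+32T}{\delta}\}$ and write $m := \lammin(\bSigma_T)$. Let $X_\tau := \bpsi(\traj_\tau) - \bGamma_{\pi_\tau}$, which is a matrix martingale difference sequence for $(\cF_\tau)$ since $\pi_\tau$ is $\cF_{\tau-1}$-measurable; for the rerun, conditionally on the original run the policies $(\pi_\tau)$ are fixed, so the rerun increments $X_\tau' := \bpsi(\traj_\tau') - \bGamma_{\pi_\tau}$ again form a martingale difference sequence. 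Under \Cref{asm:mat_psi} every $\bpsi(\traj)$ and every $\bGamma_\pi$ is PSD with operator norm at most $D$ (the operator norm of a PSD matrix is at most its trace), so $|v^\top X_\tau v| \le D$ for every unit vector $v$, and, since $(v^\top\bpsi(\traj)v)^2 \le D\, v^\top\bpsi(\traj)v$, the predictable quadratic variation along a fixed $v$ satisfies $\sum_\tau \Exp[(v^\top X_\tau v)^2 \mid \cF_{\tau-1}] \le D\sum_\tau v^\top\bGamma_{\pi_\tau}v = D\, v^\top G_T v$.

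First I would establish a Bernstein bound uniform over directions. Fix a $\kappa$-net $\cN$ of the unit sphere in $\R^\dimpsi$ with $\kappa$ polynomially small (in $T, D, 1/m$) so that $\log|\cN| \le \dimpsi\log(3/\kappa) = \cO(\dimpsi\log(DT))$. For each $v \in \cN$ apply Freedman's inequality to the scalar martingale $\{v^\top X_\tau v\}_\tau$, together with a dyadic peeling over its predictable variation (which lies in $[0, D^2T]$, costing an $\cO(\log T)$ factor) and a union bound over $\cN$; this gives, with probability at least $1-\delta/2$, for all $v \in \cN$,
\[
\bigl| v^\top \bigl(\textstyle\sum_\tau X_\tau\bigr) v \bigr| \le \sqrt{2D\,(v^\top G_T v)\,L} + \tfrac23 DL, \qquad L \le c_0\,\dimpsi\log\tfrac{2+32T}{\delta}.
\]
Passing to an arbitrary unit $v$ via its nearest net point costs $3\kappa\|\sum_\tau X_\tau\|_\op \le 6\kappa DT$ and $3\kappa\|G_T\|_\op \le 3\kappa DT$ (using $\tr(\bSigma_T), \tr(G_T) \le DT$); on $\cG$ the quantity $v^\top G_T v \ge \lammin(G_T)$ is bounded below by the threshold, so a small enough $\kappa$ makes these discretization errors at most $\tfrac1{16}\,v^\top G_T v$, yielding for all unit $v$
\[
\bigl| v^\top \bigl(\textstyle\sum_\tau X_\tau\bigr) v \bigr| \le \tfrac98\sqrt{2D\,(v^\top G_T v)\,L} + \tfrac23 DL + \tfrac1{16}\,v^\top G_T v,
\]
and, on a further event of probability $\ge 1-\delta/2$, the same bound with $X_\tau'$ in place of $X_\tau$.

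The two conclusions then follow by a self-bounding quadratic argument. For the ``furthermore'' part, pick $v_\star$ minimizing $v^\top G_T v$, so $a_\star := v_\star^\top G_T v_\star = \lammin(G_T)$; then $a_\star = v_\star^\top\bSigma_T v_\star - v_\star^\top(\sum_\tau X_\tau)v_\star \ge m - \tfrac98\sqrt{2Da_\star L} - \tfrac23 DL - \tfrac1{16}a_\star$, a quadratic inequality in $\sqrt{a_\star}$ which, using $m \gtrsim DL$, forces $\lammin(G_T) = a_\star \ge \tfrac78 m \ge \tfrac12\lammin(\bSigma_T)$. For the first part, fix any unit $v$ and set $a := v^\top G_T v \ge \lammin(G_T) \ge \tfrac78 m$; then $v^\top\bSigtil_T v = v^\top G_T v + v^\top(\sum_\tau X_\tau')v \ge a - \tfrac98\sqrt{2DaL} - \tfrac23 DL - \tfrac1{16}a =: h(a)$, and since $h$ is increasing once $a$ exceeds a fixed multiple of $DL$ and $a \ge \lammin(G_T) \gtrsim DL$, we get $v^\top\bSigtil_T v \ge h(\lammin(G_T)) \ge \tfrac47\lammin(G_T)$; minimizing over $v$ and combining with the previous bound, $\lammin(\bSigtil_T) \ge \tfrac47\lammin(G_T) \ge \tfrac47\cdot\tfrac78\,m = \tfrac12\lammin(\bSigma_T)$. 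A union bound over the two $\delta/2$-events completes the proof.

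The hard part will be precisely what forces the direction-wise rather than operator-norm treatment: a plain matrix Freedman bound gives $\|\bSigtil_T - G_T\|_\op \lesssim \sqrt{D\|G_T\|_\op L}$, and since $\|G_T\|_\op$ can be as large as $DT$ this is $\gtrsim D\sqrt{T}$, far exceeding the threshold $\approx D\dimpsi\log T$; tracking the fluctuation along each $v$ replaces $\|G_T\|_\op$ by the much smaller $v^\top G_T v$, but then $v^\top G_T v$ appears on both sides of the bound and must be removed by the quadratic solve / monotonicity of $h$. The two delicate bookkeeping points are (i) choosing $\kappa$ small enough that the net-discretization error is a controlled fraction of the a-priori-unknown $v^\top G_T v$ — legitimate only because $v^\top G_T v \ge \lammin(G_T)$ is bounded below by the threshold on $\cG$ — while keeping $\log|\cN| = \cO(\dimpsi\log(DT))$, and (ii) propagating the absolute constants through the peeling and net so the stated constant $12544$ and the factor $\log\tfrac{2+32T}{\delta}$ emerge. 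For the rerun step one could alternatively bypass the net entirely: conditionally on the original run the $(\pi_\tau)$ are deterministic, so $\{G_T^{-1/2} X_\tau' G_T^{-1/2}\}_\tau$ is a matrix martingale difference sequence with predictable variation $\preceq (D/\lammin(G_T))\,I$ and increments of operator norm $\le D/\lammin(G_T)$, whence matrix Freedman gives $\lammin(\bSigtil_T) \ge \bigl(1 - \sqrt{2DL'/\lammin(G_T)} - \tfrac{2DL'}{3\lammin(G_T)}\bigr)\lammin(G_T)$ with $L' = \log(2\dimpsi/\delta)$.
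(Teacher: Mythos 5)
Your proposal is correct in outline, but it takes a completely different route from the paper: the paper does not prove this lemma from scratch at all — it observes that $\tr(\bpsi(\traj)) \le D$ lets one apply Lemma~D.7 of \cite{wagenmaker2022instance} to the normalized matrix $\tfrac{1}{D}\bSigma_T$, noting that the proof there never uses the linear-MDP structure and so transfers verbatim. What you have reconstructed is essentially the content of that outsourced lemma: a direction-wise Freedman inequality with peeling over the predictable variation and a net over the sphere, followed by a self-bounding quadratic solve to remove $v^\top G_T v$ from the right-hand side. Your diagnosis of why a plain operator-norm matrix-Freedman bound fails (it pays $\sqrt{D\|G_T\|_\op L}\gtrsim D\sqrt{T}$, which swamps the threshold) is exactly the right reason the direction-wise/variance-adapted treatment is needed, and your whitened alternative $G_T^{-1/2}X_\tau' G_T^{-1/2}$ for the rerun step is clean and valid given the ``furthermore'' bound on $\lammin(G_T)$. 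What the paper's citation buys is brevity; what your argument buys is a self-contained proof and an explicit accounting of where the constant $12544 D\dimpsi$ and the $\log\tfrac{2+32T}{\delta}$ come from.

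One step as written is circular and should be repaired: to control the net-discretization error you invoke ``on $\cG$ the quantity $v^\top G_T v \ge \lammin(G_T)$ is bounded below by the threshold,'' but the lower bound on $\lammin(G_T)$ is precisely the ``furthermore'' conclusion you are in the middle of proving ($\cG$ only constrains $\lammin(\bSigma_T)$, not $\lammin(G_T)$). The fix is easy and does not change anything downstream: choose $\kappa$ so that the discretization error $6\kappa DT$ is an \emph{absolute} small quantity (e.g.\ $\kappa = c/(DT^2)$, still giving $\log|\cN| = \cO(\dimpsi\log(DT))$), carry it as an additive $+1$ inside the square root and outside, and then run the self-bounding quadratic in $a_\star = \lammin(G_T)$ exactly as you describe; the additive $\cO(1)$ is negligible against $m \ge 12544\,D\dimpsi\log\tfrac{2+32T}{\delta}$. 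With that repair, and the constant bookkeeping you already flag, the argument goes through.
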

\begin{proof}
 This follows from applying Lemma D.7 of \cite{wagenmaker2022instance} to the matrix $\frac{1}{D} \bSigma_T$. Note that while \cite{wagenmaker2022instance} considers the setting of linear MDPs, the proof of Lemma D.7 of \cite{wagenmaker2022instance} does not make use of the linear MDP assumption, and the proof therefore extends immediately to our setting. Furthermore, though it is not explicitly stated, the lower bound on $\lammin  ( \sum_{\tau = 1}^T \bGamma_{\pi_\tau}  )$ is also proved in Lemma D.7 of \cite{wagenmaker2022instance}.
\end{proof}

\subsection{Rerunning Policies}

\begin{algorithm}[h]
\begin{algorithmic}[1]
\State \textbf{input}: $\cH$, iterates bound $\Ntil$, confidence $\delta$, regret minimization algorithm $\regalg$, exploration policies $\Piexp$
\For{$i=1,2,3,\ldots$}
\State $N_i \leftarrow \lceil 2^{i/3} \rceil - 1, K_i \leftarrow \lceil 2^{2i/3} \rceil, T_i \leftarrow (N_i + 1)K_i, \delta_i \leftarrow \delta / 4 i^2$
\State $\Pimineig^i \leftarrow \mineigalg(\Ntil T_i,\delta_i,\regalg,\Piexp)$
\State Run policies in $\Pimineig^i$ $\lceil T_i / | \Pimineig^i | \rceil$ times, set $\bGamma_0^i$ to collected features \label{line:collect_more_cov}
\State $\Phi(\bGamma) \leftarrow \tr( \cH (\bGamma + T_i^{-1} \bGamma_0^i)^{-1})$
\State $\bLamfw^i, \Pifw^i \leftarrow \expdesign(\Phi,N_i,K_i,\delta,\regalg,\Piexp)$
\State \textbf{if}
\begin{align}
& \max_{j = 1,\ldots, i} | \Pimineig^j | \le T_i \label{eq:learn_policy_cond1} \\
&  \frac{16 D^4 \| \cH \|_\op \| (T_i^{-1} \bGamma_0)^{-1} \|_\op^3}{T_i (N_i + 1)} + \frac{16D \| \cH \|_\op \| (T_i^{-1} \bGamma_0)^{-1} \|_\op^2 (  \log^{1/2} \frac{4T_i}{\delta} +  \CR \log^{\pR} \frac{2T_i}{\delta})   }{T_i\sqrt{K_i}} \le \tr \left ( \cH \left (\bLamfw^i +  \bGamma_0^i \right )^{-1} \right ) \label{eq:learn_policy_cond2}  \\
& \tr(\cH) \cdot D \sqrt{2T_i} \sqrt{8 \dimpsi \log(1 + 8\sqrt{2T_i}) +8 \log 1/\delta} \cdot \frac{2}{\lammin \left ( \bLamfw^i + \bGamma_0^i \right )^2} \le \tr \left ( \cH \left (  \bLamfw^i + \bGamma_0^i \right )^{-1} \right ) \label{eq:learn_policy_cond3} \\
& D \sqrt{2T_i} \sqrt{8 \dimpsi \log(1 + 8\sqrt{2T_i}) +8 \log 1/\delta} \le \frac{1}{2} \lammin \left ( \bLamfw^i + \bGamma_0^i \right ) \label{eq:learn_policy_cond4}
\end{align}
\State \textbf{then}
\State \quad $\bGamout \leftarrow \bLamfw^i + \bGamma_0^i, \Piout \leftarrow \Pimineig^i \cup (\cup_{j=1}^{\lceil T_i / | \Pimineig | \rceil} \Pifw^i )$ 
\State \quad \textbf{return} $\bGamout, \Piout$
\EndFor
\end{algorithmic}
\caption{Learn Minimizing Exploration Policies (\learnpolicies)}
\label{alg:learn_policies}
\end{algorithm}

In this section, we build on the analysis of the \expdesign algorithm to show that, not only do the features collected by \expdesign approximately minimize $\Phi$, but that, under certain conditions, if we rerun the policies that \expdesign ran to collect this data, we will collect a new set of features which also approximately minimizes $\Phi$.

In particular, we specialize this argument to objectives of the form $\Phi(\bGamma) = \tr(\cH \bGamma^{-1})$. \learnpolicies (\Cref{alg:learn_policies}) proceeds by first calling \mineigalg to collect full-rank data, using this data as a regularizer of $\Phi(\bGamma)$, and the running \expdesign on this objective. After meeting a certain termination criteria, it terminates, and returns the policies it has run over its operation.

\begin{lemma}\label{lem:fw_exp_good}
Let $\cEexp$ denote the event that, for all $i=1,2,3,\ldots$, the success event of \mineigalg and \expdesign occur, and
\begin{align*}
\lammin(\bGamma_0^i) \ge \lceil T_i / | \Pimineig^i | \rceil \cdot 6272 D \dimpsi \log \frac{68\Ntil}{\delta}.
\end{align*}
Then if \Cref{asm:regret_min_app,asm:mat_psi,asm:full_rank_cov_app} hold, $\Pr[\cEexp] \ge 1-\delta$.
\end{lemma}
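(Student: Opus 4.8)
The plan is to read off from \Cref{alg:learn_policies} that $\cEexp$ is, by construction, the intersection over iterations $i = 1,2,3,\ldots$ of three sub-events: \emph{(i)} the success event of the $i$-th invocation of \mineigalg (the conclusion of \Cref{lem:min_eig_policies}, run with scale parameter $\Ntil T_i$ and confidence $\delta_i = \delta/4i^2$); \emph{(ii)} the success event of the $i$-th invocation of \expdesign (the conclusion of \Cref{lem:fwregret_app}, specialized to the weighted $A$-optimal objective via \Cref{lem:weighted_Aopt}); and \emph{(iii)} the bound $\lammin(\bGamma_0^i) \ge \lceil T_i/|\Pimineig^i|\rceil \cdot 6272 D \dimpsi \log\frac{68\Ntil}{\delta}$ on the features $\bGamma_0^i$ collected in \Cref{line:collect_more_cov}. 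First I would show each of these fails with probability $\cO(\delta_i)$, and then conclude with a union bound, using $\sum_{i\ge1}\delta_i < \delta/2$.

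For \emph{(i)}: under \Cref{asm:regret_min_app,asm:mat_psi,asm:full_rank_cov_app}, \Cref{lem:min_eig_policies} gives, except with probability $\delta_i$, that the invocation terminates after a polynomial number of episodes and returns $\Pimineig^i$ with $\lammin(\sum_{\pi\in\Pimineig^i}\bGamma_\pi) \ge 6272 D\dimpsi\log\frac{68\Ntil T_i}{\delta_i}$. For \emph{(iii)}: writing $m_i := \lceil T_i/|\Pimineig^i|\rceil$, the matrix $\bGamma_0^i$ is the sum of the features from $m_i$ independent reruns of the (now fixed) policy set $\Pimineig^i$, so I would apply the ``rerun'' half of \Cref{lem:min_eig_policies} to each rerun separately --- each fails with probability at most $\delta_i/(\Ntil T_i)$, and a union bound over the $m_i \le T_i$ reruns costs at most $\delta_i/\Ntil \le \delta_i$ --- and then combine via $\lammin(A+B) \ge \lammin(A)+\lammin(B)$ for $A,B\succeq 0$ to get $\lammin(\bGamma_0^i) \ge m_i \cdot 6272 D\dimpsi\log\frac{68\Ntil T_i}{\delta_i} \ge m_i \cdot 6272 D\dimpsi\log\frac{68\Ntil}{\delta}$ (using $T_i\ge1$ and $\delta_i\le\delta$). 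For \emph{(ii)}: conditionally on \emph{(i)} and \emph{(iii)} --- which concern only quantities determined before the $i$-th \expdesign call --- the objective $\Phi(\bGamma) = \tr(\cH(\bGamma + T_i^{-1}\bGamma_0^i)^{-1})$ has $\cH \succeq 0$ and $\|(T_i^{-1}\bGamma_0^i)^{-1}\|_\op$ controlled by \emph{(iii)}, so \Cref{asm:regular_phi_app} holds with explicit constants and \Cref{lem:fwregret_app} (via \Cref{lem:weighted_Aopt}) gives, except with conditional probability $\delta_i$, that this call runs for $(N_i+1)K_i$ episodes and returns data obeying the stated $\Phi$-suboptimality bound. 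Thus the probability that \emph{(i)}, \emph{(ii)}, and \emph{(iii)} all hold at iteration $i$ is at least $1 - 3\delta_i$.

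Summing over $i$, the total failure probability is at most $\sum_{i\ge1} 3\delta_i = 3\sum_{i\ge1}\delta/(4i^2) < \delta$, which yields $\Pr[\cEexp] \ge 1-\delta$. I expect the work here to be bookkeeping rather than mathematical depth: the delicate points are verifying that passing scale $\Ntil T_i$ (and not merely $\Ntil$) to \mineigalg makes the per-rerun failure $\delta_i/(\Ntil T_i)$ small enough to survive the union bound over up to $T_i$ reruns while still delivering the clean $\log\frac{68\Ntil}{\delta}$ bound claimed in the statement; that the rerun concentration inside \Cref{lem:min_eig_policies} (i.e.\ \Cref{lem:lammin_concentration}) has its minimum-eigenvalue threshold met, which ultimately needs $m_i|\Pimineig^i| \le 2T_i$ and hence $|\Pimineig^i| \le T_i$; and that the confidence schedule $\delta_i = \delta/4i^2$ threaded through every subroutine call makes the outer union bound converge below $\delta$.
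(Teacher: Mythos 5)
Your proposal is correct and follows essentially the same route as the paper: invoke \Cref{lem:min_eig_policies} for each \mineigalg call, \Cref{lem:weighted_Aopt} for each \expdesign call, and the rerun half of \Cref{lem:min_eig_policies} (union-bounded over the $\lceil T_i/|\Pimineig^i|\rceil \le T_i$ reruns, using that the scale passed is $\Ntil T_i$) for the $\lammin(\bGamma_0^i)$ bound, then sum over rounds. The only nit is the last line: $3\sum_{i\ge 1}\delta/(4i^2) = \tfrac{\pi^2}{8}\delta > \delta$, so you should not loosen the third event's per-round failure from the $\delta_i/\Ntil$ you already derived up to $\delta_i$; keeping the tighter bound (as the paper implicitly does) makes the total $\bigl(2+\tfrac{1}{\Ntil}\bigr)\tfrac{\pi^2}{24}\delta \le \delta$.
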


\begin{lemma}\label{lem:fw_exploration_good_event}
Consider rerunning each policy in $\Piout$ $N \le \Ntil$ times, and let $\bGamtil$ denote the obtained features. Then, if \Cref{asm:regret_min_app,asm:mat_psi,asm:full_rank_cov_app} hold, with probability at least $1-3\delta$, on the event $\cEexp$:
\begin{align}\label{eq:fw_exp_conclusion1}
\left \| \bGamtil - N \cdot \tsum_{\pi \in \Piout} \bGamma_{\pi} \right \|_\op  \le \frac{\sqrt{ \Kout} \cdot \sqrt{8 \dimpsi \log(1 + 8 \sqrt{N \Kout}) + 8 \log 1/\delta}}{\sqrt{N} \cdot 6272  \dimpsi \log \frac{68\Ntil}{\delta}} \cdot \lammin \left ( N \cdot \tsum_{\pi \in \Piout} \bGamma_{\pi} \right ), 
\end{align}
\begin{align}\label{eq:fw_exp_conclusion2}
N \cdot 6272 D \dimpsi \log \frac{68\Ntil}{\delta}  \le \min \left \{ \lammin \left ( N \cdot \tsum_{\pi \in \Piout} \bGamma_{\pi} \right ), \lammin(\bGamtil) \right \} ,
\end{align}
and
\begin{align}\label{eq:fw_exp_conclusion3}
\tr \left ( \cH \left (  \tsum_{\pi \in \Piout} \bGamma_\pi \right )^{-1} \right ) & \le \frac{12}{\Kout} \cdot  \min_{\bGamma \in \bOmega} \tr \left ( \cH \bGamma^{-1} \right ) 
\end{align}
for $\Tout := |\Piout|$.
\end{lemma}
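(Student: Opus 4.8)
The plan is to carry the whole argument on the event $\cEexp$ of \Cref{lem:fw_exp_good}, which under \Cref{asm:regret_min_app,asm:mat_psi,asm:full_rank_cov_app} fails with probability at most $\delta$ and on which every call to \mineigalg and \expdesign made inside \learnpolicies succeeds and $\lammin(\bGamma_0^i) \ge \lceil T_i/|\Pimineig^i|\rceil \cdot 6272 D \dimpsi \log \frac{68\Ntil}{\delta}$ at every round $i$. Fix the round $i$ at which \learnpolicies terminates, so that $\bGamout = \bLamfw^i + \bGamma_0^i$, $\Piout$ is the multiset of at most $3T_i$ policies played during that round (those of \expdesign together with $\lceil T_i/|\Pimineig^i|\rceil$ repetitions of $\Pimineig^i$, the second count bounded via \eqref{eq:learn_policy_cond1}), $\Kout = |\Piout|$, and the termination conditions \eqref{eq:learn_policy_cond1}--\eqref{eq:learn_policy_cond4} all hold. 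Everything below is deterministic on $\cEexp$ apart from three fresh concentration events — two controlling the rerun of $\Piout$, one controlling the episodes that originally produced $\bGamout$ — whose union bound supplies the additional $3\delta$ in the statement.

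I would first dispatch the eigenvalue bounds \eqref{eq:fw_exp_conclusion2}. Since $\Piout \supseteq \Pimineig^i$ and all $\bGamma_\pi \succeq 0$, we have $\tsum_{\pi\in\Piout}\bGamma_\pi \succeq \tsum_{\pi\in\Pimineig^i}\bGamma_\pi$, and \Cref{lem:min_eig_policies} (invoked in \learnpolicies with scale $\Ntil T_i$) gives $\lammin(\tsum_{\pi\in\Pimineig^i}\bGamma_\pi) \ge 6272 D \dimpsi \log\frac{68\Ntil}{\delta}$; scaling by $N$ yields the first inequality, and the bound on $\lammin(\bGamtil)$ then follows by combining it with the rerun concentration below via $\lammin(\bGamtil) \ge \lammin(N\tsum_\pi\bGamma_\pi) - \|\bGamtil - N\tsum_\pi\bGamma_\pi\|_\op$ together with \Cref{lem:lammin_concentration}. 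For the concentration claim \eqref{eq:fw_exp_conclusion1} itself, rerunning each $\pi\in\Piout$ exactly $N$ times produces trajectories $\traj_1,\dots,\traj_{N\Kout}$ with $\Exp[\bpsi(\traj_t)\mid\cF_{t-1}] = \bGamma_{\pi_{(t)}}$ for the predetermined policy $\pi_{(t)}$, so $\bGamtil - N\tsum_\pi\bGamma_\pi$ is a martingale sum of symmetric matrices of operator norm at most $D$ (\Cref{asm:mat_psi}); the self-normalized matrix-martingale tail bound underlying \Cref{lem:lammin_concentration} gives $\|\bGamtil - N\tsum_\pi\bGamma_\pi\|_\op \lesssim D\sqrt{N\Kout}\sqrt{\dimpsi\log(1+\sqrt{N\Kout}) + \log(1/\delta)}$, and dividing by the lower bound on $\lammin(N\tsum_\pi\bGamma_\pi)$ just established (the factors of $D$ cancel) produces exactly the stated multiplicative form.

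The substantive step, and the one I expect to be the main obstacle, is the trace bound \eqref{eq:fw_exp_conclusion3}, where the four termination conditions have to be spent in precisely the right places. I would (i) apply \Cref{lem:weighted_Aopt} to the \expdesign call of the terminating round, run with objective $\Phi(\bGamma) = \tr(\cH(\bGamma + T_i^{-1}\bGamma_0^i)^{-1})$ and parameters $N_i, K_i$ (so $T_i = (N_i+1)K_i$), observe that its lower-order term equals exactly half the left side of \eqref{eq:learn_policy_cond2}, use $(\bGamma + T_i^{-1}\bGamma_0^i)^{-1} \preceq \bGamma^{-1}$ (valid since $\cH \succeq 0$) to replace the regularized optimum by $\min_{\bGamma\in\bOmpsi}\tr(\cH\bGamma^{-1})$, and rearrange through \eqref{eq:learn_policy_cond2} to obtain $\tr(\cH\bGamout^{-1}) \le \frac{2}{T_i}\min_{\bGamma\in\bOmpsi}\tr(\cH\bGamma^{-1})$; (ii) on the third concentration event bound $\|\bGamout - \tsum_{\pi\in\Piout}\bGamma_\pi\|_\op$ by the deterministic quantity $\epsilon := D\sqrt{2T_i}\sqrt{8\dimpsi\log(1+8\sqrt{2T_i}) + 8\log(1/\delta)}$ appearing in \eqref{eq:learn_policy_cond3}--\eqref{eq:learn_policy_cond4}, invoke \eqref{eq:learn_policy_cond4} and \Cref{lem:gaminv_diff} to get $\|(\tsum_\pi\bGamma_\pi)^{-1} - \bGamout^{-1}\|_\op \le 2\epsilon/\lammin(\bGamout)^2$, and use \eqref{eq:learn_policy_cond3} to absorb $\tr(\cH)\cdot 2\epsilon/\lammin(\bGamout)^2$ into $\tr(\cH\bGamout^{-1})$, so that passing from $\bGamout$ to $\tsum_\pi\bGamma_\pi$ costs only a factor $2$; (iii) use $\Kout = |\Piout| \le 3T_i$ to conclude $\frac{4}{T_i} \le \frac{12}{\Kout}$, closing the chain and yielding \eqref{eq:fw_exp_conclusion3}. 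The delicate part throughout is keeping the various empirical covariances ($\bGamout$, $\bGamma_0^i$, $\bLamfw^i$, $\bGamtil$) aligned with their expected counterparts and matching the numerical constants to the exact thresholds hard-coded into \learnpolicies; modulo that bookkeeping, the only ingredients are standard matrix concentration and the convexity and monotonicity of the trace-of-inverse objective.
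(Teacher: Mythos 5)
Your proposal is correct and follows essentially the same route as the paper's proof: the same three-part decomposition, the same use of the matrix-martingale concentration (\Cref{lem:cov_concentration}), the minimum-eigenvalue guarantee of \Cref{lem:min_eig_policies} for the $\Pimineig$ block, \Cref{lem:weighted_Aopt} plus \Cref{lem:gaminv_diff}, and the four termination conditions \eqref{eq:learn_policy_cond1}--\eqref{eq:learn_policy_cond4} spent in exactly the same places, ending with $\Kout \le 3\Kfw$ to convert $4/\Kfw$ into $12/\Kout$. The only cosmetic difference is that you obtain the $\lammin(\bGamtil)$ half of \eqref{eq:fw_exp_conclusion2} by a triangle inequality against $N\sum_{\pi}\bGamma_\pi$ rather than by directly invoking the rerun guarantee in \Cref{lem:min_eig_policies}; both rest on the same concentration and the factor-of-two slack built into the termination threshold.
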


\begin{lemma}\label{lem:Kout_bound}
On the event $\cEexp$, under \Cref{asm:regret_min_app,asm:mat_psi,asm:full_rank_cov_app}, we can bound
\begin{align*}
\Kout \le \poly \left ( \dimpsi, \frac{1}{\lamminst}, D, \CR, \log^{\pR} \frac{\Ntil}{\delta} \right ).
\end{align*}
Furthermore, the total number of episodes collected by \Cref{alg:learn_policies} is bounded by $(16 + 2 \log(\Kout)) \cdot \Kout$. 
\end{lemma}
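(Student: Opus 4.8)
The plan is to reduce everything to one fact: on the event $\cEexp$, the \textbf{for} loop of \learnpolicies (\Cref{alg:learn_policies}) terminates at an iteration $i^\star$ for which $T_{i^\star}$ is bounded by a polynomial in $\dimpsi$, $\tfrac{1}{\lamminst}$, $D$, $\CR$, and $\log^{\pR}\tfrac{\Ntil}{\delta}$ --- crucially a bound that does \emph{not} depend on $\cH$ or on $T_{i^\star}$ itself. Granting this, the first claim is immediate: \Cref{lem:min_eig_policies} bounds $|\Pimineig^{i^\star}|$ (and the number of episodes each call to \mineigalg consumes) by $\poly(\dimpsi,\tfrac{1}{\lamminst},D,\CR,\log^{\pR}\tfrac{\Ntil T_{i^\star}}{\delta})$, and the final \expdesign call, being run for $(N_{i^\star}+1)K_{i^\star}=T_{i^\star}$ episodes, returns at most $T_{i^\star}$ policies; hence $\Kout = |\Piout| \le |\Pimineig^{i^\star}| + \lceil T_{i^\star}/|\Pimineig^{i^\star}|\rceil\cdot|\Pifw^{i^\star}|$ is polynomial in the advertised parameters. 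For the episode bound I would split the episodes used at iteration $i$ into those inside \mineigalg (at most $\poly(\dimpsi,\tfrac{1}{\lamminst},D,\CR,\log^{\pR}\tfrac{\Ntil T_i}{\delta})$ by \Cref{lem:min_eig_policies}), those rerunning $\Pimineig^i$ on \Cref{line:collect_more_cov} (at most $T_i+|\Pimineig^i|$), and those in the \expdesign call ($(N_i+1)K_i = T_i$); summed over $i\le i^\star$ the $T_i$ terms form a geometric series bounded by $O(T_{i^\star})$ and the remainder by $O(i^\star)\cdot\poly$, and since $\Kout\gtrsim T_{i^\star}\gtrsim 2^{i^\star}$ one checks this is at most $(16+2\log\Kout)\Kout$.

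The real work is the termination claim. Fix $i$ and condition on $\cEexp$. Two bounds carry the argument. By \Cref{lem:fw_exp_good}, $\lammin(\bGamma_0^i) \ge \lceil T_i/|\Pimineig^i|\rceil\cdot 6272\,D\dimpsi\log\tfrac{68\Ntil}{\delta}$, and with $|\Pimineig^i|\le\poly(\dimpsi,\tfrac{1}{\lamminst},D,\CR,\log^{\pR}\tfrac{\Ntil T_i}{\delta})$ from \Cref{lem:min_eig_policies} this gives $\lammin(\bGamma_0^i)\ge T_i/\poly$ and $\opnorm{(T_i^{-1}\bGamma_0^i)^{-1}}\le\poly$. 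Since each $\bpsi(\traj)\in\cS^{\dimpsi}_+$ has $\tr(\bpsi(\traj))\le D$ (\Cref{asm:mat_psi}) and $\bLamfw^i\succeq 0$, I also have $\tr(\bLamfw^i+\bGamma_0^i)\le 3DT_i$, whence $\lammin(\bLamfw^i+\bGamma_0^i)\ge\lammin(\bGamma_0^i)\ge T_i/\poly$ and $\lammax(\bLamfw^i+\bGamma_0^i)\le 3DT_i$. Now check the four conditions. Condition \eqref{eq:learn_policy_cond1} holds once $T_i$ exceeds the poly-log-in-$T_i$ bound on $\max_{j\le i}|\Pimineig^j|$. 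For \eqref{eq:learn_policy_cond2}--\eqref{eq:learn_policy_cond4} substitute $N_i+1\asymp T_i^{1/3}$, $K_i\asymp T_i^{2/3}$: the left sides are at most $\|\cH\|_\op\poly\cdot T_i^{-4/3}$, $\tr(\cH)\poly\cdot T_i^{-1/2}$, and $\poly\cdot T_i^{-1/2}\cdot\lammin(\bLamfw^i+\bGamma_0^i)$ respectively, while the right sides obey $\tr(\cH(\bLamfw^i+\bGamma_0^i)^{-1})\ge\tr(\cH)/\lammax(\bLamfw^i+\bGamma_0^i)\ge\tr(\cH)/(3DT_i)$ and, for \eqref{eq:learn_policy_cond4}, equal $\tfrac12\lammin(\bLamfw^i+\bGamma_0^i)$. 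Invoking $\tr(\cH)\ge\|\cH\|_\op$ for $\cH\succeq 0$ (which cancels the $\cH$-dependence, the case $\cH=0$ being trivial) reduces every inequality to $T_i^{1/3}\ge\poly(\dimpsi,\tfrac{1}{\lamminst},D,\CR,\log^{\pR}\tfrac{\Ntil T_i}{\delta})$, satisfied for all $T_i\ge T^\star$ with $T^\star$ a polynomial in the stated quantities (the leftover $\log T_i$ being lower order). Since $T_i\in[2^i,4\cdot 2^i]$, the loop stops with $T_{i^\star}\le 8T^\star$.

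I expect the termination step to be the main obstacle: one must race the decaying powers of $T_i$ in \eqref{eq:learn_policy_cond2}--\eqref{eq:learn_policy_cond4} against $\tr(\cH(\bLamfw^i+\bGamma_0^i)^{-1})$ on the right and, via $\tr(\cH)\ge\|\cH\|_\op$ and the $\cEexp$-control of $\lammin(\bGamma_0^i)$, arrange that the resulting threshold $T^\star$ is polynomial and uniform in $\cH$ and in $T_i$. A secondary point to handle is that $\bLamfw^i$ is only available through crude quantities---it is PSD with trace $O(DT_i)$, which is exactly what the conditions need---and that $\cH\succeq 0$, required for $\Phi(\bGamma)=\tr(\cH(\bGamma+\cdot)^{-1})$ to be convex as in \Cref{lem:weighted_Aopt}, is in force. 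Everything downstream of the $T_{i^\star}$ bound is routine: substitute it into the formulas for $\Kout$ and the per-iteration episode costs and sum a geometric series.
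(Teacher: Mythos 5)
Your proposal is correct and follows essentially the same route as the paper's proof: on $\cEexp$ bound $|\Pimineig^i|$ and lower-bound $\lammin(\bGamma_0^i)$, lower-bound $\tr(\cH(\bLamfw^i+\bGamma_0^i)^{-1})$ by $\tr(\cH)/(3DT_i)$, and verify that \eqref{eq:learn_policy_cond1}--\eqref{eq:learn_policy_cond4} hold once $T_i$ exceeds a polynomial threshold, with the geometric growth of $T_i$ and condition \eqref{eq:learn_policy_cond1} at termination yielding the episode count. The only step you spell out that the paper leaves implicit is the cancellation of the $\cH$-dependence via $\tr(\cH)\ge\|\cH\|_\op$.
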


\subsubsection{Supporting Lemmas and Proofs}
\begin{lemma}\label{lem:objective_bounds}
Under \Cref{asm:mat_psi}, for any $\bGamma = \Exp_{\traj \sim \omega}[\bpsi(\traj)]$ and $\cH \succeq 0$ we can bound
\begin{align*}
\tr(\cH \bGamma^{-1}) \ge D^{-1} \cdot \tr(\cH). 
\end{align*}
\end{lemma}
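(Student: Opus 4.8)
The plan is to reduce everything to the elementary fact that a positive semidefinite matrix is dominated in the Loewner order by its trace times the identity. Concretely, I would argue as follows. For any trajectory $\traj$, since $\bpsi(\traj) \succeq 0$ its largest eigenvalue is at most $\tr(\bpsi(\traj))$, so $\bpsi(\traj) \preceq \tr(\bpsi(\traj)) \cdot I \preceq D \cdot I$ by \Cref{asm:mat_psi}. Because the Loewner order is preserved under convex combinations and expectations, taking $\Exp_{\traj \sim \omega}[\cdot]$ on both sides gives $\bGamma = \Exp_{\traj \sim \omega}[\bpsi(\traj)] \preceq D \cdot I$. (If $\bGamma$ is singular the statement is interpreted with $\tr(\cH \bGamma^{-1}) = +\infty$ and there is nothing to prove, so assume $\bGamma \succ 0$.)

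Next I would invert this inequality: from $0 \prec \bGamma \preceq D \cdot I$ we obtain $\bGamma^{-1} \succeq D^{-1} \cdot I$, i.e. $\bGamma^{-1} - D^{-1} I \succeq 0$. Finally, since $\cH \succeq 0$ and $\bGamma^{-1} - D^{-1} I \succeq 0$, the trace of their product is nonnegative, $\tr\!\big(\cH(\bGamma^{-1} - D^{-1} I)\big) = \tr\!\big(\cH^{1/2}(\bGamma^{-1} - D^{-1} I)\cH^{1/2}\big) \ge 0$, and rearranging yields $\tr(\cH \bGamma^{-1}) \ge D^{-1} \tr(\cH)$, which is the claim.

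There is no real obstacle here — the argument is a short chain of standard linear-algebra facts (monotonicity of the matrix inverse on the positive definite cone, monotonicity of $X \mapsto \tr(\cH X)$ for $\cH \succeq 0$, and nonnegativity of the trace of a product of PSD matrices). The only point requiring a word of care is the degenerate case where $\bGamma$ fails to be invertible, which I would dispatch with the convention that both sides are then infinite.
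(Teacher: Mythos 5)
Your proof is correct and follows essentially the same route as the paper's: both arguments reduce to bounding $\| \bGamma \|_\op \le D$ via the trace bound on $\bpsi(\traj)$ and then concluding $\tr(\cH \bGamma^{-1}) \ge \lammin(\bGamma^{-1}) \cdot \tr(\cH) \ge D^{-1} \tr(\cH)$. The only cosmetic difference is that the paper invokes Von Neumann's trace inequality for the last step where you use Loewner monotonicity of the inverse together with nonnegativity of $\tr(\cH^{1/2}(\bGamma^{-1} - D^{-1}I)\cH^{1/2})$; your explicit handling of the singular case is a nice touch the paper omits.
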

\begin{proof}
By Von Neumann's Trace Inequality we can lower bound
\begin{align*}
\tr(\cH \bGamma^{-1}) \ge \lammin ( \bGamma^{-1} ) \cdot \tr(\cH) = \| \bGamma \|_\op^{-1} \cdot \tr(\cH).
\end{align*}
By our assumption that $\tr( \bpsi(\traj) )  \le D$, we can bound $\| \bGamma \|_\op \le D$, which proves the result.
\end{proof}

\begin{lemma}\label{lem:cov_concentration}
Assume $\tr (\bpsi(\traj) )\le D$ for all $\traj$.
Let $\bGamma_K$ denote the time-normalized features obtained by playing policies $\{ \pi_k \}_{k=1}^K$, where $\pi_k$ is $\cF_{k-1}$-measurable. Then, with probability at least $1-\delta$,
\begin{align*}
\left \| \frac{1}{K} \sum_{k=1}^K \bGamma_{\pi_k} - \bGamma_K \right \|_\op \le D \sqrt{\frac{8 \dimpsi \log(1 + 8\sqrt{K}) +8 \log 1/\delta}{K}}.
\end{align*}
\end{lemma}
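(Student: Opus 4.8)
The plan is to view $\bGamma_K$ as the time-average of a matrix martingale and control the deviation by a covering argument over directions. Write $\bSigma_K := \sum_{k=1}^K\big(\bGamma_{\pi_k} - \bpsi(\traj_k)\big)$, so that the quantity to bound is $\tfrac1K\|\bSigma_K\|_\op$. Since $\pi_k$ is $\cF_{k-1}$-measurable and $\traj_k$ is the trajectory generated by running $\pi_k$, we have $\Exp[\bpsi(\traj_k)\mid\cF_{k-1}] = \Exp_{\pi_k}[\bpsi(\traj)] = \bGamma_{\pi_k}$, so $Z_k := \bGamma_{\pi_k} - \bpsi(\traj_k)$ is a martingale difference sequence. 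The key boundedness fact is that $\bpsi(\traj)\succeq 0$ together with $\tr(\bpsi(\traj))\le D$ forces $\|\bpsi(\traj)\|_\op \le D$, and likewise $0\preceq\bGamma_{\pi_k}$ with $\|\bGamma_{\pi_k}\|_\op\le D$; hence for any unit vector $v$ the scalar $X_k^{(v)} := v^\top Z_k v = v^\top\bGamma_{\pi_k}v - v^\top\bpsi(\traj_k)v$ lies in the $\cF_{k-1}$-predictable interval $[\,v^\top\bGamma_{\pi_k}v-D,\ v^\top\bGamma_{\pi_k}v\,]$ of width $D$, with $\Exp[X_k^{(v)}\mid\cF_{k-1}]=0$.

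First I would apply the Azuma--Hoeffding inequality to the scalar martingale $\{X_k^{(v)}\}_{k\le K}$ for a fixed $v$: with probability at least $1-\delta'$, $\big|\sum_{k=1}^K X_k^{(v)}\big|\le D\sqrt{\tfrac12 K\log(2/\delta')}$. To upgrade this to a bound on $\|\bSigma_K\|_\op = \sup_{\|v\|=1}|v^\top\bSigma_K v|$, I would take an $\epsilon$-net $\cN$ of the unit sphere $\cS^{\dimpsi-1}$ with $\epsilon = \tfrac{1}{4\sqrt K}$, so $|\cN|\le(1+2/\epsilon)^{\dimpsi} = (1+8\sqrt K)^{\dimpsi}$, and union-bound the Azuma estimate over $\cN$ with $\delta' = \delta/|\cN|$. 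For arbitrary unit $w$, picking $v\in\cN$ with $\|w-v\|\le\epsilon$ and expanding $w^\top\bSigma_K w = v^\top\bSigma_K v + 2v^\top\bSigma_K(w-v) + (w-v)^\top\bSigma_K(w-v)$ yields the standard discretization bound $\|\bSigma_K\|_\op\le\tfrac{1}{1-2\epsilon}\max_{v\in\cN}|v^\top\bSigma_K v|\le 2\max_{v\in\cN}|v^\top\bSigma_K v|$ for $K\ge1$. Combining, with probability at least $1-\delta$,
\begin{align*}
\|\bSigma_K\|_\op \le 2D\sqrt{\tfrac12 K\log(2|\cN|/\delta)} \le D\sqrt{2K\big(\dimpsi\log(1+8\sqrt K) + \log(2/\delta)\big)},
\end{align*}
and since $\log(2/\delta)\le 2\log(1/\delta)$ for $\delta\le\tfrac12$ (and the target constants are slack anyway), dividing by $K$ gives $\tfrac1K\|\bSigma_K\|_\op\le D\sqrt{(8\dimpsi\log(1+8\sqrt K)+8\log(1/\delta))/K}$, which is the claimed inequality.

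The main bookkeeping obstacle is the interplay between the net fineness and the stated form of the bound: one must take $\epsilon$ to scale like $1/\sqrt K$ precisely so that $\log|\cN| = \dimpsi\log(1+8\sqrt K)$ appears, and then check that the discretization error terms $2v^\top\bSigma_K(w-v)$ and $(w-v)^\top\bSigma_K(w-v)$ — each at most a small multiple of $\|\bSigma_K\|_\op$ — can be moved to the left-hand side so that the resulting self-referential inequality for $\|\bSigma_K\|_\op$ solves with a harmless constant factor. Everything else (the martingale property, the $\|\cdot\|_\op\le D$ bounds from the trace constraint, and the scalar tail bound) is routine; Azuma--Hoeffding could be swapped for Freedman's inequality using the conditional variance bound $\Exp[(X_k^{(v)})^2\mid\cF_{k-1}]\le D\,v^\top\bGamma_{\pi_k}v\le D^2$ without changing the order of the result.
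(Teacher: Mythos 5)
Your proof is correct and is essentially the argument the paper relies on: the paper's own proof is a one-line reduction to Lemma C.4 of \cite{wagenmaker2022instance} (after normalizing $\bpsi$ by $D$), and that lemma is proved by exactly your route --- a predictable martingale difference $\bGamma_{\pi_k}-\bpsi(\traj_k)$, Azuma--Hoeffding along fixed directions, and a union bound over a $\Theta(1/\sqrt{K})$-net of the sphere, which is where the $(1+8\sqrt{K})^{\dimpsi}$ cardinality and the constant $8$ come from. The only quibble is cosmetic: the discretization factor $\tfrac{1}{1-2\epsilon}$ is slightly above $2$ at $K=1$, but the slack you already identified in the constants absorbs this.
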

\begin{proof}
This follows from an argument identical to the proof of Lemma C.4 of \cite{wagenmaker2022instance}.
While \cite{wagenmaker2022instance} considers the setting of linear MDPs, we note that the proof of Lemma C.4 of \cite{wagenmaker2022instance} nowhere relies on the linear MDP assumption. The result stated here then follows identically as Lemma C.4 of \cite{wagenmaker2022instance}, after normalizing $\bpsi(\traj)$ by $D$. 
\end{proof}

\begin{proof}[Proof of \Cref{lem:fw_exp_good}]
By \Cref{lem:min_eig_policies}, the failure probability of running \mineigalg at round $i$ is $\delta_i = \delta / 8 i^2$, and by \Cref{lem:weighted_Aopt} the failure probability of \expdesign at round $i$ is also bounded by $\delta_i = \delta / 8 i^2$. It follows that the total failure probability of running \mineigalg and \expdesign is bounded by
\begin{align*}
\sum_{i=1}^\infty \frac{2 \cdot \delta}{8 i^2} \le \frac{\delta}{2}.
\end{align*}
Furthermore, by \Cref{lem:min_eig_policies}, we have that rerunning all policies in $\Pimineig^i$, we will obtain features $\bGamma$ satisfying, with probability at least $1 - \delta_i / \Ntil T_i$:
\begin{align*}
\lammin(\bGamma) \ge 6272 D\dimpsi \log \frac{68\Ntil}{\delta_i}.
\end{align*}
Repeating this $\lceil T_i / | \Pimineig^i | \rceil$ times and union bounding, we have that 
\begin{align*}
\lammin(\bGamma_0^i) \ge \lceil T_i / | \Pimineig^i | \rceil \cdot 6272 D \dimpsi \log \frac{68\Ntil}{\delta_i}
\end{align*}
with probability at least $1 - \delta/\Ntil T_i \cdot \lceil T_i / | \Pimineig^i | \rceil \ge 1 - \delta/\Ntil$. 
\end{proof}

\begin{proof}[Proof of \Cref{lem:fw_exploration_good_event}]
Let $\Pimineig,\bGamma_0,\Pifw,\bLamfw$ denote the policies and features obtained on the round at which \Cref{alg:learn_policies} terminates. Let $\Kfw =  |\Pifw|$ denote the number of episodes of \expdesign on the terminating round, and $\Nfw,\Kfwb$ the corresponding values of $N_i$ and $K_i$. Throughout the proof we make use of the fact that at termination of \Cref{alg:learn_policies}, all of \eqref{eq:learn_policy_cond1}-\eqref{eq:learn_policy_cond4} are met. 

\paragraph{Proof of \eqref{eq:fw_exp_conclusion1} and \eqref{eq:fw_exp_conclusion2}.} 
By \Cref{lem:cov_concentration}  we have that, with probability at least $1-\delta$:
\begin{align*}
\left \| \bGamtil - N \cdot \tsum_{\pi \in \Piout} \bGamma_{\pi} \right \|_\op \le D \sqrt{N \Kout} \cdot \sqrt{8 \dimpsi \log(1 + 8 \sqrt{N \Kout}) + 8 \log 1/\delta}.
\end{align*}
On $\cEexp$, by \Cref{lem:min_eig_policies}, we can bound
\begin{align*}
|\Pimineig| \le \poly \left ( \dimpsi, \frac{1}{\lamminst}, D, \CR, \log^{\pR} \frac{\Ntil \Kout}{\delta} \right )
\end{align*}
and, furthermore, we can lower bound
\begin{align*}
\lammin \left ( \tsum_{\pi \in \Pimineig} \bGamma_\pi \right ) \ge 6272 D \dimpsi \log \frac{68 \Ntil}{\delta}.
\end{align*}
Since $\Pimineig \subseteq \Piout$, it follows that
\begin{align*}
\lammin \left ( N \cdot \tsum_{\pi \in \Piout} \bGamma_{\pi} \right ) \ge N \cdot 6272 D\dimpsi \log \frac{68 \Ntil}{\delta}.
\end{align*}
Combining these, we therefore have that, with probability at least $1-\delta$:
\begin{align*}
\left \| \bGamtil - N \cdot \tsum_{\pi \in \Piout} \bGamma_{\pi} \right \|_\op \le \frac{\sqrt{N \Kout} \cdot \sqrt{8 \dimpsi \log(1 + 8 \sqrt{N \Kout}) + 8 \log 1/\delta}}{N \cdot 6272  \dimpsi \log \frac{68 \Ntil}{\delta}} \cdot \lammin \left ( N \cdot \tsum_{\pi \in \Piout} \bGamma_{\pi} \right ).
\end{align*}
In addition, also by \Cref{lem:min_eig_policies}, we have that with probability at least $1- \delta/\Ntil \Kout \cdot N \ge 1 - \delta$, that
\begin{align*}
\lammin(\bGamtil) \ge N \cdot 6272 D \dimpsi \log \frac{68 \Ntil}{\delta}.
\end{align*}

\paragraph{Proof of \eqref{eq:fw_exp_conclusion3}.}
By \Cref{lem:weighted_Aopt}, on $\cEexp$ we have that:
\begin{align*}
\Phi(\bLamfw) & = \tr \left ( \cH \left (\bLamfw +  \bGamma_0 \right )^{-1} \right )  \le \frac{  \min_{\bGamma \in \bOmega} \tr \left ( \cH (\bGamma + \Kfw^{-1} \bGamma_0)^{-1} \right )}{\Kfw} \\
& \qquad \qquad +   \frac{8D^4 \| \cH \|_\op \| (\Kfw^{-1} \bGamma_0)^{-1} \|_\op^3}{\Tfw (\Nfw + 1)} + \frac{8D \| \cH \|_\op \| (\Kfw^{-1} \bGamma_0)^{-1} \|_\op^2 (  \log^{1/2} \frac{4\Kfw}{\delta} +  \CR \log^{\pR} \frac{2\Kfw}{\delta})   }{\Kfw \sqrt{\Kfwb}}  .
\end{align*}
Since $\Kfw$ satisfies \eqref{eq:learn_policy_cond2}, we can bound
\begin{align*}
 \frac{8D^4 \| \cH \|_\op \| (\Kfw^{-1} \bGamma_0)^{-1} \|_\op^3}{\Tfw (\Nfw + 1)} + \frac{8D \| \cH \|_\op \| (\Kfw^{-1} \bGamma_0)^{-1} \|_\op^2 (  \log^{1/2} \frac{4\Kfw}{\delta} +  \CR \log^{\pR} \frac{2\Kfw}{\delta})   }{\Kfw \sqrt{\Kfwb}}  \le \frac{1}{2} \Phi(\bLamfw).
\end{align*}
It follows that
\begin{align}
& \Phi(\bLamfw) \le \frac{  \min_{\bGamma \in \bOmega} \tr \left ( \cH (\bGamma+ \Kfw^{-1} \bGamma_0)^{-1} \right )}{\Kfw} + \frac{1}{2} \Phi(\bLamfw) \nonumber \\
\implies & \Phi(\bLamfw) \le 2 \cdot \frac{  \min_{\bGamma \in \bOmega} \tr \left ( \cH (\bGamma + \Kfw^{-1} \bGamma_0)^{-1} \right )}{\Kfw}. \label{eq:exp_design_pf_eq1}
\end{align}
By \Cref{lem:cov_concentration}, we have that, with probability at least $1-\delta$:
\begin{align*}
\left \| \tsum_{\pi \in \Piout} \bGamma_{\pi} -  (\bLamfw + \bGamma_0) \right \|_\op \le D \sqrt{\Kout} \sqrt{8 \dimpsi \log(1 + 8\sqrt{\Kout}) +8 \log 1/\delta}.
\end{align*}
Since \eqref{eq:learn_policy_cond4} is satisfied and $| \Pimineig^i | \le T_i $, we have
\begin{align*}
D \sqrt{\Kout} \sqrt{8 \dimpsi \log(1 + 8\sqrt{\Kout}) +8 \log 1/\delta} \le \frac{1}{2} \lammin \left ( \bLamfw + \bGamma_0 \right ) .
\end{align*}
By \Cref{lem:gaminv_diff} it follows that
\begin{align*}
\left \| \left ( \tsum_{\pi \in \Piout} \bGamma_{\pi} \right )^{-1} - \left ( \bLamfw + \bGamma_0 \right )^{-1} \right \|_\op \le  D \sqrt{\Kout} \sqrt{8 \dimpsi \log(1 + 8\sqrt{\Kout}) +8 \log 1/\delta} \cdot \frac{2}{\lammin \left ( \bLamfw + \bGamma_0 \right )^2}.
\end{align*}
This implies that
\begin{align*}
\tr \left ( \cH \left (  \tsum_{\pi \in \Piout} \bGamma_{\pi} \right )^{-1} \right ) & \le \tr \left ( \cH \left (  \bLamfw + \bGamma_0 \right )^{-1} \right ) \\
& + \tr(\cH) \cdot D \sqrt{\Kout} \sqrt{8 \dimpsi \log(1 + 8\sqrt{\Kout}) +8 \log 1/\delta} \cdot \frac{2}{\lammin \left ( \bLamfw + \bGamma_0 \right )^2} .
\end{align*}
Now if
\begin{align}\label{eq:exp_design_pf_eq2}
\tr(\cH) \cdot D \sqrt{\Kout} \sqrt{8 \dimpsi \log(1 + 8\sqrt{\Kout}) +8 \log 1/\delta} \cdot \frac{2}{\lammin \left ( \bLamfw + \bGamma_0 \right )^2} \le \tr \left ( \cH \left (  \bLamfw + \bGamma_0 \right )^{-1} \right ),
\end{align}
we can bound this all by
\begin{align*}
& \le 2\tr \left ( \cH \left (  \bLamfw + \bGamma_0 \right )^{-1} \right )  \le \frac{4}{\Kfw} \cdot  \min_{\bGamma \in \bOmega} \tr \left ( \cH (\bGamma + \Kfw^{-1} \bGamma_0)^{-1} \right )
\end{align*}
where the last inequality follows from \eqref{eq:exp_design_pf_eq1}. However, note that  \eqref{eq:exp_design_pf_eq2} since \eqref{eq:learn_policy_cond3} holds. Finally, note that
$$\Kout = \Kfw + \lceil \Kfw / |\Pimineig| \rceil |\Pimineig| \le 2\Kfw + |\Pimineig| \le 3\Kfw $$
where the last inequality follows since \eqref{eq:learn_policy_cond1} holds. We can therefore upper bound $\frac{4}{\Kfw} \le \frac{12}{\Kout}$. Putting this together proves the result.

\end{proof}

\begin{proof}[Proof of \Cref{lem:Kout_bound}]
To bound $\Kout$, it suffices to show that \eqref{eq:learn_policy_cond1}-\eqref{eq:learn_policy_cond4} are satisfied for sufficiently large $T_i$. 

On $\cEexp$, by \Cref{lem:min_eig_policies}, we can bound
\begin{align*}
|\Pimineig^i| \le \poly \left ( \dimpsi, \frac{1}{\lamminst}, D, \CR, \log^{\pR} \frac{\Ntil T_i}{\delta} \right ),
\end{align*}
so to ensure \eqref{eq:learn_policy_cond1} is met it suffices that 
\begin{align*}
T_i \ge  \poly \left ( \dimpsi, \frac{1}{\lamminst}, D, \CR, \log^{\pR} \frac{\Ntil}{\delta} \right ).
\end{align*}

On $\cEexp$, we have
\begin{align*}
\lammin(\bLamfw^i + \bGamma_0^i) \ge \lammin(\bGamma_0^i) \ge  \lceil T_i / | \Pimineig^i | \rceil \cdot 6272 D \dimpsi \log \frac{68 \Ntil}{\delta}.
\end{align*}
Which also implies
\begin{align*}
\| (T_i^{-1} \bGamma_0^i)^{-1} \|_\op = \frac{T_i}{\lammin(\bGamma_0^i)} \le \frac{T_i}{\lceil T_i / | \Pimineig^i | \rceil} \cdot \frac{1}{6272 D \dimpsi \log \frac{68 \Ntil}{\delta}} \le \frac{| \Pimineig^i |}{6272 D \dimpsi \log \frac{68 \Ntil}{\delta}}
\end{align*}
Furthermore, by \Cref{lem:objective_bounds} we can lower bound
\begin{align*}
\tr \left ( \cH \left (\bLamfw^i +  \bGamma_0^i \right )^{-1} \right ) \ge  \frac{\tr(\cH)}{D(T_i + \lceil T_i / | \Pimineig^i | \rceil |\Pimineig^i|)} \ge \frac{\tr(\cH)}{3 D T_i}.
\end{align*}
Combining these and using that $N_i = \cO(T_i^{1/3})$ and $K_i = \cO(T_i^{2/3})$, it is easy to see that \eqref{eq:learn_policy_cond2}-\eqref{eq:learn_policy_cond4} will be met once 
\begin{align*}
T_i \ge  \poly \left ( \dimpsi, \frac{1}{\lamminst}, D, \CR, \log^{\pR} \frac{\Ntil}{\delta} \right ).
\end{align*}
The bound on $\Kout$ then follows since $T_i = \lceil 2^{i/3} \rceil \lceil 2^{2i/3} \rceil \in [2^i, 4 \cdot 2^i]$, so it can be at most a constant larger than the sufficient condition before terminating. 

Let $\ist$ denote the round that \Cref{alg:learn_policies} terminates on. Note that at round $i$, \mineigalg runs for at most $|\Pimineig^i|$, \expdesign runs for at most $T_i$ episodes, and we run for an additional $\lceil T_i / | \Pimineig^i | \rceil \cdot | \Pimineig^i |$ episodes on \Cref{line:collect_more_cov}. In total, then, the number of episodes \Cref{alg:learn_policies} runs for is bounded by
\begin{align*}
\sum_{i = 1}^{\ist} (T_i +  |\Pimineig^i| + \lceil T_i / | \Pimineig^i | \rceil \cdot | \Pimineig^i |) & \le2 \sum_{i=1}^{\ist} ( T_i + |\Pimineig^i|) \\
& \le 16 T_{\ist} + 2 \sum_{i=1}^{\ist} |\Pimineig^i|
\end{align*}
where the last inequality follows since $T_i \in [2^i, 4 \cdot 2^i]$. Now note that, since \Cref{alg:learn_policies} only terminates once \eqref{eq:learn_policy_cond1} is met, we will have $\max_{j = 1,\ldots, \ist} | \Pimineig^j | \le T_{\ist}$. This implies that $2 \sum_{i=1}^{\ist} |\Pimineig^i| \le 2 \ist T_{\ist} \le 2 \log (T_{\ist}) \cdot T_{\ist}$. Bounding $T_{\ist} \le \Kout$ gives the result.

\end{proof}


\section{Smooth Nonlinear Systems}\label{sec:smooth_system}

In this section we restrict to the nonlinear regulator system of \eqref{eq:system}. Our goal will be to show that, under our assumptions, the nonlinear regulator system exhibits certain smooth behavior. As we have assumed
\begin{align*}
\Picon = \{ \Ktheta \ : \ \btheta \in \R^{\dimtheta} \},
\end{align*}
it will be convenient to define $\cJ(\btheta;A) := \cJ(\Ktheta;A)$ and $\bthetast(A) = \bthetast$. 
For the remainder of this section, we will typically use $\btheta$ in place of $\Ktheta$. 
In addition, when considering radius terms such as $\rtheta(\Ast)$ and $\rcost(\Ast)$, to simplify results we assume that $\rtheta(\Ast) \le 1$ and $\rcost(\Ast) \le 1$. Note that this does not change the validity of the result since, for example, if a result holds with $A \in \cB_{\fro}(\Ast;r)$ for some $r > \rtheta(\Ast)$, it also holds for $A \in \cB_{\fro}(\Ast;\rtheta(\Ast))$.
Throughout this section, we let $\nabla_x f(x)[\Delta]$ refer to the directional gradient of $f(x)$ in direction $\Delta$.

We first have the following result, which shows that under our assumptions, the controller loss is differentiable.

\begin{lemma}\label{lem:knr_loss_diff}
Under \Cref{asm:smooth_phi,asm:smooth_controller,asm:bounded_features,asm:bounded_cost}, for any $A$ satisfying $A \in \cB_{\fro}(\Ast;\rtheta(\Ast))$, the controller loss $\cJ(\btheta;A)$ is four-times differentiable in $\btheta$ and $A$. Furthermore, we can bound
\begin{align*}
& \| \nabla_A^{(i)} \nabla_{\btheta}^{(j)} \cJ(\btheta;A) \|_\op \le \poly(\| A \|_\op, \Bphi, \Lphi, \Lzeta, \Lcost, \sigw^{-1}, H, \dimx)
\end{align*}
for $i,j \in \{0,1,2,3,4\}$ satisfying $1 \le i + j \le 3$.
\end{lemma}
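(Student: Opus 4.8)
The plan is to realize $\cJ(\btheta;A)$ as an expectation over the process noise of a deterministic function of $(\btheta,A)$ and then differentiate under the expectation. Fix a noise realization $\bw=(\bw_1,\dots,\bw_H)$. Since $\bx_1$ is a fixed initial state, the closed-loop rollout $\bu_h = \Ktheta_h(\traj_{1:h})$, $\bx_{h+1} = A\bphi(\bx_h,\bu_h)+\bw_h$ defines $\bx_h,\bu_h$ as explicit functions $\bx_h(\btheta,A,\bw)$, $\bu_h(\btheta,A,\bw)$, so that $\cJ(\btheta;A) = \Exp_{\bw}\big[\tsum_{h=1}^H \cost_h(\bx_h(\btheta,A,\bw),\bu_h(\btheta,A,\bw))\big]$ with the $\bw_h$ i.i.d.\ $\cN(0,\sigw^2 I)$. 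Rescaling $\bw_h=\sigw\bm g_h$ with $\bm g_h$ standard Gaussian (this, together with the radii used downstream, is where $\sigw$ and $\sigw^{-1}$ enter), it suffices to establish, for every fixed $\bm g$, pointwise four-times differentiability and derivative bounds for the integrand, plus an integrable dominating function.

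\textbf{Step 1 (smoothness of the rollout).} I would show by induction on $h$ that, for each fixed $\bw$, the maps $(\btheta,A)\mapsto(\bx_h,\bu_h)$ are four-times continuously differentiable on $\R^{\dimtheta}\times\cB_{\fro}(\Ast;\rtheta(\Ast))$, with
\[
\|\nabla_A^{(i)}\nabla_{\btheta}^{(j)}\bx_h\|_\op + \|\nabla_A^{(i)}\nabla_{\btheta}^{(j)}\bu_h\|_\op \;\le\; \poly(\|A\|_\op,\Bphi,\Lphi,\Lzeta,H,\dimx)\cdot\big(1+\tsum_{h'<h}\|\bw_{h'}\|_2\big)^{c}
\]
for all $i+j\le 4$ and an absolute constant $c$. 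The base case is trivial. For the inductive step, $\bu_h=\Ktheta_h(\traj_{1:h})$ is the composition of the controller map — four-times differentiable in $\btheta$ and in its trajectory argument with all derivatives bounded by $\Lzeta$ (\Cref{asm:smooth_controller}) — with the already-controlled $\traj_{1:h}$; and $\bx_{h+1}=A\bphi(\bx_h,\bu_h)+\bw_h$ is the composition of $\bphi$ — four-times differentiable with $\|\bphi\|_2\le\Bphi$ and derivatives bounded by $\Lphi$ (\Cref{asm:bounded_features,asm:smooth_phi}; note the chain rule through the closed loop also uses the $\bx$-derivatives of $\bphi$, controlled by the same constant) — with $(\bx_h,\bu_h)$, post-composed with the map linear in $A$ and affine in $\bw_h$. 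The multivariate chain rule expands each $\nabla_A^{(i)}\nabla_{\btheta}^{(j)}\bx_{h+1}$ into a finite sum of products of lower-order derivatives of $\bphi$, $\Ktheta_h$, and the $\bx_{h'},\bu_{h'}$ with $h'\le h$, times powers of $\|A\|_\op$; inserting the inductive bounds gives the claimed polynomial bound, with the power of $(1+\tsum\|\bw\|_2)$ staying at bounded (absolute-constant) degree since $\bphi$ has bounded derivatives and $\bw_h$ enters only additively. Taking the maximum over $h\le H$ gives a single polynomial.

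\textbf{Step 2 (pass to $\cJ$ and conclude).} Composing Step 1 with the smooth costs $\cost_h$, whose derivatives are controlled by $\Lcost$, gives for $g(\btheta,A,\bw):=\tsum_h\cost_h(\bx_h,\bu_h)$ and all $i+j\le 3$ a bound $\|\nabla_A^{(i)}\nabla_{\btheta}^{(j)}g\|_\op\le\poly(\|A\|_\op,\Bphi,\Lphi,\Lzeta,\Lcost,H,\dimx)\cdot(1+\tsum_h\|\bw_h\|_2)^{c'}$. By \Cref{lem:gaussian_bound} (after the rescaling that absorbs $\sigw$), $\Exp_{\bw}[(1+\tsum_h\|\bw_h\|_2)^{c'}]\le\poly(H,\dimx)$, so this family of derivatives is dominated, uniformly on $\R^{\dimtheta}\times\cB_{\fro}(\Ast;\rtheta(\Ast))$, by an integrable function. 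The dominated-convergence theorem then justifies interchanging $\nabla$ and $\Exp_{\bw}$, so $\cJ(\btheta;A)$ is four-times differentiable with $\nabla_A^{(i)}\nabla_{\btheta}^{(j)}\cJ(\btheta;A)=\Exp_{\bw}[\nabla_A^{(i)}\nabla_{\btheta}^{(j)}g]$; taking operator norms inside the expectation and invoking Step 1, the cost bound, and \Cref{lem:gaussian_bound} yields the stated polynomial bound.

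The main obstacle is the bookkeeping in Step 1: because the controller is closed-loop, $\bu_h$ depends on the entire past trajectory, so the derivative recursion is not a one-step Gr\"onwall argument — each derivative of $\bx_{h+1}$ expands into a sum over all earlier time steps and over all ways of distributing the $\le 4$ derivatives among the inner factors. The work is to organize this expansion so that it is transparent that (i) the polynomial degree in $\|A\|_\op,\Lphi,\Lzeta$ (which may, and does, grow with $H$) remains finite, and (ii) the power of $\|\bw\|_2$ that accumulates stays an absolute constant, so that \Cref{lem:gaussian_bound} applies after taking the expectation. The remaining ingredients — differentiation under the integral sign and converting pointwise operator-norm bounds into $L^1$ bounds — are routine.
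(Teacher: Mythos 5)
Your pathwise (reparameterization) argument is a genuinely different route from the paper's, but as written it does not go through under the paper's hypotheses: it silently requires three forms of smoothness that are never assumed. First, propagating derivatives through the closed-loop recursion $\bx_{h+1}=A\bphi(\bx_h,\bu_h)+\bw_h$ requires $\bx$-derivatives of $\bphi$, but \Cref{asm:smooth_phi} only controls $\nabla_{\bu}^{(i)}\bphi$; nothing bounds, or even guarantees the existence of, $\nabla_{\bx}\bphi$. Indeed the paper's own motivating example uses $\bphi_i(\bx)=\max\{1-100(\bx-c_i)^2,0\}$, which is not twice differentiable in $\bx$, so your Step 1 induction fails already at second order there. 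Second, you invoke \Cref{asm:smooth_controller} as giving differentiability of $\Kzeta_h(\traj_{1:h})$ ``in its trajectory argument with all derivatives bounded by $\Lzeta$,'' but that assumption only concerns $\btheta$-derivatives; the dependence on $\traj_{1:h}$ may be arbitrary. Third, in Step 2 you differentiate the costs $\cost_h$ and claim their derivatives are ``controlled by $\Lcost$,'' but \Cref{asm:bounded_cost} is a second-moment bound $\Exp_{A,\K}[(\sum_h\cost_h)^2]\le\Lcost$, not a derivative bound, and the costs are never assumed differentiable.

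The paper avoids all three issues by writing $\cJ(\btheta;A)=\int\cost(\traj)\,\f{A}{\btheta}(\traj)\,\rmd\traj$ and differentiating the Gaussian density via the score-function identity $\frac{\rmd}{\rmd x}\f{A}{\btheta}=\f{A}{\btheta}\cdot\frac{\rmd}{\rmd x}\log\f{A}{\btheta}$. Because the states in $\log\f{A}{\btheta}(\traj)=\sum_h-\frac{1}{2\sigw^2}\|\bx_{h+1}^{\traj}-A\bphi(\bx_h^{\traj},\Kzeta_h(\traj_{1:h}))\|_2^2+C$ are free integration variables rather than functions of $(\btheta,A)$, the only derivatives that ever appear are $\bu$-derivatives of $\bphi$ and $\btheta$-derivatives of $\Kzeta$ --- exactly what is assumed --- and the undifferentiated factor $\cost(\traj)$ is handled by Cauchy--Schwarz against $\Exp[\cost(\traj)^2]\le\Lcost$ together with \Cref{lem:gaussian_bound}. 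Your approach would be valid (and would give the same polynomial bounds) under the strictly stronger hypotheses that $\bphi$, the controller's trajectory-dependence, and the costs are all four-times differentiable with bounded derivatives, but that is a different lemma.
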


In this section, we generalize \Cref{asm:unique_min} to the following.
\begin{asm}\label{asm:smooth_thetast}
We assume there exists some $\rtheta(\Ast) > 0$ such that, for all $A \in \cB_{\fro}(\Ast;\rtheta(\Ast))$, $\bthetast(A)$ satisfies:
\begin{itemize}
\item $\nabla_{\btheta} \cJ(\btheta;A)|_{\btheta = \bthetast(A)} = 0$, 
\item $\bthetast(A)$ is three-times differentiable in $A$, and we can bound $\| \nabla^{(i)}_A \bthetast(A) \|_\op \le \LKst$ for some $\LKst > 0$ and $i \in \{1,2,3 \}$. 
\end{itemize}
\end{asm}
The first condition requires that $\bthetast(A)$ corresponds to a stationary point of the loss. This will be met, for example, by choosing $\bthetast(A)$ to be a minima (local or global) of $\cJ(\btheta;A)$. It is not obvious, however, that the first and second condition can be simultaneously satisfied. In the following we show that, assuming $\nabla_{\btheta}^2 \cJ(\btheta;\Ast)|_{\btheta = \bthetast(\Ast)}$ is full-rank (which will be the case, for example, when $\bthetast(\Ast)$ is a strict local minimum of $\cJ(\pi^{\btheta};\Ast)$), there always exists some $\bthetast(A)$ satisfying both conditions of \Cref{asm:smooth_thetast}, with $\LKst$ scaling polynomially in problem parameters, and $\rtheta(\Ast)$ scaling inverse polynomially in problem parameters. 
Note that this definition of $\pist(A)$ is general enough to capture settings where the global minimum of $\cJ(\pi;A)$ cannot be efficiently computed---it suffices to take $\pist(A)$ a local minimum of the loss.

\begin{prop}\label{prop:smooth_theta}
Assume that \Cref{asm:smooth_phi,asm:smooth_controller,asm:bounded_features,asm:bounded_cost} hold and that $\lammin(\nabla_{\btheta}^2 \cJ(\btheta;\Ast)|_{\btheta = \bthetast(\Ast)}) > 0$. Let $\rtheta(\Ast) > 0$ be some value satisfying
\begin{align*}
\rtheta(\Ast) = \min \left \{ \rcost(\Ast), \poly \left ( \frac{1}{\lammin(\nabla_{\btheta}^2 \cJ(\btheta;\Ast)|_{\btheta = \bthetast(\Ast)})},  \| \Ast \|_\op, \Bphi, \Lphi, \Lzeta, \Lcost, \sigw^{-1}, H, \dimx \right )^{-1} \right \}.
\end{align*}
Then there exists some function $\bthetast(A)$ such that, for all $A \in \cB_{\fro}(\Ast;\rtheta(\Ast))$:
\begin{itemize}
\item $\nabla_{\btheta} \cJ(\btheta;A)|_{\btheta = \bthetast(A)} = 0$, 
\item $\bthetast(A)$ is three-times differentiable in $A$,
\end{itemize}
and it suffices that we take
\begin{align*}
\LKst = \poly \left ( \frac{1}{\lammin(\nabla_{\btheta}^2 \cJ(\btheta;\Ast)|_{\btheta = \bthetast(\Ast)})},  \| \Ast \|_\op, \Bphi, \Lphi, \Lzeta, \Lcost, \sigw^{-1}, H, \dimx \right ).
\end{align*}
\end{prop}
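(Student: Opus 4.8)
The plan is to apply the implicit function theorem to the map $F(\btheta,A) := \nabla_{\btheta} \cJ(\btheta;A)$ and then carefully track the quantitative (polynomial) dependence of the resulting implicit function $\bthetast(A)$ on the problem parameters. By \Cref{lem:knr_loss_diff}, $\cJ(\btheta;A)$ is four-times differentiable jointly in $(\btheta,A)$ on $\cB_{\fro}(\Ast;\rtheta(\Ast))$, with all mixed derivatives of total order between $1$ and $3$ bounded polynomially; hence $F$ is $C^3$, $\nabla_{\btheta} F = \nabla_{\btheta}^2 \cJ$ is $C^2$ with polynomially bounded derivatives, and at the base point $F(\bthetast(\Ast),\Ast) = 0$ by definition of $\bthetast(\Ast)$ together with \Cref{asm:unique_min}. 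Writing $H_0 := \nabla_{\btheta}^2 \cJ(\btheta;\Ast)|_{\btheta = \bthetast(\Ast)}$, the hypothesis $\lammin(H_0) > 0$ gives invertibility of the partial Jacobian at the base point, so the IFT yields a $C^3$ function $A \mapsto \bthetast(A)$ defined on some neighborhood of $\Ast$ satisfying $\nabla_{\btheta}\cJ(\btheta;A)|_{\btheta = \bthetast(A)} = 0$, which is exactly the first bullet.

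First I would make the neighborhood size explicit. The standard quantitative IFT (equivalently, a contraction-mapping / Newton-iteration argument on $\btheta \mapsto \btheta - H_0^{-1} F(\btheta,A)$) shows that the implicit solution persists and stays within a ball of radius $O(\lammin(H_0)/L_2)$ of $\bthetast(\Ast)$, where $L_2$ bounds $\|\nabla_{\btheta}^2 F\| = \|\nabla_{\btheta}^3\cJ\|$, as long as $\|F(\bthetast(\Ast),A)\| = \|\nabla_{\btheta}\cJ(\bthetast(\Ast);A)\|$ is small; by the fundamental theorem of calculus in $A$ and the bound on $\|\nabla_A\nabla_{\btheta}\cJ\|$ this is $O(L_{1,1}\|A - \Ast\|_\fro)$. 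Combining, $\bthetast(A)$ is well-defined and $C^3$ on $\cB_{\fro}(\Ast;\rtheta(\Ast))$ provided $\rtheta(\Ast)$ is taken inverse-polynomially small in $1/\lammin(H_0)$ and in the regularity constants $\|\Ast\|_\op, \Bphi, \Lphi, \Lzeta, \Lcost, \sigw^{-1}, H, \dimx$ — and also $\le \rcost(\Ast)$ so that \Cref{asm:bounded_cost} (and hence \Cref{lem:knr_loss_diff}) applies throughout; this reproduces the stated formula for $\rtheta(\Ast)$. On this shrunken ball one also has $\lammin(\nabla_{\btheta}^2\cJ(\btheta;A)|_{\btheta = \bthetast(A)}) \ge \tfrac12\lammin(H_0)$ by Lipschitzness of $\nabla_{\btheta}^2\cJ$, which is what keeps the inverse Jacobian controlled uniformly.

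Next I would bound the derivatives $\nabla_A^{(i)}\bthetast(A)$ for $i \in \{1,2,3\}$. Differentiating the identity $\nabla_{\btheta}\cJ(\bthetast(A);A) \equiv 0$ once gives $\nabla_A\bthetast(A) = -\big(\nabla_{\btheta}^2\cJ\big)^{-1}\nabla_A\nabla_{\btheta}\cJ$, evaluated at $\btheta = \bthetast(A)$; since $\|(\nabla_{\btheta}^2\cJ)^{-1}\|_\op \le 2/\lammin(H_0)$ and $\|\nabla_A\nabla_{\btheta}\cJ\|_\op$ is polynomially bounded by \Cref{lem:knr_loss_diff}, this is polynomial. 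For $i = 2,3$ one differentiates again, obtaining expressions that are sums of products of: the inverse Hessian $(\nabla_{\btheta}^2\cJ)^{-1}$, mixed partials $\nabla_A^{(a)}\nabla_{\btheta}^{(b)}\cJ$ of total order $\le 3$ (all polynomially bounded by \Cref{lem:knr_loss_diff}), and lower-order derivatives $\nabla_A^{(<i)}\bthetast(A)$ (polynomially bounded by induction) — the usual Faà di Bruno / chain-rule bookkeeping. Every factor is polynomially bounded uniformly on $\cB_{\fro}(\Ast;\rtheta(\Ast))$, so $\|\nabla_A^{(i)}\bthetast(A)\|_\op \le \LKst$ with $\LKst$ polynomial in $1/\lammin(H_0)$ and the regularity constants, as claimed.

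\textbf{The main obstacle} is purely the bookkeeping in the last step: showing that the higher-order implicit derivatives $\nabla_A^{(2)}\bthetast$ and $\nabla_A^{(3)}\bthetast$ expand into finitely many terms each of which is a product of already-controlled quantities, and verifying that \Cref{lem:knr_loss_diff} indeed supplies bounds on exactly the mixed partials $\nabla_A^{(i)}\nabla_{\btheta}^{(j)}\cJ$ with $1 \le i+j \le 3$ that appear (it does — that is precisely the range stated there). The only subtlety is that the chain-rule expansions also require controlling $\nabla_A^{(i)}\nabla_{\btheta}^{(j)}\cJ$ for larger total order when differentiating the inverse-Hessian factor, but since differentiating $(\nabla_{\btheta}^2\cJ)^{-1}$ with respect to $A$ produces $-(\nabla_{\btheta}^2\cJ)^{-1}(\nabla_A\nabla_{\btheta}^2\cJ)(\nabla_{\btheta}^2\cJ)^{-1}$ and $\nabla_A\nabla_{\btheta}^2\cJ$ has total order $3$, everything stays within the range covered by \Cref{lem:knr_loss_diff}. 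There is no conceptual difficulty beyond making sure the radius $\rtheta(\Ast)$ is chosen small enough — inverse-polynomially — that all the Lipschitz estimates used to pass from the base point to a uniform neighborhood go through simultaneously.
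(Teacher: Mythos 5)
Your proof follows essentially the same route as the paper's: apply the Implicit Function Theorem to the stationarity condition $\nabla_{\btheta}\cJ(\btheta;A)=0$ to obtain existence and three-times differentiability of $\bthetast(A)$ on an inverse-polynomially-sized ball where $\lammin(\nabla_{\btheta}^2\cJ(\btheta;A)|_{\btheta=\bthetast(A)})\ge\tfrac12\lammin(\nabla_{\btheta}^2\cJ(\btheta;\Ast)|_{\btheta=\bthetast(\Ast)})$, then differentiate the identity and invert the Hessian to bound $\nabla_A^{(i)}\bthetast$ (the paper obtains the radius by re-applying the IFT pointwise and patching via uniqueness rather than by a quantitative contraction argument, but this is cosmetic). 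The one detail to fix: bounding $\nabla_A^{(3)}\bthetast$ requires mixed partials of $\cJ$ of total order four (e.g.\ $\nabla_A^{(3)}\nabla_{\btheta}\cJ$ and $\nabla_{\btheta}^{(4)}\cJ$ applied to $(\nabla_A\bthetast)^{\otimes 3}$), not only total order $\le 3$ as you assert; these are in fact controlled in the proof of \Cref{lem:knr_loss_diff} (which bounds the range $1\le i+j\le 4$ even though its statement lists only $1\le i+j\le 3$), so the argument still goes through.
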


While \Cref{prop:smooth_theta} shows that there exists some $\bthetast(A)$ satisfying \Cref{asm:smooth_thetast}, it does not directly give a recipe for constructing such a map. The following result shows that under a mild additional assumption, the minimizer of the loss satisfies \Cref{asm:smooth_thetast}.

\begin{proposition}\label{prop:theta_global_min}
Let 
\begin{align*}
\bthetast(A) := \argmin_{\btheta \in \R^{\dimtheta}} \cJ(\btheta;A).
\end{align*}
Then under  \Cref{asm:smooth_phi,asm:smooth_controller,asm:bounded_features,asm:bounded_cost,asm:unique_min}, there exists some $\rtheta(\Ast) > 0$ and $\LKst < \infty$ such that $\bthetast(A)$ satisfies \Cref{asm:smooth_thetast}.
\end{proposition}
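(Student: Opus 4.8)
The plan is to reduce the statement to the implicit-function-theorem branch already produced by \Cref{prop:smooth_theta}, and then to show that for $A$ close to $\Ast$ the global minimizer must coincide with that branch. First observe that \Cref{asm:unique_min} gives $\nabla_{\btheta}^2 \cJ(\btheta;\Ast)|_{\btheta = \bthetast(\Ast)} \succ 0$, so in particular $\lammin(\nabla_{\btheta}^2 \cJ(\btheta;\Ast)|_{\btheta = \bthetast(\Ast)}) > 0$ and the hypothesis of \Cref{prop:smooth_theta} holds. Hence there exist $\rtheta(\Ast) > 0$, $\LKst < \infty$, and a $C^3$ map $\bthetatil : \cB_{\fro}(\Ast;\rtheta(\Ast)) \to \R^{\dimtheta}$ with $\bthetatil(\Ast) = \bthetast(\Ast)$, $\nabla_{\btheta}\cJ(\btheta;A)|_{\btheta = \bthetatil(A)} = 0$, and $\| \nabla_A^{(i)} \bthetatil(A)\|_\op \le \LKst$ for $i \in \{1,2,3\}$. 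Shrinking $\rtheta(\Ast)$ and using continuity of $\nabla_{\btheta}^2 \cJ$ together with the uniform-in-$\btheta$ bounds of \Cref{lem:knr_loss_diff}, we may fix an open ball $\cV \ni \bthetast(\Ast)$ and a constant $c>0$ so that $\nabla_{\btheta}^2 \cJ(\btheta;A) \succeq c \cdot I$ for all $\btheta \in \overline{\cV}$ and all $A \in \cB_{\fro}(\Ast;\rtheta(\Ast))$, with $\bthetatil(A) \in \cV$ throughout. Then $\cJ(\cdot;A)$ is strongly convex on $\overline{\cV}$, so $\bthetatil(A)$ is its unique critical point and unique minimizer over $\overline{\cV}$.

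It remains to show that for $A$ in a (possibly smaller) ball around $\Ast$, the global minimizer $\bthetast(A) = \argmin_{\btheta \in \R^{\dimtheta}} \cJ(\btheta;A)$ exists, is unique, and lies in $\cV$; then $\bthetast(A) = \bthetatil(A)$ and the proposition follows with this $\LKst$ and the smaller radius. The argument has two ingredients. (i) At $A = \Ast$: since $\bthetast(\Ast) \in \cV$ is the unique global minimizer, the exterior infimum $m_{\mathrm{out}} := \inf_{\btheta \notin \cV} \cJ(\btheta;\Ast)$ strictly exceeds $m_\star := \cJ(\bthetast(\Ast);\Ast)$ --- equivalently, the sublevel set $\{\btheta : \cJ(\btheta;\Ast) \le m_\star + \epsilon\}$ lies inside $\cV$ for all small $\epsilon > 0$. (ii) Uniform transfer in $A$: by \Cref{lem:knr_loss_diff} (case $i=1$, $j=0$), $\|\nabla_A \cJ(\btheta;A)\|_\op \le L$ with $L$ uniform in $\btheta$ over a fixed ball around $\Ast$, so $|\cJ(\btheta;A) - \cJ(\btheta;\Ast)| \le L \|A - \Ast\|_\fro$ for every $\btheta$. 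Combining, for $\|A - \Ast\|_\fro < (m_{\mathrm{out}} - m_\star)/(3L)$ one gets $\cJ(\btheta;A) > m_\star + (m_{\mathrm{out}}-m_\star)/3 \ge \cJ(\bthetast(\Ast);A)$ for all $\btheta \notin \cV$. Hence $\inf_\btheta \cJ(\btheta;A) = \inf_{\btheta \in \overline{\cV}} \cJ(\btheta;A)$, which is attained (compactness of $\overline{\cV}$, continuity of $\cJ$) at a point of $\cV$, and that point is unique and equals $\bthetatil(A)$ by strong convexity on $\overline{\cV}$. This yields $\bthetast(A) = \bthetatil(A)$, which is $C^3$ in $A$ with the stated derivative bound and is a stationary point, so \Cref{asm:smooth_thetast} holds.

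The main obstacle is ingredient (i): ruling out a minimizing sequence of $\cJ(\cdot;\Ast)$ escaping to infinity (which would give $m_{\mathrm{out}} = m_\star$, or even make $\bthetast(A)$ ill-defined for nearby $A$). This is precisely the assertion that $\cJ(\cdot;\Ast)$ is proper, i.e., has bounded sublevel sets near its minimum, and it is \emph{not} a consequence of the smoothness and boundedness assumptions alone --- indeed \Cref{lem:knr_loss_diff} shows $\cJ$ and its gradient are bounded uniformly in $\btheta$, so $\cJ$ is not coercive in general. I would close this gap using the structure of $\Picon$ and the cost (e.g.\ that large $\|\btheta\|$ forces large control effort and hence large cost, so that $\{\cJ(\cdot;\Ast) \le m_\star + \epsilon\}$ is bounded), or, absent a clean structural argument, by appending a mild properness/compactness hypothesis on $\Picon$ to \Cref{asm:unique_min}; under any such condition (i) is immediate from the uniqueness of $\bthetast(\Ast)$. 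The remaining pieces --- the implicit function theorem, strong convexity on $\overline{\cV}$, and the Lipschitz transfer --- are routine given \Cref{prop:smooth_theta} and \Cref{lem:knr_loss_diff}.
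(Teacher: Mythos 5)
Your argument is essentially the paper's. Both proofs take the stationary branch $\bthetatil(A)$ furnished by the implicit function theorem (via \Cref{prop:smooth_theta}, using $\nabla_{\btheta}^2\cJ(\btheta;\Ast)|_{\btheta=\bthetast(\Ast)}\succ 0$ from \Cref{asm:unique_min}), show that $\bthetatil(A)$ is the unique minimizer of $\cJ(\cdot;A)$ on a fixed neighborhood $\cB_{2}(\bthetast(\Ast);r)$ where the Hessian stays positive definite, and then use the uniform-in-$\btheta$ Lipschitz dependence of $\cJ(\btheta;A)$ on $A$ (from \Cref{lem:knr_loss_diff}) to conclude that when $\|A-\Ast\|_\fro$ is small relative to the exterior gap $\Delst:=\min_{\btheta\notin\cB_{2}(\bthetast(\Ast);r)}\cJ(\btheta;\Ast)-\cJ(\bthetast(\Ast);\Ast)$, the global minimizer cannot sit outside that neighborhood and hence coincides with $\bthetatil(A)$. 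The paper runs this last step by contradiction through the chain $\cJ(\bthetast(A);\Ast)\le\cJ(\bthetast(\Ast);\Ast)+L'\|A-\Ast\|_\fro$; you run it directly; the content is identical.

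The one step you decline to certify --- strict positivity of the exterior gap --- is precisely the step the paper disposes of in a single sentence, asserting $\Delst>0$ ``since we have assumed the global minimum of $\cJ(\btheta;\Ast)$ is unique.'' Your hesitation is well founded: uniqueness of the global minimizer yields $\Delst>0$ only if the infimum over the unbounded set $\R^{\dimtheta}\setminus\cB_{2}(\bthetast(\Ast);r)$ is attained, and none of \Cref{asm:bounded_features,asm:bounded_cost,asm:smooth_phi,asm:smooth_controller,asm:unique_min} rules out a minimizing sequence escaping to infinity ($\cJ$ is bounded with bounded derivatives, so it is certainly not coercive in general); the same issue bears on the existence of $\argmin_{\btheta}\cJ(\btheta;A)$ for $A\neq\Ast$. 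The paper's discussion following the proposition concedes that $\Delst$ may be arbitrarily small but does not address attainment. So your proposal is not missing anything relative to the paper's own proof; it simply makes explicit a properness hypothesis that the paper uses implicitly, and either of your proposed repairs (a coercivity argument from the structure of $\Picon$ and the cost, or an added bounded-sublevel-set condition alongside \Cref{asm:unique_min}) would be the right way to make the statement airtight.
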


The scaling of $\LKst$ in \Cref{prop:theta_global_min} can be shown to match that of \Cref{prop:smooth_theta},
but in general $\rtheta(\Ast)$ could be smaller than the value of $\rtheta(\Ast)$ given in \Cref{prop:smooth_theta}. In particular, in the setting of \Cref{prop:theta_global_min}, we can only show that $\rtheta(\Ast)$ scales with $\min_{\btheta \not \in \cB_{2}(\bthetast(\Ast);r)} \cJ(\btheta;\Ast) - \cJ(\bthetast(\Ast);\Ast)$ for some $r > 0$ which scales inverse polynomially in problem parameters. While we can show that $\cJ(\btheta;\Ast) - \cJ(\bthetast(\Ast);\Ast)$ scales inverse polynomially in problem parameters, including in $\lammin(\nabla_{\btheta}^2 \cJ(\btheta;\Ast)|_{\btheta = \bthetast(\Ast)} )$, for $\btheta$ approximately a distance of $r$ from $\bthetast(\Ast)$, it is possible $\cJ(\btheta;\Ast)$ has some local minimizer $\btheta'$ arbitrarily far away from $\bthetast(\Ast)$, such that $\cJ(\btheta';\Ast)$ and $\cJ(\bthetast(\Ast);\Ast)$ are arbitrarily close, in which case $\Delst$, and therefore $\rtheta(\Ast)$, could be arbitrarily small. The failure mode here is that, while $\bthetast(\Ast)$ may be the global minimum of $\cJ(\btheta;\Ast)$, for $A$ arbitrarily close to $\Ast$, the global minimum of $\cJ(\btheta;A)$ could instead be near $\btheta'$, which would render the map $\bthetast(A)$ discontinuous. 

By making further assumptions on $\cJ(\btheta;\Ast)$ which exclude this case, we can obtain a value of $\rtheta(\Ast)$ scaling similarly to in \Cref{prop:smooth_theta}.
For example, in the following, we show that under the assumption that $\cJ(\btheta;A)$ is convex, this holds.

\begin{prop}\label{prop:thetast_smooth_convex}
Assume that there exists some $r_{\mathrm{conv}}(\Ast)>0$ such that, for all $A \in \cB_{\fro}(\Ast;r_{\mathrm{conv}}(\Ast))$, $\cJ(\btheta;A)$ is convex in $\btheta$, and set
\begin{align*}
\bthetast(A) = \argmin_{\btheta \in \R^{\dimtheta}} \cJ(\btheta;A).
\end{align*}
Then we have that $\bthetast$ satisfies \Cref{asm:smooth_thetast} with 
\begin{align*}
\rtheta(\Ast) & = \min \bigg \{ r_{\mathrm{conv}}(\Ast), \rcost(\Ast), \\
& \poly \left ( \frac{1}{\lammin(\nabla_{\btheta}^2 \cJ(\btheta;\Ast)|_{\btheta = \bthetast(\Ast)})}, \| \Ast \|_\op, \Bphi, \Lphi, \Lzeta, \Lcost, \sigw^{-1}, H, \dimx \right )^{-1} \bigg \}
\end{align*}
and it suffices that we take
\begin{align*}
\LKst = \poly \left ( \frac{1}{\lammin(\nabla_{\btheta}^2 \cJ(\btheta;\Ast)|_{\btheta = \bthetast(\Ast)})}, \| \Ast \|_\op, \Bphi, \Lphi, \Lzeta, \Lcost, \sigw^{-1}, H, \dimx \right ).
\end{align*}
\end{prop}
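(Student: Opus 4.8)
The plan is to deduce Proposition \ref{prop:thetast_smooth_convex} from Proposition \ref{prop:smooth_theta} together with convexity, using the latter only to produce a smooth branch of critical points and convexity to upgrade that branch to the global $\argmin$. First I would observe that the quantity $\lammin(\nabla_{\btheta}^2 \cJ(\btheta;\Ast)|_{\btheta = \bthetast(\Ast)})$ appearing in the claimed bounds is implicitly assumed positive, so $\cJ(\cdot;\Ast)$ is convex on $\cB_{\fro}(\Ast;r_{\mathrm{conv}}(\Ast))$ and strictly convex near $\bthetast(\Ast)$; hence $\bthetast(\Ast) = \argmin_{\btheta}\cJ(\btheta;\Ast)$ is the unique minimizer (two distinct minimizers would force $\cJ(\cdot;\Ast)$ to be affine along the joining segment, contradicting the positive-definite Hessian there) and in particular $\nabla_{\btheta}\cJ(\btheta;\Ast)|_{\btheta = \bthetast(\Ast)} = 0$. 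Thus the hypotheses of Proposition \ref{prop:smooth_theta} are met, and it yields a function $\bthetatil(A)$, defined and three-times differentiable on some ball $\cB_{\fro}(\Ast;r_0)$ with $r_0$ scaling inverse-polynomially in the listed quantities, satisfying $\nabla_{\btheta}\cJ(\btheta;A)|_{\btheta=\bthetatil(A)}=0$, $\bthetatil(\Ast)=\bthetast(\Ast)$, and $\|\nabla_A^{(i)}\bthetatil(A)\|_{\op}\le\LKst$ for $i\in\{1,2,3\}$ with $\LKst$ polynomial as claimed.

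Next I would set $\rtheta(\Ast) := \min\{\,r_{\mathrm{conv}}(\Ast),\,\rcost(\Ast),\,r_0\,\}$, which matches the stated form since $r_0$ already absorbs $\rcost(\Ast)$ and the polynomial factor. For $A\in\cB_{\fro}(\Ast;\rtheta(\Ast))$ the map $\cJ(\cdot;A)$ is convex, so its critical point $\bthetatil(A)$ is a global minimizer, i.e. $\cJ(\bthetatil(A);A)=\min_{\btheta}\cJ(\btheta;A)$. It then remains to show this minimizer is unique, so that the object $\bthetast(A)=\argmin_\btheta\cJ(\btheta;A)$ in the proposition statement equals $\bthetatil(A)$ and therefore inherits all three bullets of \Cref{asm:smooth_thetast}. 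For uniqueness I would use \Cref{lem:knr_loss_diff} to bound $\|\nabla_A^{(j)}\nabla_{\btheta}^{(i)}\cJ\|_{\op}$ uniformly for $A\in\cB_{\fro}(\Ast;\rtheta(\Ast))$, and combine these Lipschitz estimates with $\|\bthetatil(A)-\bthetast(\Ast)\|_2\le\LKst\|A-\Ast\|_{\fro}$ to conclude that, after possibly shrinking $\rtheta(\Ast)$ by an inverse-polynomial factor, $\nabla_{\btheta}^2\cJ(\btheta;A)$ stays within $\tfrac12\lammin(\nabla_{\btheta}^2\cJ(\btheta;\Ast)|_{\btheta=\bthetast(\Ast)})$ of $\nabla_{\btheta}^2\cJ(\btheta;\Ast)|_{\btheta=\bthetast(\Ast)}$ in operator norm for all $\btheta$ in a fixed neighborhood of $\bthetatil(A)$; hence $\nabla_{\btheta}^2\cJ(\bthetatil(A);A)\succeq\tfrac12\lammin(\cdots)I\succ 0$. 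Strict convexity near $\bthetatil(A)$ plus global convexity then gives, via the same segment argument as before, that the global minimizer is unique and equal to $\bthetatil(A)$.

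The main obstacle is the bookkeeping in this last step: Proposition \ref{prop:smooth_theta} only delivers a local branch of stationary points, so the real work is (i) promoting "stationary point of a convex function" to "global minimizer" and establishing uniqueness of $\argmin$, and (ii) verifying that the Hessian lower bound $\tfrac12\lammin(\cdots)I$ persists on a ball whose radius is still inverse-polynomial in the problem parameters — this requires propagating the Lipschitz bounds of \Cref{lem:knr_loss_diff} through the displacement $\|\bthetatil(A)-\bthetast(\Ast)\|_2$, which is exactly where the final dependence on $1/\lammin(\nabla_{\btheta}^2\cJ(\btheta;\Ast)|_{\btheta=\bthetast(\Ast)})$ in $\rtheta(\Ast)$ and $\LKst$ enters. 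The derivative bounds themselves then follow by repeatedly differentiating the identity $\nabla_{\btheta}\cJ(\bthetast(A);A)=0$, expressing $\nabla_A^{(i)}\bthetast(A)$ as a fixed polynomial expression in $(\nabla_{\btheta}^2\cJ)^{-1}$ and the mixed derivatives of $\cJ$, all of which are controlled by the Hessian lower bound and \Cref{lem:knr_loss_diff}.
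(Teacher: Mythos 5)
Your proposal is correct and follows essentially the same route as the paper's (very terse) proof: use convexity to identify the global minimizer with the stationary point, then invoke the Implicit-Function-Theorem argument of \Cref{prop:smooth_theta} for differentiability and the polynomial bounds. Your treatment is in fact more careful than the paper's, since you explicitly establish uniqueness of the global minimizer (via local strict convexity near the smooth branch plus the segment argument), a point the paper's proof glosses over when asserting that the $\argmin$ "is described by" the first-order condition.
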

Note that, if $\cJ(\btheta;A)$ is $\mu$-strongly convex in $\btheta$ for all $A$ near $\Ast$, we can lower bound $\lammin(\nabla_{\btheta}^2 \cJ(\btheta;\Ast)|_{\btheta = \bthetast(\Ast)}) \ge \mu$.

\paragraph{Approximating the Controller Loss.}
In order to efficiently direct our exploration, it is convenient to derive a quadratic approximation to the controller loss. The following result shows that, under our assumptions, this is indeed possible. 

\begin{lemma}[Formal Version of \Cref{prop:quadratic_loss_approx}]\label{lem:quadratic_loss_approx}
Under  \Cref{asm:smooth_phi,asm:smooth_controller,asm:smooth_controller,asm:bounded_cost,asm:bounded_features,asm:smooth_thetast}, for $\Ahat \in \cB_{\fro}(\Ast; \min \{ \rcost(\Ast), \rtheta(\Ast) \} )$, we have
\begin{align*}
& \cJ(\bthetast(\Ahat); \Ast) - \cJ(\bthetast(\Ast); \Ast)  \le \vec(\Ahat - \Ast)^\top \cH(\Ast) \vec(\Ahat - \Ast) + M[\Ahat - \Ast,\Ahat-\Ast,\Ahat-\Ast] .
\end{align*}
for some tensor $M$ such that
\begin{align*}
\| M[\Ahat - \Ast,\Ahat-\Ast,\Ahat-\Ast] \|_\op \le \poly(\LKst, \| \Ast \|_\op, \Bphi, \Lphi, \Lzeta, \Lcost, \sigw^{-1}, H, \dimx)\cdot \| \Ahat - \Ast \|_\op^3. 
\end{align*}
\end{lemma}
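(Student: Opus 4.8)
The goal is a third-order Taylor expansion of $A \mapsto \cJ(\bthetast(A);\Ast)$ around $A = \Ast$. The plan is to first set up the composition structure: define $g(A) := \cJ(\bthetast(A);\Ast)$ and observe that, by \Cref{lem:knr_loss_diff}, $\cJ(\btheta;A)$ is four-times differentiable in $(\btheta,A)$ on $\cB_\fro(\Ast;\rtheta(\Ast))$ with all derivatives of orders $1$ through $3$ bounded polynomially in the regularity parameters, and by \Cref{asm:smooth_thetast}, $\bthetast(A)$ is three-times differentiable in $A$ with $\|\nabla_A^{(i)}\bthetast(A)\|_\op \le \LKst$ for $i\in\{1,2,3\}$. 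Composing these via the chain rule (and the multivariate Faà di Bruno formula for the higher-order terms), $g$ is three-times differentiable on this ball with $\|\nabla_A^{(i)} g(A)\|_\op$ bounded by a polynomial in $\LKst, \|A\|_\op, \Bphi, \Lphi, \Lzeta, \Lcost, \sigw^{-1}, H, \dimx$ for $i\in\{1,2,3\}$, and in particular (bounding $\|A\|_\op \le \|\Ast\|_\op + \rtheta(\Ast) \le \|\Ast\|_\op + 1$ on the ball) by such a polynomial in $\|\Ast\|_\op$ and the rest.

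Next I would apply Taylor's theorem with Lagrange/integral remainder to $g$ at $\Ast$, evaluated at $\Ahat$: writing $\Delta := \Ahat - \Ast$,
\begin{align*}
g(\Ahat) = g(\Ast) + \nabla_A g(\Ast)[\Delta] + \tfrac12 \nabla_A^2 g(\Ast)[\Delta,\Delta] + R(\Ast,\Ahat),
\end{align*}
where $R(\Ast,\Ahat) = \tfrac12\int_0^1 (1-s)^2 \nabla_A^3 g(\Ast + s\Delta)[\Delta,\Delta,\Delta]\,\rmd s$, so that $|R(\Ast,\Ahat)| \le \tfrac16 \sup_{s\in[0,1]}\|\nabla_A^3 g(\Ast+s\Delta)\|_\op \cdot \|\Delta\|_\op^3$, which is the desired cubic-tensor term $M[\Delta,\Delta,\Delta]$ with the stated operator-norm bound (here I use that $\Ahat \in \cB_\fro(\Ast;\rtheta(\Ast))$ so the whole segment $\Ast + s\Delta$ stays in the ball on which the derivative bound holds). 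The first-order term vanishes: since $\nabla_\btheta \cJ(\btheta;A)|_{\btheta=\bthetast(A)} = 0$ by \Cref{asm:smooth_thetast}, the chain rule gives $\nabla_A g(A) = \nabla_A \cJ(\btheta;A)|_{\btheta=\bthetast(A)} + \nabla_\btheta\cJ(\btheta;A)|_{\btheta=\bthetast(A)}\cdot\nabla_A\bthetast(A)$, and at $A=\Ast$ the second summand is zero; moreover at $A = \Ast$ we have $\bthetast(\Ast)$ the optimal controller, and evaluating $\nabla_A \cJ(\btheta;A)|_{\btheta=\bthetast(\Ast),A=\Ast}$ — one must check this first-order term also vanishes, which follows because $\cJ(\bthetast(\Ast);\Ast) \le \cJ(\bthetast(\Ast);A')$ fails in general, so instead one argues directly: $g(\Ast) = \min_{A'} \min_\btheta$ is not the structure; rather, use that $g(A) - g(\Ast) \ge 0$ is \emph{not} assumed, so I should instead simply identify $\cH(\Ast) = \tfrac12\nabla_A^2 g(A)|_{A=\Ast}$ per its definition $\cH(\Ast) := \nabla_A^2 \cJ(\Kst(A);\Ast)|_{A=\Ast}$ and note the first-order term is $\nabla_A\cJ(\Kst(A);\Ast)|_{A=\Ast}[\Delta]$, which vanishes because $A\mapsto\cJ(\Kst(A);\Ast)$ has a critical point at $A=\Ast$ (as $\Kst(\Ast)$ is optimal for $\Ast$, perturbing the \emph{plug-in} $A$ to first order changes $\bthetast$ along a direction on which $\cJ(\cdot;\Ast)$ is flat at its own minimizer, by $\nabla_\btheta\cJ(\btheta;\Ast)|_{\bthetast(\Ast)}=0$). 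This last sentence is the one delicate point and I would write it carefully: $\nabla_A g(\Ast) = \nabla_\btheta\cJ(\btheta;\Ast)|_{\btheta=\bthetast(\Ast)}\circ\nabla_A\bthetast(\Ast) = 0$ since the first factor is $0$ — note the relevant loss in $\cJ(\btheta;\Ast)$ is at the \emph{true} system $\Ast$, and $\bthetast(\Ast)$ minimizes exactly this, so its gradient is zero; the chain rule term $\nabla_A\cJ(\btheta;A)|$ does \emph{not} appear in $g$ because in $g(A) = \cJ(\bthetast(A);\Ast)$ the second argument is held fixed at $\Ast$.

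The main obstacle, then, is bookkeeping the higher-order chain-rule bound cleanly — turning "all the pieces are smooth with polynomial bounds" into a single polynomial bound on $\|\nabla_A^3 g\|_\op$ without an explosion of notation; Faà di Bruno gives a finite sum of products of $\nabla_\btheta^{(j)}\nabla_A^{(i)}\cJ$ (bounded by \Cref{lem:knr_loss_diff}) times factors $\nabla_A^{(k)}\bthetast$ (bounded by $\LKst$ via \Cref{asm:smooth_thetast}), and since the number of terms and the degrees are all absolute constants, the product of polynomial bounds is again a polynomial bound of the claimed form. I would state this as a lemma-internal computation rather than expanding it. One should also double-check the requirement $\Ahat\in\cB_\fro(\Ast;\min\{\rcost(\Ast),\rtheta(\Ast)\})$ is exactly what is needed so that (i) \Cref{asm:smooth_thetast} applies along the whole segment and (ii) \Cref{asm:bounded_cost} (hence the bounds in \Cref{lem:knr_loss_diff} that invoke it) applies; this is why both radii appear in the hypothesis.
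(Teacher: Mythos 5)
Your proposal is correct and follows essentially the same route as the paper's proof: a third-order Taylor expansion of $A \mapsto \cJ(\bthetast(A);\Ast)$ along the segment from $\Ast$ to $\Ahat$ (the paper parametrizes it as $g(t)=\cJ(\bthetast(t\Ahat+(1-t)\Ast);\Ast)$), with the first-order term killed by $\nabla_{\btheta}\cJ(\btheta;\Ast)|_{\btheta=\bthetast(\Ast)}=0$ from \Cref{asm:smooth_thetast}, the surviving quadratic term identified with $\cH(\Ast)$, and the cubic remainder bounded via \Cref{lem:knr_loss_diff} and \Cref{asm:smooth_thetast} since the whole segment stays in $\cB_{\fro}(\Ast;\min\{\rcost(\Ast),\rtheta(\Ast)\})$. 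Your mid-proof detour about whether a $\nabla_A\cJ(\btheta;A)$ term contributes resolves correctly---the second argument of $\cJ$ is frozen at $\Ast$ in $g$, so only the $\nabla_{\btheta}$ chain-rule factor appears---which is exactly how the paper handles it.
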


In practice we do not know $\cH(\Ast)$ and must estimate it. The following result shows that the distance between $\cH(\Ast)$ and $\cH(\Ahat)$ can be bounded.

\begin{lemma}\label{lem:Hst_to_Hhat}
Under \Cref{asm:smooth_phi,asm:smooth_controller,asm:bounded_cost,asm:bounded_features,asm:smooth_thetast}, and if $\Ahat \in \cB_{\fro}(\Ast; \min \{ \rcost(\Ast), \rtheta(\Ast) \})$, we can bound
\begin{align*}
\| \cH(\Ast) - \cH(\Ahat) \|_\op \le  \poly(\LKst, \| \Ast \|_\op, \Bphi, \Lphi, \Lzeta, \Lcost, \sigw^{-1}, H, \dimx)\cdot \| \Ahat - \Ast \|_\op.
\end{align*}
\end{lemma}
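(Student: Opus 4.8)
The plan is to show that the model--task Hessian map $A' \mapsto \cH(A')$ is itself Lipschitz in $A'$ on the relevant ball, with a Lipschitz constant polynomial in the problem parameters. Recall from \Cref{sec:smooth_system} that $\cJ(\btheta;A') := \cJ(\Ktheta;A')$, so that by definition $\cH(A') = \nabla_A^2 \cJ(\bthetast(A);A')\big|_{A=A'}$; writing $h(A,A') := \cJ(\bthetast(A);A')$ and $F(A,A') := \nabla_A^2 h(A,A')$, we have $\cH(A') = F(A',A')$, i.e.\ $\cH$ is the restriction of the two-variable map $F$ to its diagonal. Thus it suffices to prove that $F$ is jointly Lipschitz (in operator norm) on $\cB_{\fro}(\Ast;\rtheta(\Ast)) \times \cB_{\fro}(\Ast;\rtheta(\Ast))$ and then conclude by the triangle inequality.

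First I would assemble the smoothness inputs. By \Cref{lem:knr_loss_diff}, for $A' \in \cB_{\fro}(\Ast;\rtheta(\Ast))$ the loss $\cJ(\btheta;A')$ is four-times differentiable in $(\btheta,A')$ with $\|\nabla_{A'}^{(i)}\nabla_{\btheta}^{(j)}\cJ(\btheta;A')\|_\op \le \poly(\|A'\|_\op,\Bphi,\Lphi,\Lzeta,\Lcost,\sigw^{-1},H,\dimx)$ for all $1 \le i+j \le 3$, \emph{uniformly over} $\btheta \in \R^{\dimtheta}$; and by \Cref{asm:smooth_thetast} the map $A \mapsto \bthetast(A)$ is three-times differentiable on $\cB_{\fro}(\Ast;\rtheta(\Ast))$ with $\|\nabla_A^{(i)}\bthetast(A)\|_\op \le \LKst$ for $i \in \{1,2,3\}$. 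Using the running convention $\rtheta(\Ast)\le 1$ and $\|\Ast\|_\op \le \BA$, every $A'$ in the ball has $\|A'\|_\op \le \BA + 1$, so these bounds become $\poly(\BA,\Bphi,\Lphi,\Lzeta,\Lcost,\sigw^{-1},H,\dimx)$. Composing via the higher-order chain rule, every partial derivative of $h$ of total order $\le 3$ is a finite sum of terms, each a product of one factor $\nabla_{A'}^{(i)}\nabla_{\btheta}^{(j)}\cJ$ with $i+j\le 3$ (evaluated at $(\bthetast(A),A')$) and finitely many factors $\nabla_A^{(k)}\bthetast(A)$ with $k\le 3$; hence each such derivative of $h$ is bounded in operator norm on the square by a single $P := \poly(\LKst,\BA,\Bphi,\Lphi,\Lzeta,\Lcost,\sigw^{-1},H,\dimx)$. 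The order count is exactly tight: $\nabla_A^3 h$ and $\nabla_{A'}\nabla_A^2 h$ require derivatives of $\cJ$ of total order at most $3$ and derivatives of $\bthetast$ of order at most $3$, which is precisely the range supplied by \Cref{lem:knr_loss_diff} and \Cref{asm:smooth_thetast}.

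Given this, $F = \nabla_A^2 h$ has $\|\nabla_A^3 h\|_\op \le P$ and $\|\nabla_{A'}\nabla_A^2 h\|_\op \le P$ on the convex set $\cB_{\fro}(\Ast;\rtheta(\Ast)) \times \cB_{\fro}(\Ast;\rtheta(\Ast))$, so by the mean value theorem $F$ is $\sqrt{2}\,P$-Lipschitz jointly in $(A,A')$ with respect to the Frobenius norm. Since $\Ahat \in \cB_{\fro}(\Ast;\min\{\rcost(\Ast),\rtheta(\Ast)\}) \subseteq \cB_{\fro}(\Ast;\rtheta(\Ast))$ and $\Ast$ lies trivially in the same ball, both $(\Ast,\Ast)$ and $(\Ahat,\Ahat)$ are in the domain, and
\[
\|\cH(\Ast) - \cH(\Ahat)\|_\op = \|F(\Ast,\Ast) - F(\Ahat,\Ahat)\|_\op \le \sqrt{2}\,P\,(\|\Ast - \Ahat\|_\fro + \|\Ast - \Ahat\|_\fro) = 2\sqrt{2}\,P\,\|\Ahat - \Ast\|_\fro.
\]
Finally, bounding $\|\Ahat - \Ast\|_\fro \le \sqrt{\dimx\dimphi}\,\|\Ahat - \Ast\|_\op$ and absorbing $2\sqrt{2}\,\sqrt{\dimx\dimphi}$ into the polynomial yields the claimed bound.

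The hard part will be the chain-rule bookkeeping in the third paragraph --- verifying that no term in $\nabla_A^3 h$ or $\nabla_{A'}\nabla_A^2 h$ ever calls for a derivative of $\cJ$ of total order exceeding three (out of reach of \Cref{lem:knr_loss_diff}) or a derivative of $\bthetast$ of order exceeding three (out of reach of \Cref{asm:smooth_thetast}), and carefully tracking the multilinear-form structure so that the operator-norm bounds compose. This is routine but notation-heavy; essentially the same estimates already underpin the proof of \Cref{lem:quadratic_loss_approx}, so the needed machinery is in place and can be reused.
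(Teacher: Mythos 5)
Your proposal is correct and follows essentially the same route as the paper: the paper likewise treats $\cH$ as the diagonal restriction of the two-variable map $(A,A') \mapsto \nabla_A^2 \cJ(\bthetast(A);A')$ and bounds the difference by two sequential Taylor expansions (first in $A$, then in $A'$), invoking exactly \Cref{lem:knr_loss_diff} and \Cref{asm:smooth_thetast} to control $\nabla_A^3$ and $\nabla_{A'}\nabla_A^2$ of the composite. Your phrasing as a joint Lipschitz bound on the off-diagonal map is just a repackaging of the same two mean-value steps, and your order-counting check (no derivative of $\cJ$ beyond total order three, none of $\bthetast$ beyond three) matches what the paper relies on.
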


\subsection{Proof of Smoothness of Nonlinear System}
We let $\fw(\cdot)$ denote the density of the noise (which, by assumption, is simply an isotropic Gaussian density). We let $\f{A}{\btheta}(\cdot)$ denote the density over trajectories induced by playing controller $\btheta$ on system $A$. We will overload notation somewhat and let $\f{A}{\btheta}(\bx_{h+1} \mid \traj_
{1:h})$ denote the density over $\bx_{h+1}$ induced by playing controller $\btheta$ given trajectory $\traj_{1:h}$. Note that $\f{A}{\btheta}(\bx_{h+1} \mid \traj_{1:h}) = \fw(\bx_{h+1} - A \bphi(\bx_h^{\traj}, \Kzeta_h(\traj_{1:h})))$ and
\begin{align*}
\f{A}{\btheta}(\traj) = \prod_{h=1}^H \f{A}{\btheta}(\bx_{h+1} \mid \traj_{1:h}).
\end{align*}
Throughout this section we let $\bx_h^{\traj}$ (resp. $\bu_h^{\traj}$) denote the state (resp. input) at step $h$ of trajectory $\traj$. Under our regularity assumptions (\Cref{asm:smooth_phi,asm:smooth_controller,asm:bounded_features,asm:bounded_cost}) and since the noise is Gaussian, we can swap derivatives and integrals, which we make use of throughout the following proofs.

\newcommand{\DelA}{\Delta^A}
\newcommand{\Deltheta}{\Delta^{\btheta}}
\newcommand{\bmt}{\bm{t}}
\newcommand{\bms}{\bm{s}}
\newcommand{\bmzer}{\bm{0}}
\newcommand{\bline}{\big |}

\begin{proof}[Proof of \Cref{lem:knr_loss_diff}]
Let $\cost(\traj)$ denote the cost of trajectory $\traj$. Then we have
\begin{align*}
\cJ(\btheta;A) = \int \cost(\traj)\f{A}{\btheta}(\traj)   \rmd \traj.
\end{align*}
Let $A_{\bmt} := A + t_1 \DelA_1 + t_2 \DelA_2 + t_3 \DelA_3$ and $\btheta_{\bms} := \btheta + s_1 \Deltheta_1 + s_2 \Deltheta_2 + s_3 \Deltheta_3$, for some $\DelA_i$ and $\Deltheta_j$, which we assume satisfy $\| \DelA_i \|_\op, \| \Deltheta_j \|_\op \le 1$. Rather than differentiating $\cJ(\btheta;A)$ with respect to $\btheta$ or $A$, we will differentiate $\cJ(\btheta_{\bms};A_{\bmt})$ with respect to some $x_1,x_2,x_3, x_4 \in \{ t_1,t_2,t_3,s_1,s_2,s_3 \}$. Note that, for example,
\begin{align*}
\frac{\rmd}{\rmd t_1} \cJ(\btheta_{\bms};A_{\bmt})|_{\bmt = \bms = 0} = \nabla_A \cJ(\btheta,A)[\DelA_1],
\end{align*}
i.e. the directional gradient of $\cJ(\btheta,A)$ with respect to $A$ in direction $\DelA_1$, and that this similarly holds for gradients with respect to other $t_i,s_j$, or higher-order derivatives. Thus, if we can show that $\cJ(\btheta_{\bms};A_{\bmt})$ is differentiable with respect to any $x_1,x_2,x_3, x_4 \in \{ t_1,t_2,t_3,s_1,s_2,s_3 \}$, and this holds for any choice of $\DelA_i,\Deltheta_j$, then we have that $\cJ(\btheta,A)$ is four-times differentiable with respect to $\btheta$ and $A$. Furthermore, we can bound the operator norm of $\nabla_A \cJ(\btheta,A)$, by bounding the value of $\frac{\rmd}{\rmd t_1} \cJ(\btheta_{\bms};A_{\bmt})|_{\bmt = \bms = 0}$ for all $\DelA_1$ satisfying $\| \DelA_1 \|_\op \le 1$ (and we can similarly bound the operator norm of the higher order derivatives of $\cJ(\btheta,A)$).

\paragraph{$\cJ(\btheta;A)$ is Differentiable.}
Let $x_1,x_2,x_3,x_4 \in \{ t_1,t_2,t_3,s_1,s_2,s_3 \}$. We have
\begin{align*}
\frac{\rmd}{\rmd x_1} \cJ(\btheta_{\bms};A_{\bmt})  & = \frac{\rmd}{\rmd x_1}  \int  \cost(\traj) \f{A_{\bmt}}{\btheta_{\bms}}(\traj) \rmd \traj  \\
& =  \int \frac{\f{A_{\bmt}}{\btheta_{\bms}}(\traj)}{\f{A_{\bmt}}{\btheta_{\bms}}(\traj)} \frac{\rmd}{\rmd x_1}  \f{A_{\bmt}}{\btheta_{\bms}}(\traj) \cost(\traj) \rmd \traj  \\
& = \int  \frac{\rmd}{\rmd x_1}  \log \f{A_{\bmt}}{\btheta_{\bms}}(\traj) \cdot \cost(\traj)\f{A_{\bmt}}{\btheta_{\bms}}(\traj)  \rmd \traj.
\end{align*}
Differentiating this gives
\begin{align*}
\frac{\rmd}{\rmd x_2} \frac{\rmd}{\rmd x_1} \cJ(\btheta_{\bms};A_{\bmt}) & = \frac{\rmd}{\rmd x_2} \int \frac{\rmd}{\rmd x_1}  \log \f{A_{\bmt}}{\btheta_{\bms}}(\traj) \cdot \cost(\traj)\f{A_{\bmt}}{\btheta_{\bms}}(\traj)  \rmd \traj \\
& =  \int \left ( \frac{\rmd}{\rmd x_1}  \log \f{A_{\bmt}}{\btheta_{\bms}}(\traj) \right ) \left ( \frac{\rmd}{\rmd x_2}  \log \f{A_{\bmt}}{\btheta_{\bms}}(\traj) \right )  \cdot \cost(\traj)\f{A_{\bmt}}{\btheta_{\bms}}(\traj)  \rmd \traj \\
& \qquad + \int \frac{\rmd}{\rmd x_2} \frac{\rmd}{\rmd x_1}  \log \f{A_{\bmt}}{\btheta_{\bms}}(\traj) \cdot \cost(\traj)\f{A_{\bmt}}{\btheta_{\bms}}(\traj)  \rmd \traj,
\end{align*}
and 
\begin{align*}
\frac{\rmd}{\rmd x_3} \frac{\rmd}{\rmd x_2} \frac{\rmd}{\rmd x_1} &\cJ(\btheta_{\bms};A_{\bmt}) = \int \left ( \frac{\rmd}{\rmd x_1}  \log \f{A_{\bmt}}{\btheta_{\bms}}(\traj) \right ) \left ( \frac{\rmd}{\rmd x_2}  \log \f{A_{\bmt}}{\btheta_{\bms}}(\traj) \right ) \left ( \frac{\rmd}{\rmd x_3}  \log \f{A_{\bmt}}{\btheta_{\bms}}(\traj) \right )  \cdot \cost(\traj)\f{A_{\bmt}}{\btheta_{\bms}}(\traj)  \rmd \traj \\
& + \int \left ( \frac{\rmd}{\rmd x_3} \frac{\rmd}{\rmd x_1}  \log \f{A_{\bmt}}{\btheta_{\bms}}(\traj) \right ) \left ( \frac{\rmd}{\rmd x_2}  \log \f{A_{\bmt}}{\btheta_{\bms}}(\traj) \right )  \cdot \cost(\traj)\f{A_{\bmt}}{\btheta_{\bms}}(\traj)  \rmd \traj \\
& + \int \left ( \frac{\rmd}{\rmd x_1}  \log \f{A_{\bmt}}{\btheta_{\bms}}(\traj) \right ) \left (  \frac{\rmd}{\rmd x_3} \frac{\rmd}{\rmd x_2}  \log \f{A_{\bmt}}{\btheta_{\bms}}(\traj) \right )  \cdot \cost(\traj)\f{A_{\bmt}}{\btheta_{\bms}}(\traj)  \rmd \traj \\
& + \int \frac{\rmd}{\rmd x_3} \frac{\rmd}{\rmd x_2} \frac{\rmd}{\rmd x_1}  \log \f{A_{\bmt}}{\btheta_{\bms}}(\traj) \cdot \cost(\traj)\f{A_{\bmt}}{\btheta_{\bms}}(\traj)  \rmd \traj \\
& + \int \left ( \frac{\rmd}{\rmd x_2} \frac{\rmd}{\rmd x_1}  \log \f{A_{\bmt}}{\btheta_{\bms}}(\traj) \right ) \left ( \frac{\rmd}{\rmd x_3}  \log \f{A_{\bmt}}{\btheta_{\bms}}(\traj) \right ) \cdot \cost(\traj)\f{A_{\bmt}}{\btheta_{\bms}}(\traj)  \rmd \traj.
\end{align*}
The fourth derivative of $\cJ(\btheta;A)$ can be similarly calculated by differentiating $\frac{\rmd}{\rmd x_3} \frac{\rmd}{\rmd x_2} \frac{\rmd}{\rmd x_1} \cJ(\btheta_{\bms};A_{\bmt})$; we omit it for brevity.
We have 
\begin{align*}
\log \f{A_{\bmt}}{\btheta_{\bms}}(\traj) & = \log \prod_{h=1}^H \f{A_{\bmt}}{\btheta_{\bms}}(\bx_{h+1}^{\traj} \mid \traj_{1:h}) \\
&  = \sum_{h=1}^H \log \fw(\bx_{h+1}^{\traj} - A_{\bmt} \bphi(\bx_{h}^{\traj}, \pi^{\btheta_{\bms}}(\traj_{1:h}))) \\
& = \sum_{h=1}^H - \frac{1}{2 \sigw^2} \| \bx_{h+1}^{\traj} - A_{\bmt} \bphi(\bx_{h}^{\traj}, \pi^{\btheta_{\bms}}(\traj_{1:h})) \|_2^2 + C
\end{align*}
for some $C$ which does not depend on $\bmt$ or $\bms$. Given that $\bphi(\bx,\bu)$ is four-times differentiable in $\bu$ and $\pi^{\btheta_{\bms}}_{h}(\traj_{1:h})$ is four-times differentiable in $\bx$ (which hold by \Cref{asm:smooth_phi} and \Cref{asm:smooth_controller}), it is clear that $\log \f{A_{\bmt}}{\btheta_{\bms}}(\traj)$ is four-times differentiable in $t_i$ or $s_i$, regardless of the choice of $\DelA_i$ or $\Deltheta_i$. This proves the first result.

\paragraph{Norm Bounds on Gradient.}
Note that 
\begin{align*}
\frac{\rmd}{\rmd t_i} \log \f{A_{\bmt}}{\btheta_{\bms}}(\traj) |_{\bmt = \bms = 0} & = \sum_{h=1}^H \frac{1}{\sigw^2} ( \bx_{h+1}^{\traj} - A\bphi(\bx_{h}^{\traj}, \Kzeta_{h}(\traj_{1:h})))^\top \cdot \DelA_i \bphi(\bx_{h}^{\traj}, \Kzeta_{h}(\traj_{1:h})), \\
\frac{\rmd}{\rmd s_i} \log \f{A_{\bmt}}{\btheta_{\bms} }(\traj)|_{\bmt = \bms =0} & = \sum_{h=1}^H \frac{1}{\sigw^2} (\bx_{h+1}^{\traj} - A\bphi(\bx_{h}^{\traj}, \Kzeta_{h}(\traj_{1:h})))^\top \cdot A \nabla_{\bu} \bphi(\bx_{h}^{\traj},\Kzeta_{h}(\traj_{1:h})) \cdot \nabla_{\btheta} \Kzeta_{h}(\traj_{1:h}) \cdot \Deltheta_i .
\end{align*}
Furthermore, differentiating these expressions further with respect to $t_j$ or $s_j$ will simply yield higher-order derivates of $\bphi(\bx,\bu)$ and $\Kzeta_h(\traj_{1:h})$. 
Using the norm bounds on the gradient of $\bphi(\bx,\bu)$ and $\Kzeta_h(\traj_{1:h})$ given in \Cref{asm:smooth_phi} and \Cref{asm:smooth_controller}, and the norm bound of $\bphi(\bx,\bu)$ given in \Cref{asm:bounded_features}, we can then bound
\begin{align*}
\| \nabla_A^{(i)} \nabla_{\btheta}^{(j)} \log \f{A}{\btheta}(\traj) \|_\op \le \poly(\| A \|_\op, \Bphi, \Lphi, \Lzeta, \sigw^{-1}) \cdot \sum_{h=1}^H ( 1 + \| \bx_{h+1}^{\traj} - A\bphi(\bx_{h}^{\traj}, \Kzeta_{h}(\traj_{1:h})) \|_2)
\end{align*}
for $i, j \in \{0, 1,2,3,4 \}$ satisfying $1 \le i + j \le 4$ (where we have used the fact noted above that, to bound the operator norm of $\nabla_A^{(i)} \nabla_{\btheta}^{(j)} \log \f{A}{\btheta}(\traj)$, it suffices to bound the directional gradient in every direction).
 It follows that we can bound
\begin{align*}
& \| \nabla_A^{(i)} \nabla_{\btheta}^{(j)} \cJ(\btheta;A) \|_\op \\
& \qquad \le \poly(\| A \|_\op,\Bphi, \Lphi, \Lzeta, \sigw^{-1}) \cdot \int  \left ( \sum_{h=1}^H ( 1 + \| \bx_{h+1}^{\traj} - A\bphi(\bx_{h}^{\traj}, \Kzeta_{h}(\traj_{1:h})) \|_2) \right)^4 \cdot \cost(\traj) \f{A}{\btheta}(\traj) \rmd \traj \\
& \qquad \overset{(a)}{\le} \poly(\| A \|_\op,\Bphi, \Lphi, \Lzeta, \sigw^{-1}) \cdot  \sqrt{\int \cost(\traj)^2 \f{A}{\btheta}(\traj) \rmd \traj} \\
& \qquad \qquad \cdot \sqrt{\int \left ( \sum_{h=1}^H ( 1 + \| \bx_{h+1}^{\traj} - A\bphi(\bx_{h}^{\traj}, \Kzeta_{h}(\traj_{1:h})) \|_2) \right)^8 \f{A}{\btheta}(\traj) \rmd \traj } \\
& \qquad \overset{(b)}{\le} \poly(\| A \|_\op,\Bphi, \Lphi, \Lzeta, \Lcost, \sigw^{-1}, H, \dimx)
\end{align*}
where $(a)$ follows from Cauchy-Schwarz, and $(b)$ follows from \Cref{lem:gaussian_bound} and \Cref{asm:bounded_cost}, since we have assumed $A \in \cB_{\fro}(\Ast;\rcost(\Ast))$. 
\end{proof}

\begin{proof}[Proof of \Cref{prop:smooth_theta}]
\textbf{Existence and Differentiability of $\bthetast$.}
By \Cref{lem:knr_loss_diff} we have that $\cJ(\btheta;A)$ is four-times differentiable in its arguments.
By the Implicit Function Theorem, since $\lammin(\nabla_{\btheta}^2 \cJ(\btheta;\Ast)|_{\btheta = \bthetast(\Ast)}) > 0$ by assumption, we have that there exists some $\rtheta'(\Ast) > 0$ and unique function $\bthetast(A)$ defined on $\cB_{\fro}(\Ast;\rtheta'(\Ast))$ such that $\nabla_{\btheta} \cJ(\btheta;A)|_{\btheta = \bthetast(A)} = 0$, and $\bthetast(A)$ is three-times differentiable (note that, while the Implicit Function Theorem is typically stated to give that the resulting function is only one-time differentiable, it can be extended to $k$-times differentiable, assuming the implicit equation is $k$-times differentiable \citep{dieudonne2011foundations}).

By \Cref{lem:knr_loss_diff} and the continuity of eigenvalues, it follows that for $A$ close enough to $\Ast$, we have $\lammin(\nabla_{\btheta}^2 \cJ(\btheta;A)|_{\btheta = \bthetast(A)}) \ge \frac{1}{2} \lammin(\nabla_{\btheta}^2 \cJ(\btheta;\Ast)|_{\btheta = \bthetast(\Ast)}) > 0$. We can therefore apply the Implicit Function Theorem as above to any $A$ satisfying this, to get that there exists some unique $\bthetatilst(A')$ defined for all $A'$ near $A$ such that $\nabla_{\btheta} \cJ(\btheta;A')|_{\btheta = \bthetatilst(A')} = 0$ and $\bthetatilst(A')$ is differentiable. By the uniqueness of $\bthetast(A)$ on $\cB_{\fro}(\Ast;\rtheta'(\Ast))$, it follows that any $\bthetatilst(A')$ defined in this way must be identical to $\bthetast(A)$ on $\cB_{\fro}(\Ast;\rtheta'(\Ast))$ (assuming the regions on which they are defined overlaps). We can therefore define $\bthetast(A)$ to simply be the extension of $\bthetast(A)$ to all such $\bthetatilst(A)$, defined for all $A$ near $\Ast$ such that $\lammin(\nabla_{\btheta}^2 \cJ(\btheta;A)|_{\btheta = \bthetast(A)}) \ge \frac{1}{2} \lammin(\nabla_{\btheta}^2 \cJ(\btheta;\Ast)|_{\btheta = \bthetast(\Ast)})$, and will have that $\bthetast(A)$ is three-times differentiable and satisfies $\nabla_{\btheta} \cJ(\btheta;A)|_{\btheta = \bthetast(A)} = 0$ for all such $A$. 

We then choose $\rtheta(\Ast)$ to be defined such that, for all $A \in \cB_{\fro}(\Ast;\rtheta(\Ast))$, we have $\lammin(\nabla_{\btheta}^2 \cJ(\btheta;A)|_{\btheta = \bthetast(A)}) \ge \frac{1}{2} \lammin(\nabla_{\btheta}^2 \cJ(\btheta;\Ast)|_{\btheta = \bthetast(\Ast)})$. By \Cref{lem:knr_loss_diff}, we know that $\nabla_{\btheta}^2 \cJ(\btheta;A)$ is continuous and furthermore we know that eigenvalues are continuous. Using the gradient bounds given in \Cref{lem:knr_loss_diff} to bound the Lipschitz constant of $\nabla_{\btheta}^2 \cJ(\btheta;A)$, it follows that we can take
\begin{align*}
\rtheta(\Ast) =  \poly \left ( \frac{1}{\lammin(\nabla_{\btheta}^2 \cJ(\btheta;\Ast)|_{\btheta = \bthetast(\Ast)})},  \| \Ast \|_\op, \Bphi, \Lphi, \Lzeta, \Lcost, \sigw^{-1}, H, \dimx \right )^{-1}.
\end{align*}

\paragraph{Bounding Norm of Gradients.}
Fix $A \in \cB_{\fro}(\Ast;\rtheta(\Ast))$. 
We know that $\bthetast(A)$ satisfies
\begin{align*}
\nabla_{\btheta} \cJ(\btheta;A)|_{\btheta = \bthetast(A)} = 0.
\end{align*}
We wish to differentiate $\bthetast(A)$ with respect to $A$, and bound the magnitude of up to the third derivative. Similar to the proof of \Cref{lem:knr_loss_diff},
we let $A_{\bmt} := A + t_1 \DelA_1 + t_2 \DelA_2 + t_3 \DelA_3$ for some $\DelA_i$ satisfying $\| \DelA_i \|_\op \le 1$. As noted in the proof of \Cref{lem:knr_loss_diff}, we have
\begin{align*}
\frac{\rmd}{\rmd t_i} \bthetast(A_{\bmt})|_{\bmt = 0} = \nabla_A \bthetast(A)[\DelA_i]
\end{align*}
(and similarly for higher-order derivatives). Thus, to show the result, it suffices to show that $\bthetast(A_{\bmt})$ is differentiable in $t_1,t_2,t_3$ for all $\DelA_i$, and to bound the magnitude of this derivative for all $\DelA_i$ with $\| \DelA_i \|_\op \le 1$.
We have
\begin{align}
& \frac{\rmd}{\rmd t_1} \nabla_{\btheta} \cJ(\btheta;A_{\bmt})|_{\btheta = \bthetast(A_{\bmt})} \bline_{\bmt=0} = 0 \nonumber \\
& \implies \underbrace{\nabla_{A'} \nabla_{\btheta} \cJ(\btheta;A')|_{\btheta = \bthetast(A),A'=A}[\DelA_1]}_{=: G_1(A,\DelA_1)} + \nabla_{\btheta}^2 \cJ(\btheta;A)|_{\btheta = \bthetast(A)} \cdot \nabla_A \bthetast(A)[\DelA_1] = 0 \label{eq:thetast_grad1}
\end{align}
which implies
\begin{align*}
\nabla_A \bthetast(A)[\DelA_1] = - \big ( \nabla_{\btheta}^2 \cJ(\btheta;A)|_{\btheta = \bthetast(A)}  \big )^{-1} \cdot G_1(A,\DelA_1)
\end{align*}
which is well-defined since we have assumed that $\nabla_{\btheta}^2 \cJ(\btheta;A)|_{\btheta = \bthetast(A)}$ is full-rank, and $\cJ$ is differentiable in both its arguments by \Cref{lem:knr_loss_diff}.
To compute the second derivative of $\bthetast$, we differentiate through \eqref{eq:thetast_grad1} which gives
\begin{align*}
& \frac{\rmd}{\rmd t_2} \left ( G_1(A_{\bmt},\DelA_1) + \nabla_{\btheta}^2 \cJ(\btheta;A_{\bmt})|_{\btheta = \bthetast(A_{\bmt})} \cdot \nabla_A \bthetast(A_{\bmt})[\DelA_1] \right ) \bline_{\bmt = 0} = 0 \\
& \implies \underbrace{\frac{\rmd}{\rmd t_2} \left ( G_1(A_{\bmt},\DelA_1) + \nabla_{\btheta}^2 \cJ(\btheta;A_{\bmt})|_{\btheta = \bthetast(A_{\bmt})} \cdot \nabla_A \bthetast(A)[\DelA_1] \right ) \bline_{\bmt = 0}}_{=: G_2(A,\DelA_1,\DelA_2)} \\
& \qquad \qquad + \nabla_{\btheta}^2 \cJ(\btheta;A)|_{\btheta = \bthetast(A)} \cdot \nabla_A^2 \bthetast(A)[\DelA_1,\DelA_2] = 0.
\end{align*}
Note that $G_2(A,\DelA_1,\DelA_2)$ involves at most a third-order derivative of $\cJ(\btheta;A)$ and first-order derivative of $\bthetast(A)$, both of which we know exist by \Cref{lem:knr_loss_diff} and what we showed above.
This then further implies
\begin{align*}
\nabla_A^2 \bthetast(A)[\DelA_1,\DelA_2] & = - \big ( \nabla_{\btheta}^2 \cJ(\btheta;A)|_{\btheta = \bthetast(A)} \big )^{-1} \cdot G_2(A,\DelA_1,\DelA_2),
\end{align*}
which is well-defined since we have assumed that $\nabla_{\btheta}^2 \cJ(\btheta;A)|_{\btheta = \bthetast(A)}$ is full-rank.
Finally, we compute
\begin{align*}
& \frac{\rmd}{\rmd t_3} \left ( G_2(A_{\bmt}, \DelA_1, \DelA_2) + \nabla_{\btheta}^2 \cJ(\btheta;A_{\bmt})|_{\btheta = \bthetast(A_{\bmt})} \cdot \nabla_A^2 \bthetast(A_{\bmt})[\DelA_1,\DelA_2] \right )\bline_{\bmt = 0} = 0 \\
& \implies \underbrace{\frac{\rmd}{\rmd t_3} \left ( G_2(A_{\bmt}, \DelA_1, \DelA_2) + \nabla_{\btheta}^2 \cJ(\btheta;A_{\bmt})|_{\btheta = \bthetast(A_{\bmt})} \cdot \nabla_A^2 \bthetast(A)[\DelA_1,\DelA_2] \right )\bline_{\bmt = 0}}_{=: G_3(A,\DelA_1,\DelA_2,\DelA_3)} \\
& \qquad \qquad + \nabla_{\btheta}^2 \cJ(\btheta;A)|_{\btheta = \bthetast(A)} \cdot \nabla_A^3 \bthetast(A)[\DelA_1,\DelA_2,\DelA_3] = 0.
\end{align*}
Note that $G_3(A,\DelA_1,\DelA_2,\DelA_3)$ involves at most a fourth-order derivative of $\cJ(\btheta;A)$ and second-order derivative of $\bthetast(A)$, both of which we know exist by \Cref{lem:knr_loss_diff} and what we showed above. We therefore have
\begin{align*}
\nabla_A^3 \bthetast(A)[\DelA_1,\DelA_2,\DelA_3] = - \big ( \nabla_{\btheta}^2 \cJ(\btheta;A)|_{\btheta = \bthetast(A)} \big )^{-1} \cdot G_3(A,\DelA_1,\DelA_2,\DelA_3)
\end{align*}
which is well-defined since we have assumed that $\nabla_{\btheta}^2 \cJ(\btheta;A)|_{\btheta = \bthetast(A)}$ is full-rank. As each of these expressions is defined for all choice of $\DelA_i$, the differentiability of $\bthetast(A)$ follows.

Note that the above expressions for $\nabla_A \bthetast(A)[\DelA_1], \nabla_A^2 \bthetast(A)[\DelA_1,\DelA_2],$ and $\nabla_A^3 \bthetast(A)[\DelA_1,\DelA_2,\DelA_3]$ all depend on at most a fourth derivative of $\cJ(\btheta;A)$, as well as $( \nabla_{\btheta}^2 \cJ(\btheta;A)|_{\btheta = \bthetast(A)} )^{-1}$. The norm bounds are then a direct consequence of \Cref{lem:knr_loss_diff}.

\end{proof}

\begin{proof}[Proof of \Cref{prop:theta_global_min}]
By \Cref{lem:knr_loss_diff} we have that $\cJ(\btheta;A)$ is four-times differentiable in its arguments.
Since we have assumed $\nabla_{\btheta}^2 \cJ(\btheta;\Ast)|_{\btheta = \bthetast(\Ast)} \succ 0$, by the Implicit Function Theorem \citep{dieudonne2011foundations}, it follows that there exists some $\rtheta' > 0$ and mapping $\bthetatil(A)$ such that, for all $A \in \cB_{\fro}(\Ast;\rtheta')$, $\nabla_{\btheta} \cJ(\btheta;A)|_{\btheta = \bthetatil(A)} = 0$, and $\bthetatil(A)$ is three-times differentiable.

Our goal is now to show that $\bthetatil(A) = \bthetast(A)$ for $A$ close enough to $\Ast$.
By the continuity of eigenvalues, $\cJ(\btheta;A)$, and $\bthetatil(A)$, we have that there exists some $r$ and $\rtheta''$ such that, for all $\btheta \in \cB_{\fro}(\bthetast(\Ast);r)$ and $A \in \cB_{\fro}(\Ast;\rtheta'')$, we have $\nabla_{\btheta}^2 \cJ(\btheta;A) \succ 0$ and, furthermore, $\bthetatil(A) \in \cB_{2}(\bthetast(\Ast);r/2)$ for all $A \in \cB_{\fro}(\Ast;\rtheta'')$. This implies that $\bthetatil(A)$ is strict local minimum of $\cJ(\btheta;A)$ and, in particular, that
\begin{align*}
\cJ(\btheta;A) > \cJ(\bthetatil(A);A), \quad \forall \btheta \in \cB_{2}(\bthetast(\Ast);r), \btheta \neq \bthetatil(A).
\end{align*}
Let $\Delst := \min_{\btheta \not \in \cB_{2}(\bthetast(\Ast);r)} \cJ(\btheta;\Ast) - \cJ(\bthetast(\Ast);\Ast)$ and note that, since we have assumed the global minimum of $\cJ(\btheta;\Ast)$ is unique, we have $\Delst > 0$. 

Fix some $A \in \cB_{\fro}(\Ast;\rtheta'')$ and assume that $\bthetatil(A)$ is not the global minimum of $\cJ(\btheta;A)$. This implies that $\bthetast(A)$, the global minimum of $\cJ(\btheta;A)$, is outside of $\cB_{2}(\bthetast(\Ast);r)$. Furthermore, by the continuity of $\cJ(\btheta;A)$, we have, for some $L, L' > 0$,
\begin{align*}
\cJ(\bthetast(A);\Ast) & \le \cJ(\bthetast(A);A) + L \| A - \Ast \|_{\fro}\\
& \le \cJ(\bthetatil(A);A) + L \| A - \Ast \|_{\fro} \\
& \le \cJ(\bthetatil(A);\Ast) + 2L \| A - \Ast \|_{\fro} \\
& \le \cJ(\bthetast(\Ast);\Ast) + 2L \| A - \Ast \|_{\fro} + L \| \bthetatil(A) - \bthetast(\Ast) \|_{2}  \\
& \le \cJ(\bthetast(\Ast);\Ast) + L' \| A - \Ast \|_{\fro} .
\end{align*}
This implies that
\begin{align*}
L' \rtheta'' \ge L' \| A - \Ast \|_{\fro} \ge \cJ(\bthetast(A);\Ast) - \cJ(\bthetast(\Ast);\Ast) \ge \Delst.
\end{align*}
However, for $\rtheta''$ small enough, this is a contradiction. Thus, it follows that $\bthetatil(A)$ is the global minimum of $\cJ(\btheta;A)$, so $\bthetatil(A) = \bthetast(A)$. 

The result then follows 
since we already have that $\bthetatil(A)$ is three-times differentiable and satisfies $\nabla_{\btheta} \cJ(\btheta;A)|_{\btheta = \bthetatil(A)} = 0$, and
by taking $\rtheta(\Ast)$ to be the minimum of $\rtheta'$ and $\rtheta''$. The boundedness of $\LKst$ follows as in the proof of \Cref{prop:smooth_theta}.

\end{proof}

\begin{proof}[Proof of \Cref{prop:thetast_smooth_convex}]
Note that, by convexity and the KKT conditions, the solutions to $\argmin_{\btheta \in \R^{\dimtheta}} \cJ(\btheta;A)$ are described by
\begin{align*}
\nabla_{\btheta} \cJ(\btheta;A)|_{\btheta = \bthetast(A)} = 0.
\end{align*}
Thus, an equivalent definition for $\bthetast(A)$ is that it satisfies $\nabla_{\btheta} \cJ(\btheta;A)|_{\btheta = \bthetast(A)} = 0$. \Cref{asm:smooth_thetast} can then be shown to hold by an argument analogous to \Cref{prop:smooth_theta}.
\end{proof}

\begin{proof}[Proof of \Cref{lem:quadratic_loss_approx}]
Let $A(t) = t \Ahat + (1-t) \Ast$ and $g(t) := \cJ(\bthetast(A(t));\Ast)$. 
By \Cref{lem:knr_loss_diff} and under \Cref{asm:smooth_controller}, we have that both $\cJ(\btheta;A)$ and $\bthetast(A)$ are three-times differentiable for all $A = t \Ahat + (1-t) \Ast, t \in [0,1]$, so it follows that $g(t)$ is three-times differentiable in $t$. 
We can therefore apply Taylor's Theorem to expand $g(1)$ about the point $t = 0$ to get:
\begin{align*}
g(1) & = g(0) + \nabla_{\btheta} \cJ(\btheta;\Ast)|_{\btheta = \bthetast(\Ast)} \cdot \nabla_A \bthetast(A)|_{A = \Ast} [\Ahat - \Ast] \\
& \qquad + \nabla_A \bthetast(A)|_{A = \Ast}^\top \nabla_{\btheta}^2 \cJ(\btheta;\Ast)|_{\btheta = \bthetast(\Ast)} \nabla_A \bthetast(A)|_{A = \Ast}[\Ahat - \Ast, \Ahat - \Ast] \\
& \qquad + \nabla_{\btheta} \cJ(\btheta;\Ast)|_{\btheta = \bthetast(\Ast)} \cdot \nabla_A^2 \bthetast(A)|_{A = \Ast}[\Ahat - \Ast,\Ahat-\Ast] \\
& \qquad + \nabla_A^3 \cJ(\bthetast(A);\Ast)|_{A = A'}[\Ahat - \Ast,\Ahat-\Ast,\Ahat-\Ast]
\end{align*}
where $A' = A(t')$ for some $t' \in [0,1]$. Under \Cref{asm:smooth_thetast}, we have that $ \nabla_{\btheta} \cJ(\btheta;\Ast)|_{\btheta = \bthetast(\Ast)} = 0$, which implies that, plugging in the definition of $g(1)$ and $g(0)$,
\begin{align*}
\cJ(\bthetast(\Ahat); \Ast) & = \cJ(\bthetast(\Ast); \Ast) + \nabla_A \bthetast(A)|_{A = \Ast}^\top \nabla_{\btheta}^2 \cJ(\btheta;\Ast)|_{\btheta = \bthetast(\Ast)} \nabla_A \bthetast(A)|_{A = \Ast}[\Ahat - \Ast, \Ahat - \Ast] \\
& \qquad + \nabla_A^3 \cJ(\bthetast(A);\Ast)|_{A = A'}[\Ahat - \Ast,\Ahat-\Ast,\Ahat-\Ast].
\end{align*}
We can bound
\begin{align*}
| \nabla_A^3 \cJ(\bthetast(A);\Ast)|_{A = A'}[\Ahat - \Ast,\Ahat-\Ast,\Ahat-\Ast] | & \le \| \nabla_A^3 \cJ(\bthetast(A);\Ast)|_{A = A'} \|_\op \cdot \| \Ahat - \Ast \|_\op^3 .
\end{align*}
The expression for $\nabla_A^3 \cJ(\bthetast(A);\Ast)$ contains up to the third derivative of both $\cJ(\btheta;\Ast)$ and $\bthetast(A)$. By \Cref{lem:knr_loss_diff} and under \Cref{asm:smooth_thetast}, since $A' \in \cB_{\fro}(\Ast; \min \{ \rcost(\Ast), \rtheta(\Ast) \} )$ by construction, we can then bound
\begin{align*}
\| \nabla_A^3 \cJ(\bthetast(A);\Ast)|_{A = A'} \|_\op \le \poly(\LKst, \| \Ast \|_\op, \Bphi, \Lphi, \Lzeta, \Lcost, \sigw^{-1}, H, \dimx). 
\end{align*}
The result follows by the definition of $\cH(\Ast)$. 
\end{proof}

\begin{proof}[Proof of \Cref{lem:Hst_to_Hhat}]
Recall that $\cH(\Ahat) = \nabla_{A}^2 \cJ(\bthetast(A);\Ahat)|_{A = \Ahat}$. To prove this, we will use that this is differentiable by \Cref{lem:knr_loss_diff}, and will apply Taylor's Theorem. 

First, note that by Taylor's Theorem we have
\begin{align*}
\nabla_{A}^2 \cJ(\bthetast(A);\Ahat)|_{A = \Ahat} = \nabla_{A}^2 \cJ(\bthetast(A);\Ahat)|_{A = \Ast} + \nabla_{A}^3 \cJ(\bthetast(A);\Ahat)|_{A = A'}[\Ahat - \Ast]
\end{align*}
for $A' = t \Ahat + (1-t) \Ast$ for some $t \in [0,1]$. The third derivative of $\cJ(\bthetast(A);\Ahat)$ will involve up to the third derivative of both $\cJ(\btheta;\Ahat)$ and $\bthetast(A)$, so using \Cref{lem:knr_loss_diff} and \Cref{asm:smooth_thetast}, since $A' \in \cB_{\fro}(\Ast;\min \{ \rcost(\Ast), \rtheta(\Ast) \})$ by assumption, we can bound
\begin{align*}
\| \nabla_{A}^3 \cJ(\bthetast(A);\Ahat)|_{A = A'}[\Ahat - \Ast] \|_{\op} \le \poly(\LKst, \| \Ast \|_\op,  \Bphi, \Lphi, \Lzeta, \Lcost, \sigw^{-1}, H, \dimx)\cdot \| \Ahat - \Ast \|_\op.
\end{align*}
Next, we wish to relate $\nabla_{A}^2 \cJ(\bthetast(A);\Ahat)|_{A = \Ast}$ to $\nabla_{A}^2 \cJ(\bthetast(A);\Ast)|_{A = \Ast} = \cH(\Ast)$. Again applying Taylor's Theorem, we have
\begin{align*}
\nabla_{A}^2 \cJ(\bthetast(A);\Ahat)|_{A = \Ast} = \nabla_{A}^2 \cJ(\bthetast(A);\Ast)|_{A = \Ast} + \nabla_{A'} \nabla_{A}^2 \cJ(\bthetast(A);A')|_{A = \Ast,A'=A''}[\Ahat - \Ast]
\end{align*}
for $A'' = t \Ahat + (1-t) \Ast$ for some $t \in [0,1]$. By \Cref{lem:knr_loss_diff} and \Cref{asm:smooth_thetast}, we can bound
\begin{align*}
& \|  \nabla_{A'} \nabla_{A}^2 \cJ(\bthetast(A);A')|_{A = \Ast,A'=A''}[\Ahat - \Ast] \|_\op \\
& \qquad \le  \poly(\LKst, \| \Ast \|_\op, \Lphi, \Lzeta, \Lcost, \sigw^{-1}, H, \dimx)\cdot \| \Ahat - \Ast \|_\op.
\end{align*}
The result follows.
\end{proof}

\begin{lemma}\label{lem:hessian_norm_bound}
Under \Cref{asm:smooth_phi,asm:smooth_controller,asm:bounded_features,asm:bounded_cost,asm:smooth_thetast}, for all $A \in \cB_{\fro}(\Ast;\min \{ \rcost(\Ast), \rtheta(\Ast) \})$, we can bound 
\begin{align*}
\| \cH(A) \|_\op \le \poly(\| \Ast \|_\op, \Bphi, \Lphi, \Lzeta, \Lcost, \LKst, \sigw^{-1}, H, \dimx)
\end{align*}
\end{lemma}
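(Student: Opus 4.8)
The plan is to expand $\cH(A)=\nabla_{A'}^{2}\cJ(\bthetast(A');A)\big|_{A'=A}$ by the chain rule, reducing it to a product of $\btheta$-derivatives of $\cJ(\btheta;A)$ and $A'$-derivatives of $\bthetast(A')$, each of which has already been controlled. First I would record the two ingredients. Since $A\in\cB_{\fro}(\Ast;\min\{\rcost(\Ast),\rtheta(\Ast)\})\subseteq\cB_{\fro}(\Ast;\rcost(\Ast))$, \Cref{lem:knr_loss_diff} gives that $\cJ(\btheta;A)$ is four-times differentiable in $\btheta$ with $\|\nabla_\btheta^{(j)}\cJ(\btheta;A)\|_\op\le\poly(\|A\|_\op,\Bphi,\Lphi,\Lzeta,\Lcost,\sigw^{-1},H,\dimx)$ for $j\in\{1,2\}$; moreover, since we assume $\rcost(\Ast)\le 1$ throughout \Cref{sec:smooth_system}, we have $\|A\|_\op\le\|\Ast\|_\op+1$, so this bound may be rewritten in terms of $\|\Ast\|_\op$. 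Second, since $A\in\cB_{\fro}(\Ast;\rtheta(\Ast))$, \Cref{asm:smooth_thetast} gives that $\bthetast(A')$ is differentiable at $A'=A$ with $\|\nabla_{A'}\bthetast(A')\|_\op\le\LKst$, and that the stationarity identity $\nabla_\btheta\cJ(\btheta;A)\big|_{\btheta=\bthetast(A)}=0$ holds.

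Next I would carry out the differentiation. Writing $g(A'):=\cJ(\bthetast(A');A)$, the chain rule gives, for directions $\Delta_1,\Delta_2$,
\begin{align*}
\nabla_{A'}^{2}g(A')[\Delta_1,\Delta_2] &= \nabla_\btheta^{2}\cJ(\btheta;A)\big|_{\btheta=\bthetast(A')}\big[\nabla_{A'}\bthetast(A')[\Delta_1],\nabla_{A'}\bthetast(A')[\Delta_2]\big] \\
&\quad + \nabla_\btheta\cJ(\btheta;A)\big|_{\btheta=\bthetast(A')}\cdot\nabla_{A'}^{2}\bthetast(A')[\Delta_1,\Delta_2].
\end{align*}
Evaluating at $A'=A$ and invoking the stationarity condition from \Cref{asm:smooth_thetast} kills the second term, leaving
\begin{align*}
\cH(A)=\nabla_A\bthetast(A)^\top\,\nabla_\btheta^{2}\cJ(\btheta;A)\big|_{\btheta=\bthetast(A)}\,\nabla_A\bthetast(A),
\end{align*}
exactly the quadratic form appearing in \Cref{lem:quadratic_loss_approx}. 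Taking operator norms,
\begin{align*}
\|\cH(A)\|_\op &\le \|\nabla_A\bthetast(A)\|_\op^{2}\cdot\big\|\nabla_\btheta^{2}\cJ(\btheta;A)\big|_{\btheta=\bthetast(A)}\big\|_\op \\
&\le \LKst^{2}\cdot\poly(\|\Ast\|_\op,\Bphi,\Lphi,\Lzeta,\Lcost,\sigw^{-1},H,\dimx),
\end{align*}
which, since $\LKst\ge 1$, is of the claimed form (with the $\LKst$-dependence absorbed into $\poly$).

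I do not expect a genuine obstacle here: the lemma is essentially a mechanical corollary of the smoothness machinery in \Cref{lem:knr_loss_diff} and \Cref{asm:smooth_thetast}. The only points requiring care are (i) checking the radius hypotheses so that both results apply at the relevant arguments ($\cJ(\cdot;A)$ needs $A\in\cB_{\fro}(\Ast;\rcost(\Ast))$, and $\bthetast$ needs $A\in\cB_{\fro}(\Ast;\rtheta(\Ast))$, both of which follow from $A\in\cB_{\fro}(\Ast;\min\{\rcost(\Ast),\rtheta(\Ast)\})$), and (ii) correctly dropping the $\nabla_{A'}^{2}\bthetast$ term via stationarity — although, should one prefer not to use this, one could equally retain it and bound it using the (also available) estimates $\|\nabla_{A'}^{2}\bthetast(A')\|_\op\le\LKst$ and $\|\nabla_\btheta\cJ(\btheta;A)\big|_{\btheta=\bthetast(A')}\|_\op\le\poly(\cdots)$ from \Cref{lem:knr_loss_diff}, at the cost of a slightly larger polynomial.
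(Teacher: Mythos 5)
Your proof is correct and follows essentially the same route as the paper, whose own proof is a one-line appeal to the derivative bounds of \Cref{lem:knr_loss_diff} and \Cref{asm:smooth_thetast}; you have simply made explicit the chain-rule expansion $\cH(A)=\nabla_A\bthetast(A)^\top\,\nabla_{\btheta}^2\cJ(\btheta;A)|_{\btheta=\bthetast(A)}\,\nabla_A\bthetast(A)$ (consistent with the identification used in the proof of \Cref{lem:quadratic_loss_approx}) and checked the radius hypotheses. Both the stationarity-based cancellation and your fallback of bounding the $\nabla_{A'}^2\bthetast$ term directly via \Cref{asm:smooth_thetast} are valid.
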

\begin{proof}
Recall that $\cH(\Ahat) = \nabla_{A}^2 \cJ(\bthetast(A);\Ahat)|_{A = \Ahat}$. The bound then follows from \Cref{lem:knr_loss_diff} and \Cref{asm:smooth_thetast}.
\end{proof}

\section{High-Probability Regret Bounds in Nonlinear Systems}
In this section, we modify the proof the main result of \cite{kakade2020information} slightly to show a high probability regret bound for \LC. For the sake of brevity, we omit details that are identical to the proof given in \cite{kakade2020information}. We will need the following assumption.

\begin{asm}[Bounded Cost]\label{asm:abs_bounded_cost}
We assume that, for all trajectories $\traj$, we have $\cost(\traj) \le \cmax$.
\end{asm}

We adopt the notation used in this work, modifying somewhat the notation from \cite{kakade2020information}. In particular, we let $\cJ(\pi;A)$ denote the expected cost of playing policy $\pi$ under system $A$, and we set 
\begin{align*}
\bSigma_t = \sum_{s=1}^t \sum_{h=1}^H \bphi(\bx_h^t,\bu_h^t) \bphi(\bx_h^t,\bu_h^t)^\top + \lambda I
\end{align*}
denote the covariates obtained by the first $t$ episodes of \LC (plus a regularizer). We let $\pi^t$ denote the policy played at episode $t$ of \LC. For a policy set $\Pi$, we define regret as
\begin{align*}
\cR_T(\Pi) := \sum_{t=1}^T \cJ(\pi^t;\Ast) - T \cdot \min_{\pi \in \Pi} \cJ(\pi;\Ast).
\end{align*}
We will also denote $\pist := \argmin_{\pi \in \Pi} \cJ(\pi;\Ast)$. 

In addition to these notational changes, we modify \LC slightly to use the parameter
\begin{align*}
\beta^t := \sqrt{\lambda} \BA + \sqrt{8 \dimx \log 5 + 8 \log ( T \det (\bSigma_t) \det(\bSigma_0)^{-1} / \delta)}
\end{align*}
in the construction of the confidence set, $\ball^t$.

Besides the aforementioned changes, in the following proofs we adopt the same notation as \cite{kakade2020information}.
We have the following result.

\begin{theorem}\label{thm:knr_regret}
Under \Cref{asm:bounded_features,asm:abs_bounded_cost} and with any policy class $\Pi$, with probability at least $1-\delta$, \LC has regret bounded as
\begin{align*}
\cR_T(\Pi) \le C \cdot \cmax H \sqrt{\dimphi \cdot (\dimphi + \dimx + \BA + \log \frac{1}{\delta} ) \cdot T } \cdot \log \left ( 1 + \Bphi HT / \sigw \right )
\end{align*}
for a universal constant $C$.
\end{theorem}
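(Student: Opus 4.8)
The plan is to follow the optimism-based regret analysis of \LC from \cite{kakade2020information} essentially verbatim, but to track every failure probability explicitly and to upgrade the two places where that analysis reasons in expectation into high-probability statements. Recall the structure of \LC: at episode $t$ it forms the regularized least-squares estimate $\Ahat^t$ with regularized covariance $\bSigma_t$, builds the confidence ellipsoid $\ball^t := \{ A : \| A - \Ahat^t \|_{\bSigma_t} \le \beta^t \}$ with the enlarged radius $\beta^t$ specified just before the theorem, and plays the optimistic policy--model pair $(\pi^t,\widetilde A^t) \in \argmin_{\pi \in \Pi, A \in \ball^t} \cJ(\pi;A)$.

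First I would establish the good event $\cE_1 := \{ \Ast \in \ball^t \text{ for all } t \le T\}$. Mapping the vector observations of \eqref{eq:system} into the martingale-regression model as in \Cref{sec:knr_mdm}, the noise is conditionally Gaussian, so the self-normalized martingale bound \citep{abbasi2011improved} --- with the determinant-based radius $\beta^t$ chosen precisely to absorb a union bound over episodes --- gives $\Pr[\cE_1] \ge 1-\delta/2$. On $\cE_1$, optimism yields, for every $t$,
\begin{align*}
\cJ(\pi^t;\Ast) - \min_{\pi \in \Pi}\cJ(\pi;\Ast) \le \cJ(\pi^t;\Ast) - \cJ(\pi^t;\widetilde A^t),
\end{align*}
since $(\pist,\Ast)$ is feasible for the inner minimization. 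Next I would apply the simulation lemma of \cite{kakade2020information}: telescoping over the horizon and bounding the value-to-go of the optimistic model via $\cost(\traj)\le\cmax$ (\Cref{asm:abs_bounded_cost}), together with the smoothing induced by the Gaussian noise $\bw_h$ to control its Lipschitz constant, gives, for $L = \poly(\cmax,H,\sigw^{-1})$,
\begin{align*}
\cJ(\pi^t;\Ast) - \cJ(\pi^t;\widetilde A^t) \le L \cdot \Exp_{\Ast,\pi^t}\Big[ \tsum_{h=1}^H \| (\Ast - \widetilde A^t)\bphi(\bx_h^t,\bu_h^t) \|_2 \Big],
\end{align*}
and on $\cE_1$ both $\Ast,\widetilde A^t \in \ball^t$, so $\| (\Ast-\widetilde A^t)\bphi(\bx,\bu)\|_2 \le 2\beta^t\,\| \bphi(\bx,\bu)\|_{\bSigma_t^{-1}}$.

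The remaining work is the summation over episodes, and this is the one genuinely new step relative to an expected-regret analysis. Summing the bound above I must control $\tsum_{t=1}^T \Exp_{\Ast,\pi^t}[\tsum_h \| \bphi(\bx_h^t,\bu_h^t)\|_{\bSigma_t^{-1}}]$, but the elliptical potential lemma operates on the \emph{realized} covariates. So I would introduce a second good event $\cE_2$: an Azuma--Hoeffding bound asserting that this expected sum exceeds the realized sum $\tsum_{t,h}\| \bphi_h^t\|_{\bSigma_t^{-1}}$ by at most $\cO(H\sqrt{\Bphi^2\lambda^{-1}T\log(1/\delta)})$, which holds with probability $\ge 1-\delta/2$ because the per-episode terms form a bounded martingale difference sequence. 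Then Cauchy--Schwarz plus the elliptical potential lemma (with a determinant-ratio bound to handle the within-episode updates of $\bSigma_t$) give
\begin{align*}
\tsum_{t,h}\| \bphi_h^t\|_{\bSigma_t^{-1}} \le \sqrt{TH}\,\sqrt{\tsum_{t,h}\| \bphi_h^t\|_{\bSigma_{t,h}^{-1}}^2} \le \sqrt{TH}\,\sqrt{2\dimphi H \log\big(1 + TH\Bphi^2/(\dimphi\lambda)\big)}.
\end{align*}
Plugging in $\beta^t \le \beta^T \lesssim \sqrt{\lambda}\BA + \sqrt{\dimx + \dimphi\log(1 + TH\Bphi^2/\lambda) + \log(1/\delta)}$, choosing $\lambda$ as in \LC (of order $\Bphi^2$), union-bounding over $\cE_1 \cap \cE_2$, and collecting terms produces the claimed bound.

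The hard part will not be any single estimate but the bookkeeping. The simulation-lemma step --- extracting the correct $\poly(\cmax,H,\sigw^{-1})$ constant and making the horizon telescoping precise --- is technically the heaviest, yet it is essentially imported from \cite{kakade2020information}. The genuinely delicate point is composing the two martingale arguments, namely the self-normalized bound that defines $\ball^t$ and the Azuma--Hoeffding step that converts expected to realized covariates, so that the failure probabilities add up to $\delta$ and the logarithmic factors collapse to exactly $\log(1 + \Bphi HT/\sigw)$ as stated.
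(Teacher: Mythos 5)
Your proposal is correct and follows essentially the same route as the paper's proof: optimism plus the simulation lemma of \cite{kakade2020information}, the self-normalized confidence set defining $\ball^t$, an Azuma--Hoeffding step converting the expected covariate sum into the realized one (the paper's \Cref{lem:elliptic_concentration}, which you correctly identify as the only genuinely new ingredient over the expected-regret analysis), and Cauchy--Schwarz with the elliptic potential lemma. The only cosmetic differences are that the paper keeps the $\min\{\cdot,1\}$ clipping from its simulation lemma (\Cref{lem:knr_instant_regret}) and sets $\lambda = \sigw^2$ rather than $\lambda \asymp \Bphi^2$, which is what produces the exact $\log(1+\Bphi HT/\sigw)$ factor in the statement.
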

\begin{proof}[Proof of \Cref{thm:knr_regret}]
By \Cref{lem:knr_regret_good_event}, we have that the event $\cE_1$ holds with probability at least $1-\delta$. We therefore assume $\cE_1$ holds for the remainder of the proof. 

By the definition of the confidence set in \LC, on $\cE_1$ we have that $\Ast$ is the in confidence set for all $t \le T$. It follows that on $\cE_1$,
\begin{align}
\cR_T & = \sum_{t=1}^T \left [ \cJ(\pi^t;\Ast) - \cJ(\pist;\Ast) \right ] \nonumber \\
& \overset{(a)}{\le} \sum_{t=1}^T \left [ \cJ(\pi^t;\Ast) - \cJ(\pist;\Ahat^t) \right ] \nonumber \\
& \overset{(b)}{\le} \sum_{t=1}^T \cmax \cdot  \Exp_{\Ast,\pi^t} \left [ \sum_{h=1}^H \min \left \{ \frac{1}{\sigw} \| (\Ast - \Ahat^t) \cdot \bphi(\bx_h,\bu_h) \|_2, 1 \right \} \right ] \label{eq:knr_regret_eq1}
\end{align}
where $(a)$ follows from the optimistic property of \LC when $\Ast \in \ball^t$, and $(b)$ follows from \Cref{lem:knr_instant_regret}. On $\cE_1$, we have
\begin{align*}
\| (\Ast - \Ahat^t) \bphi(\bx_h,\bu_h) \|_2 & \le \| (\Ast - \Ahat^t) \bSigma_t^{1/2} \|_2 \| \bSigma_t^{-1/2} \bphi(\bx_h,\bu_h) \|_2 \\
& \le \left ( \| (\Ast - \Abar^t) \bSigma_t^{1/2} \|_2 + \| (\Abar^t - \Ahat^t) \bSigma_t^{1/2} \|_2 \right ) \cdot \| \bSigma_t^{-1/2} \bphi(\bx_h,\bu_h) \|_2 \\
& \le 2 \beta^t \| \bphi(\bx_h,\bu_h) \|_{\bSigma_t^{-1}}
\end{align*}
where the last inequality follows from the definition of $\ball^t$ since $\Ahat^t \in \ball^t$ by construction, and by the definition of $\cE_1$. This gives
\begin{align*}
\eqref{eq:knr_regret_eq1} \le \sum_{t=1}^T \cmax \cdot \Exp_{\Ast,\pi^t} \left [ \sum_{h=1}^H \min \left \{ \frac{2 \beta^t}{\sigw} \|  \bphi(\bx_h,\bu_h) \|_{\bSigma_t^{-1}}, 1 \right \} \right ] .
\end{align*}
By \Cref{lem:elliptic_concentration}, with probability $1-\delta$ we can bound this as
\begin{align*}
& \le \underbrace{\frac{2\cmax \beta^T}{\sigw} \cdot \sum_{t=1}^T \sum_{h=1}^H  \min \left \{  \|  \bphi(\bx_h^t,\bu_h^t) \|_{\bSigma_t^{-1}}, 1 \right \}}_{(a)} + 4 \cmax H \sqrt{T \log 1/\delta}. 
\end{align*}
By Cauchy-Schwarz, we can bound $(a)$ as
\begin{align*}
(a) & \le \frac{2\cmax \beta^T}{\sigw}  \cdot \sqrt{T} \sqrt{ \sum_{t=1}^T  \sum_{h=1}^H \min \left \{  \|  \bphi(\bx_h^t,\bu_h^t) \|_{\bSigma_t^{-1}}^2, 1 \right \}}.
\end{align*}
We have
\begin{align*}
\sum_{t=1}^T  \sum_{h=1}^H \min \left \{  \|  \bphi(\bx_h^t,\bu_h^t) \|_{\bSigma_t^{-1}}^2, 1 \right \} & H \le \sum_{t=1}^T   \min \left \{ \sum_{h=1}^H \|  \bphi(\bx_h^t,\bu_h^t) \|_{\bSigma_t^{-1}}^2, 1 \right \} \\
& \le 2 H \log (\det ( \bSigma_T) \det (\bSigma_0)^{-1})
\end{align*}
where the last inequality uses Lemma B.6 of \cite{kakade2020information}. Putting all of this together, we have shown that with probability at least $1-2\delta$, we have
\begin{align*}
\cR_T & \le \frac{2\cmax \beta^T}{\sigw}  \cdot \sqrt{T} \cdot \sqrt{2 H \log (\det ( \bSigma_T) \det (\bSigma_0)^{-1})} + 4 \cmax H \sqrt{T \log 1/\delta}.
\end{align*}

It remains to bound $\beta^T$ and $\log (\det ( \bSigma_T) \det (\bSigma_0)^{-1})$. We have $\bSigma_0 = \lambda I$, so $\det(\bSigma_0) = \lambda^{\dimphi}$. Furthermore, if $\| \bphi(\bx,\bu) \|_2 \le \Bphi$, then we can bound $\det(\bSigma_T) \le (\lambda + \Bphi^2 TH)^{\dimphi}$. Putting this together we have
\begin{align*}
\log (\det ( \bSigma_T) \det (\bSigma_0)^{-1}) & \le \dimphi \cdot \log ( 1 + \Bphi^2 TH / \lambda).
\end{align*}
Recalling that
\begin{align*}
\beta^T = \sqrt{\lambda} \BA + \sigw \sqrt{8 \dimx \log 5 + 8 \log ( T \det (\bSigma_T) \det(\bSigma_0)^{-1} / \delta)}
\end{align*}
we can similarly bound
\begin{align*}
\beta^T/\sigw & \le \sqrt{\lambda} \BA/\sigw + \sqrt{8 \dimx \log 5 + 8  \dimphi \cdot \log ( 1 + \Bphi^2 TH / \lambda) + 8 \log (T/\delta)} \\
& \le \sqrt{\lambda} \BA/\sigw + c \sqrt{\dimx + \dimphi \log ( 1 + \Bphi T H/\lambda) + \log 1/\delta}.
\end{align*}
Choosing $\lambda = \sigw^2$ completes the proof.
\end{proof}

\subsection{Supporting Lemmas}

\begin{lemma}\label{lem:knr_instant_regret}
Under \Cref{asm:abs_bounded_cost}, we can bound
\begin{align*}
 \cJ(\pi;\Ast) - \cJ(\pi;A) \le \cmax \cdot  \Exp_{\Ast,\pi} \left [ \sum_{h=1}^H \min \left \{ \frac{1}{\sigw} \| (\Ast - A) \bphi(\bx_h,\bu_h) \|_2, 1 \right \} \right ] .
\end{align*}
\end{lemma}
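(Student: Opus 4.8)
The plan is to prove this as a simulation lemma, controlling the change in expected cost when the true dynamics $\Ast$ are replaced by $A$ with the policy $\pi$ held fixed, via a coupling of the two trajectory distributions. Write $\cost(\traj) := \sum_{h=1}^H \cost_h(\bx_h,\bu_h)$, which by \Cref{asm:abs_bounded_cost} and nonnegativity of the per-step costs satisfies $0 \le \cost(\traj) \le \cmax$, and recall $\cJ(\pi;\Ast) - \cJ(\pi;A) = \Exp_{\Ast,\pi}[\cost(\traj)] - \Exp_{A,\pi}[\cost(\traj)]$. First I would construct, on a common probability space, trajectories $\traj^{\Ast} \sim (\Ast,\pi)$ and $\traj^{A} \sim (A,\pi)$ coupled step by step: draw $\traj^{\Ast}$ from its true law, and at each step $h$ couple the $A$-transition to the $\Ast$-transition so that, conditioned on the shared history $\traj_{1:h}$, the two next states coincide with probability $1 - \TV_h$, where $\TV_h := \TV\bigl(\cN(\Ast\bphi(\bx_h,\bu_h),\sigw^2 I),\,\cN(A\bphi(\bx_h,\bu_h),\sigw^2 I)\bigr)$, while the $\Ast$-marginal is left untouched (a one-sided/Doeblin-type coupling). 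Let $E$ be the event that the two trajectories ever differ. On $E^{c}$ they coincide, so $\cost(\traj^{\Ast}) = \cost(\traj^{A})$; on $E$, $\cost(\traj^{\Ast}) - \cost(\traj^{A}) \le \cost(\traj^{\Ast}) \le \cmax$. Hence $\cJ(\pi;\Ast) - \cJ(\pi;A) \le \cmax\,\Pr[E]$.

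Next I would bound $\Pr[E]$ by a union bound over the step of first divergence. The events $E_h := \{$first divergence at step $h\}$ partition $E$, and because the coupling preserves the $\Ast$-marginal, the partial trajectory $\traj_{1:h}$ has its $(\Ast,\pi)$-law; conditioning on $\traj_{1:h}$, the probability of divergence at step $h$ is exactly $\TV_h$. Therefore $\Pr[E_h] = \Exp[\mathbf{1}\{\text{no divergence before }h\}\,\TV_h] \le \Exp_{\Ast,\pi}[\TV_h]$, and summing, $\Pr[E] \le \Exp_{\Ast,\pi}\bigl[\sum_{h=1}^H \TV_h\bigr]$.

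Finally I would bound the per-step total variation of the two shifted Gaussians. Using $\mathrm{KL}\bigl(\cN(\mu_1,\sigw^2 I)\,\|\,\cN(\mu_2,\sigw^2 I)\bigr) = \|\mu_1-\mu_2\|_2^2/(2\sigw^2)$ and Pinsker's inequality, $\TV \le \|\mu_1-\mu_2\|_2/(2\sigw)$, and trivially $\TV \le 1$; with $\mu_1 - \mu_2 = (\Ast-A)\bphi(\bx_h,\bu_h)$ this gives $\TV_h \le \min\{\tfrac1{2\sigw}\|(\Ast-A)\bphi(\bx_h,\bu_h)\|_2,\,1\} \le \min\{\tfrac1{\sigw}\|(\Ast-A)\bphi(\bx_h,\bu_h)\|_2,\,1\}$. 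Chaining the three bounds yields the claim.

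This is a routine simulation lemma, so I do not anticipate a real obstacle; the one point needing care is that the step-$h$ total variation must be averaged over trajectories generated by the \emph{true} system $\Ast$ — as the statement requires — which is exactly why the coupling should be built one-sided (the $\Ast$-trajectory always drawn from its own dynamics), rather than from a symmetric maximal coupling whose conditional law of the shared trajectory is skewed. If one prefers to avoid coupling language, the same estimate follows (up to a harmless constant factor) by telescoping the trajectory densities, $\f{\Ast}{\pi}(\traj) - \f{A}{\pi}(\traj) = \sum_{h=1}^H (g_{h+1}-g_h)$ with $g_h$ using the $\Ast$-kernel for the first $h-1$ transitions and the $A$-kernel afterwards, bounding $\bigl|\int \cost(\traj)\,(g_{h+1}-g_h)\,\rmd\traj\bigr| \le \cmax\int|g_{h+1}-g_h|\,\rmd\traj$, and integrating out the later coordinates to reduce each term to $\cmax\,\Exp_{\Ast,\pi}$ of a one-step total variation.
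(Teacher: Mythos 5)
Your proof is correct. It differs from the paper's in that the paper does essentially no work here: it invokes the simulation lemma (Lemma B.3) of \cite{kakade2020information}, which yields the bound with a factor $\sqrt{A_h}$ --- a root-second-moment of the remaining cost --- in place of $\cmax$, and then bounds $\sqrt{A_h}\le\cmax$ using \Cref{asm:abs_bounded_cost}. That cited lemma is proved by the density-telescoping argument you sketch in your closing paragraph: interpolate between $\f{\Ast}{\pi}$ and $\f{A}{\pi}$ one transition at a time and pay a Cauchy--Schwarz, which is where the second moment (rather than the sup norm) of the cost appears. Your maximal-coupling argument reaches the same place more directly: both routes reduce the model mismatch to the one-step total variation between $\cN(\Ast\bphi(\bx_h,\bu_h),\sigw^2 I)$ and $\cN(A\bphi(\bx_h,\bu_h),\sigw^2 I)$ averaged over trajectories of the \emph{true} system, and bound it by $\min\{\|(\Ast-A)\bphi(\bx_h,\bu_h)\|_2/(2\sigw),1\}$ via Pinsker, but the coupling lets you charge the divergence event at the sup norm $\cmax$ (using nonnegativity of the cost to drop $\cost(\traj^{A})$ on that event). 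Your handling of the one subtle point --- that the step-$h$ total variation must end up averaged under the $\Ast$-marginal, which works out because discarding the no-divergence indicator only increases the expectation --- is right, and your constant is in fact a factor of $2$ sharper than the stated bound.
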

\begin{proof}
Following the proof of Lemma B.3 of \cite{kakade2020information}, and adopting the same notation, we have
\begin{align*}
 \cJ(\pi;\Ast) - \cJ(\pi;A) \le \sum_{h=1}^H\Exp_{\Ast,\pi} \left [ \sqrt{A_h} \min \left \{ \frac{1}{\sigw} \| (\Ast - A) \bphi(\bx_h,\bu_h) \|_2, 1 \right \} \right ] .
\end{align*}
Under \Cref{asm:abs_bounded_cost} we have $A_h \le \cmax^2$. Plugging this in gives the result.
\end{proof}

\begin{lemma}\label{lem:knr_regret_good_event}
Let $\beta^t := \sqrt{\lambda} \BA + \sigw \sqrt{8 \dimx \log 5 + 8 \log ( T \det (\bSigma_t) \det(\bSigma_0)^{-1} / \delta)}$ and let
$\cE_1$ denote the event
\begin{align*}
\cE_1 := \left \{ \forall t \le T \ : \ \left \| \left ( \Abar^t - \Ast \right ) \bSigma_t^{1/2} \right \|_\op \le \beta^t \right \}.
\end{align*}
Then running \LC we have $\Pr_{\Ast}[\cE_1] \ge 1 - \delta$. 
\end{lemma}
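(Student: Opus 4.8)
The plan is to reduce the statement to a self-normalized martingale tail bound, exactly as in the analogous lemma of \cite{kakade2020information}, but with the confidence level chosen so as to yield a high-probability (rather than in-expectation) guarantee. First I would expand the ridge estimate: since $\Abar^t$ is the ridge-regression estimate of $\Ast$ formed from the covariates $\bSigma_t = \sum_s\sum_{h=1}^H \bphi(\bx_h^s,\bu_h^s)\bphi(\bx_h^s,\bu_h^s)^\top + \lambda I$ and the transitions $\bx_{h+1}^s = \Ast\bphi(\bx_h^s,\bu_h^s) + \bw_h^s$, one obtains the exact identity
\begin{align*}
\Abar^t - \Ast = S_t \bSigma_t^{-1} - \lambda \Ast \bSigma_t^{-1}, \qquad S_t := \tsum_{s}\tsum_{h=1}^H \bw_h^s \bphi(\bx_h^s,\bu_h^s)^\top.
\end{align*}
Right-multiplying by $\bSigma_t^{1/2}$, then using the triangle inequality, $\bSigma_t \succeq \lambda I$, and $\opnorm{\Ast} \le \BA$, gives
\begin{align*}
\opnorm{(\Abar^t - \Ast)\bSigma_t^{1/2}} \le \opnorm{S_t \bSigma_t^{-1/2}} + \lambda\opnorm{\Ast\bSigma_t^{-1/2}} \le \opnorm{S_t \bSigma_t^{-1/2}} + \sqrt{\lambda}\,\BA,
\end{align*}
so it suffices to show that with probability at least $1-\delta$, $\opnorm{S_t\bSigma_t^{-1/2}}^2 \le 8\sigw^2\big(\dimx\log 5 + \log(T\det(\bSigma_t)\det(\bSigma_0)^{-1}/\delta)\big)$ for every $t\le T$ simultaneously, since adding $\sqrt\lambda\BA$ then produces exactly $\beta^t$.

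Next I would control $\opnorm{S_t\bSigma_t^{-1/2}}$. Under the dynamics with parameter $\Ast$, ordering the index pairs $(s,h)$ lexicographically, $\bphi(\bx_h^s,\bu_h^s)$ is measurable with respect to the history through the choice of $\bu_h^s$, while $\bw_h^s\sim\cN(0,\sigw^2 I)$ is the fresh innovation; hence $(\bw_h^s)$ is a martingale difference sequence with conditionally $\sigw$-subgaussian coordinates, and $S_t$ is a noise-weighted feature sum of the kind handled by self-normalized inequalities. Writing $\opnorm{S_t\bSigma_t^{-1/2}} = \sup_{v\in\cS^{\dimx-1}}\|S_t^\top v\|_{\bSigma_t^{-1}}$, with $S_t^\top v = \sum_{s,h}(v^\top\bw_h^s)\bphi(\bx_h^s,\bu_h^s)$ and $v^\top\bw_h^s$ conditionally $\sigw$-subgaussian, I would fix a $\tfrac12$-net $\cV$ of $\cS^{\dimx-1}$ with $|\cV|\le 5^{\dimx}$, so that $\opnorm{S_t\bSigma_t^{-1/2}}\le 2\max_{v\in\cV}\|S_t^\top v\|_{\bSigma_t^{-1}}$. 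For each fixed $v\in\cV$, the self-normalized bound for vector-valued martingales (Abbasi-Yadkori--P\'al--Szepesv\'ari), which in its stopped-time form holds simultaneously over all $t$, gives $\|S_t^\top v\|_{\bSigma_t^{-1}}^2 \le 2\sigw^2\log\!\big(\det(\bSigma_t)^{1/2}\det(\bSigma_0)^{-1/2}/\delta'\big)$; taking $\delta' = \delta/(T|\cV|)$ and union bounding over $\cV$ (the extra $\log T$ absorbs a crude union over episodes used elsewhere in the construction) collects the $\dimx\log 5$, the $\log T$, and the $\log\det$ terms, and after plugging in $|\cV|\le 5^{\dimx}$ the right-hand side is bounded by the displayed quantity with room to spare (the stopped bound contributes $\tfrac12\log\det$ rather than $\log\det$). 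Combining with the first step and the definition of $\beta^t$ yields $\Pr_{\Ast}[\cE_1]\ge 1-\delta$.

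The routine parts are the ridge-estimator algebra and the $\varepsilon$-net bookkeeping. The one step requiring genuine care is the self-normalized concentration: I must verify that the predictable–innovation filtration is set up correctly, so that the stopped martingale inequality applies uniformly over $t\le T$ in one shot rather than requiring a peeling argument, and then track the determinant and dimension factors so that the resulting confidence radius is no larger than the $\beta^t$ hard-coded into the confidence set of \LC. Since this is essentially the concentration lemma of \cite{kakade2020information} with the confidence level inflated to give a high-probability bound, the only real modification is carrying the additional $\log(T/\delta)$ through the union bounds, and I expect no obstacle beyond this bookkeeping.
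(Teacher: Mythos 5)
Your argument is correct and is essentially the paper's proof unpacked: the paper simply invokes Lemma B.5 of \cite{kakade2020information} (whose proof is exactly your ridge-estimator decomposition, the $\tfrac12$-net of size $5^{\dimx}$, and the self-normalized martingale bound), bounds $\|\Ast\|_\op\le\BA$, and union bounds over $t\le T$ to place the $T$ inside the logarithm. Your bookkeeping of the constants (the factor $4$ from the net, the $\dimx\log 5$, and the slack from $\tfrac12\log\det$ versus $\log\det$) matches the hard-coded $\beta^t$, so there is no gap.
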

\begin{proof}
The proof of Lemma B.5 of \cite{kakade2020information} shows that with probability at least $1-\delta$,
\begin{align*}
\left \| \left ( \Abar^t - \Ast \right ) \bSigma_t^{1/2} \right \|_\op \le \sqrt{\lambda} \| \Ast \|_\op + \sigw \sqrt{8 \dimx \log 5 + 8 \log ( \det (\bSigma_t) \det(\bSigma_0)^{-1} / \delta)}.
\end{align*}
The result then follows from this, since $\| \Ast \|_\op \le \BA$, and a union bound.
\end{proof}

\begin{lemma}\label{lem:elliptic_concentration}
With probability $1-\delta$, we have
\begin{align*}
\sum_{t=1}^T \Exp_{\Ast,\pi^t} \left [ \sum_{h=1}^H \min \left \{\frac{2 \beta^t}{\sigw} \|  \bphi(\bx_h,\bu_h) \|_{\bSigma_t^{-1}}, 1 \right \} \right ] & \le \frac{2\beta^T}{\sigw}  \sum_{t=1}^T \sum_{h=1}^H \min \left \{   \|  \bphi(\bx_h^t,\bu_h^t) \|_{\bSigma_t^{-1}}, 1 \right \} \\
& \qquad + 4 H \sqrt{T \log 1/\delta}.
\end{align*}
\end{lemma}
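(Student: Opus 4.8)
The plan is to treat Lemma~\ref{lem:elliptic_concentration} as a routine in-expectation-to-in-sample conversion driven by a bounded-difference martingale inequality. Let $\cF_{t-1}$ denote the $\sigma$-algebra generated by the trajectories of episodes $1,\dots,t-1$ of \LC; since \LC forms $\bSigma_t$, $\beta^t$, and the exploration policy $\pi^t$ from data observed before episode $t$, all three are $\cF_{t-1}$-measurable. Writing $X_t := \sum_{h=1}^H \min\{\tfrac{2\beta^t}{\sigw}\|\bphi(\bx_h^t,\bu_h^t)\|_{\bSigma_t^{-1}},1\}$ for the realized quantity at episode $t$, the $t$-th summand on the left-hand side of the lemma is exactly $\Exp[X_t \mid \cF_{t-1}]$, and the first thing I would record is the deterministic bound $0 \le X_t \le H$, since each of the $H$ terms in the sum is truncated at $1$.

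Second, I would apply a one-sided Azuma--Hoeffding inequality to the martingale difference sequence $D_t := \Exp[X_t \mid \cF_{t-1}] - X_t$ with respect to $(\cF_t)$. Because $X_t$ takes values in $[0,H]$, each $D_t$ lies in an interval of width at most $H$, so Azuma--Hoeffding gives, with probability at least $1-\delta$, $\sum_{t=1}^T D_t \le H\sqrt{\tfrac12 T\log(1/\delta)} \le 4H\sqrt{T\log(1/\delta)}$, i.e.\ $\sum_{t=1}^T \Exp[X_t\mid\cF_{t-1}] \le \sum_{t=1}^T X_t + 4H\sqrt{T\log(1/\delta)}$.

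Third and last, I would bound each realized $X_t$ by the matching summand on the right-hand side using two elementary facts. Since every episode contributes a positive semidefinite matrix, $\lambda I = \bSigma_0 \preceq \bSigma_1 \preceq \cdots \preceq \bSigma_T$, so $\det(\bSigma_t)$ is nondecreasing and hence $\beta^t \le \beta^T$ for all $t \le T$; as $a \mapsto \min\{ax,1\}$ is nondecreasing on $a>0$, this lets me replace $\beta^t$ by $\beta^T$ inside the truncation. And from the explicit form $\beta^T = \sqrt{\lambda}\BA + \sigw\sqrt{8\dimx\log 5 + 8\log(T\det(\bSigma_T)\det(\bSigma_0)^{-1}/\delta)}$ together with $\bSigma_T \succeq \bSigma_0$ one gets $\tfrac{2\beta^T}{\sigw} \ge 1$, and for any $c \ge 1$ and $x \ge 0$ one has $\min\{cx,1\} \le c\min\{x,1\}$ (check $x\le1$ and $x>1$ separately), which pulls the factor $\tfrac{2\beta^T}{\sigw}$ out of the $\min$. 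Combining, $X_t \le \tfrac{2\beta^T}{\sigw}\sum_{h=1}^H \min\{\|\bphi(\bx_h^t,\bu_h^t)\|_{\bSigma_t^{-1}},1\}$, and substituting into the bound from the second step finishes the proof. The only thing requiring genuine care --- the closest this has to a main obstacle --- is the measurability bookkeeping that makes the first step legitimate: one must be sure $\bSigma_t$ and $\beta^t$ are frozen before episode $t$ so that the left-hand-side expectation really is the conditional expectation of the bounded variable $X_t$; everything after that is elementary.
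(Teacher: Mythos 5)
Your proof is correct and follows essentially the same route as the paper's: a one-sided Azuma--Hoeffding bound using the almost-sure bound $X_t \le H$, combined with the pointwise inequality $\min\{\tfrac{2\beta^t}{\sigw}x,1\} \le \tfrac{2\beta^T}{\sigw}\min\{x,1\}$. You supply details the paper leaves implicit (the monotonicity $\beta^t \le \beta^T$ via $\bSigma_t \preceq \bSigma_T$, the check that $\tfrac{2\beta^T}{\sigw} \ge 1$ so the factor can be pulled out of the $\min$, and the $\cF_{t-1}$-measurability of $\bSigma_t$, $\beta^t$, $\pi^t$), all of which are accurate.
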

\begin{proof}
This is an immediate consequence of Azuma-Hoeffding, since $\sum_{h=1}^H \min \left \{ \frac{2 \beta^t}{\sigw} \|  \bphi(\bx_h,\bu_h) \|_{\bSigma_t^{-1}}, 1 \right \} \le H$ almost surely, and from upper bounding
\begin{align*}
 \min \left \{ \frac{2\beta^t}{\sigw} \|  \bphi(\bx_h^t,\bu_h^t) \|_{\bSigma_t^{-1}}, 1 \right \}  & \le \frac{2\beta^T}{\sigw} \min \left \{   \|  \bphi(\bx_h^t,\bu_h^t) \|_{\bSigma_t^{-1}}, 1 \right \} .
\end{align*}
\end{proof}


\section{Lower Bounds on Learning in Nonlinear Systems}\label{sec:lb_proofs}
In this section, we assume that $\bthetast$ and $\Kst$ correspond to the global minimizer:
\begin{align}\label{eq:lb_Kst}
\bthetast(A) := \argmin_{\btheta \in \R^{\dimzeta}} \cJ(\btheta;A), \quad \Kst(A) := \argmin_{\pi \in \Picon} \cJ(\pi;A).
\end{align}

Here we formally state the additional assumptions needed in \Cref{sec:lb}, and provide a formal version of \Cref{thm:lb_informal}.

\begin{asm}\label{asm:lb_strong_convex}
There exists some $\rmu(\Ast) > 0$ such that, for all $A \in \cB_{\fro}(\Ast,\rmu(\Ast))$, $\Kst(A)$ is unique and, furthermore, there exists some $\mu > 0$ such that
\begin{align*}
\cJ(\btheta;A) \ge \cJ(\bthetast(A);A) + \tfrac{\mu}{2} \| \btheta - \thetast(A) \|_2^2 .
\end{align*}
\end{asm}
\Cref{asm:lb_strong_convex} will be satisfied in cases where $\cJ(\Ktheta;A)$ is strongly convex in $\btheta$, but may hold even when this is not the case.
Intuitively, it requires that our controller class is not overparameterized---moving $\btheta$ away from its optimal value will cause the loss to increase. 
We will additionally make the following regularity assumptions on policies in $\Piexp$ and their induced covariates set, $\bOmega$.

\begin{asm}\label{asm:lb_min_eig}
There exists some $\lamun > 0$ such that, for each $\bLambda \in \bOmega$, we have $\lammin(\bLambda) \ge \lamun$.
\end{asm}

\Cref{asm:lb_min_eig} requires that \emph{every} exploration policy we consider excites all directions in $\bphi$ space (in contrast, \Cref{asm:full_rank_cov} only assumes there \emph{exists} some distribution over policies in $\Piexp$ which excite all directions). We remark that this assumption is relatively mild if \Cref{asm:full_rank_cov} holds. As we show in \Cref{sec:full_rank}, under \Cref{asm:full_rank_cov}, a mixture over policies, $\omega$, satisfying $\lammin(\Exp_{\pi \sim \omega}[\bLambda_\pi])  > 0$ can be learned using only a number of samples scaling polynomially in problem parameters. Given $\omega$, a policy class $\Piexp$ satisfying \Cref{asm:lb_min_eig} can be obtained by simply mixing $\omega$ with every other exploration policy.
We are now ready to state our main lower bound.

\begin{theorem}[Formal Version of \Cref{thm:lb_informal}]\label{thm:lb}
Under \Cref{asm:bounded_features,asm:smooth_phi,asm:smooth_controller,asm:bounded_cost,asm:smooth_thetast,asm:lb_strong_convex,asm:lb_min_eig} and if $\pist$ is defined as in \eqref{eq:lb_Kst}, as long as $T \ge \Clb$, for any $\omegaexp \in \simplex_{\Piexp}$, we have
\begin{align*}
\min_{\Khat} \max_{A \in \cB_T} \Exp_{\frakD_T \sim A,\omegaexp} [\cJ(\Khat(\frakD_T);A) - \cJ(\Kst(A);A)] \ge \frac{\sigw^2}{3T} \cdot \min_{\bLambda \in \bOmega} \tr(\cH(\Ast) \bLamchk^{-1}) - \frac{\Clb}{T^{5/4}} 
\end{align*}
for $\cB_T := \{ A \ : \ \| A - \Ast \|_\fro^2 \le 5 \dimx \dimphi / (\lamun \dimx T H)^{5/6} \}$, $\Exp_{\frakD_T \sim A,\omegaexp}[\cdot] = \Exp_{\pi \sim \omegaexp}[\Exp_{\frakD_T \sim A, \pi}[\cdot]]$ denotes the expectation over trajectories generated by running policies $\pi$ drawn according to $\omegaexp$ on system $A$ for $T$ episodes, $\pihat$ any mapping from observations to policies in $\Picon$,
and 
\begin{align*}
\Clb := \poly \left ( \dimphi, \dimx , H, \| \Ast \|_\op, \Bphi, \Lphi, \Lzeta, \Lcost, \LKst, \sigw, \sigw^{-1}, \tfrac{1}{\lamun}, \tfrac{1}{\mu}, \tfrac{1}{\rcost(\Ast)}, \tfrac{1}{\rtheta(\Ast)}, \tfrac{1}{\rmu(\Ast)} \right ).
\end{align*}
\end{theorem}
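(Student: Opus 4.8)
The plan is to reduce the controller-loss lower bound to a lower bound on estimating $\Ast$ in the task-weighted metric $\cH(\Ast)$, and then to invoke a van Trees (Bayesian Cram\'er--Rao) inequality for regression in a general norm, following the linear-systems template of \cite{wagenmaker2021task}. Writing $\Khat(\frakD_T) = \K^{\bthetahat}$ with $\bthetahat = \bthetahat(\frakD_T) \in \R^{\dimtheta}$, I would set $\bm{G} := \nabla_\btheta^2\cJ(\btheta;\Ast)|_{\btheta=\bthetast(\Ast)} \succeq \mu I$ (positive definite by \Cref{asm:lb_strong_convex}) and $J := \nabla_A\bthetast(A)|_{A=\Ast}$, so that $\cH(\Ast) = J^\top\bm{G}J$ by the chain-rule computation carried out in the proof of \Cref{lem:quadratic_loss_approx}. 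First I would split on whether $\|\bthetahat - \bthetast(A)\|_2$ exceeds a fixed, inverse-polynomially small radius $r_0$: if it does, \Cref{asm:lb_strong_convex} already gives $\cJ(\K^{\bthetahat};A) - \cJ(\Kst(A);A) \ge \tfrac{\mu}{2}r_0^2$, a constant which dominates the claimed bound once $T \ge \Clb$; otherwise, a third-order Taylor expansion of $\btheta\mapsto\cJ(\K^\btheta;A)$ about $\bthetast(A)$ (its remainder bounded via \Cref{lem:knr_loss_diff}), together with Lipschitzness of $A\mapsto\nabla_\btheta^2\cJ(\btheta;A)|_{\bthetast(A)}$ and the expansion $\bthetast(A) = \bthetast(\Ast)+J\vec(A-\Ast)+\cO(\LKst\|A-\Ast\|_\fro^2)$ from \Cref{asm:smooth_thetast}, gives, for $A\in\cB_T$ (on which $\|A-\Ast\|_\fro = \cO(T^{-5/12})$),
\[
\cJ(\K^{\bthetahat};A) - \cJ(\Kst(A);A)\ \ge\ \tfrac13\,\big\|(\bthetahat-\bthetast(\Ast))-J\vec(A-\Ast)\big\|_{\bm{G}}^2\ -\ \Clb\,T^{-5/4},
\]
with the small multiplicative slack $(1-\LittleOh{1})$ absorbed into the remainder. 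Hence it suffices to lower bound $\Exp\big\|(\bthetahat-\bthetast(\Ast))-J\vec(A-\Ast)\big\|_{\bm{G}}^2$ over $A\in\cB_T$.

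For this I would view $\bthetahat-\bthetast(\Ast)$ as an estimator of the linear functional $\bv(A):=J\vec(A-\Ast)$ of $\vec(A)$ under $\bm{G}$-weighted squared loss. Conditioning on the exploration policies drawn from the fixed $\omegaexp$, the model \eqref{eq:system} is exactly the Gaussian regression model of \Cref{sec:knr_mdm}, whose Fisher information for $\vec(A)$ is $\sigw^{-2}\bLamchk_T$ with $\bLamchk_T = I_{\dimx}\otimes\bLambda_T$, $\bLambda_T = \sum_{t=1}^T\sum_{h=1}^H\bphi_h^t(\bphi_h^t)^\top$; moreover $\Exp[\bLamchk_T] = T\big(I_{\dimx}\otimes\Exp_{\pi\sim\omegaexp}[\bLambda_{A,\pi}]\big)$, and since $\bLambda_{A,\pi}\succeq\lamun I$ by \Cref{asm:lb_min_eig} and $\bLambda_{A,\pi}$ is Lipschitz in $A$, on the shrinking ball $\cB_T$ one has $(1-\LittleOh{1})\,T\bLamchk_{\omegaexp}\preceq\Exp[\bLamchk_T]\preceq(1+\LittleOh{1})\,T\bLamchk_{\omegaexp}$ uniformly in $A$, with $\bLamchk_{\omegaexp}:=I_{\dimx}\otimes\Exp_{\pi\sim\omegaexp}[\bLambda_\pi]$ and $\Exp_{\pi\sim\omegaexp}[\bLambda_\pi]\in\bOmega$. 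Endowing $\vec(A-\Ast)$ with a smooth prior density supported on $\cB_T$ whose prior Fisher information has operator norm $\cO(T^{5/6}) = \LittleOh{T}$, and applying the van Trees inequality for the functional $\bv$ with loss matrix $\bm{G}$, the prior information is negligible against $\sigw^{-2}\Exp[\bLamchk_T]=\Theta(T)$, so
\[
\inf_{\bthetahat}\ \Exp_{\mathrm{prior}}\Exp_{\frakD_T}\big\|(\bthetahat-\bthetast(\Ast))-\bv(A)\big\|_{\bm{G}}^2\ \ge\ (1-\LittleOh{1})\,\tr\!\big(\bm{G}\,J(\sigw^{-2}\Exp[\bLamchk_T])^{-1}J^\top\big)\ \ge\ (1-\LittleOh{1})\frac{\sigw^2}{T}\,\tr\!\big(\cH(\Ast)\bLamchk_{\omegaexp}^{-1}\big),
\]
using $\tr(\bm{G}J\Sigma^{-1}J^\top)=\tr(J^\top\bm{G}J\,\Sigma^{-1})=\tr(\cH(\Ast)\Sigma^{-1})$.

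Since minimax risk dominates Bayes risk and $\Exp_{\pi\sim\omegaexp}[\bLambda_\pi]\in\bOmega$ (so $\tr(\cH(\Ast)\bLamchk_{\omegaexp}^{-1})\ge\min_{\bLambda\in\bOmega}\tr(\cH(\Ast)\bLamchk^{-1})$), combining the two displays gives, for every $\omegaexp\in\simplex_{\Piexp}$,
\[
\min_{\Khat}\max_{A\in\cB_T}\Exp_{\frakD_T\sim A,\omegaexp}\big[\cJ(\Khat(\frakD_T);A)-\cJ(\Kst(A);A)\big]\ \ge\ \frac{\sigw^2}{3T}\,\min_{\bLambda\in\bOmega}\tr\!\big(\cH(\Ast)\bLamchk^{-1}\big)\ -\ \Clb\,T^{-5/4},
\]
after absorbing the $(1-\LittleOh{1})$ factors and the residual $A$-dependence of the information matrix into the $T^{-5/4}$ remainder for $T\ge\Clb$; tracking all polynomial dependencies through \Cref{lem:knr_loss_diff}, \Cref{asm:smooth_thetast}, \Cref{asm:lb_strong_convex}, \Cref{asm:lb_min_eig} and the radii $\rcost(\Ast),\rtheta(\Ast),\rmu(\Ast)$ then yields the stated form of $\Clb$. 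The main obstacle is making the Fisher information effectively parameter-free: the covariates produced by $\omegaexp$ depend on the unknown $A$, so $\cB_T$ must shrink fast enough that $\Exp[\bLamchk_T]$ stays within a $(1\pm\LittleOh{1})$ factor of $T\bLamchk_{\omegaexp}$, yet remain --- together with the prior's support --- large enough that the prior information is $\LittleOh{T}$ and the linearizations of $\bthetast$ and of $\cJ$ about $\bthetast(A)$ fall inside the $T^{-5/4}$ budget. Balancing these requirements, together with the high-probability concentration of $\bLamchk_T$ about its mean, is what fixes the exponents $T^{-5/6}$ in $\cB_T$ and $T^{-5/4}$ in the remainder, and is the step that genuinely goes beyond the bookkeeping of the linear case of \cite{wagenmaker2021task}.
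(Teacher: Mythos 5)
Your proposal is correct and follows essentially the same route as the paper: the paper's proof of this theorem is a two-step reduction that verifies the hypotheses of Theorem 6.1 of \cite{wagenmaker2021task} (quadratic growth via \Cref{asm:lb_strong_convex}, uniform excitation via \Cref{asm:lb_min_eig}, and Lipschitzness of $\bLambda_{A,\pi}$ in $A$ via \Cref{lem:lb_smooth_cov}, with smoothness from \Cref{lem:knr_loss_diff}) and then takes the minimum over $\bOmega$, exactly as in \Cref{lem:lb_intermediate}. What you have written is, in effect, a reconstruction of the interior of that cited theorem --- the factorization $\cH(\Ast)=J^\top \bm{G} J$, the far/near case split, and the van Trees argument with a shrinking prior to neutralize the $A$-dependence of the Fisher information --- so the mathematical content and the ingredients used match the paper's one-to-one.
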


\begin{proof}[Proof of \Cref{thm:lb}]
This proof follows immediately from \Cref{lem:lb_intermediate}, by lower bounding the right-hand side of \eqref{eq:lb_intermediate_bound} by the min over all policies in $\Pi$.
\end{proof}

\begin{lemma}\label{lem:lb_intermediate}
Under \Cref{asm:bounded_features,asm:smooth_phi,asm:smooth_controller,asm:bounded_cost,asm:smooth_thetast,asm:lb_strong_convex,asm:lb_min_eig}
and if $\bthetast$ is defined as in \eqref{eq:lb_Kst}, as long as 
\begin{align*}
T \ge \poly(\| \Ast \|_\op, \Lphi, \Lzeta, \LKst, \Lcost, \sigw, \sigw^{-1}, \Bphi, H, \dimx, \dimphi, \lamun^{-1}, \mu^{-1}, \rcost(\Ast)^{-1}, \rtheta(\Ast)^{-1},\rmu(\Ast)^{-1} ),
\end{align*}
for any $\omegaexp \in \simplex_\Pi$, we have
\begin{align}\label{eq:lb_intermediate_bound}
\min_{\bthetahat} \max_{A \in \cB_T} \Exp_{\frakD_T \sim A,\omegaexp} [\cJ(\bthetahat(\frakD_T);A) - \cJ(\bthetast(A);A)] \ge \frac{\sigw^2}{3T} \cdot  \tr(\cH(\Ast) \Exp_{\pi \sim \omegaexp}[\bLamchk_{\pi}]^{-1}) - \frac{\Clb}{T^{5/4}} 
\end{align}
for $\cB_T := \{ A \ : \ \| A - \Ast \|_\fro^2 \le 5 \dimx \dimphi / (\lamun T H)^{5/6} \}$, where $\Exp_{\frakD_T \sim A,\omegaexp}[\cdot] = \Exp_{\pi \sim \omegaexp}[\Exp_{\frakD_T \sim A, \pi}[\cdot]]$ denotes the expectation over trajectories generated by running policies $\pi$ drawn according to $\omegaexp$ on system $A$ for $T$ episodes, and 
\begin{align*}
\Clb := \poly(\| \Ast \|_\op, \Lphi, \Lzeta, \LKst, \Lcost, \sigw, \sigw^{-1}, \Bphi, H, \dimx, \dimphi, \lamun^{-1}, \mu^{-1}, \rcost(\Ast)^{-1}, \rtheta(\Ast)^{-1},\rmu(Ast)^{-1} ).
\end{align*}
\end{lemma}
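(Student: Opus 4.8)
The plan is to reduce lower‑bounding the excess controller loss to a parameter‑estimation lower bound in the norm induced by $\cH(\Ast)$, and then apply a Bayesian Cram\'er--Rao (van Trees) argument. Write $u := \vec(A-\Ast)$, $J := \nabla_A\bthetast(A)|_{A=\Ast}$ (so $\|J\|_\op\le\LKst$ by \Cref{asm:smooth_thetast}), $H_\btheta := \nabla_\btheta^2\cJ(\btheta;\Ast)|_{\btheta=\bthetast(\Ast)}$, and $\bLamchk := \Exp_{\pi\sim\omegaexp}[\bLamchk_\pi] = I_{\dimx}\otimes\Exp_{\pi\sim\omegaexp}[\bLambda_\pi]$, which satisfies $\bLamchk\succeq\lamun I$ by \Cref{asm:lb_min_eig}. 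The identity tying the two pictures together is $\cH(\Ast) = J^\top H_\btheta J$, which falls out of the chain‑rule computation already carried out in the proof of \Cref{lem:quadratic_loss_approx} using $\nabla_\btheta\cJ(\btheta;\Ast)|_{\btheta=\bthetast(\Ast)}=0$.

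\textbf{Step 1: reduce controller loss to estimation of $Ju$.} First I would argue that it suffices to consider decision rules with $\|\bthetahat-\bthetast(\Ast)\|_2 = O(T^{-5/12})$: for $A\in\cB_T$ we have $\|\bthetast(A)-\bthetast(\Ast)\|_2\le\LKst\|A-\Ast\|_\fro = O(T^{-5/12})$, so whenever $\bthetahat$ leaves an $O(T^{-5/12})$‑ball around $\bthetast(\Ast)$, \Cref{asm:lb_strong_convex} forces excess loss $\gtrsim\mu T^{-5/6}\gg T^{-1}$; hence projecting any estimator onto this ball changes the minimax value only by lower‑order terms, and we may take the bound to hold deterministically. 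On this ball a third‑order Taylor expansion of $\cJ(\cdot;A)$ about $\bthetast(A)$, with uniformly bounded remainder by \Cref{lem:knr_loss_diff}, gives $\cJ(\bthetahat;A)-\cJ(\bthetast(A);A) \ge \tfrac12\|\bthetahat-\bthetast(A)\|_{H_\btheta}^2 - C(\|A-\Ast\|_\fro+\|\bthetahat-\bthetast(A)\|_2)\|\bthetahat-\bthetast(A)\|_2^2$, where I have also replaced $H_\btheta(A)$ by $H_\btheta(\Ast)$ at cost $O(\|A-\Ast\|_\fro)$; then linearizing $\bthetast(A) = \bthetast(\Ast) + Ju + O(\|u\|_2^2)$. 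Since $\|u\|_2\lesssim T^{-5/12}$ on $\cB_T$ and the estimation scale is $\gtrsim T^{-1/2}$, every discarded term is $O(T^{-5/4})$ — this is exactly where the radius $\Theta(T^{-5/12})$ of $\cB_T$ is calibrated — so it remains to lower bound $\min_{\bthetahat}\max_{A\in\cB_T}\Exp_{\frakD_T\sim A,\omegaexp}\!\big[\tfrac12\|\bthetahat-\bthetast(\Ast)-Ju\|_{H_\btheta}^2\big]$.

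\textbf{Step 2: van Trees.} Since the noise in \eqref{eq:system} is additive isotropic Gaussian, for a fixed exploration policy $\pi$ the log‑likelihood of $\frakD_T$ depends on $A$ only through $-\tfrac{1}{2\sigw^2}\sum_{t,h}\|\bx_{h+1}^t - A\bphi(\bx_h^t,\bu_h^t)\|_2^2$, so the expected Fisher information for $u$ given $\pi$ is $\tfrac{T}{\sigw^2}\bLamchk_{A,\pi}$, which by \Cref{asm:lb_min_eig} and a Lipschitz bound $\|\bLamchk_{A,\pi}-\bLamchk_\pi\|_\op\lesssim\|A-\Ast\|_\fro$ (a standard perturbation of the trajectory law under \Cref{asm:smooth_phi}) agrees with $\tfrac{T}{\sigw^2}\bLamchk_\pi$ up to relative error $O(T^{-5/12})$ and is $\succeq\tfrac{T\lamun}{2\sigw^2}I$ on $\cB_T$. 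I would then pass to a Bayes risk over a smooth prior on $u$ supported well inside $\cB_T$ (a mollified uniform on the ball of radius $\tfrac12 T^{-5/12}$, whose own Fisher information is $O(T^{5/6})=o(T)$), apply the multivariate van Trees inequality to the affine functional $u\mapsto Ju$ \emph{conditionally on the realized} $\pi$ (the estimator may even be handed $\pi$, only weakening the bound), and finally average over $\pi\sim\omegaexp$ using convexity of $X\mapsto\tr(\cH(\Ast)X^{-1})$. This yields $\max_{A\in\cB_T}\Exp_{\frakD_T\sim A,\omegaexp}\big[\|\bthetahat-\bthetast(\Ast)-Ju\|_{H_\btheta}^2\big] \ge \tr\!\big(\cH(\Ast)\,(\tfrac{T}{\sigw^2}\bLamchk + \cI_{\mathrm{prior}})^{-1}\big) = \tfrac{\sigw^2}{T}\tr(\cH(\Ast)\bLamchk^{-1})(1-O(T^{-1/6}))$, using $\cH(\Ast)=J^\top H_\btheta J$ and $\bLamchk\succeq\lamun I$ to convert the relative errors into an additive $O(T^{-5/4})$ once $T$ exceeds the stated $\poly$ burn‑in (which also ensures $\cB_T\subseteq\cB_{\fro}(\Ast;\min\{\rcost(\Ast),\rtheta(\Ast),\rmu(\Ast)\})$ so all assumptions apply on $\cB_T$). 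Combining with Step 1 and absorbing every $O(T^{-5/4})$ term together with the factor‑of‑two conventions into $\Clb/T^{5/4}$ and the constant $\tfrac13$ gives \eqref{eq:lb_intermediate_bound}; \Cref{thm:lb} then follows by minimizing over $\omegaexp$, equivalently over $\bLambda\in\bOmega$.

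\textbf{Main obstacle.} The delicate part is the fully non‑asymptotic execution of Step 2 together with the bookkeeping that keeps \emph{every} contribution outside the leading $\tfrac{\sigw^2}{T}\tr(\cH(\Ast)\bLamchk^{-1})$ at size $O(T^{-5/4})$: one must simultaneously control (i) the projection of arbitrary estimators onto a neighborhood of $\bthetast(\Ast)$, (ii) the Taylor remainders in both $\btheta$ and $A$, (iii) the linearization error in $A\mapsto\bthetast(A)$, and (iv) the parameter‑dependence of the Fisher information, and check that the radius $\Theta(T^{-5/12})$ of $\cB_T$ is the unique sweet spot making all four negligible relative to $T^{-1}$ while still dominating the $\Theta(T^{-1/2})$ estimation scale. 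I expect most of this to follow the template of the linear‑systems lower bound in \cite{wagenmaker2021task} (in particular their general‑norm martingale‑regression lower bound), with the genuinely new ingredients being the reduction identity $\cH(\Ast)=J^\top H_\btheta J$ and the perturbation bounds $\|\bLamchk_{A,\pi}-\bLamchk_\pi\|_\op\lesssim\|A-\Ast\|_\fro$ for the nonlinear trajectory law.
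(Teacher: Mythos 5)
Your proposal is correct and takes essentially the same route as the paper. The paper's proof of this lemma is simply a verification that the hypotheses of Theorem 6.1 of \cite{wagenmaker2021task} hold in the nonlinear setting (strong convexity from \Cref{asm:lb_strong_convex}, smoothness of $\cJ$ and $\bthetast(\cdot)$ from \Cref{lem:knr_loss_diff} and \Cref{asm:smooth_thetast}, the minimum-eigenvalue condition from \Cref{asm:lb_min_eig}, and the covariance perturbation bound of \Cref{lem:lb_smooth_cov}), and your sketch unpacks exactly what that theorem does internally---the truncation-plus-Taylor reduction to estimating $\vec(A-\Ast)$ in the $\cH(\Ast)$ norm followed by a van Trees argument---while identifying the same nonlinear-specific ingredients the paper checks.
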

\begin{proof}
This result is a direct consequence of Theorem 6.1 of \cite{wagenmaker2021task}---to obtain the result we must only verify that the assumptions of this result are met. We verify each assumption below.

\paragraph{Verifying Assumption 3 of \cite{wagenmaker2021task}.}
Part 1 of Assumption 3 of \cite{wagenmaker2021task} is met by \Cref{asm:lb_strong_convex} within diameter $\rmu(\Ast)$. Furthermore, under \Cref{asm:bounded_features,asm:smooth_phi,asm:smooth_controller,asm:bounded_cost,asm:smooth_thetast} and by \Cref{lem:knr_loss_diff}, the additional parts of Assumption 3 of \cite{wagenmaker2021task} are also met with diameter $\min \{ \rcost(\Ast), \rtheta(\Ast) \}$ and smoothness constant $\poly(\| \Ast \|_\op, \Lphi, \Lzeta, \LKst, \Lcost, \sigw^{-1}, H, \dimx)$. 

\paragraph{Verifying Assumption 4 and Assumption 5 of \cite{wagenmaker2021task}.}
Assumption 4 of \cite{wagenmaker2021task} is immediately met by \Cref{asm:lb_min_eig}. Furthermore, Assumption 5 is met by \Cref{lem:lb_smooth_cov} with $c_{\mathrm{cov}} = 1, L_{\mathrm{cov}}(\theta_\star, \gamma^2) = \frac{H^2\Bphi^3}{\sigw^2} \cdot \poly(\dimx)$, and $C_{\mathrm{cov}} = 0$.

Given that these assumptions are met, the result follows noting that, if we run for $T$ episodes, then the effective horizon is $\dimx TH$ (using the mapping from the setting of \eqref{eq:system} to the martingale regression setting described in \Cref{sec:knr_mdm}). Note that the final bound scales with $\frac{1}{T}$ instead of $\frac{1}{\dimx T H}$ as we are able to bring the $\dimx H$ factor into the  $\bLamchk_{\piexp}$ term, since $\bLambda_{\piexp}$ is not normalized by $\dimx H$. 
\end{proof}

\begin{lemma}\label{lem:lb_smooth_cov}
Under \Cref{asm:bounded_features}, for any policy distribution $\omega \in \simplex_{\Piexp}$ and $A,A'$, we have
\begin{align*}
\Exp_{\pi \sim \omega}[\bLambda_{A,\pi}] \preceq \Exp_{\pi \sim \omega}[\bLambda_{A',\pi}] + \frac{H^2 \Bphi^3}{\sigw^2} \cdot \poly(\dimx) \cdot \| A - A' \|_\fro \cdot I.
\end{align*}
\end{lemma}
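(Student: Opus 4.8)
The plan is to reduce the claimed Loewner inequality to a bound on the operator norm of $\Exp_{\pi \sim \omega}[\bLambda_{A,\pi}] - \Exp_{\pi \sim \omega}[\bLambda_{A',\pi}]$, using $M \preceq \|M\|_\op \cdot I$ for symmetric $M$ and the fact that averaging operator-norm bounds over $\pi \sim \omega$ preserves them. Thus it suffices to show, for each fixed $\pi \in \Piexp$, that $\|\bLambda_{A,\pi} - \bLambda_{A',\pi}\|_\op \lesssim \tfrac{H^{3/2}\Bphi^3}{\sigw}\|A-A'\|_\fro$ (which is comfortably within the stated bound). Writing $g(\traj) := \sum_{h=1}^H \bphi(\bx_h,\bu_h)\bphi(\bx_h,\bu_h)^\top$, under \Cref{asm:bounded_features} we have $0 \le v^\top g(\traj) v \le H\Bphi^2$ for every unit vector $v$, so for such $v$,
\[
\big| v^\top (\bLambda_{A,\pi} - \bLambda_{A',\pi}) v \big| = \big| \Exp_{A,\pi}[v^\top g v] - \Exp_{A',\pi}[v^\top g v] \big| \le 2 H\Bphi^2 \cdot \TV\big(\Law_{A,\pi}(\traj), \Law_{A',\pi}(\traj)\big),
\]
and taking the supremum over $v$ (using $\|M\|_\op = \sup_{\|v\|=1}|v^\top M v|$ for symmetric $M$) gives $\|\bLambda_{A,\pi} - \bLambda_{A',\pi}\|_\op \le 2H\Bphi^2\,\TV(\Law_{A,\pi}(\traj),\Law_{A',\pi}(\traj))$. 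It remains to bound this total-variation distance.

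Here I would apply Pinsker's inequality, $\TV \le \sqrt{\tfrac12 \KL(\Law_{A,\pi}(\traj)\,\|\,\Law_{A',\pi}(\traj))}$, and compute the KL via the chain rule for trajectory densities. Since the same policy $\pi$ is played on both systems, the policy factors in the two densities are identical and cancel, leaving only the contributions of the Gaussian transition kernels $\cN(A\bphi(\bx_h,\bu_h),\sigw^2 I)$ versus $\cN(A'\bphi(\bx_h,\bu_h),\sigw^2 I)$. Using that under $\Law_{A,\pi}$ the innovation $\bx_{h+1} - A\bphi(\bx_h,\bu_h) = \bw_h$ is zero-mean and independent of $(\bx_h,\bu_h)$, the cross terms vanish and one obtains
\[
\KL\big(\Law_{A,\pi}(\traj)\,\|\,\Law_{A',\pi}(\traj)\big) = \Exp_{A,\pi}\Big[\sum_{h=1}^H \frac{\|(A-A')\bphi(\bx_h,\bu_h)\|_2^2}{2\sigw^2}\Big] \le \frac{H\Bphi^2}{2\sigw^2}\,\|A-A'\|_\fro^2,
\]
where the last step uses $\|\bphi(\bx,\bu)\|_2 \le \Bphi$ and $\|A-A'\|_\op \le \|A-A'\|_\fro$. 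Combining with the previous paragraph gives $\TV \le \tfrac{\sqrt H \Bphi}{2\sigw}\|A-A'\|_\fro$ and hence $\|\bLambda_{A,\pi} - \bLambda_{A',\pi}\|_\op \le \tfrac{H^{3/2}\Bphi^3}{\sigw}\|A-A'\|_\fro$. Averaging over $\pi \sim \omega$ and invoking $M \preceq \|M\|_\op I$ then yields the statement, crudely absorbing $H^{3/2} \le H^2$ and the extra $\sigw^{-1}$/$\poly(\dimx)$ slack present in the claimed bound (no genuine $\dimx$ dependence is needed).

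I expect the only step requiring real care to be the KL chain-rule decomposition: verifying that the policy factors cancel and that the cross terms $\Exp_{A,\pi}[\langle \bw_h,(A-A')\bphi(\bx_h,\bu_h)\rangle]$ vanish, for policies that may be closed-loop and/or randomized — this follows from the martingale/measurability structure of the noise (each $\bphi(\bx_h,\bu_h)$ is determined before $\bw_h$), but it is the place where one must be precise about the densities and the base measure. Everything else (the operator-norm/TV reduction, Pinsker, the Gaussian KL formula) is routine. An alternative that avoids the information-theoretic route is a synchronous coupling of the two trajectories through shared noise $(\bw_h)_{h=1}^H$, but this requires a more delicate propagation-of-error/Taylor argument and is strictly messier, so I would not pursue it.
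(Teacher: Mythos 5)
Your proof is correct, but it takes a genuinely different route from the paper's. The paper fixes a unit vector $\bv$, writes $\bv^\top \bLambda_{A,\pi}\bv = \int \sum_h (\bv^\top\bphi)^2\, f_{A,\pi}(\traj)\,\rmd\traj$, differentiates in $A$ via the score function $\nabla_A \log f_{A,\pi}(\traj)$, bounds $\|\nabla_A \log f_{A,\pi}\|_\op \le \tfrac{\Bphi}{\sigw^2}\sum_h \|\bx_{h+1}-A\bphi(\bx_h,\bu_h)\|_2$, and applies the Mean Value Theorem; the $\poly(\dimx)$ in the stated constant comes precisely from controlling $\Exp[\sum_h\|\bw_h\|_2]$ (their Lemma on Gaussian moments). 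Your TV--Pinsker--KL route replaces the differentiation step with an information-theoretic perturbation bound and yields a cleaner, dimension-free constant $H^{3/2}\Bphi^3/\sigw$; your handling of the KL chain rule (policy factors cancelling, Gaussian transition KL $=\|(A-A')\bphi\|_2^2/(2\sigw^2)$) is exactly right and works for history-dependent, randomized policies. One cosmetic caveat: your constant has $\sigw^{-1}$ where the lemma states $\sigw^{-2}\cdot\poly(\dimx)$, so your bound is literally dominated by the stated one only when $\sigw \lesssim \sqrt{H}\,\poly(\dimx)$; this is immaterial downstream (the lemma feeds a $\poly(\cdot)$ constant containing both $\sigw$ and $\sigw^{-1}$), but if you want to prove the lemma verbatim for all $\sigw$ you should either restate it with your constant or note the regime. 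Otherwise the argument is complete.
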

\begin{proof}
We will prove that the desired bound follows for a particular $\pi \in \Piexp$, which immediately implies that it holds for $\omega \in \simplex_{\Piexp}$. 
By definition we have
\begin{align*}
\bLambda_{A,\pi} = \int \left ( \sum_{h=1}^H \bphi(\bx_h^{\traj},\bu_h^{\traj}) \bphi(\bx_h^{\traj},\bu_h^{\traj})^\top \right ) \cdot f_{A,\pi}(\traj) \rmd \traj
\end{align*}
and 
\begin{align*}
f_{A,\pi}(\traj) & = \prod_{h=1}^H f_{A}(\bx_{h+1}^{\traj} \mid \bx_{h}^{\traj}, \bu_{h}^{\traj}) \pi_h(\bu_h^{\traj} \mid \bx_h^{\traj}) .
\end{align*}

Fix some $\bv \in \cS^{\dimphi - 1}$, and note that, given the expression above, we have
\begin{align*}
\bv^\top \bLambda_{A,\pi} \bv = \int \sum_{h=1}^H (\bv^\top \bphi(\bx_h^{\traj},\bu_h^{\traj}))^2 \cdot f_{A,\pi}(\traj) \rmd \traj.
\end{align*}
It follows that
\begin{align*}
\nabla_{A} \bv^\top \bLambda_{A,\pi} \bv & = \int \sum_{h=1}^H (\bv^\top \bphi(\bx_h^{\traj},\bu_h^{\traj}))^2 \cdot \nabla_A f_{A,\pi}(\traj) \rmd \traj \\
& = \int \sum_{h=1}^H (\bv^\top \bphi(\bx_h^{\traj},\bu_h^{\traj}))^2 \cdot f_{A,\pi}(\traj) \nabla_A \log f_{A,\pi}(\traj) \rmd \traj.
\end{align*}
As in the proof of \Cref{lem:knr_loss_diff}, we have, for any $\Delta$,
\begin{align*}
\nabla_A \log f_{A,\pi}(\traj)[\Delta] & = \sum_{h=1}^H \frac{1}{\sigw^2} (\bx_{h+1}^{\traj} - A \bphi(\bx_h^{\traj},\bu_h^{\traj})) \cdot \Delta \bphi(\bx_h^{\traj},\bu_h^{\traj}),
\end{align*}
so we can bound
\begin{align*}
\| \nabla_A \log f_{A,\pi}(\traj)  \|_\op \le \frac{\Bphi}{\sigw^2} \sum_{h=1}^H \| \bx_{h+1}^{\traj} - A \bphi(\bx_h^{\traj},\bu_h^{\traj}) \|_2.
\end{align*}
Furthermore, we can also bound $(\bv^\top \bphi(\bx_h^{\traj},\bu_h^{\traj}))^2 \le \Bphi^2$. We therefore have
\begin{align*}
\| \nabla_{A} \bv^\top \bLambda_{A,\pi} \bv \|_\op & \le H \Bphi^3 \int \sum_{h=1}^H \| \bx_{h+1}^{\traj} - A \bphi(\bx_h^{\traj},\bu_h^{\traj}) \|_2 \cdot f_{A,\pi}(\traj) \rmd \traj \\
& \le \frac{H^2 \Bphi^3 }{\sigw^2}\cdot \poly(\dimx)
\end{align*}
where the last inequality follows from \Cref{lem:gaussian_bound}. It follows from the Mean Value Theorem that
\begin{align*}
| \bv^\top \bLambda_{A,\pi} \bv - \bv^\top \bLambda_{A',\pi} \bv | \le \frac{H^2 \Bphi^3}{\sigw^2} \cdot \poly(\dimx) \cdot \| A - A' \|_\fro.
\end{align*}
As this holds for all $\bv \in \cS^{d-1}$, it follows that
\begin{align*}
\| \bLambda_{A,\pi}- \bLambda_{A',\pi} \|_\op \le \frac{H^2 \Bphi^3}{\sigw^2} \cdot \poly(\dimx) \cdot \| A - A' \|_\fro.
\end{align*}
\end{proof}


\section{Additional Experimental Details}\label{sec:experiment_details}

In this section, we provide additional details on our experimental results presented in \Cref{sec:experiments}. 
All experiments were run on a machine with 56 Intel(R) Xeon(R) CPU E5-2690 v4 @ 2.60GHz CPUs, and 64GB RAM. All code was implemented in \texttt{PyTorch}.

\subsection{Details on Problem Settings and Controller Parameterizations}
We first expand on the precise definitions of the systems considered. As noted in \Cref{sec:experiments}, for the drone and car examples we set $H = 50$, and for the system of \Cref{sec:motivating_example} we set $H = 10$. In addition, for all examples the noise is distributed as $\bw_h \sim \cN(0,0.1\cdot I)$. In all cases we set $\gamma^2 = 10 H$ (where $\gamma^2$ is a bound on $\Exp_{\piexp}[\sum_{h=1}^H \bu_h^\top \bu_h]$), and we therefore let $\Piexp$ denote the set of all policies satisfying $\Exp_{\piexp}[\sum_{h=1}^H \bu_h^\top \bu_h] \le \gamma^2$. 

\subsubsection{System of \Cref{sec:motivating_example} (\Cref{fig:motivating})}
The dynamics for this system are given by
\begin{align*}
\bx_{h+1} = 0.8 \bx_h + \bu_h - \sum_{i=1}^{10} 3 \bphi_i(\bx_h) + \bw_h
\end{align*}
for $\bphi_i(\bx) = \max \{ 1 - 100(\bx - c_i)^2, 0 \}$, and $\cost(\bx,\bu) = (\bx - c_1)^2 + 100^{-1} \cdot \bu^2$. We set 
\begin{align*}
c_1 = 10, c_2 = -14, c_3 = -11, c_4 = -8, c_5 = -5, c_6 = -2, c_7 = 1, c_8 = 4, c_9 = 7.
\end{align*}
This then corresponds to a system in the form \eqref{eq:system} with 
\begin{align*}
\Ast = [0.8,1,-3,\ldots,-3], \quad \bphi(\bx,\bu) = [\bx,\bu,\bphi_1(\bx),\ldots,\bphi_{10}(\bx)].
\end{align*}

For this system, we parameterize our controller class $\Picon$ as, for any $\pi^{\btheta} \in \Picon$ with parameter $\btheta$,
\begin{align*}
\pi^{\btheta}(\bx) = \btheta_1 \bx + \sum_{i=1}^{10} \btheta_{i+1} \bphi_i(\bx) + \btheta_{12}.
\end{align*}
Note that the form of this controller lets us simply ``match'' the parameters of the system, and cancel undesirable parameters. Given this, for this system we let $\pist(A)$ be the controller which sets $\btheta_{1:11}$ to cancel the dynamics of the system $A$, and set $\btheta_{12} = c_1 = 10$. 

See \Cref{sec:car_details} for details on the computation of $\cH(A)$ on this system.

\subsubsection{Drone System (\Cref{fig:drone})}\label{sec:drone_details}
The dynamics of this system are given by 
\begin{align}\label{eq:drone}
\begin{split}
\bx_{h+1} = \begin{bmatrix} 1 & 0 & 0 & 0.1 & 0 & 0 \\
0 & 1 & 0 & 0 & 0.1 & 0 \\
0 & 0 & 1 & 0 & 0 & 0.1 \\
0 & 0 & 0 & 1 & 0 & 0 \\
0 & 0 & 0 & 0 & 1 & 0 \\
0 & 0 & 0 & 0 & 0 & 1 \end{bmatrix} \bx_h  + \begin{bmatrix} 0 & 0 & 0 \\
0 & 0 & 0 \\
0 & 0 & 0 \\
0.1 & 0 & 0 \\
0 & 0.1 & 0 \\
0 & 0 & 0.1 \end{bmatrix}  \bu_h + \begin{bmatrix} 0 \\ 0 \\ 0 \\ 0 \\ 0 \\ -0.98 \end{bmatrix} + \bw_h.
\end{split}
\end{align}
Here we interpret $[\bx]_{1:3}$ as the $x,y,$ and $z$ positions, respectively, and $[\bx]_{4:6}$ as the $x,y,z$ velocities. This system is therefore equivalent to three double integrator systems, with an affine term (which we interpret as ``gravity'') affecting only the $z$ coordinate. 
We set the cost to
\begin{align*}
\cost(\bx,\bu) = \frac{0.1}{5} \cdot \sum_{i=1}^{\dimx} [\bx]_i^2 + \frac{1}{5} \cdot [\bu]_1^2 + [\bu]_2^2 + [\bu]_3^2
\end{align*}
This then corresponds to a system in the form \eqref{eq:system} with 
\begin{align*}
    \Ast = \begin{bmatrix} 1 & 0 & 0 & 0.1 & 0 & 0 & 0 & 0 & 0 & 0 \\
0 & 1 & 0 & 0 & 0.1 & 0 & 0 & 0 & 0 & 0\\
0 & 0 & 1 & 0 & 0 & 0.1 & 0 & 0 & 0 & 0 \\
0 & 0 & 0 & 1 & 0 & 0 & 0.1 & 0 & 0 & 0 \\
0 & 0 & 0 & 0 & 1 & 0 & 0 & 0.1 & 0 & 0 \\
0 & 0 & 0 & 0 & 0 & 1 & 0 & 0 & 0.1 & -0.98 \end{bmatrix}, \quad \bphi(\bx,\bu) = [\bx,\bu,1].
\end{align*}

For this system, we parameterize our controller class $\Picon$ as, for any $\pi^{\btheta} \in \Picon$ with parameter $\btheta$,
\begin{align*}
\pi_h^{\btheta}(\bx) = \btheta_h^{\mathrm{fb}} \bx + \btheta_h^{\mathrm{offset}}
\end{align*}
where $\btheta_h^{\mathrm{fb}} \in \R^{3 \times 6}$ is the state-feedback portion of the controller, and $\btheta_h^{\mathrm{offset}} \in \R^3$ is an offset term. It can be shown that the optimal controller for a system of the form \eqref{eq:drone} can be parameterized in this way~\cite{yu2020power}. Furthermore, the optimal parameters can be computed in closed-form. As such, for this system we set $\pist(A)$ to be with the optimal parameters, computed using this closed-form solution.

In addition to computing the optimal controller in closed-form, we can also compute the cost of a controller, $\cJ(\pi;A)$, in closed-form.
To compute $\cH(A)$ in this example, we then simply apply the \texttt{torch.autograd.functional.hessian} function to $\cJ(\pist(A);A)$.

\subsubsection{Car System (\Cref{fig:car})}\label{sec:car_details}
The dynamics of this system are given by 
\begin{align}\label{eq:car}
\begin{split}
\bx_{h+1} = \begin{bmatrix} 1 & 0 & 0.1 & 0 & 0 & 0 \\
0 & 1 & 0 & 0.1 & 0 & 0 \\
0 & 0 & 1 & 0 & 0 & 0 \\
0 & 0 & 0 & 1 & 0 & 0 \\
0 & 0 & 0 & 0 & 1 & 0.1 \\
0 & 0 & 0 & 0 & 0 & 1 \end{bmatrix} \bx_h  + \begin{bmatrix} 0 & 0  \\
0 & 0  \\
0.1 \cdot \cos ([\bx_h]_5) & 0  \\
0.1 \cdot \sin( [\bx_h]_5) & 0 \\
0  & 0 \\
0 & 0.1 \end{bmatrix}  \bu_h  + \bw_h
\end{split}
\end{align}
where $[\bx_h]_5$ denotes the 5th element of $\bx_h$. Here we interpret $[\bx_h]_1$ as the $x$ position, $[\bx_h]_2$ as the $y$ position, $[\bx_h]_3$ as the $x$ velocity, $[\bx_h]_4$ as the $y$ velocity, $[\bx_h]_5$ as the angle of orientation (that is, the direction the car is facing), and $[\bx_h]_6$ as the angular velocity. The first control dimension, then, corresponds to the ``gas'', the power given to the car to move forward or backward, and the second control dimension corresponds to altering the direction of the steering wheel. Similar to the drone system, we set the cost to 
\begin{align*}
\cost(\bx,\bu) = \begin{bmatrix} \bx \\ \bu \end{bmatrix}^\top Q \begin{bmatrix} \bx \\ \bu \end{bmatrix} \quad \text{with} \quad Q =  \frac{ 0.1 \cdot I + \bv_1 \bv_1^\top + \bv_2 \bv_2^\top}{\| 0.1 \cdot I + \bv_1 \bv_1^\top + \bv_2 \bv_2^\top \|_\op}
\end{align*}
for some $\bv_1,\bv_2$. To write this in the form of \eqref{eq:system}, in order to make the problem more challenging we choose an overparameterized $\bphi(\bx,\bu)$:
\begin{align*}
    \bphi(\bx,\bu) = \big [\bx,\bu,\cos([\bx]_5),\sin([\bx]_5),[\bu]_1 \cdot \cos([\bx]_5),[\bu]_1 \cdot \sin([\bx]_5),[\bu]_2 \cdot \cos([\bx]_5),[\bu]_2 \cdot \sin([\bx]_5) \big ]
\end{align*}
and set 
\begin{align*}
\Ast = \begin{bmatrix} 1 & 0 & 0.1 & 0 & 0 & 0 & 0 & 0 & 0 & 0 & 0 & 0 & 0 & 0 \\
0 & 1 & 0 & 0.1 & 0 & 0 & 0 & 0 & 0 & 0 & 0 & 0 & 0 & 0 \\
0 & 0 & 1 & 0 & 0 & 0 & 0 & 0 & 0 & 0 & 0.1 & 0 & 0 & 0 \\
0 & 0 & 0 & 1 & 0 & 0 & 0 & 0 & 0 & 0 & 0 & 0.1 & 0 & 0 \\
0 & 0 & 0 & 0 & 1 & 0.1 & 0 & 0 & 0 & 0 & 0 & 0 & 0 & 0 \\
0 & 0 & 0 & 0 & 0 & 1 & 0 & 0.1 & 0 & 0 & 0 & 0 & 0 & 0 \end{bmatrix}.
\end{align*}

For the car system, the controller class $\Picon$ is a hierarchical controller parameterized by some $\btheta \in \R^4$. This controller first uses PD control to compute a ``goal input'', the direction we would like to modify the state in, as:
\begin{align*}
\bugoal(\bx) = - \btheta_1 [\bx]_{1:2} - \btheta_2 [ \bx]_{3:4}  . 
\end{align*}
Given the underactuated structure of the system in \eqref{eq:car}, we cannot directly push the state in the direction of $\bugoal(\bx)$. Instead, we set $\bu$ to the following:
\begin{align*}
\begin{bmatrix} \bugoal(\bx)^\top \begin{bmatrix} \cos([\bx]_5) \\ \sin([\bx]_5) \end{bmatrix} \\
-\btheta_3 ([\bx]_5 - \beta_{\mathrm{goal}}(\bx)) - \btheta_4 [\bx]_6 \end{bmatrix} \quad \text{for} \quad \beta_{\mathrm{goal}}(\bx) = \tan^{-1}( [\bugoal(\bx)]_2 / [\bugoal(\bx)]_1).
\end{align*}
Given the complex form of this controller and the dynamics, there does not exist a closed-form way to set $\btheta$ optimally. Instead, for this system, we rely on a simple random search procedure to compute $\pist(A)$. To find an optimal controller for system $A$, we randomly sample parameters $\btheta$, compute the cost they incur on system $A$, and then set $\pist(A)$ to the randomly generated controller with lowest cost. Note that this procedure is not differentiable, but we require $\pist(A)$ is differentiable. To remedy this, in situations where a differentiable $\pist(A)$ is needed (in particular, in the computation of $\cH(A)$), rather than returning a single controller, we return the softmin distribution over all controllers sampled, weighting each controller by its estimated cost. As the softmin distribution can be differentiated, this parameterization of $\pist(A)$ is differentiable. 

For this system, there does not exist a closed-form expression for $\cJ(\pi;A)$ and, as such, to compute $\cJ(\pi;A)$, we simply perform many roll-outs of policy $\pi$ on system $A$ and average the cost.
Given this and the search-based implementation of $\pist(A)$ outlined above, we found that computing the hessian $\cH(A)$ using the \texttt{torch.autograd.functional.hessian} as in \Cref{sec:drone_details} was very memory-intensive. Instead, we computed the Jacobian $G(A) := \nabla_{A'} \cJ(\pist(A');A)|_{A' = A}$, and then, in place of $\cH(A)$, we use $G(A) G(A)^\top$. To compute $G(A)$, we use the \texttt{torch.autograd.functional.jacobian} function. While using $G(A) G(A)^\top$ in place of $\cH(A)$ is not justified by our theoretical analysis, if we are in settings where $\pist(A)$ is not precisely the minimum of $\cJ(\pi;A)$ (which will likely be the case here since we are relying on a sampling-based implementation of $\pist(A)$, which will incur some small error), then we argue that this is a reasonable metric to use.
In particular, in this setting, the approximation of $\cJ(\pist(\Ahat);\Ast)$ given in \Cref{prop:quadratic_loss_approx} should have an additional first-order term of the form $G(A)^\top \vec(\Ast - \Ahat)$. As we can upper bound
\begin{align*}
G(A)^\top \vec(\Ast - \Ahat) \le \sqrt{ \| \vec(\Ast - \Ahat) \|_{G(\Ast) G(\Ast)^\top}^2},
\end{align*}
optimizing for the metric $G(A) G(A)^\top$ instead of $\cH(A)$ can be seen as minimizing the first-order Taylor-approximation of the excess loss. Intuitively, this metric quantifies the sensitivity of the loss to particular parameters in $\Ast$, and in practice we found that optimizing this metric produced significant improvements over existing methods. The implementation of the example from \Cref{sec:motivating_example} relied on this same approximation.

\subsection{Implementation Details}

For all methods considered, our implementation follows the basic structure of \Cref{alg:main}: at every epoch, we explore so as to minimize some exploration objective, form an estimate of $\Ast$ on the collected data, and then compute $\pist(\Ahat_t)$ on our estimate. Our main experimental results (\Cref{fig:motivating,fig:drone,fig:car}) show the loss of $\pist(\Ahat_t)$ as the time horizon $t$ increases.
For each method, to collect an initial set of data, we begin each trial by exploring randomly for some fixed number of episodes (10 for the drone example, 100 for the car example). The first point in each plot then corresponds to the performance after this initial random exploration.
Each aspect of our implementation is modular, and any given component can be easily replaced. Below we highlight our implementation of the exploration routine, and choice of exploration objective, for the various approaches we consider.

\subsubsection{Implementation of \expdesign}\label{sec:dynamicoed_imp_details}
Implementing the exploration procedure, \expdesign, requires access to a regret minimization oracle. While in principle the \LC algorithm of \cite{kakade2020information} could be applied to this problem to give such an oracle, the \LC algorithm requires access to a computation oracle which is not clear how to implement in practice. 
To remedy this, we implement a Thompson Sampling-inspired modification to the \LC algorithm of \cite{kakade2020information}.

The primary computational challenge of implementing the \LC algorithm is the computation of the \emph{optimistic} policy:
\begin{align*}
\argmin_{\pi \in \Piexp} \min_{A \in \textsc{Ball}^t} \cJ^{\mathrm{exp}}(\pi;A) 
\end{align*}
where $\cJ^{\mathrm{exp}}(\pi;A)$ denotes the exploration cost that is minimized in \LC (i.e. the expected cost on the cost function $\cost_h^n(\bx,\bu) \leftarrow \frac{1}{M} \cdot \bphi(\bx,\bu)^\top (\Xi_{n}) \bphi(\bx,\bu)$ set in \expdesign), and $\textsc{Ball}^t$ the confidence set for $\Ast$ at iteration $t$. 

To avoid solving this optimization, we adopt a Thompson Sampling-inspired variation of this procedure.
In particular, at iteration $t$, we sample $\Atil_t \sim \cN(\Ahat_t, \bLambda_t^{-1})$. Standard Thompson Sampling would then compute $\argmin_{\pi \in \Piexp} \cJ^{\mathrm{exp}}(\pi;\Atil_t)$, but even this can be challenging, so we instead rely on a sampling MPC-inspired approach. Given that we are at state $\bx_h$ and have played inputs $\bu_1,\ldots,\bu_{h-1}$, we aim to approximately solve the following optimization:
\begin{align}\label{eq:mpc_exp_design}
\begin{split}
& \min_{\bu_h, \bu_{h+1},\ldots,\bu_H \in \R^{\dimu}} \sum_{h'=h}^H \cost_h^n(\bx_{h'},\bu_{h'}) \\
& \quad \text{s.t.} \quad \bx_{h+1} = \Atil_t \bphi(\bx_h,\bu_h), \sum_{h=1}^H \bu_h^\top \bu_h \le \gamma^2. 
\end{split}
\end{align}
To solve this approximately, we sample many possible $\bu$ randomly, compute the value of the objective of \eqref{eq:mpc_exp_design} on the trajectories induced by these $\bu$, and finally choose the input that minimizes this objective. Rather than playing the entire sequence of chosen inputs, however, we simply play the first input in the sequence, observe the new state on the actual system, and re-solve \eqref{eq:mpc_exp_design} on this new state.
Note that the implementation of \LC used for the experiments given in \cite{kakade2020information} relies on a similar Thompson Sampling-based approximation to the \LC algorithm.

\subsubsection{Implementation of Uniform Exploration}
The goal of the procedure we have referred to as \texttt{Uniform Exploration} is to collect data which will result in the estimation error, $\| \Ast - \Ahat \|_\op$, being minimized, the goal of the method given in \cite{mania2020active}. It can be shown that this is equivalent to maximizing $\lammin(\bLambda_T)$, so this method reduces to choosing inputs that maximize $\lammin(\bLambda_T)$. To implement this procedure, we rely on the same sampling-based MPC approach as we outlined above, with the primary difference being that instead of minimizing the objective of \eqref{eq:mpc_exp_design}, we choose the inputs that maximize
\begin{align*}
\lammin \left ( \bLambda_t + \sum_{h=1}^H \bphi(\bx_h,\bu_h) \bphi(\bx_h,\bu_h)^\top \right ),
\end{align*}
where $\bLambda_t$ denote the covariates we have obtained so far at iteration $t$. 
While very similar in spirit to the algorithm of \cite{mania2020active}, the implementation details are somewhat different than the algorithm proposed in that work. We found that in practice our implementation performed better than directly implementing (a sampling-based variant of) the algorithm from \cite{mania2020active}, and all reported results for \texttt{Uniform Exploration} are therefore on this version.

\subsubsection{Exploring via Cost Minimization}
A natural point of comparison to our methods would be to forsake the system identification phase entirely, and simply run standard policy optimization algorithms such as TRPO or PPO \citep{schulman2015trust,schulman2017proximal}, to obtain a controller $\pihat_T$. The primary difficulty with these approaches in the settings we consider is that these algorithms are \emph{on-policy}, meaning that they primarily roll out trajectories using their current estimate of the optimal policy, $\pihat_t$, and using the collected data to do policy improvement on $\pihat_t$. In contrast, our setting is \emph{off-policy}, in the sense that the learner must explore by playing policies in $\Piexp$, but return some policy in $\Picon$. Since in the settings we consider $\Piexp \neq \Picon$, on-policy approaches are simply exploring very differently, and therefore cannot be compared with directly. 

This is particularly an issue in our setting where \emph{stability} may come into play. Indeed, it may be the case that some controller in $\Picon$ will destabilize the system, and cause the norm of the state to increase exponentially in $h$. Inducing such trajectories significantly improves one's ability to perform system identification as the signal-to-noise ratio also then increases exponentially. However, to induce this trajectory with a state-feedback controller, the power of the input played by this controller will also increase exponentially in $h$. Since we choose $\Piexp$ to include only policies with bounded power, this is not a fair comparison (and, furthermore, is likely not an algorithm one would want to run in practice).

While direct comparison with such approaches is therefore not possible, it is possible to compare against algorithms that, instead of collecting data that minimizes or maximizes objectives such as $\tr(\cH \bLamchk^{-1})$ or $\lammin(\bLambda)$, instead simply aims to play policies minimizing $\cJ(\pi;A)$. In principle, such algorithms are similar to approaches such as TRPO in how they perform their exploration---both collect data by aiming to minimize the actual cost we are attempting to find a controller to minimize. 

To implement this approach, we rely on a sampling-based MPC algorithm similar to that described in \Cref{sec:dynamicoed_imp_details}, but where the goal is now to solve
\begin{align*}
\min_{\pi \in \Piexp} \cJ(\pi;A).
\end{align*}
Note that the key difference between this approach and approaches such as TRPO is that we still only play $\pi \in \Piexp$. Using this objective to induce exploration, we then simply estimate $\Ast$ on this collected data, and return $\pist(\Ahat)$. The results of this approach on the drone system are given in \Cref{fig:drone_ts} (with this cost minimization approach denoted as \texttt{Cost Minimization Exploration}). As this illustrates, this approach is significantly worse than \Cref{alg:main}, and is also outperformed by \texttt{Uniform Exploration} or \texttt{Random Exploration} when the number of episodes is large enough.

\begin{figure}
\centering
\includegraphics[width=0.4\linewidth]{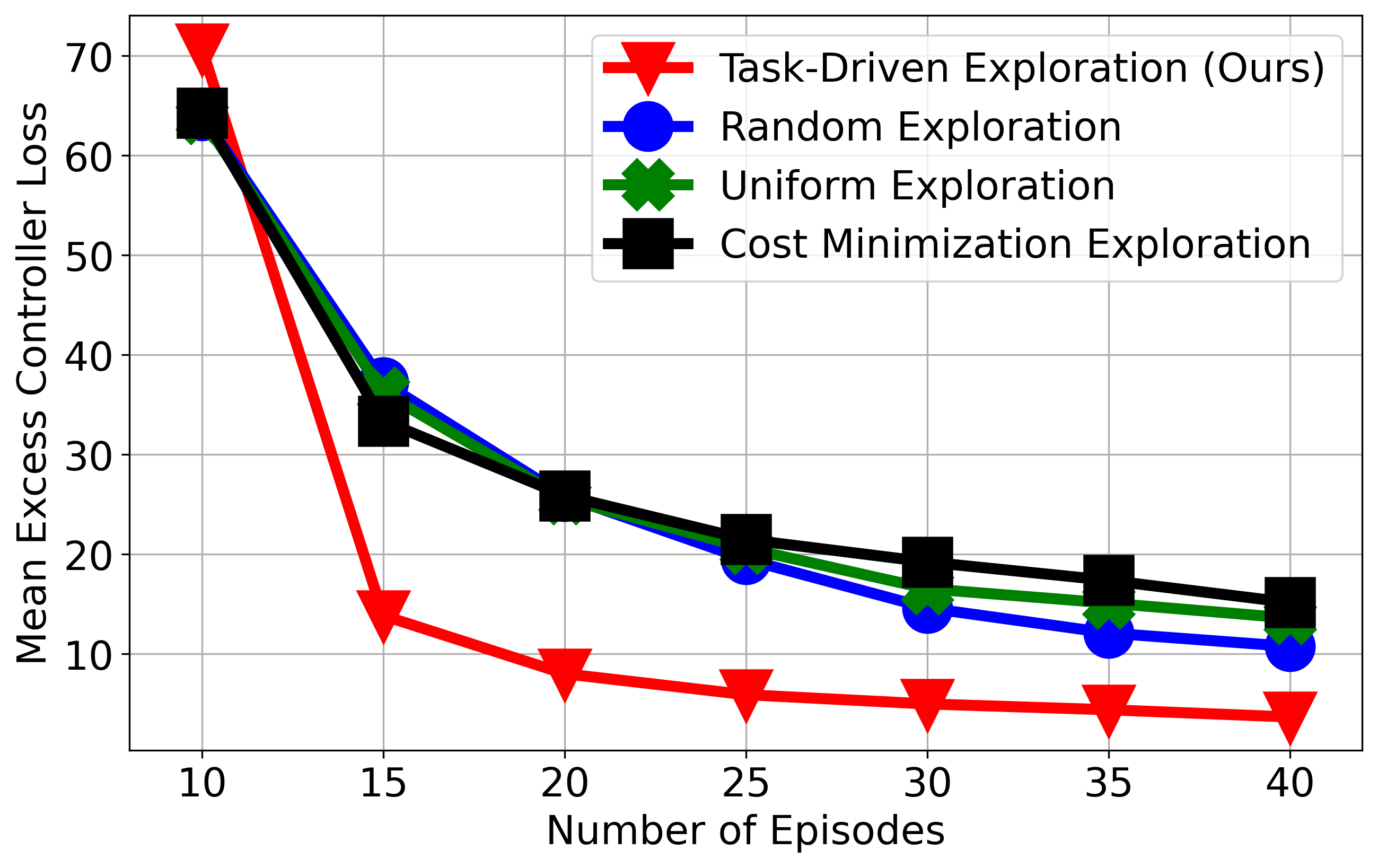}
\caption{Performance on drone with \LC Exploration}
\label{fig:drone_ts}
\end{figure}

\subsection{Additional Results}
Finally, in this section we present versions of \Cref{fig:motivating,fig:drone,fig:car} with error bars in \Cref{fig:special_errorbars,fig:drone_errorbars,fig:car_errorbars}. In all figures, errors bars denote one standard error.

\begin{figure}
\centering
\includegraphics[width=0.4\linewidth]{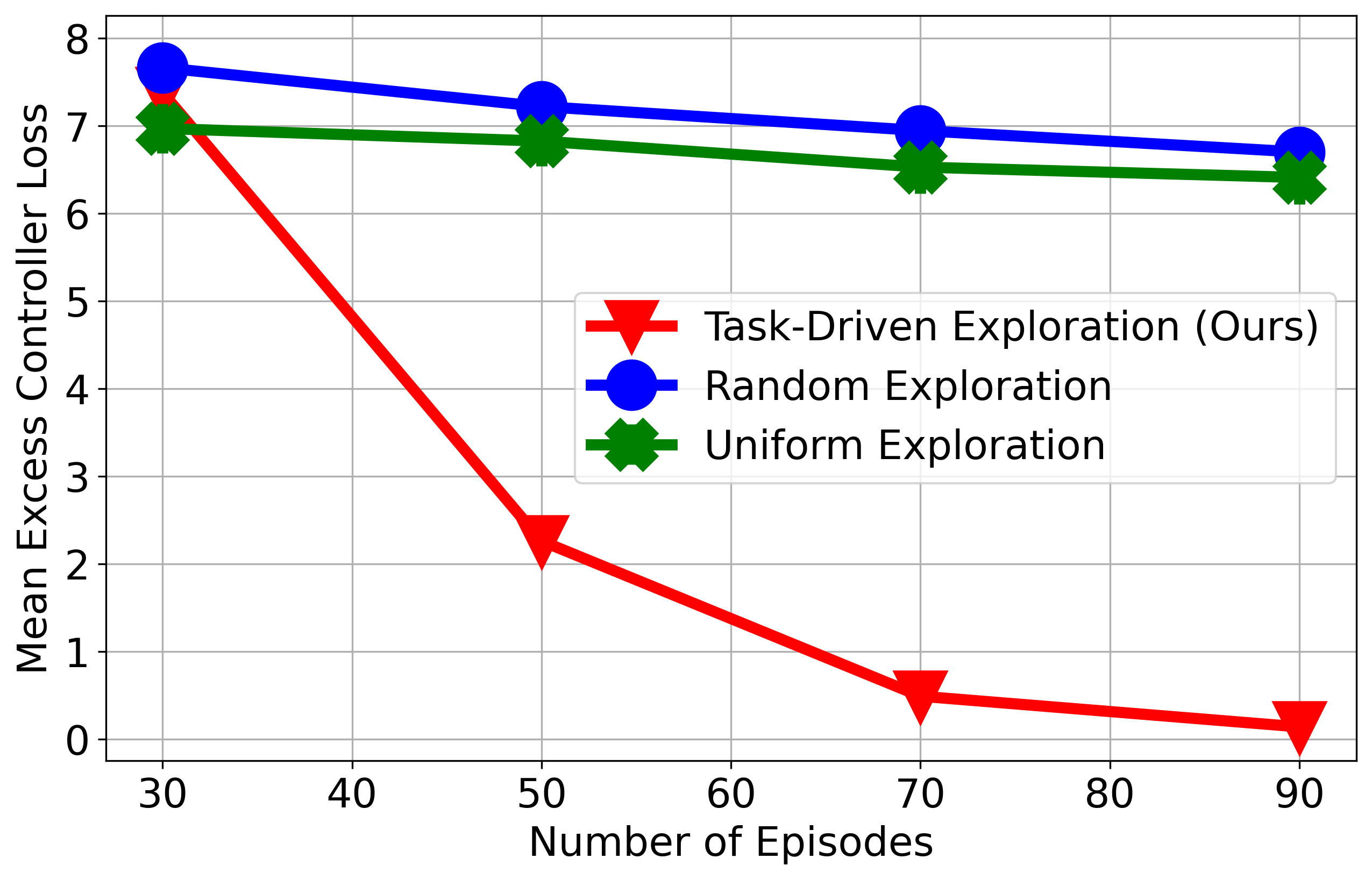}
\caption{Mean excess controller loss on instance of \Cref{sec:motivating_example} with error bars}
\label{fig:special_errorbars}
\end{figure}

\begin{figure}
\centering
\includegraphics[width=0.4\linewidth]{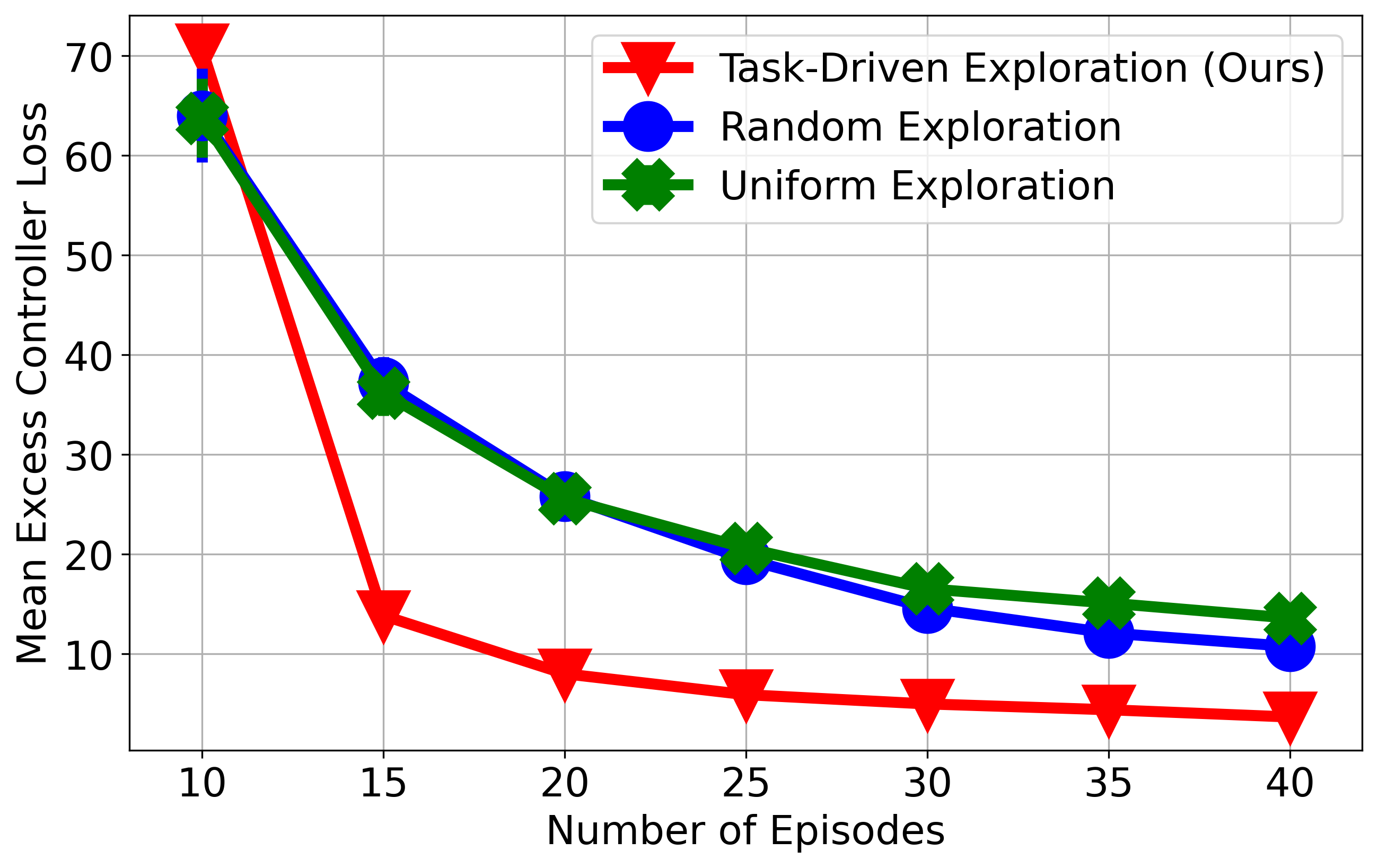}
\caption{Mean excess controller loss on drone with error bars}
\label{fig:drone_errorbars}
\end{figure}

\begin{figure}
\centering
\includegraphics[width=0.4\linewidth]{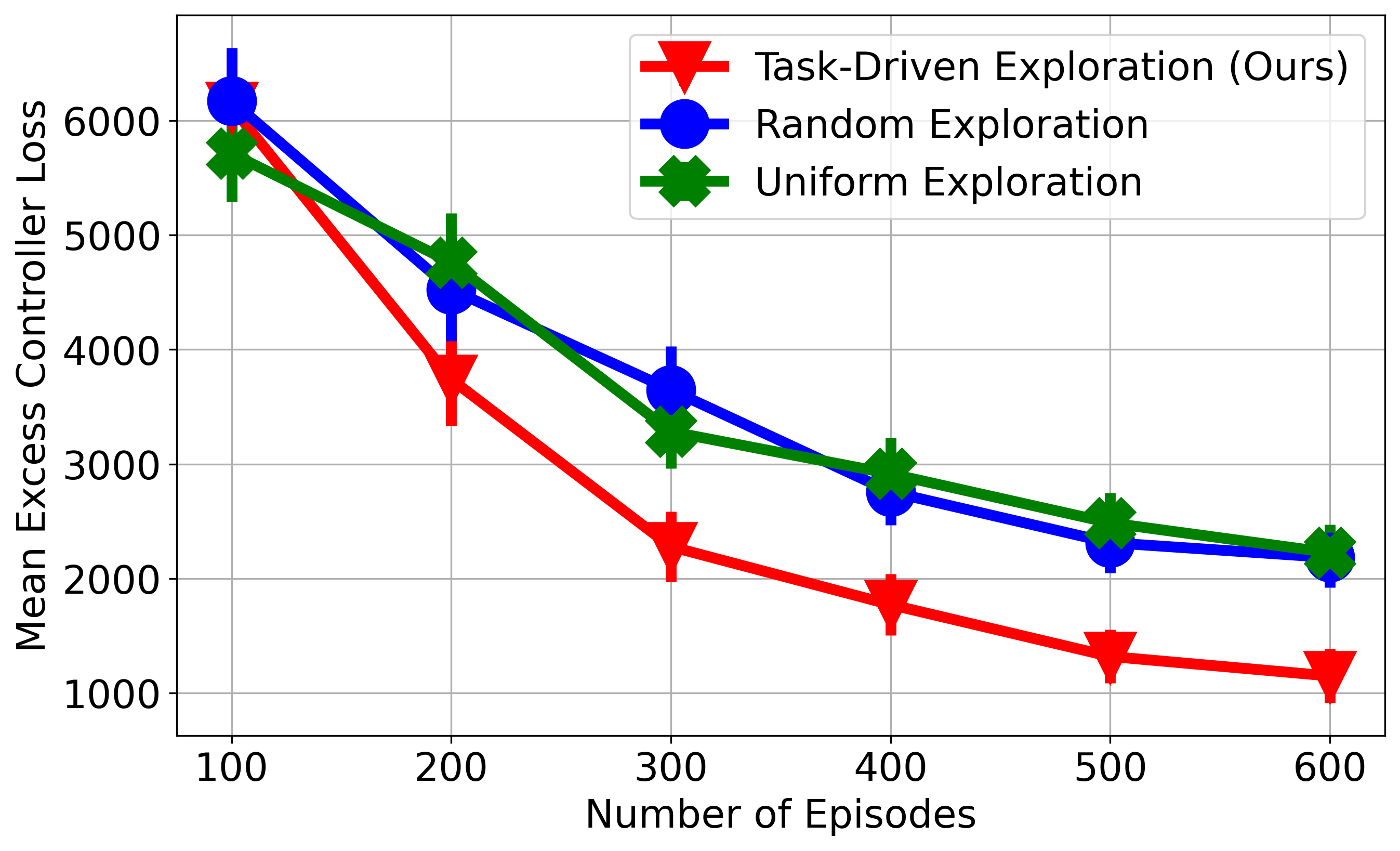}
\caption{Mean excess controller loss on car with error bars}
\label{fig:car_errorbars}
\end{figure}

\end{document}